\documentclass{article}

\usepackage{arxiv}

\usepackage[utf8]{inputenc} 
\usepackage[T1]{fontenc}    
\usepackage{hyperref}       
\usepackage{url}            
\usepackage{booktabs}       
\usepackage{amsfonts}       
\usepackage{nicefrac}       
\usepackage{microtype}      
\usepackage{lipsum}
\usepackage{graphicx}

\usepackage{multirow}%
\usepackage{amsmath,amssymb}%
\usepackage{amsthm}%
\usepackage{mathrsfs}%
\usepackage[title]{appendix}%
\usepackage{xcolor}%
\usepackage{textcomp}%
\usepackage{manyfoot}%
\usepackage{algorithm}%
\usepackage{algorithmicx}%
\usepackage{algpseudocode}%
\usepackage{listings}%

\usepackage{xspace}
\usepackage{tabularx}
\usepackage{array}

\graphicspath{{figures/}{paper_assets/figures/}{./}}
\usepackage{tikz}
\usetikzlibrary{arrows.meta,positioning,fit,calc}
\definecolor{AccentYellow}{HTML}{D4A017}
\newcommand{\A}{\mathcal{A}}
\newcommand{\Hh}{\mathcal{H}}
\newcommand{\Toks}{\mathcal{T}}
\newcommand{\Msgs}{\mathcal{M}}
\newcommand{\Prob}{\Delta}
\newcommand{\conf}{\mathrm{conf}}
\newcommand{\argmax}{\mathrm{argmax}}
\newcommand{\TV}{\mathrm{TV}}
\newcommand{\PBRC}{\textsc{PBRC}\xspace}
\newcommand{\CDDL}{\textsc{CDDL}\xspace}
\newcommand{\FO}{\mathrm{FO}}

\newcommand{\pow}{\mathcal{P}}
\newcommand{\Nat}{\mathbb{N}}
\newcommand{\R}{\mathbb{R}} 
\newcommand{\Ev}{\mathbb{E}\mathrm{v}}
\renewcommand{\paragraph}[1]{\medskip\noindent\textbf{#1}\quad}
\newtheorem{Theorem}{Theorem}
\newtheorem{Proposition}{Proposition}
\newtheorem{Lemma}{Lemma}
\newtheorem{Remark}{Remark}
\newtheorem{Corollary}{Corollary}
\newtheorem{Definition}{Definition}%

\title{Preregistered Belief Revision Contracts}

\author{
  Saad Alqithami \\
  \texttt{alqithami@gmail.com} \\
}

\begin{document}
\maketitle

\begin{abstract}
Deliberative multi-agent systems, including recent LLM-based agent societies, allow agents to exchange messages and revise beliefs over time. While this interaction is meant to improve performance, it can also create dangerous conformity effects: agreement, confidence, prestige, or majority size may be treated as if they were evidence, producing high-confidence convergence to false conclusions. To address this, we introduce \PBRC{} (\emph{Preregistered Belief Revision Contracts}), a protocol-level mechanism that strictly separates open communication from admissible epistemic change. 
A \PBRC{} contract publicly fixes first-order evidence triggers, admissible revision operators, a priority rule, and a fallback policy. Crucially, a non-fallback step is accepted only when it cites a preregistered trigger and provides a nonempty witness set of externally validated evidence tokens. This ensures that every substantive belief change is both enforceable by a router and auditable after the fact. 
In this paper, we first prove that under evidential contracts with conservative fallback, social-only rounds cannot increase confidence and cannot generate purely conformity-driven ``wrong-but-sure'' cascades. Second, we show that auditable trigger protocols admit evidential \PBRC{} normal forms that preserve belief trajectories and canonicalized audit traces. Third, we demonstrate that sound enforcement yields epistemic accountability: any change of top hypothesis is attributable to a concrete validated witness set. Fourth, for token-invariant contracts, we prove that enforced trajectories depend only on token-exposure traces; under flooding dissemination, these traces are characterized exactly by truncated reachability, giving tight diameter bounds for universal evidence closure. Finally, we introduce a companion contractual dynamic doxastic logic to specify trace invariants, and provide simulations illustrating cascade suppression, auditability, and robustness--liveness trade-offs.
\end{abstract}

\keywords{belief revision\and dynamic doxastic logic\and protocol semantics\and auditability\and graph reachability\and multi-agent deliberation\and large language models.}



\maketitle

\section{Introduction}

Belief revision in multi-agent settings becomes fragile when agents are allowed to treat one another's assertions as epistemic reasons. This issue has reappeared with unusual force in recent LLM-based multi-agent systems, where agents deliberate by exchanging messages, critiques, and self-reports of confidence. Empirical work reports conformity, peer-pressure effects, and topology-sensitive ``wrong-but-sure'' cascades: the population can become more confident precisely when it is moving toward the wrong answer \cite{weng2025benchform,han2026conformity,song2025kairos,ashery2024conventions,ashery2025sciadv}. These findings motivate a basic question for both logic and system design: how can a deliberation protocol preserve open communication while refusing purely social pressure as a warrant for belief change?

The central failure mode is not communication itself, but the lack of a public distinction between \emph{persuasion} and \emph{evidence}. Agreement, prestige, fluency, rapport, or majority size can become de facto triggers for revision even though none of them is an externally checkable reason for changing belief. Our starting point is therefore intentionally modest. We do not propose a new underlying revision operator, an aggregation rule, or a controller that chooses whose answer to trust. Instead, we introduce a protocol layer that governs \emph{when} revision is admissible. The layer should be explicit enough to audit, strong enough to enforce, and abstract enough to sit on top of different revision operators and different evidence channels.

This paper introduces \PBRC{} (\emph{Preregistered Belief Revision Contracts}). Before interaction, an agent publicly preregisters (i) first-order triggers over validated evidence tokens, (ii) the revision operators that may be used when those triggers fire, (iii) a priority rule, and (iv) a fallback policy. During deliberation, a non-fallback step is accepted only if it cites a preregistered trigger and supplies a \emph{nonempty witness set} of externally validated tokens that makes the trigger checkable by a router or auditor. \PBRC{} is thus not a voting rule or meta-judge; it is a contract semantics for admissible belief change. LLM-based agent societies provide the motivating application throughout, but the formal development is general and applies to any finite-hypothesis, token-mediated deliberation protocol.

\subsection{Contributions}

We make five contributions.

\begin{enumerate}
\item \textbf{Protocol semantics for evidence-gated revision (Section~\ref{sec:pbrc}).} We formalize \PBRC{} contracts as public tuples of first-order triggers, revision operators, priority, and fallback, together with witness-carrying certificates and explicit router semantics. This yields a clean separation between message exchange and admissible epistemic change.

\item \textbf{Social-only safety guarantees (Section~\ref{sec:main}).} Under evidential contracts with conservative fallback, social-only rounds cannot amplify confidence and cannot generate purely conformity-driven ``wrong-but-sure'' cascades (Theorems~\ref{thm:nonamp} and \ref{thm:nocascade}). The results isolate the exact structural role of evidence-gating and argmax-preserving fallback.

\item \textbf{Normal forms, enforcement, and accountability (Section~\ref{sec:normalform}).} We prove that auditable trigger protocols admit evidential \PBRC{} normal forms preserving both belief trajectories and canonicalized audit traces (Theorem~\ref{thm:normalform}). We also show that sound enforcement projects arbitrary trigger protocols onto their explicit evidence-gated behavior. This yields gate transparency for compliant contracts and epistemic accountability: any top-hypothesis change under enforcement is attributable to a concrete nonempty witness set of validated tokens (Theorem~\ref{thm:accountability}).

\item \textbf{Token-trace factorization and tight topology results (Section~\ref{sec:normalform}).} For token-invariant contracts, enforced belief dynamics depend only on validated token exposure traces, not on rhetorical presentation (Theorem~\ref{thm:tokensuff}). Under flooding dissemination, these traces are characterized exactly by truncated reachability. We prove both necessity and sufficiency of reachability equivalence for trace equivalence, together with tight diameter bounds for universal evidence closure (Theorems~\ref{thm:topology_trace_equiv} and \ref{thm:flood_optimal}).

\item \textbf{Robustness analysis and a specification logic (Sections~\ref{sec:adversary} and \ref{sec:cddl}).} We formalize forgery, replay, collusion, and omission adversaries; derive freshness and multi-attestation robustness conditions; and prove a completeness-style failure taxonomy that localizes first wrong-top transitions to a small set of auditable failure modes (Theorem~\ref{thm:failure_taxonomy}). We also introduce \CDDL{}, a contractual dynamic doxastic logic with full PDL iteration for specifying and verifying invariants over audited runs (Theorem~\ref{thm:cddl_sc}), with complete soundness and completeness proofs in Appendix~\ref{app:pdl}. Simulations and benchmark protocols (Section~\ref{sec:eval}) serve as empirical illustrations of the logical claims rather than as their foundation.
\end{enumerate}

\begin{center}
\fbox{\begin{minipage}{0.95\linewidth}
\textbf{Scope.} \PBRC{} blocks \emph{social-only} belief cascades under integrity assumptions on token validity and labeling. It does not by itself repair shared bad evidence or semantic mislabeling, prevent evidence-generation steering unless query policies are also contracted, or guarantee liveness under withholding and denial-of-service.
\end{minipage}}
\end{center}

\subsection{What \PBRC{} is not}

\PBRC{} is not a stacking ensemble, voting rule, meta-judge, or controller that averages opinions. It never treats social consensus as a substitute for evidence. Its only task is to constrain each agent's admissible transition: belief changes without a preregistered, validated trigger are inadmissible. This separation matters conceptually and technically: the results below concern admissibility, enforceability, auditability, and information flow, while remaining agnostic about the choice of underlying revision operator.

The paper is organized as follows. Section~\ref{sec:operational} states the operational model; Section~\ref{sec:pbrc} defines \PBRC{} contracts, certificates, and enforcement. Sections~\ref{sec:main}--\ref{sec:normalform} develop the logical and semantic core: social-only guarantees, minimality, normal forms, token-sufficiency, and topology-to-trace factorization. Sections~\ref{sec:adversary}--\ref{sec:complexity} treat adversaries, implementation guidance, and verification cost. Section~\ref{sec:eval} provides empirical illustrations of the protocol's qualitative behavior and overhead trade-offs.


\section{Related work}
\label{sec:related}

Empirical studies of LLM multi-agent systems (MAS) document systematic conformity and peer-pressure effects whose strength depends on the interaction protocol, peer reliability, rapport, and network structure. BenchForm provides a benchmarked characterization of conformity across interaction protocols \cite{weng2025benchform}. KAIROS studies peer-pressure under heterogeneous peer reliability and rapport \cite{song2025kairos}. Recent results also emphasize that topology and self--social weighting can modulate conformity and ``wrong-but-sure'' cascades \cite{han2026conformity}. Complementary work adapts classic social-psychology paradigms and reports that AI agents exhibit conformity patterns aligned with Social Impact Theory \cite{bellina2026socialimpact}, connecting to foundational human-group evidence on conformity and social impact \cite{asch1951effects,latane1981socialimpact}. Our contribution is orthogonal to measuring conformity: PBRC specifies an enforceable admissibility layer that prevents purely social rhetoric from being accepted as belief change unless accompanied by verifiable evidence artifacts.

Repeated interaction among LLM agents can also yield emergent conventions and collective biases \cite{ashery2024conventions,ashery2025sciadv}. These phenomena motivate mechanisms that distinguish coordination signals from epistemic justification. PBRC targets the epistemic side: contracts restrict which belief transitions are admissible and require certificates with validated evidence tokens, so convention formation cannot by itself justify epistemic flips absent evidence.

Belief revision and belief-change formalisms provide the canonical language for rational change of epistemic states. AGM belief revision axiomatizes rational postulates for theory change \cite{agm1985,gardenfors1988knowledge,hanssonSEP}. Belief update distinguishes revising by new information from updating after world change \cite{katsuno1991update}, and iterated revision studies sequential change \cite{darwiche1997}. Ranking-theoretic approaches provide alternative representations of epistemic states and revision dynamics \cite{spohn2012lawsofbelief}. Dynamic epistemic logic (DEL) models informational actions and announcements \cite{delSEP,ditmarsch2007dynamic}, and dynamic logics of belief change connect AGM-style ideas with dynamic and plausibility semantics in both single- and multi-agent settings \cite{vanbenthem2015dlbc}. PBRC does not propose a new underlying revision operator; instead it constrains which transitions are admissible under social interaction via preregistered, token-witnessable triggers, and it supports auditing via checkable certificates. Relatedly, belief merging and judgment aggregation study how to combine information from multiple sources under inconsistency \cite{konieczny2002merging,konieczny2011logicbased}; PBRC is compatible with such operators but adds an enforceable evidence gate for when they may be applied.

Combining belief modalities with dynamic/program operators has a substantial history in dynamic doxastic logic and related systems \cite{leitgeb2007ddl,schmidt2008pdl_doxastic}; see \cite{fagin1995reasoning,benthem2011logical} for broader background on multi-agent epistemic/doxastic logic and logical dynamics. Our novelty is not the presence of KD45 + PDL per se, but the use of program structure to encode PBRC contracts and to specify invariants over enforced, certificate-carrying belief transitions (Section~\ref{sec:cddl}).

Compliance monitoring in MAS is often framed in terms of commitment-based protocols and social/interactional commitments \cite{singh1999commit,yolum2002commitment}. PBRC enables compliance monitoring for epistemic transitions: belief changes are admissible only when accompanied by verifiable evidence tokens and witness sets that can be checked by an external router or auditor. This certificate discipline is reminiscent of proof-carrying mechanisms, where untrusted producers attach checkable artifacts enabling efficient validation by a consumer \cite{necula1997pcc}. Tamper-evident audit logs are a standard accountability primitive \cite{kelsey1999secure}; PBRC certificates are designed to be stored in such append-only logs. If one additionally wants to bind correct execution of updates (not just admissibility), verifiable computation and succinct proof systems provide relevant primitives \cite{gennaro2010verifiable,parno2013pinocchio} (Section~\ref{sec:enforcement}).

Trust and reputation models are another common response to unreliable peers in MAS \cite{sabater2005review}. PBRC takes a complementary stance: rather than granting epistemic force to socially mediated reputation, it treats reliability as an evidence problem to be expressed through verifiable artifacts (tool provenance, signed attestations, audited verifier judgments). Socially persuasive claims remain non-binding absent checkable evidence.

Our notion of ``wrong-but-sure'' cascades connects to classic informational cascades and herding in economics and social learning, where agents update based on others' actions and can converge on incorrect beliefs \cite{banerjee1992herd,bikhchandani1992theory}. Networked aggregation and Bayesian social learning further show how topology shapes convergence and error propagation \cite{degroot1974consensus,bala1998learning,golub2010naive,acemoglu2011bayesian}. PBRC differs in that it does not prescribe a specific update rule; instead, it constrains which belief transitions are admissible via preregistered, checkable evidence triggers, yielding topology effects that factor through token dissemination rather than rhetoric.

Recent LLM MAS often use debate/critique/reflection patterns to improve reasoning \cite{liang2023multiagentdebate,li2023camel,shinn2023reflexion,madaan2023selfrefine}. Practical orchestration frameworks operationalize such protocols in agentic programming settings \cite{wu2023autogen,hong2023metagpt}. These approaches are typically implemented as prompting patterns without externally checkable admissibility constraints; PBRC is complementary, providing enforceable constraints that can prevent purely social persuasion steps absent admissible evidence.

Evidence tokens can be instantiated using retrieval-augmented and tool-augmented LLM pipelines, where snippets, citations, and tool outputs provide structured artifacts that a verifier can validate \cite{lewis2020rag,yao2022react,nakano2021webgpt,menick2022verifiedquotes,schick2023toolformer}. Retrieval-and-revise methods and hallucination detectors can further serve as candidate verifiers or token labelers \cite{asai2024selfrag,manakul2023selfcheckgpt}. On the systems side, provenance standards such as W3C PROV provide a natural representation for evidence lineage \cite{w3cprov2013}. PBRC abstracts these implementation details into a token interface ($\mathrm{Valid}$/$\mathrm{Supports}$/$\mathrm{Contradicts}$/$\mathrm{Type}$/$\mathrm{Fresh}$) and studies protocol-level guarantees when admissibility is enforced via checkable certificates.

Taken together, these lines of work show that social interaction can induce conformity, cascades, and convention formation, while belief-change logics and systems mechanisms provide tools for specifying and enforcing rational updates. PBRC connects these perspectives by making social robustness and auditability enforceable at the protocol level: preregistered contracts specify which evidence patterns license which updates; routers enforce admissibility by rejecting evidence-free steps; and auditors can reconstruct responsibility for belief changes from certificates.


\section{Operational model: evidence-annotated deliberation}
\label{sec:operational}

We formalize a deliberation run among agents who exchange messages containing \emph{evidence tokens}. Tokens represent checkable artifacts (tool outputs, verified citations, signed sensor readings, etc.). The model separates persuasive language (unvalidated message content) from validated evidence (tokens that satisfy $\mathrm{Valid}$ under a router-side validity layer).

\subsection{Agents, hypotheses, and beliefs}

Let $\A=\{1,\dots,n\}$ be agents and $\Hh=\{h_1,\dots,h_m\}$ a finite hypothesis set. Each agent $i$ has a belief distribution $b_i^t\in \Prob(\Hh)$ at round $t$:
\[
  b_i^t(h)\ge 0,\quad \sum_{h\in\Hh} b_i^t(h)=1.
\]
Define:
\[
  \conf(b):=\max_{h\in\Hh} b(h),\qquad
  \argmax(b):=\{h\in\Hh : b(h)=\conf(b)\}.
\]
For distances between beliefs we use total variation:
\[
  \TV(b,b') := \frac12 \sum_{h\in\Hh} |b(h)-b'(h)|.
\]

\subsection{Belief state as a router-side artifact (LLM instantiation)}
\label{sec:llm_inst}

The formal object $b_i^t\in\Prob(\Hh)$ is a \emph{protocol state} used for deliberation, enforcement, and auditing. In deployed LLM-agent systems it is often unreasonable to assume that agents maintain stable, calibrated internal Bayesian beliefs, or that self-reported probabilities are trustworthy. PBRC does not require these assumptions: a router/orchestrator can maintain an authoritative belief state and compute contract updates itself (Section~\ref{sec:enforcement}), while agents contribute candidate hypotheses, arguments, and evidence tokens.

\paragraph{Finite candidate sets.}
When the task exposes a finite set of candidates (multiple choice, ranked options, tool-selected actions), let $\Hh$ be the candidate set and let the router maintain a score vector $s_i^t(h)\in\mathbb{R}$ (e.g., log-probabilities from the LLM, an auxiliary verifier model, or a task-specific scoring function). Define $b_i^t=\mathrm{softmax}(s_i^t)$ (optionally after calibration). In this setting, the belief distribution is fully external and auditable.

\paragraph{Open-ended tasks and dynamic hypothesis sets.}
For open-ended tasks, a practical pattern is to maintain a \emph{dynamic} hypothesis set $\Hh_t$ produced by candidate generation and pruning. PBRC extends by treating \emph{hypothesis introduction} as a contract-governed operation: new hypotheses may enter $\Hh_t$ only when an evidential trigger fires (e.g., a validated retrieval token that introduces a new entity/answer), while social-only rounds cannot expand $\Hh_t$. Formally, one can model each episode as a sequence of \emph{stages}, each with a fixed finite hypothesis set. A stage-transition operator (triggered by evidence) expands $\Hh$ and reinitializes $b$ over the expanded set. All of our ``social-only cannot force substantive revision'' guarantees apply within each stage, and the introduction of new hypotheses becomes auditable and evidence-gated.

\paragraph{Takeaway.}
The fixed finite $\Hh$ in the core theorems should be read as a stage-wise view: candidates are explicitly managed by the router and expanded only under evidential triggers.

\subsection{Evidence tokens and first-order annotations}
\label{sec:tok_fo}

Let $\Toks$ be a universe of evidence tokens and $\Msgs$ a universe of messages. We use a many-sorted first-order signature with sorts:
\[
\mathsf{Ag},\; \mathsf{Tok},\; \mathsf{Msg},\; \mathsf{Hyp},\; \mathsf{Time}.
\]
Predicates include:
\begin{align*}
&\mathrm{Send}(a,b,m,t) && \text{agent $a$ sends message $m$ to $b$ at time $t$},\\
&\mathrm{HasTok}(m,\tau) && \text{$m$ contains token $\tau$},\\
&\mathrm{Valid}(\tau) && \text{$\tau$ is externally verifiable under the validity layer},\\
&\mathrm{Supports}(\tau,h),\ \mathrm{Contradicts}(\tau,h) && \text{$\tau$ supports/contradicts hypothesis $h$},\\
&\mathrm{Type}(\tau,\kappa) && \text{$\tau$ has schema/type $\kappa$ (e.g., tool vs.\ retrieval)}.
\end{align*}

\paragraph{Events and token projection.}
At each round $t$, agent $i$ receives a multiset $E_i^t\subseteq \Msgs$ (its \emph{event}). Define the set of tokens \emph{mentioned} in an event and the subset that validate:
\[
\mathcal{T}_{\mathrm{all}}(E)\;:=\;\{\,\tau\in\Toks : \exists m\in E\ \mathrm{HasTok}(m,\tau)\,\},\qquad
\mathcal{T}(E)\;:=\;\{\,\tau\in\mathcal{T}_{\mathrm{all}}(E) : \mathrm{Valid}(\tau)\,\}.
\]
The validated tokens available to agent $i$ at time $t$ are $\mathcal{T}_i^t := \mathcal{T}(E_i^t)$.

\paragraph{Events as finite relational structures.}
To interpret first-order triggers, associate to each event $E_i^t$ a finite relational structure $\mathcal{S}(E_i^t)$ whose domains are:
agents $\A$, messages in $E_i^t$, tokens in $\mathcal{T}_{\mathrm{all}}(E_i^t)$, hypotheses $\Hh$, and a time element representing $t$.
Predicates such as $\mathrm{Send}$ and $\mathrm{HasTok}$ are interpreted from message metadata, while $\mathrm{Valid}$, $\mathrm{Supports}$, and $\mathrm{Contradicts}$ are interpreted via the external validity/labeling layer.
We write
\[
E_i^t\models \varphi \quad\text{iff}\quad \mathcal{S}(E_i^t)\models \varphi
\]
under standard Tarskian semantics.

\subsubsection{Concrete instantiation: typed tokens and entailment-based labeling}
\label{sec:concrete_semantics}

The abstract interface above---a validity predicate $\mathrm{Valid}(\tau)$ together with $\mathrm{Supports}/\mathrm{Contradicts}$---is intentionally agnostic. To make \PBRC{} implementable for LLM-mediated tasks, we describe one \emph{auditable instantiation} used in our benchmark runner (Section~\ref{sec:bench_protocols}). This instantiation is optional: all of the logic and theorems depend only on the abstract interface.

\paragraph{Token typing.}
Treat evidence tokens as typed, self-describing records with explicit provenance, e.g.:
\begin{itemize}
  \item $\mathrm{ToolResult}(\textsf{tool}, \textsf{input\_hash}, \textsf{output}, \textsf{output\_hash}, \textsf{ts}, \textsf{sig})$:
  a deterministic tool invocation with authenticated I/O.
  \item $\mathrm{RetrievedSnippet}(\textsf{source\_id}, \textsf{url\_hash}, \textsf{snippet}, \textsf{snippet\_hash}, \textsf{ts}, \textsf{sig})$:
  a retrieval return from an allowed source with immutable snippet hashing.
  \item $\mathrm{VerifierJudgment}(\textsf{verifier\_id}, \textsf{claim}, \textsf{context\_hash}, \textsf{label}, \textsf{conf}, \textsf{model\_hash}, \textsf{sig})$:
  an authenticated judgment (e.g., NLI entailment) binding the label to a specific claim and context.
\end{itemize}
The contract fixes admissible token schemas and the authentication mechanism for each schema.

\paragraph{Validity.}
$\mathrm{Valid}(\tau)$ checks (i) schema well-formedness, (ii) authenticity of $\textsf{sig}$, and (iii) type-specific replay/consistency conditions. Examples:
for $\mathrm{ToolResult}$, (iii) may include deterministic re-execution or recomputation to confirm $\textsf{output\_hash}$;
for $\mathrm{RetrievedSnippet}$, (iii) binds the snippet to an allowed source and time window;
for $\mathrm{VerifierJudgment}$, (iii) binds the judgment to a pinned $\textsf{model\_hash}$ and an exact $\textsf{context\_hash}$.

\paragraph{\textsc{Supports}/\textsc{Contradicts} via a deterministic labeling pipeline.}
For tool tokens, $\mathrm{Supports}/\mathrm{Contradicts}$ is a deterministic mapping from structured tool outputs to hypotheses.
For retrieval tokens, a simple auditable pattern is:
(i) canonicalize $h$ into a natural-language claim $\mathsf{claim}(h)$;
(ii) run a pinned NLI model on (\textsf{premise}=\textsf{snippet}, \textsf{hypothesis}=$\mathsf{claim}(h)$);
(iii) threshold entailment/contradiction scores to produce $\textsf{label}\in\{\textsf{entails},\textsf{contradicts},\textsf{neutral}\}$; and
(iv) emit a \emph{signed} $\mathrm{VerifierJudgment}$ token.
Then define
\[
\mathrm{Supports}(\tau,h)\ \text{iff}\ \tau \text{ is a VerifierJudgment with label entails and conf }\ge \theta_{\mathrm{sup}},
\]
and analogously for $\mathrm{Contradicts}(\tau,h)$ at threshold $\theta_{\mathrm{con}}$.

\paragraph{Partial/uncertain evidence and ``semantic laundering.''}
Binary predicates can be coarse. This instantiation retains the verifier confidence $\textsf{conf}$ and supports multi-attestation or $k$-of-$K$ checks (Section~\ref{sec:adversary}) for high-impact updates (e.g., belief flips), mitigating semantic laundering by requiring agreement across pinned verifiers and making each step auditable from hashes and signatures.

\paragraph{Minimal token interface.}
A compact schema sufficient for many deployments is
\[
\tau=\langle \mathrm{type},\mathrm{payload},\mathrm{provenance},\mathrm{time},\mathrm{sig}\rangle,
\]
where $\mathrm{sig}$ is a validator signature or checkable attestation. The validity layer computes $\mathrm{Valid}(\tau)$ router-side. The relations $\mathrm{Supports}(\tau,h)$ and $\mathrm{Contradicts}(\tau,h)$ can be computed deterministically from a typed payload, or can themselves be asserted by a signed label (as in $\mathrm{VerifierJudgment}$).

\paragraph{Semantic mismatch is an explicit failure mode.}
Even if $\mathrm{Valid}(\tau)$ holds, the semantic interpretation of $\tau$ (and hence $\mathrm{Supports}/\mathrm{Contradicts}$) can be wrong or irrelevant. PBRC does not eliminate semantic alignment risk; it \emph{localizes} it to the validity/labeling layer and makes it auditable (Section~\ref{sec:adversary}).

\subsection{Token-empty events (social-only)}

\begin{Definition}[Social-only event]
An event $E$ is \emph{social-only} if it contains no validated tokens:
\[
\mathrm{SocialOnly}(E)\;:\iff\; \mathcal{T}(E)=\emptyset.
\]
Equivalently,
\[
\mathrm{SocialOnly}(E)\;:\iff\; \neg\exists m\in E,\ \exists \tau\in\Toks\;(\mathrm{HasTok}(m,\tau)\wedge \mathrm{Valid}(\tau)).
\]
\end{Definition}

\begin{Remark}[Social-only vs.\ trigger-inactive]
\label{rem:social_vs_trigger}
Social-only refers strictly to \emph{token-emptiness}: $\mathrm{SocialOnly}(E)$ means $\mathcal{T}(E)=\emptyset$.
Separately, for a given contract, a step is \emph{trigger-inactive} if $J_i^t=\emptyset$ (Definition~\ref{def:update}). Under an evidential contract, $\mathrm{SocialOnly}(E_i^t)$ implies $J_i^t=\emptyset$, but the converse need not hold (tokens may be present but fail to satisfy any preregistered trigger).
\end{Remark}


\section{Preregistered Belief Revision Contracts}
\label{sec:pbrc}

\paragraph{Semantic convention.}
As in Section~\ref{sec:operational}, each message event $E$ (a multiset of messages) induces a first-order structure $\mathcal{S}(E)$ over the public event/evidence signature.
We write $E \models \varphi$ as shorthand for $\mathcal{S}(E)\models \varphi$.

\paragraph{Token notation.}
Let $\Toks$ be the universe of tokens.
For an event $E$, define the \emph{candidate} and \emph{validated} token sets
\[
\mathcal{T}_{\mathrm{all}}(E)\;:=\;\{\tau\in\Toks:\exists m\in E\ \mathrm{HasTok}(m,\tau)\},
\qquad
\mathcal{T}(E)\;:=\;\{\tau\in\mathcal{T}_{\mathrm{all}}(E): \mathrm{Valid}(\tau)\}.
\]
For a step event $E_i^t$, we write $\mathcal{T}_i^t:=\mathcal{T}(E_i^t)$.

\subsection{Syntax}

\begin{Definition}[Contract]
A \PBRC{} contract for agent $i$ is a publicly preregistered tuple
\[
C_i=\langle (\varphi_{i,1},U_{i,1}),\dots,(\varphi_{i,k_i},U_{i,k_i}),\prec_i,\delta_i\rangle
\]
where:
\begin{enumerate}
\item each trigger $\varphi_{i,j}\in \FO$ is an event predicate evaluated on $\mathcal{S}(E)$ at runtime;
\item each $U_{i,j}$ is a (possibly partial) revision operator that maps a current belief state and the current event to a next belief state,
\[
U_{i,j}:\Prob(\Hh)\times \mathsf{Evt}\ \rightharpoonup\ \Prob(\Hh),
\]
with $\mathsf{Evt}$ the set of events (multisets of messages);
\item $\prec_i$ is a strict priority order on $\{1,\dots,k_i\}$ used to resolve multiple satisfied triggers (so the selected trigger index is unique);
\item $\delta_i$ is a fallback operator used when no trigger is satisfied (or when enforcement rejects a proposed triggered update).
\end{enumerate}
\end{Definition}

\begin{Definition}[Evidential contract]
A contract $C_i$ is \emph{evidential} if every trigger can fire only on non-social events, i.e.,
\[
\forall j,\ \forall E,\quad E\models \varphi_{i,j}\ \Longrightarrow\ \mathcal{T}(E)\neq \emptyset.
\]
Equivalently (using $\mathrm{SocialOnly}(E)\iff \mathcal{T}(E)=\emptyset$),
\[
\forall j,\ \forall E,\quad E\models \varphi_{i,j}\ \Longrightarrow\ \neg \mathrm{SocialOnly}(E).
\]
\end{Definition}

\begin{Remark}[Well-formedness]
To avoid undefined transitions, we assume contracts are \emph{well-formed} in the sense that whenever a trigger is selected by the priority rule, the associated operator is defined on all admissible inputs. (If an implementation encounters an undefined operator at runtime, it must fall back and log the step as non-triggered.)
\end{Remark}

\subsection{Contract-constrained update}

\begin{Definition}[Trigger set and contract update]
\label{def:update}
Given belief $b_i^t$ and event $E_i^t$, define the \emph{trigger set}
\[
J_i^t := \{j\in\{1,\dots,k_i\} : E_i^t \models \varphi_{i,j}\}.
\]
The contract-constrained update is:
\begin{equation}
\label{eq:update}
b_i^{t+1} :=
\begin{cases}
U_{i,j^\star}(b_i^t,E_i^t), & \text{if } J_i^t\neq \emptyset,\ j^\star=\min_{\prec_i} J_i^t,\\
\delta_i(b_i^t,E_i^t), & \text{if } J_i^t=\emptyset.
\end{cases}
\end{equation}
\end{Definition}

\subsection{Certificates and auditability}

\begin{Definition}[Update certificate]
An update certificate for agent $i$ at time $t$ is a pair $\pi_i^t=(\ell_i^t,W_i^t)$ where
$\ell_i^t\in\{1,\dots,k_i\}\cup\{\bot\}$ is a label (trigger index or fallback) and $W_i^t\subseteq \Toks$ is a witness set.
For a fallback step we set $\pi_i^t=(\bot,\emptyset)$.
For a triggered step with label $\ell_i^t=j$, admissibility requires that $W_i^t$ is a nonempty set of validated tokens from the current event and that $W_i^t$ suffices to justify the trigger (Definition~\ref{def:token_restricted_sat} below).
\end{Definition}

\begin{Definition}[Token-restricted satisfaction and witnesses]
\label{def:token_restricted_sat}
Let $E$ be an event and let $W\subseteq \Toks$ be a token set.
Define $\mathcal{S}_W(E)$ to be the structure obtained from $\mathcal{S}(E)$ by \emph{restricting the token universe to $W$} and restricting all token-dependent relations/predicates accordingly (all non-token relations, e.g.\ message senders/recipients, are left unchanged).
We write
\[
E \models_W \varphi
\quad:\Longleftrightarrow\quad
\mathcal{S}_W(E)\models \varphi.
\]
A set $W$ is a \emph{witness} for $\varphi$ on $E$ if $W\subseteq \mathcal{T}(E)$ and $E\models_W \varphi$.

A \emph{witness extractor} for a trigger $\varphi$ is a (deterministic) function $\mathrm{Wit}_\varphi$ mapping events to token sets such that whenever $E\models \varphi$, the extracted set is a witness:
\[
E\models \varphi\quad\Longrightarrow\quad
\mathrm{Wit}_\varphi(E)\subseteq \mathcal{T}(E)\ \ \text{and}\ \ E\models_{\mathrm{Wit}_\varphi(E)} \varphi.
\]
\end{Definition}

\noindent In deployments, a router/auditor verifies a claimed triggered step $(\ell,W)$ against the public contract by:
(i) re-validating every $\tau\in W$ (and checking $W\subseteq \mathcal{T}(E)$),
(ii) checking $E \models_{W} \varphi_{i,\ell}$,
and (optionally, depending on the enforcement regime) (iii) checking that $\ell$ matches the contract's priority rule on the full event, i.e.\ $\ell=\min_{\prec_i}J(E)$.
This makes the informal requirement ``\emph{the trigger holds using tokens in $W$}'' precise and auditable.

\paragraph{What auditability does and does not certify.}
The certificate mechanism certifies \emph{admissibility} of the reason for revision (a preregistered trigger holds with a nonempty validated witness).
Whether the realized numerical update equals the preregistered operator $U_{i,\ell}$ is a separate \emph{operator-compliance} question addressed next.

\subsection{Enforcement semantics, trust boundary, and operator compliance}
\label{sec:enforcement}

Certificates make belief revision \emph{auditable} as an admissibility claim.
A distinct trust boundary concerns \emph{operator compliance}: even with a valid witness, can an agent output an arbitrary next belief/decision while claiming it applied $U_{i,j^\star}$?
We therefore distinguish three enforcement architectures and state explicitly which results depend on which.

\paragraph{(A) Gate-only admissibility checking.}
A minimal \emph{output gate} verifies admissibility (e.g., $W\neq\emptyset$, $W\subseteq\mathcal{T}(E)$, and $E\models_W \varphi_{i,\ell}$) and then accepts or rejects an \emph{agent-supplied} update.
This supports auditable traces and the router-projection/normal-form results (Section~\ref{sec:routerNF}), but it does not, by itself, guarantee that an accepted numerical transition equals applying $U_{i,\ell}$ to the authoritative prior state unless agents are assumed honest with respect to operators.

\paragraph{(B) Proof-carrying updates (verifiable computation).}
To enforce operator compliance without executing $U_{i,j}$ inside the router, the agent can attach a proof that its output equals
$U_{i,j^\star}(b_i^t,E_i^t)$ (or equals the contractually specified decision rule), in the spirit of proof-carrying mechanisms \cite{necula1997pcc}.
We treat this as an optional deployment path; our logical development does not depend on any specific proof system.

\paragraph{(C) State-holding routers (router-executed dynamics; default).}
A practically common and conceptually clean architecture is a \emph{state-holding router} that maintains the authoritative protocol state and executes the contract update itself.
Concretely, the router stores $b_i^t$, validates tokens, checks triggers, extracts a witness, and then sets
\[
b_i^{t+1} :=
\begin{cases}
U_{i,j^\star}(b_i^t,E_i^t), & \text{if a trigger $j^\star$ fires with a nonempty witness},\\
\delta_i(b_i^t,E_i^t), & \text{otherwise},
\end{cases}
\]
while recording the corresponding certificate $\pi_i^t$.
Unless stated otherwise, all results that reason about belief trajectories $b_i^t$ are interpreted under this router-executed semantics.
Gate-only checking (A) is sufficient for results that reason only about certificates/traces rather than numerical operator compliance.

\subsection{Protocol diagram}

Figure~\ref{fig:diagram} provides a round-level view of the \PBRC{} enforcement pipeline: upon receiving an event $E_i^t$, the system validates candidate evidence tokens, checks preregistered triggers, and records an admissibility certificate $\pi_i^t=(j^\star,W_i^t)$.

\begin{figure}[!ht]
\centering
\resizebox{\linewidth}{!}{
\begin{tikzpicture}[
  font=\footnotesize,
  node distance=8mm and 18mm,
  artifact/.style={
    draw, rounded corners=2pt, align=center, inner sep=4pt,
    text width=36mm, minimum height=11mm
  },
  process/.style={
    artifact,
    fill=black!5
  },
  outcome/.style={
    artifact,
    text width=46mm,
    fill=black!3
  },
  arrow/.style={->, thick},
  config/.style={->, thick, dashed},
  lab/.style={midway, fill=white, inner sep=1pt, font=\scriptsize}
]

\node[artifact] (contract) {Preregistered\\contract $C_i$};
\node[artifact, right=20mm of contract] (event) {Incoming event\\$E_i^t$};

\node[process, below=5mm of event] (validate) {Validate tokens\\$\mathcal{T}(E_i^t)$};
\node[process, below=5mm of validate] (check) {Check triggers \& extract witness\\select $j^\star$ and $W_i^t$};

\node[outcome, below left=7mm and 10mm of check] (fallback)
{Fallback\\$b_i^{t+1}=\delta_i(b_i^t,E_i^t)$\\$\pi_i^t=(\bot,\emptyset)$};

\node[outcome, below right=7mm and 10mm of check] (update)
{Evidence-triggered update\\$b_i^{t+1}=U_{i,j^\star}(b_i^t,E_i^t)$\\$\pi_i^t=(j^\star,W_i^t)$};

\node[artifact, below=20mm of check, text width=54mm] (log)
{Append-only audit log\\record $(b_i^{t+1},\pi_i^t)$};

\draw[arrow] (event) -- (validate);
\draw[arrow] (validate) -- (check);
\draw[config] (contract) |- node[lab, near start]{public spec} (check.west);
\draw[arrow] (check) -- node[lab]{if $W_i^t=\emptyset$} (fallback);
\draw[arrow] (check) -- node[lab]{if $W_i^t\neq\emptyset$} (update);
\draw[arrow] (fallback) |- (log.west);
\draw[arrow] (update) |- (log.east);

\node[
  draw, dashed, rounded corners=3pt,
  fit=(event)(validate)(check)(fallback)(update)(log),
  inner sep=6pt,
  label={[font=\scriptsize]above:Router-side enforcement (state-holding)}
] {};

\end{tikzpicture} }
\caption{\PBRC{} enforcement pipeline (state-holding router): tokens are validated, triggers are checked, and a non-fallback update is executed only when the selected trigger admits a \emph{nonempty} witness set; otherwise the step devolves to fallback. Each step appends $(b_i^{t+1},\pi_i^t)$ to an audit log.}
\label{fig:diagram}
\end{figure}
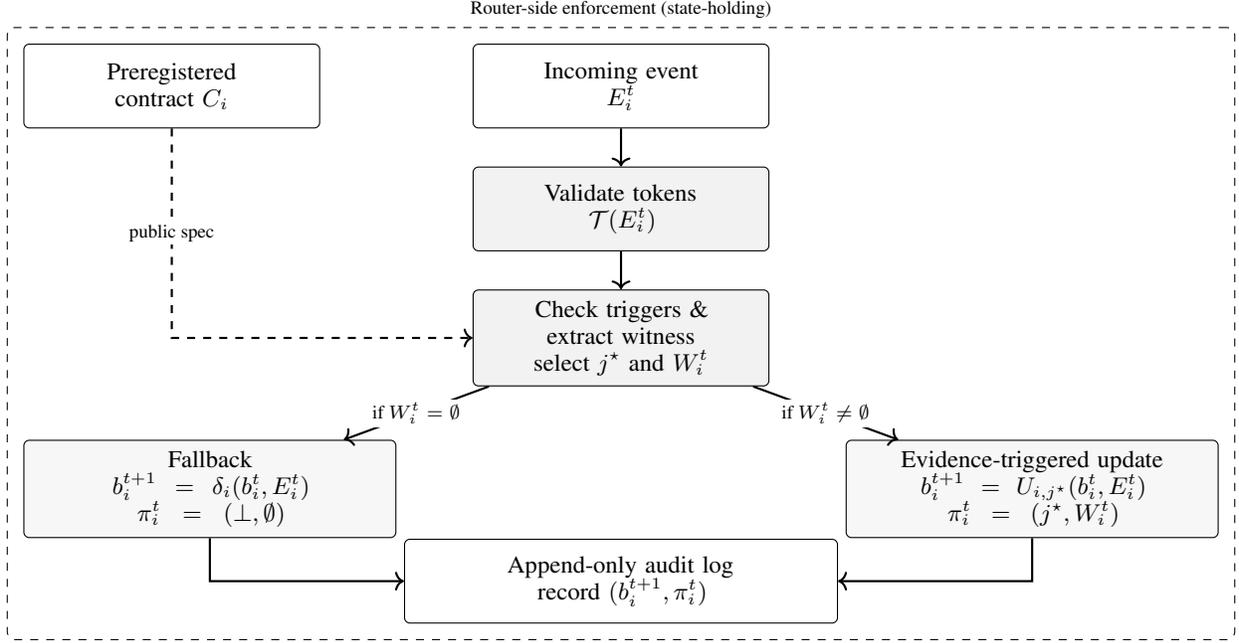

\begin{figure}[t]
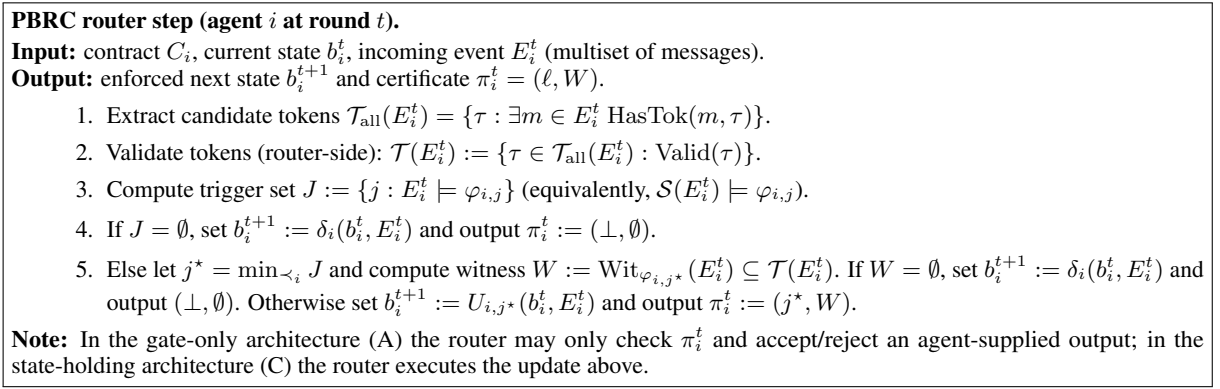

\centering
\fbox{\parbox{0.96\linewidth}{
\small
\textbf{PBRC router step (agent $i$ at round $t$).}\\[2pt]
\textbf{Input:} contract $C_i$, current state $b_i^t$, incoming event $E_i^t$ (multiset of messages).\\
\textbf{Output:} enforced next state $b_i^{t+1}$ and certificate $\pi_i^t=(\ell,W)$.
\begin{enumerate}
\item Extract candidate tokens $\mathcal{T}_{\mathrm{all}}(E_i^t)=\{\tau:\exists m\in E_i^t\ \mathrm{HasTok}(m,\tau)\}$.
\item Validate tokens (router-side): $\mathcal{T}(E_i^t):=\{\tau\in\mathcal{T}_{\mathrm{all}}(E_i^t):\mathrm{Valid}(\tau)\}$.
\item Compute trigger set $J:=\{j: E_i^t\models \varphi_{i,j}\}$ (equivalently, $\mathcal{S}(E_i^t)\models \varphi_{i,j}$).
\item If $J=\emptyset$, set $b_i^{t+1}:=\delta_i(b_i^t,E_i^t)$ and output $\pi_i^t:=(\bot,\emptyset)$.
\item Else let $j^\star=\min_{\prec_i}J$ and compute witness $W:=\mathrm{Wit}_{\varphi_{i,j^\star}}(E_i^t)\subseteq \mathcal{T}(E_i^t)$.
      If $W=\emptyset$, set $b_i^{t+1}:=\delta_i(b_i^t,E_i^t)$ and output $(\bot,\emptyset)$.
      Otherwise set $b_i^{t+1}:=U_{i,j^\star}(b_i^t,E_i^t)$ and output $\pi_i^t:=(j^\star,W)$.
\end{enumerate}
\textbf{Note:} In the gate-only architecture (A) the router may only check $\pi_i^t$ and accept/reject an agent-supplied output; in the state-holding architecture (C) the router executes the update above.
}}
\caption{PBRC enforcement pseudocode (router-executed semantics).}
\label{fig:pseudocode}
\end{figure}

\section{Running example: ``peer pressure'' misinformation triage}
\label{sec:running}

We use a minimal running example to illustrate three distinct phenomena that \PBRC{} separates by construction:
(i) purely social persuasion (rhetoric without validated evidence),
(ii) evidence-gated belief revision (admissible changes of mind justified by tokens and witnesses), and
(iii) the distinction between \emph{replay} (stale-but-valid evidence) and \emph{forgery} (manufactured ``valid'' evidence).

We will return to this example throughout the paper to ground the formal results. Specifically, we will show how the social-only guarantees (Section~\ref{sec:main}) prevent the cascade described in Section~\ref{sec:running_social}, how the normal form representation (Section~\ref{sec:normalform}) canonicalizes the evidence-gated revision in Section~\ref{sec:running_evidence}, and how the adversary models (Section~\ref{sec:adversary}) formalize the replay and forgery attacks in Section~\ref{sec:running_replay}.

\subsection{Setup}
\label{sec:running_setup}

Let $\Hh=\{\textsf{True},\textsf{False}\}$ encode whether a concrete news claim is true or false.
Agent $i$ maintains a belief distribution $b_i^t\in\Prob(\Hh)$.
Suppose the initial belief is moderately confident in truth:
\[
b_i^0(\textsf{True})=0.6,\qquad b_i^0(\textsf{False})=0.4.
\]

Tokens represent externally checkable evidence artifacts (e.g., signed tool outputs or retrieval receipts) and are validated by the system's validity layer.
For an event $E$ (a multiset of received messages), let $\mathcal{T}(E)$ denote its set of valid tokens; $\mathrm{SocialOnly}(E)$ is equivalent to $\mathcal{T}(E)=\emptyset$.

Agent $i$ preregisters a simple evidential contract with two prioritized triggers:
\begin{enumerate}
\item \emph{Verified falsifier} (high priority). There exists a valid token contradicting \textsf{True}:
\[
\varphi_1 \;:=\; \exists \tau\ \big(\mathrm{Valid}(\tau)\wedge \mathrm{Contradicts}(\tau,\textsf{True})\big).
\]
\item \emph{Verified support}. There exists a valid token supporting \textsf{True}:
\[
\varphi_2 \;:=\; \exists \tau\ \big(\mathrm{Valid}(\tau)\wedge \mathrm{Supports}(\tau,\textsf{True})\big).
\]
\end{enumerate}
The priority order is $\varphi_1 \prec \varphi_2$.
When no trigger is admissibly fired, the fallback is skeptical dilution $\delta_\lambda$ (Definition~\ref{def:skeptical_dilution}).

Finally, to make admissibility operational, assume enforcement rejects empty-witness certificates: a belief change is accepted only when accompanied by a nonempty validated witness set (Section~\ref{sec:routerNF}).

\subsection{Outcome under purely social persuasion (token-empty round)}
\label{sec:running_social}

At round $t=0$, a peer sends a confident message such as
\emph{``Everyone agrees it is \textsf{False}.''}
Crucially, the message contains no validated citations or tool outputs; formally, the received event $E_i^0$ is social-only:
\[
\mathrm{SocialOnly}(E_i^0)\quad\Longleftrightarrow\quad \mathcal{T}(E_i^0)=\emptyset.
\]
Because the contract is evidential, neither $\varphi_1$ nor $\varphi_2$ can hold on a token-empty event, hence $J_i^0=\emptyset$ and the enforced update is necessarily fallback:
\[
b_i^1 \;=\; \delta_\lambda(b_i^0,E_i^0).
\]
By Theorem~\ref{thm:nonamp}, confidence cannot increase in such a round, and by Lemma~\ref{lem:dilutionprops} the top hypothesis is preserved.
In particular, the peer's rhetoric cannot flip the decision from \textsf{True} to \textsf{False} without validated evidence, even if the rhetoric is unanimous, prestigious, or highly confident.

\subsection{Outcome with a valid evidence token (auditable flip)}
\label{sec:running_evidence}

Suppose at a later round a peer provides a validated citation token $\tau^\star$ (e.g., a retrieval-verified snippet or a signed fact-check result) such that
\[
\mathrm{Valid}(\tau^\star)\quad\text{and}\quad \mathrm{Contradicts}(\tau^\star,\textsf{True}).
\]
Then $E_i^1$ is non-social and satisfies $\varphi_1$, so the highest-priority trigger fires.
Under enforcement, the update is accepted only if it is justified by a nonempty witness set; here a natural witness is $W_i^1=\{\tau^\star\}$.
The resulting revision (via the preregistered operator for $\varphi_1$) may legitimately shift mass toward \textsf{False} and can even flip $\argmax$.
Importantly, if the top hypothesis flips, the flip is \emph{forensically attributable} to a concrete validated token (Theorem~\ref{thm:accountability}), rather than to social pressure.

\begin{table}[t]
\centering \small
\begin{tabularx}{\linewidth}{@{}c l l l >{\raggedright\arraybackslash}X@{}}
\toprule
Round $t$ & Event description & $\mathcal{T}(E_i^t)$ & Canonical Cert. & Enforced effect on belief\\
\midrule
$0$ & social-only persuasion & $\emptyset$ & $(\bot,\emptyset)$ &
fallback $b_i^1=\delta_\lambda(b_i^0,E_i^0)$; confidence $\downarrow$, $\argmax$ unchanged\\
$1$ & validated evidence arrives & $\{\tau^\star\}$ & $(1,\{\tau^\star\})$ &
$\varphi_1$ fires; admissible (and auditable) revision toward \textsf{False}\\
\bottomrule
\end{tabularx}
\caption{Running example trace under empty-witness rejection.
Token-empty persuasion canonicalizes to fallback, whereas an evidence-triggered step carries a nonempty witness set that an auditor can verify.}
\label{tab:running_trace}
\end{table}

\subsection{Replay versus forgery (preview)}
\label{sec:running_replay}

This example also clarifies two orthogonal integrity threats.

\emph{Replay.} A peer may retransmit an \emph{old} token $\tau^\star$ that was once valid but is no longer contextually applicable (e.g., wrong claim instance, stale timestamp, or mismatched retrieval context).
If $\mathrm{Valid}(\tau)$ does not bind tokens to an instance identifier and a freshness condition, then replayed evidence can satisfy $\varphi_1$ despite being irrelevant.
This is addressed by freshness/context predicates inside triggers (Section~\ref{sec:adversary}), e.g., requiring $\mathrm{Fresh}_\Delta$ and/or an instance hash match as part of admissibility.

\emph{Forgery.} If an adversary can fabricate tokens that pass $\mathrm{Valid}$ (e.g., by compromising validators or the verification channel), then it can manufacture admissible ``evidence'' for arbitrary conclusions.
In that regime, no evidence-gated protocol---including \PBRC{}---can guarantee immunity: any contract that permits revision based on a token can be driven by forged tokens (Theorem~\ref{thm:imposs_forge}).
Accordingly, \PBRC{} isolates the remaining attack surface to the validity layer: integrity, freshness, and applicability of tokens determine whether evidence-gated revision is trustworthy.

\section{Main guarantees (theorem-centric)}
\label{sec:main}

This section establishes the core \emph{social-dynamics} guarantees of \PBRC{}.
The central theme is that, under evidential contracts and conservative fallback, \emph{purely social interaction} cannot (i) increase confidence or (ii) induce a new high-confidence wrong consensus.

To orient the reader, we proceed in four steps. First, we define the necessary properties of fallback operators (Section~\ref{sec:fallback_ops}). Second, we prove that in token-empty (social-only) rounds, confidence cannot increase (Section~\ref{sec:social_non_amp}). Third, we establish the central impossibility result: purely conformity-driven cascades cannot occur under \PBRC{} (Section~\ref{sec:impossibility_cascades}). Finally, we provide a minimal topology statement showing how evidence closure eventually corrects beliefs (Section~\ref{sec:minimal_topology}). These formal results directly explain the cascade suppression observed in the misinformation triage example (Section~\ref{sec:running}).
All proofs in this section are complete and self-contained.

\begin{table}[t]
\centering
\small
\renewcommand{\arraystretch}{1.2}
\begin{tabularx}{\linewidth}{@{}l l >{\raggedright\arraybackslash}X@{}}
\toprule
\textbf{Result} & \textbf{Enforcement regime} & \textbf{Key assumptions (informal)}\\
\midrule
Thm.~\ref{thm:nonamp} (token-empty non-amplification) & (B) or (C) & evidential triggers; non-amplifying fallback\\
Thm.~\ref{thm:nocascade} (no wrong-but-sure social cascades) & (B) or (C) & evidential; argmax-preserving \& non-amplifying fallback; reject empty witnesses\\
Thm.~\ref{thm:normalform} (representation/compilation) & (C) or honest & token-witnessable triggers; token-sound witnesses; certificate canonicalization\\
Thm.~\ref{thm:routernf} (router normal form) & (A) or (C) & nonempty-witness gate; witness non-vacuity on non-social satisfaction\\
Thm.~\ref{thm:tokensuff} (token-sufficiency) & (B) or (C) & token-invariant contract; router soundness \& token-determinedness\\
Thm.~\ref{thm:flood_distance}/\ref{thm:flood_optimal} (topology/diameter) & any & flooding dissemination; no exogenous token births; synchronous rounds\\
\bottomrule
\end{tabularx}
\vspace{0.5em}
\caption{Assumption matrix for representative core results. This table maps each major theorem to the required enforcement regime and the necessary structural assumptions on the contract or environment. Enforcement regimes are: (A) gate-only admissibility checking, (B) proof-carrying updates, (C) state-holding router execution (Section~\ref{sec:enforcement}).}
\label{tab:assumptions}
\end{table}

\subsection{Fallback operators and skeptical dilution}
\label{sec:fallback_ops}

\begin{Definition}[Argmax-preserving and non-amplifying fallback]
\label{def:fallback_props}
Fix a confidence functional $\conf:\Prob(\Hh)\to\R$ (in the main text we take $\conf(b)=\max_{h\in\Hh} b(h)$).
A fallback operator $\delta_i:\Prob(\Hh)\times\Ev\to\Prob(\Hh)$ is:
\begin{enumerate}
\item \emph{argmax-preserving} if $\argmax(\delta_i(b,E))=\argmax(b)$ for all $b,E$;
\item \emph{non-amplifying} if $\conf(\delta_i(b,E))\le \conf(b)$ for all $b,E$.
\end{enumerate}
\end{Definition}

\begin{Definition}[Skeptical dilution]
\label{def:skeptical_dilution}
Fix $\lambda\in(0,1)$ and let $u$ denote the uniform distribution on $\Hh$.
Define the skeptical-dilution fallback
\[
\delta_\lambda(b,E)\ :=\ (1-\lambda)b+\lambda u.
\]
\end{Definition}

\begin{Lemma}[Closed form under repeated skeptical dilution]
\label{lem:closedform}
Let $b^{0}\in\Prob(\Hh)$ and define $b^{t+1}=\delta_\lambda(b^t,E^t)$ for an arbitrary sequence of events $(E^t)_{t\ge 0}$.
Then for all $t\ge 0$,
\[
b^{t}=(1-\lambda)^t\, b^{0} + \bigl(1-(1-\lambda)^t\bigr)\,u,
\]
and, writing $\TV(p,q)=\tfrac12\|p-q\|_1$ for total variation distance,
\[
\TV(b^{t},u)=(1-\lambda)^t\,\TV(b^{0},u).
\]
\end{Lemma}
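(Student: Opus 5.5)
The plan is to prove the closed form by induction on $t$, using only the fact that $\delta_\lambda$ ignores its event argument. The TV identity then follows by a one-line affine computation.

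\emph{Base case.} At $t=0$ the formula reads $b^0=(1-\lambda)^0 b^0+(1-1)u=b^0$, which holds trivially.

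\emph{Inductive step.} Assume $b^t=(1-\lambda)^t b^0+\bigl(1-(1-\lambda)^t\bigr)u$. By Definition~\ref{def:skeptical_dilution},
\[
b^{t+1}=(1-\lambda)b^t+\lambda u=(1-\lambda)^{t+1}b^0+\bigl[(1-\lambda)-(1-\lambda)^{t+1}+\lambda\bigr]u .
\]
The bracket simplifies to $1-(1-\lambda)^{t+1}$, which closes the induction. The arbitrary event sequence $(E^t)$ plays no role, because $\delta_\lambda(b,E)$ does not depend on $E$. It is also worth recording that each $b^t$ is a convex combination of $b^0$ and $u$, so it stays in $\Prob(\Hh)$ and the iteration is well defined.

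\emph{TV identity.} Subtracting $u$ from the closed form gives
\[
b^t-u=(1-\lambda)^t\,(b^0-u),
\]
since the $u$-coefficients sum to one. Taking $\tfrac12\|\cdot\|_1$ and using absolute homogeneity of the $\ell_1$ norm, with $(1-\lambda)^t\ge 0$ because $\lambda\in(0,1)$, yields
\[
\TV(b^t,u)=(1-\lambda)^t\,\TV(b^0,u).
\]

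There is no real obstacle here. The only point needing care is the coefficient bookkeeping in the inductive step, and it is worth noting explicitly that the result holds for every event sequence because the fallback is event-independent.
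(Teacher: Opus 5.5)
Your proposal is correct and takes essentially the same route as the paper. The paper also iterates the affine recursion $b^{t+1}=(1-\lambda)b^t+\lambda u$; you simply make the induction and coefficient check explicit. It then gets the TV identity from $b^t-u=(1-\lambda)^t(b^0-u)$ by pulling the nonnegative scalar out of the $\ell_1$ norm, as you do.
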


\begin{proof}
By Definition~\ref{def:skeptical_dilution},
\[
b^{t+1}=(1-\lambda)b^t+\lambda u.
\]
Iterating the affine recursion yields
$b^t=(1-\lambda)^t b^0+\bigl(1-(1-\lambda)^t\bigr)u$.
Subtracting $u$ gives $b^t-u=(1-\lambda)^t(b^0-u)$, and taking $\TV(\cdot,u)=\tfrac12\|\cdot-u\|_1$ pulls out the constant factor $(1-\lambda)^t$.
\end{proof}

\begin{Lemma}[Skeptical dilution preserves argmax and does not amplify confidence]
\label{lem:dilutionprops}
For any $\lambda\in(0,1)$, $\delta_\lambda$ is argmax-preserving and non-amplifying in the sense of Definition~\ref{def:fallback_props}.
Moreover, if $b$ is not uniform (equivalently, $\conf(b)>1/|\Hh|$), then $\conf(\delta_\lambda(b,E))<\conf(b)$.
\end{Lemma}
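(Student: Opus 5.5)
The whole proof rests on one pointwise identity. Write $m=|\Hh|$, so $u(h)=1/m$ for every $h$, and set $b'=\delta_\lambda(b,E)$. Definition~\ref{def:skeptical_dilution} gives
\[
b'(h)=(1-\lambda)\,b(h)+\lambda/m \qquad\text{for all } h\in\Hh .
\]
The map $x\mapsto(1-\lambda)x+\lambda/m$ is strictly increasing, because $1-\lambda>0$ when $\lambda\in(0,1)$. Every claim in the lemma follows from this monotone affine structure.

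\textbf{Argmax preservation.} We use strict monotonicity on pairs of hypotheses. For any $h,h'$, we have $b'(h)\ge b'(h')$ if and only if $b(h)\ge b(h')$. So $h$ maximizes $b'$ exactly when it maximizes $b$, which gives $\argmax(b')=\argmax(b)$. It also gives the formula $\conf(b')=(1-\lambda)\conf(b)+\lambda/m$, since the maximum of $b'$ is attained at the same point as the maximum of $b$ and the affine map commutes with taking that value.

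\textbf{Non-amplification and the strict decrease.} From the formula for $\conf(b')$,
\[
\conf(b)-\conf(b')=\lambda\bigl(\conf(b)-1/m\bigr).
\]
Since $b$ sums to $1$ over $m$ entries, its maximum is at least the average, so $\conf(b)\ge 1/m$. Hence the difference is nonnegative, which is non-amplification.

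For the strict part we first verify the stated equivalence, $\conf(b)=1/m$ if and only if $b=u$. If the maximum equals the average $1/m$, then every entry is at most $1/m$ while the entries sum to $1$, so all entries equal $1/m$. The converse is immediate. Therefore, when $b$ is not uniform, $\conf(b)>1/m$. Combined with $\lambda>0$, the displayed difference is strictly positive, so $\conf(\delta_\lambda(b,E))<\conf(b)$.

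None of these steps is a real obstacle. The only point needing care is the equivalence between ``non-uniform'' and $\conf(b)>1/|\Hh|$, which the averaging argument above settles. The event $E$ plays no role anywhere in the argument, and this is why the lemma holds for all $b$ and $E$ as Definition~\ref{def:fallback_props} requires.
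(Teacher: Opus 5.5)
Your proof is correct and follows essentially the same route as the paper. The paper also uses the affine structure of $\delta_\lambda$ (phrased as coordinate differences scaling by $1-\lambda$) to get argmax preservation and $\conf(\delta_\lambda(b,E))=(1-\lambda)\conf(b)+\lambda/|\Hh|$, with non-amplification and strictness following from $\conf(b)\ge 1/|\Hh|$; your averaging argument for ``non-uniform iff $\conf(b)>1/|\Hh|$'' makes explicit a step the paper only asserts.
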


\begin{proof}
For any $h,h'\in\Hh$,
\[
\delta_\lambda(b,E)(h)-\delta_\lambda(b,E)(h')=(1-\lambda)\bigl(b(h)-b(h')\bigr),
\]
so strict inequalities and equalities among coordinates are preserved; hence the entire $\argmax$ set is preserved.

For confidence, let $m=\conf(b)=\max_h b(h)$. Then
\[
\conf(\delta_\lambda(b,E))
=\max_h\Bigl((1-\lambda)b(h)+\lambda/|\Hh|\Bigr)
=(1-\lambda)m+\lambda/|\Hh|
\le m,
\]
since $m\ge 1/|\Hh|$ for any distribution. Strictness holds exactly when $m>1/|\Hh|$, i.e., when $b$ is non-uniform.
\end{proof}

\subsection{Token-empty rounds and social non-amplification}
\label{sec:social_non_amp}

Recall that $\mathrm{SocialOnly}(E)$ means $E$ contains no valid tokens, i.e.\ $\mathcal{T}(E)=\emptyset$.

\begin{Definition}[Trigger-inactive round]
\label{def:trigger_inactive}
Round $t$ is \emph{trigger-inactive for agent $i$} if $J_i^t=\emptyset$, i.e., no preregistered trigger is satisfied at that step and the intended contract update is fallback.
\end{Definition}

\begin{Remark}[Token-empty implies trigger-inactive under evidential contracts]
If agent $i$ uses an \emph{evidential} contract (every trigger implies $\neg\mathrm{SocialOnly}$), then $\mathrm{SocialOnly}(E_i^t)$ implies $J_i^t=\emptyset$ at round $t$.
Under enforcement regimes (B) or (C) with empty-witness rejection, such token-empty rounds are therefore necessarily enforced as fallback steps.
\end{Remark}

\begin{Theorem}[Social non-amplification]
\label{thm:nonamp}
If $\delta_i$ is non-amplifying, then for any agent $i$ and any trigger-inactive round $t$ (Definition~\ref{def:trigger_inactive}),
\[
\conf(b_i^{t+1})\le \conf(b_i^t).
\]
If, moreover, $\delta_i=\delta_\lambda$ and $b_i^t$ is non-uniform, then the inequality is strict.
\end{Theorem}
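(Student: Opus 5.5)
The plan is to reduce the claim to the update rule of Definition~\ref{def:update} together with the fallback properties already established. Fix an agent $i$ and a trigger-inactive round $t$. By Definition~\ref{def:trigger_inactive}, $J_i^t=\emptyset$, so the second case of \eqref{eq:update} applies and $b_i^{t+1}=\delta_i(b_i^t,E_i^t)$.

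One point needs care: trajectories are interpreted under the router-executed semantics (C), or under proof-carrying updates (B). In the router pseudocode (Figure~\ref{fig:pseudocode}), step~4 sends $J=\emptyset$ directly to the fallback branch. The other path to fallback, an empty witness, only arises when $J\neq\emptyset$, so it is irrelevant here. Hence under either regime the realized next state is exactly $\delta_i(b_i^t,E_i^t)$. I will state this explicitly, since it is the only place where the enforcement model enters.

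With this identification, the first inequality is immediate from Definition~\ref{def:fallback_props}(2). Non-amplification gives $\conf(\delta_i(b,E))\le\conf(b)$ for all $b,E$. Instantiating with $b=b_i^t$ and $E=E_i^t$ yields $\conf(b_i^{t+1})\le\conf(b_i^t)$.

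For the strict part, I take $\delta_i=\delta_\lambda$ with $\lambda\in(0,1)$ and assume $b_i^t$ is non-uniform. I then invoke the ``moreover'' clause of Lemma~\ref{lem:dilutionprops}. Non-uniformity is equivalent to $\conf(b_i^t)>1/|\Hh|$: a distribution whose maximum equals $1/|\Hh|$ must have every coordinate equal to $1/|\Hh|$, because the coordinates sum to one. Given this, the lemma's computation
\[
\conf(\delta_\lambda(b_i^t,E_i^t))=(1-\lambda)\conf(b_i^t)+\lambda/|\Hh|<\conf(b_i^t)
\]
gives strictness.

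There is no real obstacle here. The only subtle step is justifying that a trigger-inactive round is realized as the fallback branch in the enforced trajectory, and not as some agent-supplied output. That is why the statement is tied to regimes (B) and (C) in Table~\ref{tab:assumptions}, and the proof will point to that.
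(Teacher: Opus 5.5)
Your proposal is correct and follows essentially the same route as the paper's proof. Trigger-inactivity selects the fallback branch of \eqref{eq:update}; non-amplification then gives the inequality, and the strict case follows from Lemma~\ref{lem:dilutionprops}. Your additional remarks on the enforcement regime and on the equivalence of non-uniformity with $\conf(b)>1/|\Hh|$ are accurate, though not needed beyond what the paper states.
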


\begin{proof}
If round $t$ is trigger-inactive for agent $i$, then by the one-step update semantics (Equation~\eqref{eq:update}) we have
$b_i^{t+1}=\delta_i(b_i^t,E_i^t)$.
Non-amplification of $\delta_i$ gives $\conf(b_i^{t+1})\le \conf(b_i^t)$.
If $\delta_i=\delta_\lambda$ and $b_i^t$ is non-uniform, strictness follows from Lemma~\ref{lem:dilutionprops}.
\end{proof}

\begin{Corollary}[Convergence to maximal uncertainty under repeated trigger-inactive rounds]
\label{cor:converge}
Assume $C_i$ is evidential and $\delta_i=\delta_\lambda$.
If rounds $0,1,\dots,T-1$ are trigger-inactive for agent $i$, then
\[
b_i^T=(1-\lambda)^T b_i^0 + \bigl(1-(1-\lambda)^T\bigr)u,
\qquad
\TV(b_i^T,u)=(1-\lambda)^T\TV(b_i^0,u).
\]
\end{Corollary}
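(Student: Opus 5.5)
The plan is to reduce the corollary to Lemma~\ref{lem:closedform} by showing that, under the stated hypotheses, the realized trajectory of agent $i$ over rounds $0,\dots,T-1$ is exactly the pure skeptical-dilution recursion. First I will unpack trigger-inactivity: by Definition~\ref{def:trigger_inactive}, for each $t\in\{0,\dots,T-1\}$ we have $J_i^t=\emptyset$. The contract-constrained update in Equation~\eqref{eq:update} then selects the fallback branch, giving $b_i^{t+1}=\delta_i(b_i^t,E_i^t)=\delta_\lambda(b_i^t,E_i^t)$. I will also check that the router-executed semantics (C) in Figure~\ref{fig:pseudocode} agrees with this. In step 4 of that pseudocode, $J=\emptyset$ leads directly to fallback, so no witness-extraction subtlety arises and no other branch can occur. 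The evidential hypothesis is not needed for this step. I will remark that it only matters when trigger-inactivity is derived from token-emptiness, as in the preceding Remark.

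Next I will apply Lemma~\ref{lem:closedform} to the sequence $b^t:=b_i^t$ with events $E^t:=E_i^t$ for $t<T$. That lemma is stated for an infinite event sequence, but its proof is an induction that only uses the recursion at indices below $t$. The restriction to $t\le T$ is therefore immediate. Alternatively, I can extend the events arbitrarily beyond $T$ and consider the dilution-only comparison sequence, which coincides with $b_i^t$ for $t\le T$. Reading the lemma off at $t=T$ gives both the affine closed form and the identity $\TV(b_i^T,u)=(1-\lambda)^T\TV(b_i^0,u)$, using linearity $b_i^T-u=(1-\lambda)^T(b_i^0-u)$ together with homogeneity of the $\ell_1$ norm.

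There is no real obstacle here. The only point needing care is the first one: making sure the realized enforced update on trigger-inactive rounds is literally $\delta_\lambda$ under whichever enforcement regime is in force, so that the sequence falls within the scope of Lemma~\ref{lem:closedform}.
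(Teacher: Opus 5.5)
Your proposal is correct and follows the paper's proof: trigger-inactivity forces the fallback branch of Equation~\eqref{eq:update}, so $b_i^{t+1}=\delta_\lambda(b_i^t,E_i^t)$ for all $t<T$, and Lemma~\ref{lem:closedform} evaluated at $t=T$ gives both identities. You are also right that the evidential hypothesis is not used in this argument.
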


\begin{proof}
Under the hypothesis, $b_i^{t+1}=\delta_\lambda(b_i^t,E_i^t)$ for all $t<T$. Apply Lemma~\ref{lem:closedform}.
\end{proof}

\subsection{Impossibility of purely conformity-driven wrong-but-sure cascades}
\label{sec:impossibility_cascades}

The most dangerous failure mode in deliberative systems is a ``wrong-but-sure'' cascade: the population converges to a high-confidence consensus on an incorrect hypothesis, driven entirely by mutual reinforcement rather than evidence. This is precisely the failure mode illustrated in the misinformation triage example (Section~\ref{sec:running_social}), where agents amplify each other's confidence without any new evidence.

Fix a designated true hypothesis $h^\star\in\Hh$ and a confidence threshold $\gamma\in(1/|\Hh|,1]$.

\begin{Definition}[Social-driven wrong-but-sure cascade]
\label{def:social_cascade}
A \emph{social-driven wrong-but-sure cascade} to a false hypothesis $h\neq h^\star$ occurs over an interval $[t_0,t_1]$ if:
\begin{enumerate}
\item (\emph{evidence-free interval}) for every agent $i$ and every round $t\in\{t_0,\dots,t_1-1\}$, the round is trigger-inactive for agent $i$ (equivalently, only fallback is enforced throughout the interval); and
\item (\emph{wrong but sure at the end}) for every agent $i$, $\argmax(b_i^{t_1})=\{h\}$ and $\conf(b_i^{t_1})\ge \gamma$.
\end{enumerate}
\end{Definition}

\begin{Theorem}[No social-driven cascades under argmax-preserving fallback]
\label{thm:nocascade}
Assume each agent $i$ uses a fallback $\delta_i$ that is argmax-preserving and non-amplifying (Definition~\ref{def:fallback_props}).
If rounds $t_0,\dots,t_1-1$ are trigger-inactive for all agents, then for every agent $i$,
\[
\argmax(b_i^{t_1})=\argmax(b_i^{t_0}),
\qquad
\conf(b_i^{t_1})\le \conf(b_i^{t_0}).
\]
Consequently, no social-driven wrong-but-sure cascade (Definition~\ref{def:social_cascade}) can occur unless it already held at time $t_0$.
\end{Theorem}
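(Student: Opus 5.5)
The plan is a straightforward induction on the round index $t\in\{t_0,\dots,t_1\}$, carried out separately for each fixed agent $i$. For each $i$ I will prove the invariant
\[
\argmax(b_i^{t})=\argmax(b_i^{t_0}),\qquad \conf(b_i^{t})\le \conf(b_i^{t_0}).
\]
The base case $t=t_0$ is trivial. For the inductive step from $t$ to $t+1$ with $t\le t_1-1$, the round is trigger-inactive for agent $i$ by hypothesis, so $J_i^t=\emptyset$. The update semantics \eqref{eq:update} (equivalently, the router-executed semantics (C) of Section~\ref{sec:enforcement}) then gives $b_i^{t+1}=\delta_i(b_i^t,E_i^t)$. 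Argmax-preservation from Definition~\ref{def:fallback_props} yields $\argmax(b_i^{t+1})=\argmax(b_i^t)$. Non-amplification, or directly Theorem~\ref{thm:nonamp}, yields $\conf(b_i^{t+1})\le\conf(b_i^t)$. Chaining these with the inductive hypothesis closes the step, since equality of sets is transitive and $\le$ on reals is transitive.

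The only point needing care is that the update really is fallback at each step. A triggered operator could move the argmax, so I must rule one out. Trigger-inactivity means no trigger is satisfied, so under (C) the router takes the fallback branch. Under (B) the proof certifies that the output equals the contract update, which is again fallback. This is the one place the enforcement regime listed in Table~\ref{tab:assumptions} is used. No empty-witness consideration is needed, because with $J_i^t=\emptyset$ no candidate trigger $j^\star$ exists in the first place.

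For the ``consequently'' clause, suppose a social-driven wrong-but-sure cascade to some $h\neq h^\star$ occurs over $[t_0,t_1]$. Condition 1 of Definition~\ref{def:social_cascade} is exactly the trigger-inactivity hypothesis, so the invariant applies. Condition 2 gives $\argmax(b_i^{t_1})=\{h\}$ and $\conf(b_i^{t_1})\ge\gamma$ for every agent $i$. By the invariant, $\argmax(b_i^{t_0})=\{h\}$, and $\conf(b_i^{t_0})\ge\conf(b_i^{t_1})\ge\gamma$. Thus the wrong-but-sure configuration already held at $t_0$ for every agent, which is the claim. I expect no real obstacle here; the main subtlety is just keeping the enforcement semantics explicit so that ``trigger-inactive'' translates into ``fallback applied''.
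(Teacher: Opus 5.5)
Your proof is correct and follows the paper's argument: fix an agent, note that trigger-inactivity forces $b_i^{t+1}=\delta_i(b_i^t,E_i^t)$ at every step, and iterate argmax-preservation and non-amplification. Your explicit derivation of the ``consequently'' clause from Definition~\ref{def:social_cascade} spells out a step the paper leaves implicit and adds nothing new.
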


\begin{proof}
Fix an agent $i$. Since rounds $t_0,\dots,t_1-1$ are trigger-inactive for $i$, we have
$b_i^{t+1}=\delta_i(b_i^t,E_i^t)$ for all $t_0\le t<t_1$.
Argmax-preservation yields $\argmax(b_i^{t+1})=\argmax(b_i^t)$ at each step, hence $\argmax(b_i^{t_1})=\argmax(b_i^{t_0})$ by iteration.
Non-amplification yields $\conf(b_i^{t+1})\le \conf(b_i^t)$ at each step, hence $\conf(b_i^{t_1})\le \conf(b_i^{t_0})$ by iteration.
\end{proof}

\subsection{A minimal topology statement under evidence closure}
\label{sec:minimal_topology}

The detailed topology-to-trace factorization results appear later (Sections~\ref{sec:normalform}--\ref{sec:topologytrace}). For completeness, we record here a simple ``closure implies topology-irrelevance'' principle for policies whose epistemic state is a function of admitted evidence sets.

\begin{Definition}[Evidence closure]
\label{def:evidence_closure}
Let $\mathcal{K}^{<T}$ denote the set of all valid tokens that appear anywhere in the system before time $T$:
\[
\mathcal{K}^{<T}\ :=\ \bigcup_{i\in\A}\ \bigcup_{t<T}\ \mathcal{T}(E_i^t).
\]
A run achieves \emph{evidence closure by time $T$} if every agent has access to every such token at time $T$:
\[
\forall i\in\A,\quad \mathcal{K}^{<T}\ \subseteq\ \mathcal{K}_i^{T},
\]
where $\mathcal{K}_i^{T}$ denotes the set of valid tokens available to agent $i$ after processing rounds $0,\dots,T-1$
(e.g., $\mathcal{K}_i^{T}=\bigcup_{t<T}\mathcal{T}(E_i^t)$, or with an explicit initial set if present).
\end{Definition}

\begin{Definition}[Evidence-additive revision]
\label{def:evidence_additive}
A revision policy for agent $i$ is \emph{evidence-additive} if there exists a function $F_i$ and an ``admission'' map $\mathrm{Admit}$ such that
\[
b_i^t = F_i(\mathcal{S}_i^t),
\qquad
\mathcal{S}_i^{t+1}=\mathcal{S}_i^t\cup \mathrm{Admit}(C_i,E_i^t),
\]
where $\mathcal{S}_i^t$ is the cumulative set of admitted tokens up to time $t$, and
$\mathrm{Admit}(C_i,E_i^t)\subseteq \mathcal{T}(E_i^t)$ is a (possibly empty) witness set used to justify admissible revision at round $t$.
\end{Definition}

\begin{Definition}[Sender-invariant triggers]
\label{def:sender_invariant}
Fix a receiver $i$. A trigger $\chi$ is \emph{sender-invariant} if its satisfaction depends only on token-level facts available at $i$
(e.g., $\mathrm{Valid}$, $\mathrm{Supports}$, $\mathrm{Contradicts}$, token types, and freshness predicates),
and not on the identities of message senders \emph{except insofar as those identities are explicitly bound into tokens}.
Equivalently, $\chi$ is definable in the token-only signature obtained by removing $\mathrm{Send}$ and other non-token message metadata.
\end{Definition}

\begin{Theorem}[Topology irrelevance given identical admitted evidence sets]
\label{thm:topology}
Assume agent $i$ uses evidence-additive revision (Definition~\ref{def:evidence_additive}) and all triggers are sender-invariant (Definition~\ref{def:sender_invariant}).
Consider two runs (possibly on different topologies and with different message orderings/rhetoric) that induce the same admitted evidence set at time $T$, i.e.\ $\mathcal{S}_{i}^{T}$ is identical in both runs.
Then the terminal belief is identical: $b_i^{T}$ is the same in both runs.
In particular, under evidence closure (Definition~\ref{def:evidence_closure}), any topology/order/rhetoric variation that does not change $\mathcal{S}_{i}^{T}$ cannot change $b_i^{T}$.
\end{Theorem}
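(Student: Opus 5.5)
The proof is essentially a direct unfolding of Definition~\ref{def:evidence_additive}. Evidence-additivity supplies a function $F_i$, fixed by the agent's contract and independent of the run, with $b_i^t=F_i(\mathcal{S}_i^t)$ for every $t$. I would first make explicit that $F_i$ depends only on the public data of $C_i$ and on the cumulative admitted set. It does not depend on the topology, message ordering, sender identities, or the non-token content of messages.

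With that in place, the main claim is immediate. Take two runs $\rho,\rho'$ with $\mathcal{S}_i^T(\rho)=\mathcal{S}_i^T(\rho')$. Then $b_i^T(\rho)=F_i(\mathcal{S}_i^T(\rho))=F_i(\mathcal{S}_i^T(\rho'))=b_i^T(\rho')$. Here the initial condition $\mathcal{S}_i^0$ is the same in both runs; I would state this explicitly, since it is absorbed into the common admitted set anyway.

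Sender-invariance (Definition~\ref{def:sender_invariant}) plays a supporting role. It is not needed for the terminal equality once $\mathcal{S}_i^T$ is assumed equal. What it guarantees is that the admission map $\mathrm{Admit}(C_i,E)$ is determined by token-level facts of $E$ alone. Hence the hypothesis ``same admitted evidence set'' is the right invariant to compare across topologies.

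I would record this as a short remark. Suppose triggers are sender-invariant and the witness extractor is token-determined. Then two events with the same validated tokens and labels yield the same trigger set, the same $j^\star$, and the same witness. Consequently, rhetoric and sender metadata cannot enter $\mathrm{Admit}$, and therefore cannot enter $\mathcal{S}_i^t$.

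For the ``in particular'' clause, I would argue as follows. Any variation of topology, order, or rhetoric that leaves $\mathcal{S}_i^T$ unchanged falls under the main claim. Under evidence closure, if additionally every available token is admitted (for example, when admission is monotone in available tokens), then $\mathcal{S}_i^T$ is pinned down by $\mathcal{K}^{<T}$, and variations preserving $\mathcal{K}^{<T}$ preserve $b_i^T$. As the theorem is worded, though, the clause is literally just an instance of the main claim, so I would avoid overclaiming and present it that way.

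The only genuine obstacle is interpretive rather than technical. It is to pin down that $F_i$ is a single run-independent function, not one that varies with the run. Otherwise the statement is vacuous or false, because a run-dependent $F_i$ could smuggle the history back in. I would handle this by reading Definition~\ref{def:evidence_additive} as quantifying $F_i$ per agent and contract, uniformly over runs, and by noting this reading explicitly at the start of the proof.
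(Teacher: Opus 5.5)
Your proposal is correct and matches the paper's proof. Both conclude directly from $b_i^{T}=F_i(\mathcal{S}_i^{T})$ with $F_i$ fixed across runs. The paper likewise says that sender-invariance is not used in the argument and serves only to motivate the comparison, with the precise dependence on token-level evidence deferred to the later token-sufficiency and topology-to-trace results.
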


\begin{proof}
By evidence-additivity, $b_i^{T}=F_i(\mathcal{S}_i^{T})$. If $\mathcal{S}_i^{T}$ is the same across the two runs, then the right-hand side is the same, hence $b_i^{T}$ is the same.
Sender-invariance is included to emphasize that admissibility judgments can be designed to depend only on token-level evidence rather than social metadata; the stronger token-sufficiency and topology-to-trace results later make this dependence precise.
\end{proof}


\section{Minimality and necessity results}
\label{sec:minimality}

This section formalizes a broad class of \emph{auditable trigger protocols} and shows that several \PBRC{} design choices are not merely sufficient but \emph{forced} if one demands robust behavior under evidence-free (social-only) interaction. The results are intentionally elementary: they are ``one-step'' necessity statements that isolate which part of a protocol must carry the burden of social robustness.

We structure this analysis as follows. First, we define the general class of auditable trigger protocols (Section~\ref{sec:auditable_trigger_protocols}). Second, we formalize evidence-free robustness properties (Section~\ref{sec:evidence_free_robustness}). Third, we prove that fallback operators must be conservative to satisfy these properties (Section~\ref{sec:fallback_minimality}). Finally, we demonstrate that evidence gating is strictly necessary for nontrivial belief revision (Section~\ref{sec:necessity_evidence_gating}).

\subsection{Auditable trigger protocols}
\label{sec:auditable_trigger_protocols}

\begin{Definition}[Auditable trigger protocol]
\label{def:audit_trigger_protocol}
Fix an agent $i$ and hypothesis space $\Hh$.
An \emph{auditable trigger protocol} for agent $i$ is a tuple
\[
P_i=\langle (\psi_{i,1},V_{i,1}),\dots,(\psi_{i,r_i},V_{i,r_i}),\prec_i,\delta_i\rangle
\]
where:
(i) each $\psi_{i,j}$ is a public predicate over events $E$ (a \emph{trigger});
(ii) each $V_{i,j}:\Prob(\Hh)\times \Ev \to \Prob(\Hh)$ is an update operator;
(iii) $\prec_i$ is a strict priority order on $\{1,\dots,r_i\}$;
(iv) $\delta_i:\Prob(\Hh)\times \Ev \to \Prob(\Hh)$ is a fallback operator.
On input $(b,E)$ the protocol selects the highest-priority satisfied trigger (if any) and applies its operator; otherwise it applies fallback:
\[
F_{P_i}(b,E)=
\begin{cases}
V_{i,j^\star}(b,E), & J_{P_i}(E)\neq\emptyset,\\
\delta_i(b,E), & J_{P_i}(E)=\emptyset,
\end{cases}
\qquad
j^\star=\min_{\prec_i} J_{P_i}(E),
\]
where $J_{P_i}(E):=\{j: E\models \psi_{i,j}\}$.
\end{Definition}

\begin{Remark}[When necessity statements are non-vacuous]
\label{rem:nonvacuous_fallback}
Theorems~\ref{thm:necessity_argmax} and \ref{thm:necessity_nonamp} only constrain $\delta_i$ on social-only inputs that can actually reach fallback, i.e., on social-only events $E$ such that $J_{P_i}(E)=\emptyset$ for the relevant run. We state the conditions explicitly below to avoid vacuity.
\end{Remark}

\subsection{Evidence-free robustness properties}
\label{sec:evidence_free_robustness}

Recall that $\mathrm{SocialOnly}(E)$ means $E$ contains no valid evidence tokens (equivalently, $\mathcal{T}(E)=\emptyset$).

\begin{Definition}[Evidence-free peer-pressure immunity]
\label{def:peer_pressure_immunity}
A protocol $P_i$ has \emph{evidence-free peer-pressure immunity} if for every event stream $(E^t)_{t\ge 0}$ with $\mathrm{SocialOnly}(E^t)$ for all $t$, the induced belief trajectory $(b_i^t)_{t\ge 0}$ satisfies
\[
\argmax(b_i^t)=\argmax(b_i^0)\quad\text{for all }t\ge 0.
\]
\end{Definition}

\begin{Definition}[Evidence-free non-amplification]
\label{def:nonamplification}
Fix a confidence functional $\conf:\Prob(\Hh)\to \R$ (e.g., $\conf(b)=\max_{h\in\Hh} b(h)$).
A protocol $P_i$ has \emph{evidence-free non-amplification} if for every event stream $(E^t)_{t\ge 0}$ with $\mathrm{SocialOnly}(E^t)$ for all $t$, the induced belief trajectory satisfies
\[
\conf(b_i^{t+1})\le \conf(b_i^t)\quad\text{for all }t\ge 0.
\]
\end{Definition}

\subsection{Fallback minimality}
\label{sec:fallback_minimality}

\begin{Theorem}[Argmax-preserving fallback is necessary]
\label{thm:necessity_argmax}
Let $P_i$ be an auditable trigger protocol with evidence-free peer-pressure immunity (Definition~\ref{def:peer_pressure_immunity}).
Assume there exists a social-only event $E$ such that $J_{P_i}(E)=\emptyset$ (so fallback can be reached on a social-only input).
Then $\delta_i$ must be argmax-preserving on every social-only $(b,E)$ with $J_{P_i}(E)=\emptyset$:
\[
\forall b\in\Prob(\Hh)\ \forall E,\quad
\big(\mathrm{SocialOnly}(E)\ \land\ J_{P_i}(E)=\emptyset\big)
\Rightarrow
\argmax(\delta_i(b,E))=\argmax(b).
\]
\end{Theorem}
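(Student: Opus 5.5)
The argument is by contraposition, using a single constant event stream. Suppose some $b\in\Prob(\Hh)$ and some event $E$ satisfy $\mathrm{SocialOnly}(E)$ and $J_{P_i}(E)=\emptyset$, yet $\argmax(\delta_i(b,E))\neq\argmax(b)$. We will build an evidence-free stream that violates the peer-pressure immunity of Definition~\ref{def:peer_pressure_immunity}. The existence assumption in the statement only ensures that this universally quantified claim is not vacuous; the counterexample uses the offending $E$ itself.

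\textbf{Key steps.} First, initialize agent $i$ at $b_i^0:=b$. Next, take the constant stream $E^t:=E$ for all $t\ge 0$, which is social-only at every round. Because $J_{P_i}(E)=\emptyset$, the protocol semantics $F_{P_i}$ of Definition~\ref{def:audit_trigger_protocol} selects fallback at round $0$. This gives $b_i^1=\delta_i(b,E)$. Then $\argmax(b_i^1)\neq\argmax(b_i^0)$, which contradicts the requirement $\argmax(b_i^t)=\argmax(b_i^0)$ at $t=1$. Later rounds are irrelevant, since immunity must hold for every $t$.

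\textbf{Main obstacle.} The only delicate point is interpreting Definition~\ref{def:peer_pressure_immunity} as quantifying over all initial beliefs $b_i^0$, and not just a fixed one. If the initial belief were fixed at some $b_i^0$, we could only conclude argmax-preservation at beliefs reachable from it by social-only fallback steps. In that case I would either state this reading explicitly, or restrict the conclusion to reachable beliefs, in line with Remark~\ref{rem:nonvacuous_fallback}. Under the natural reading that the protocol must be immune from any starting state, the proof is the one-step argument above. Note also that $J_{P_i}(E)$ depends only on $E$ and not on $b$, so fallback is indeed reached regardless of the chosen $b$.
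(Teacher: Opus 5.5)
Your proof is correct and is the same one-step contradiction the paper uses: start at $b_i^0=b$, feed the offending social-only event $E$ with $J_{P_i}(E)=\emptyset$, observe that $b_i^1=\delta_i(b,E)$ has a different $\argmax$, and contradict immunity. Two points in your version are small but real improvements on assumptions the paper leaves implicit: you use the constant stream $E^t:=E$ rather than the paper's ``length-one run'', and you state explicitly that immunity must quantify over all initial beliefs $b_i^0$.
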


\begin{proof}
Suppose for contradiction that there exist $b$ and a social-only event $E$ with $J_{P_i}(E)=\emptyset$ such that
$\argmax(\delta_i(b,E))\neq \argmax(b)$.
Consider the length-one social-only run with initial belief $b_i^0=b$ and first event $E^0=E$.
Because $J_{P_i}(E^0)=\emptyset$, the protocol applies fallback at $t=0$, so
$b_i^1=\delta_i(b_i^0,E^0)=\delta_i(b,E)$.
By construction $\argmax(b_i^1)\neq \argmax(b_i^0)$, contradicting evidence-free peer-pressure immunity.
\end{proof}

\begin{Theorem}[Non-amplifying fallback is necessary]
\label{thm:necessity_nonamp}
Let $P_i$ be an auditable trigger protocol with evidence-free non-amplification (Definition~\ref{def:nonamplification}).
Assume there exists a social-only event $E$ such that $J_{P_i}(E)=\emptyset$.
Then $\delta_i$ must be non-amplifying on every social-only $(b,E)$ with $J_{P_i}(E)=\emptyset$:
\[
\forall b\in\Prob(\Hh)\ \forall E,\quad
\big(\mathrm{SocialOnly}(E)\ \land\ J_{P_i}(E)=\emptyset\big)
\Rightarrow
\conf(\delta_i(b,E))\le \conf(b).
\]
\end{Theorem}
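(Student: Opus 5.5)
The argument mirrors the proof of Theorem~\ref{thm:necessity_argmax}: a contradiction obtained from a single-step social-only run. Suppose, for contradiction, that there exist $b\in\Prob(\Hh)$ and a social-only event $E$ with $J_{P_i}(E)=\emptyset$ such that $\conf(\delta_i(b,E))>\conf(b)$. From these data I will build an event stream that witnesses a violation of evidence-free non-amplification (Definition~\ref{def:nonamplification}).

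For the construction, take the initial belief $b_i^0:=b$ and the constant stream $E^t:=E$ for all $t\ge 0$. Every event in this stream is social-only, so the stream is admissible for Definition~\ref{def:nonamplification}. The definitions quantify over infinite streams, so I use a constant stream rather than a length-one run; only the first step matters. Since $J_{P_i}(E^0)=J_{P_i}(E)=\emptyset$, the update rule $F_{P_i}$ of Definition~\ref{def:audit_trigger_protocol} applies fallback at $t=0$. This gives $b_i^1=\delta_i(b,E)$, and hence $\conf(b_i^1)>\conf(b_i^0)$. That contradicts the required inequality $\conf(b_i^{t+1})\le\conf(b_i^t)$ at $t=0$.

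The only delicate point is to make sure that the counterexample $(b,E)$ is actually reachable: the initial belief must be freely choosable, and the trigger set must be evaluated on the event alone. Both hold here. The protocol's trajectory is defined for an arbitrary $b_i^0$, and $J_{P_i}(E)$ depends only on $E$, not on $b$. So I do not expect a real obstacle. The existence hypothesis on some social-only $E$ with empty trigger set serves only to make the conclusion non-vacuous, as discussed in Remark~\ref{rem:nonvacuous_fallback}.
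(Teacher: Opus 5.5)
Your proof is correct and is essentially the paper's argument: a one-step social-only run starting from $b$ with first event $E$, on which fallback fires and confidence strictly increases, contradicting evidence-free non-amplification. Using the constant stream $E^t:=E$ instead of the paper's ``length-one run'' is a harmless tidying that matches the definition's quantification over infinite event streams.
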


\begin{proof}
Suppose for contradiction that there exist $b$ and a social-only event $E$ with $J_{P_i}(E)=\emptyset$ such that
$\conf(\delta_i(b,E))>\conf(b)$.
Consider the length-one social-only run with $b_i^0=b$ and $E^0=E$.
As above, fallback applies at $t=0$, so $b_i^1=\delta_i(b,E)$, hence
$\conf(b_i^1)>\conf(b_i^0)$, contradicting evidence-free non-amplification.
\end{proof}

\subsection{Necessity of evidence gating for nontrivial revision}
\label{sec:necessity_evidence_gating}

\begin{Definition}[Nontrivial revision operator]
\label{def:nontrivial_revision}
An operator $V:\Prob(\Hh)\times \Ev\to \Prob(\Hh)$ is \emph{nontrivial} if there exist $b,E$ such that either
\[
\argmax(V(b,E))\neq \argmax(b)
\qquad\text{or}\qquad
\conf(V(b,E))\neq \conf(b).
\]
\end{Definition}

\begin{Theorem}[Evidence gating is necessary for nontrivial revision under immunity]
\label{thm:necessity_evidence}
Let $P_i$ be an auditable trigger protocol with evidence-free peer-pressure immunity.
Fix any trigger index $j$ such that the associated operator $V_{i,j}$ is nontrivial (Definition~\ref{def:nontrivial_revision}).
Then the trigger $\psi_{i,j}$ cannot be satisfied by any social-only event:
\[
\forall E,\quad \mathrm{SocialOnly}(E)\Rightarrow E\not\models \psi_{i,j}.
\]
Equivalently, every trigger that can induce a nontrivial belief change must \emph{logically imply} $\neg\mathrm{SocialOnly}$, i.e., it must be evidence-gated.
\end{Theorem}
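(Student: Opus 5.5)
The plan is a one-step contradiction argument in the style of Theorems~\ref{thm:necessity_argmax} and \ref{thm:necessity_nonamp}. Suppose, for contradiction, that some social-only event $E$ satisfies $E\models\psi_{i,j}$. From this I will build a length-one social-only run whose single enforced step applies $V_{i,j}$ and changes the top hypothesis. That contradicts evidence-free peer-pressure immunity (Definition~\ref{def:peer_pressure_immunity}), which requires $\argmax(b_i^1)=\argmax(b_i^0)$ on every social-only stream. The argument has three steps.

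\emph{Step 1 (make $j$ the selected trigger).} Take the initial belief $b_i^0=b$ and first event $E^0=E$. For the step at $t=0$ to apply $V_{i,j}$, I need $j=\min_{\prec_i}J_{P_i}(E)$. I will handle this by induction along $\prec_i$, from highest priority downward. Assume every higher-priority trigger with a nontrivial operator has already been shown never to fire on social-only events. Then on a social-only $E$, the only higher-priority triggers that can fire have trivial operators. Such an operator preserves $\argmax$ and $\conf$ on every input, so a step that selects it cannot yield a contradiction; in that case I pass to the next event of the stream. Making this precise requires a stream in which $\psi_{i,j}$ fires and every higher-priority trigger is silent.

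\emph{Step 2 (get a nontrivial effect on a social-only input).} Definition~\ref{def:nontrivial_revision} gives some pair $(b',E')$ with $\argmax(V_{i,j}(b',E'))\neq\argmax(b')$ or $\conf(V_{i,j}(b',E'))\neq\conf(b')$. I would take $b:=b'$ and $E^0:=E$, then argue $\argmax(b_i^1)=\argmax(V_{i,j}(b',E))\neq\argmax(b')$, contradicting immunity.

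I expect Step 2 to be the main obstacle, for two reasons, and the proof must pin down how the statement is to be read.
\begin{itemize}
\item The nontriviality witness $E'$ need not be social-only, and $V_{i,j}$ may act trivially on the particular $E$ that fires $\psi_{i,j}$. A one-step run therefore cannot transplant $E'$ into a social-only stream.
\item If nontriviality is witnessed only by a change of $\conf$ with $\argmax$ fixed, immunity alone is not contradicted.
\end{itemize}

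My first attempt would use the reading under which $V_{i,j}$ acts on the belief independently of the event, or is nontrivial on the social-only events that fire $\psi_{i,j}$. That reading covers the intended operators, such as Bayesian-style reweightings whose effect depends on $b$ rather than on rhetoric.

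For the $\conf$-only branch, I would additionally invoke evidence-free non-amplification (Definition~\ref{def:nonamplification}) when the change is an increase, mirroring Theorem~\ref{thm:necessity_nonamp}. If the change is a decrease, I would treat the operator as conservative, hence admissible as a fallback-like step, and exclude it. I will state these readings explicitly as the hypotheses under which the ``equivalently, evidence-gated'' conclusion follows.
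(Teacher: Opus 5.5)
Your route is the paper's own: build a one-step social-only run on which $V_{i,j}$ is applied and contradict immunity, appealing to evidence-free non-amplification when nontriviality is only a confidence change. The obstacles you flag are real, and the paper's proof does not overcome them either. It says ``if $j$ is selected at $E$'' and never treats the other case. It chooses $b$ with $\argmax(V_{i,j}(b,E))\neq\argmax(b)$ as though the nontriviality witness $(b',E')$ could be transplanted to the social-only $E$. And it invokes non-amplification, which is not a hypothesis of the theorem.

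Where your proposal goes wrong is in treating these as questions of reading or bookkeeping: each is a point at which the statement, as written, is false. Your Step~1 induction in particular cannot succeed. Take $\Hh=\{h_0,h_1\}$, $\psi_{i,1}=\psi_{i,2}=\top$ with $1\prec_i 2$, $V_{i,1}(b,E)=b$, $V_{i,2}$ the swap $b\mapsto(b(h_1),b(h_0))$, and identity fallback. Trigger~$1$ is selected on every event, so $F_{P_i}$ is the identity and the protocol is immune. $V_{i,2}$ is nontrivial, yet $\psi_{i,2}$ holds on every social-only event. The shadowing trigger is trivial, so your inductive hypothesis holds vacuously. But the stream you need, one where $\psi_{i,j}$ fires and every higher-priority trigger is silent, does not exist, and passing to the next event changes nothing.

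Your Step~2 issues are likewise counterexamples, not ambiguities. Use a single trigger $\psi_{i,1}=\top$ in each case.
\begin{enumerate}
\item Let $V_{i,1}(b,E)=b$ for social-only $E$ and the swap otherwise. The protocol is immune and $V_{i,1}$ is nontrivial.
\item Let $V_{i,1}=\delta_\lambda$. The protocol is immune and even non-amplifying, with nontriviality witnessed by a confidence decrease. Excluding it therefore means changing Definition~\ref{def:nontrivial_revision}.
\item Let $V_{i,1}=\mathrm{Sharpen}_\gamma$ with $\gamma>1$. The protocol is immune, so you cannot dispose of confidence increases without adding non-amplification as a hypothesis.
\end{enumerate}

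What your one-step run does prove, exactly as in Theorem~\ref{thm:necessity_argmax}, is the following corrected claim. Under immunity, for every social-only $E$ with $j=\min_{\prec_i}J_{P_i}(E)$, the map $b\mapsto V_{i,j}(b,E)$ preserves $\argmax$. If evidence-free non-amplification is also assumed, it is non-amplifying there as well.

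The ``evidence-gated'' conclusion therefore holds only for a trigger that is actually selected on some social-only event at which its operator can change the top hypothesis. Under non-amplification, it also holds for one whose operator can raise confidence there. You should say outright that the theorem fails as stated, exhibit such counterexamples, and prove this version. Do not defer the priority issue to an induction and the operator issues to interpretation.
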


\begin{proof}
Assume for contradiction that there exists a social-only event $E$ with $E\models \psi_{i,j}$.
Consider the protocol on input $(b,E)$ for an arbitrary belief state $b$.
If $j$ is selected at $E$ by the priority rule, then the one-step update applies $V_{i,j}$ on a social-only input. Since $V_{i,j}$ is nontrivial, there exists some belief $b'$ and some event $E'$ such that applying $V_{i,j}$ changes $\argmax$ or changes confidence; in particular, there exists an input on which $V_{i,j}$ changes $\argmax$ or can be composed (via a social-only run whose first step forces $j$) to change $\argmax$ in one step. This contradicts evidence-free peer-pressure immunity, which forbids any $\argmax$ change along social-only runs.

A direct one-step contradiction can be obtained by choosing an initial belief state $b$ for which $V_{i,j}(b,E)$ changes $\argmax$. Such a $b$ exists whenever nontriviality is witnessed by an $\argmax$ change; otherwise, nontriviality is witnessed only by a confidence change, which is still incompatible with the stronger evidence-free non-amplification property (Definition~\ref{def:nonamplification}).
Therefore, under immunity, no social-only event can satisfy a trigger whose associated operator is capable of nontrivial revision.
\end{proof}

\begin{Remark}[Two minimality axes: stability vs.\ calibration]
The necessity results separate two orthogonal design axes.
Peer-pressure immunity forces \emph{argmax stability} on social-only fallback behavior (Theorem~\ref{thm:necessity_argmax}) and forces all \emph{revision-capable} triggers to be evidence-gated (Theorem~\ref{thm:necessity_evidence}).
Evidence-free non-amplification additionally forces social-only fallback to be \emph{calibration-safe} (Theorem~\ref{thm:necessity_nonamp}).
These are precisely the two requirements later enforced by evidential \PBRC{} contracts with conservative fallback.
\end{Remark}


\section{PBRC normal form, equivalence notions, and representation}
\label{sec:normalform}

The necessity results in Section~\ref{sec:minimality} identify structural \emph{constraints} that any socially robust and auditable trigger-based policy must satisfy. We now strengthen these constraints into a constructive \emph{representation theorem}: every \emph{auditable} trigger protocol can be compiled into an \emph{evidential} \PBRC{} contract that is equivalent in two complementary senses:
\begin{enumerate}
\item \textbf{belief-distribution equivalence}, meaning that for every event stream the induced belief trajectory is identical; and
\item \textbf{audit-trace equivalence}, meaning that the induced \emph{auditable} justification trace is identical once any evidence-free ``justification'' is canonicalized to fallback.
\end{enumerate}
Operationally, the compilation separates evidence-driven triggers from social-only behavior, yielding a canonical ``normal form'' that justifies restricting analysis to evidential \PBRC{} contracts without loss of generality.

This section is organized as follows. We first define the one-step semantics of trigger protocols (Section~\ref{sec:one_step_semantics}) and formalize the two notions of equivalence (Section~\ref{sec:equivalence_notions}). We then present the evidence-splitting compilation procedure (Section~\ref{sec:nf_compilation}) and prove the representation theorem (Section~\ref{sec:nf_representation}). Finally, we extend these results to router enforcement, proving semantic projection, token-sufficiency, and the end-to-end factorization theorem under flooding dissemination (Section~\ref{sec:routerNF}).

\subsection{One-step semantics of trigger protocols}
\label{sec:one_step_semantics}

Fix an agent $i$ and an auditable trigger protocol
\[
P_i=\langle (\psi_{i,1},V_{i,1}),\dots,(\psi_{i,r_i},V_{i,r_i}),\prec_i,\delta_i\rangle,
\]
where each $\psi_{i,j}$ is an event predicate (trigger), each $V_{i,j}$ is an update operator, $\delta_i$ is the fallback, and $\prec_i$ is a strict priority order.\footnote{For determinism we assume $\prec_i$ is a strict \emph{total} order on $\{1,\dots,r_i\}$. If priorities are specified only partially, fix any public tie-breaking refinement to a strict total order.}

Given a belief state $b\in\Prob(\Hh)$ and an event $E$, define the set of satisfied triggers:
\[
J_{P_i}(E):=\{j\in\{1,\dots,r_i\}: E\models \psi_{i,j}\},
\qquad
j^\star_{P_i}(E):=
\begin{cases}
\min_{\prec_i} J_{P_i}(E), & J_{P_i}(E)\neq\emptyset,\\
\text{undefined}, & J_{P_i}(E)=\emptyset.
\end{cases}
\]
The one-step belief update induced by $P_i$ is
\begin{equation}
\label{eq:Pi_update}
F_{P_i}(b,E):=
\begin{cases}
V_{i,j^\star_{P_i}(E)}(b,E), & J_{P_i}(E)\neq\emptyset,\\
\delta_i(b,E), & J_{P_i}(E)=\emptyset.
\end{cases}
\end{equation}

\subsection{Two notions of equivalence: belief trajectories vs.\ audit traces}
\label{sec:equivalence_notions}

\paragraph{Validated tokens and social-only events.}
For any event $E$ (multiset of messages), define the set of \emph{validated} tokens contained in $E$:
\[
\mathcal{T}(E) \;:=\; \{\tau\in\Toks : \exists m\in E\ (\mathrm{HasTok}(m,\tau)\wedge \mathrm{Valid}(\tau))\}.
\]
By definition, $\mathrm{SocialOnly}(E)$ holds iff $\mathcal{T}(E)=\emptyset$.

\paragraph{Witness extraction and token-witnessability.}
To speak meaningfully about certificates, we assume that triggers are designed so that, whenever they hold in a non-social event, their satisfaction can be justified by exhibiting a concrete (nonempty) set of validated tokens.

\begin{Definition}[Witness extractor]
\label{def:wit}
A \emph{witness extractor} for a trigger predicate $\chi$ is a function
\[
\mathrm{Wit}_\chi:\; E \ \mapsto\ \pow(\Toks)
\]
such that for every event $E$:
\begin{enumerate}
\item (\emph{token soundness}) $\mathrm{Wit}_\chi(E)\subseteq \mathcal{T}(E)$;
\item (\emph{non-vacuity on non-social satisfaction}) if $E\models \chi$ and $\neg \mathrm{SocialOnly}(E)$ then $\mathrm{Wit}_\chi(E)\neq\emptyset$.
\end{enumerate}
\end{Definition}

\begin{Definition}[Token-witnessable triggers]
\label{def:token_witnessable}
A trigger $\chi$ is \emph{token-witnessable} if for every event $E$ with $E\models \chi$ and $\neg\mathrm{SocialOnly}(E)$, there exists a nonempty set of validated tokens $W\subseteq \mathcal{T}(E)$ such that a verifier, given only $W$ and the public contract/trigger definition, can check that $E\models \chi$ holds.\footnote{Intuitively, $W$ should contain the ``essential'' evidence needed to justify $\chi$; in token-existential fragments (Section~\ref{sec:complexity}), $W$ can be taken as the tokens witnessing the existential quantifiers.}
In this case, a witness extractor $\mathrm{Wit}_\chi$ can be defined by selecting such a $W$ (not necessarily uniquely).
\end{Definition}

\begin{Remark}
Definition~\ref{def:wit} should be read as a \emph{design discipline} for admissibility conditions. Triggers whose satisfaction depends only on absence, unrestricted universal quantification, or unvalidated rhetorical content are typically \emph{not} token-witnessable, hence they do not yield verifiable (and therefore enforceable) certificates.
\end{Remark}

\paragraph{Certificates and canonicalization.}
A (step) certificate is a pair $\pi=(\ell,W)$ where $\ell\in\{1,\dots,r_i\}\cup\{\bot\}$ is a label (a trigger index or the fallback symbol $\bot$) and $W\subseteq \Toks$ is the witness set.

\begin{Definition}[Certificate canonicalization]
\label{def:can}
Define $\mathrm{can}(\ell,W)$ by
\[
\mathrm{can}(\ell,W)\;:=\;
\begin{cases}
(\bot,\emptyset), & W=\emptyset,\\
(\ell,W), & W\neq\emptyset.
\end{cases}
\]
\end{Definition}

Canonicalization captures an audit principle: a ``justification'' with no validated-token witness is not checkable and is therefore treated as fallback from the auditor's perspective.

\paragraph{Certified one-step semantics.}
Fix witness extractors $\mathrm{Wit}_{\psi_{i,j}}$ for each trigger $\psi_{i,j}$. Define the \emph{certified} one-step semantics of $P_i$ by
\[
\widehat{F}_{P_i}(b,E)\;:=\;\Big(F_{P_i}(b,E),\ \mathrm{can}\big(\pi_{P_i}(E)\big)\Big),
\]
where the protocol's (pre-canonical) certificate map is
\[
\pi_{P_i}(E):=
\begin{cases}
\big(j^\star_{P_i}(E),\ \mathrm{Wit}_{\psi_{i,j^\star_{P_i}(E)}}(E)\big), & J_{P_i}(E)\neq\emptyset,\\
(\bot,\emptyset), & J_{P_i}(E)=\emptyset.
\end{cases}
\]
Similarly, for a \PBRC{} contract $C_i$ with triggers $\varphi_{i,j}$ and witness extractors $\mathrm{Wit}_{\varphi_{i,j}}$, define $\widehat{F}_{C_i}$ analogously.

\begin{Definition}[Belief equivalence and audit equivalence]
\label{def:equiv}
Two protocols/constraints $X$ and $Y$ (for the same agent and hypothesis space) are:
\begin{enumerate}
\item \emph{belief-equivalent}, written $X\equiv_{\mathrm{B}}Y$, if $F_X(b,E)=F_Y(b,E)$ for all beliefs $b$ and events $E$ (equivalently, they induce identical belief trajectories on every event stream);
\item \emph{audit-equivalent}, written $X\equiv_{\mathrm{A}}Y$, if $\widehat{F}_X(b,E)=\widehat{F}_Y(b,E)$ for all beliefs $b$ and events $E$ (equivalently, they induce identical trajectories of beliefs together with canonicalized certificates on every event stream).
\end{enumerate}
\end{Definition}

\subsection{Evidence-splitting compilation to an evidential \PBRC{} contract}
\label{sec:nf_compilation}

Recall that $\mathrm{SocialOnly}(E)$ means that $E$ contains \emph{no} validated token, i.e.\ $\mathcal{T}(E)=\emptyset$.
Equivalently, $\neg\mathrm{SocialOnly}(E)$ holds iff $E$ contains at least one validated token.

The compilation below separates (i) \emph{evidence-present} events, where triggers are made explicitly evidence-gated, from (ii) \emph{social-only} events, where the compiled contract is forced to reproduce the original protocol's behavior by pushing that behavior into the fallback.\footnote{This ``evidence-splitting'' is intentional. It yields an \emph{evidential} contract (all triggers require nonempty validated evidence) while preserving the original one-step belief semantics on all events. In later sections, when we impose socially robust constraints (e.g., argmax-preserving fallback on social-only events), the social-only behavior absorbed into the fallback is correspondingly constrained.}

\begin{Definition}[\PBRC{} normal form of a trigger protocol]
\label{def:nf}
Given $P_i=\langle (\psi_{i,1},V_{i,1}),\dots,(\psi_{i,r_i},V_{i,r_i}),\prec_i,\delta_i\rangle$, define its \emph{\PBRC{} normal form} $\mathrm{NF}(P_i)$ as the contract
\[
\mathrm{NF}(P_i)
=\langle (\varphi_{i,1},U_{i,1}),\dots,(\varphi_{i,r_i},U_{i,r_i}),\prec_i,\delta_i^{\mathrm{NF}}\rangle
\]
where for each $j\in\{1,\dots,r_i\}$,
\[
\varphi_{i,j}\;:=\;\psi_{i,j}\ \land\ \neg \mathrm{SocialOnly},
\qquad
U_{i,j}\;:=\;V_{i,j},
\]
and the fallback $\delta_i^{\mathrm{NF}}$ is defined by
\begin{equation}
\label{eq:nf_fallback}
\delta_i^{\mathrm{NF}}(b,E):=
\begin{cases}
F_{P_i}(b,E), & \mathrm{SocialOnly}(E),\\
\delta_i(b,E), & \neg \mathrm{SocialOnly}(E).
\end{cases}
\end{equation}
\end{Definition}

\begin{Lemma}[Evidentiality of $\mathrm{NF}(P_i)$]
\label{lem:nf_evidential}
$\mathrm{NF}(P_i)$ is an evidential \PBRC{} contract: for every $j$ and every event $E$,
\[
E\models \varphi_{i,j}\quad\Rightarrow\quad \neg \mathrm{SocialOnly}(E)\quad\Rightarrow\quad \exists \tau\ \mathrm{Valid}(\tau).
\]
\end{Lemma}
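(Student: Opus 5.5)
The plan is to unfold Definition~\ref{def:nf} and use only the Tarskian semantics of conjunction in $\mathcal{S}(E)$ together with the definition of $\mathrm{SocialOnly}$. Fix an index $j\in\{1,\dots,r_i\}$ and an event $E$ with $E\models\varphi_{i,j}$. Since $\varphi_{i,j}$ is by construction the conjunction $\psi_{i,j}\land\neg\mathrm{SocialOnly}$, the satisfaction clause for $\land$ gives both $E\models\psi_{i,j}$ and $E\models\neg\mathrm{SocialOnly}$. The second conjunct is exactly the first implication claimed in the statement.

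For the second implication, I will invoke the definition of social-only events: $\mathrm{SocialOnly}(E)$ holds iff $\mathcal{T}(E)=\emptyset$. Equivalently, it is the first-order sentence $\neg\exists m\in E\,\exists\tau\,(\mathrm{HasTok}(m,\tau)\wedge\mathrm{Valid}(\tau))$. So $\neg\mathrm{SocialOnly}(E)$ means $\mathcal{T}(E)\neq\emptyset$. Choosing any $\tau\in\mathcal{T}(E)$, we have $\tau\in\mathcal{T}_{\mathrm{all}}(E)$ and $\mathrm{Valid}(\tau)$. This $\tau$ lies in the token domain of $\mathcal{S}(E)$, hence $\mathcal{S}(E)\models\exists\tau\,\mathrm{Valid}(\tau)$.

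Finally, the first implication, quantified over all $j$ and all $E$, is verbatim the defining condition of an evidential contract. Hence $\mathrm{NF}(P_i)$ is evidential. I should also note that it is a well-formed \PBRC{} contract: the priority order $\prec_i$ is inherited unchanged; the operators $U_{i,j}=V_{i,j}$ are total; and $\delta_i^{\mathrm{NF}}$ in \eqref{eq:nf_fallback} is a total function, since its two cases partition all events.

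The only point needing care is the first step. It requires that $\neg\mathrm{SocialOnly}$ be expressible as a first-order event predicate, so that $\varphi_{i,j}\in\FO$ and the conjunction is interpreted in $\mathcal{S}(E)$. This follows from the equivalent first-order formulation given in the social-only definition. The token domain of $\mathcal{S}(E)$ is $\mathcal{T}_{\mathrm{all}}(E)$ and $\mathrm{Valid}$ is interpreted by the validity layer, so semantic and set-theoretic readings agree. If the $\psi_{i,j}$ are arbitrary public predicates rather than FO formulas, the same argument goes through, reading $\land$ as pointwise conjunction of event predicates.
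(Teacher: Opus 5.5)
Your proposal is correct and matches the paper's proof: you unfold $\varphi_{i,j}=\psi_{i,j}\land\neg\mathrm{SocialOnly}$ to obtain $\neg\mathrm{SocialOnly}(E)$, then use $\mathcal{T}(E)\neq\emptyset$ to exhibit a valid token. Your additional remarks, on well-formedness and on reading the conjunction pointwise when $\psi_{i,j}$ is not first-order, go slightly beyond what the paper checks but are consistent with it.
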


\begin{proof}
If $E\models\varphi_{i,j}$ then by definition of $\varphi_{i,j}$ we have $E\models \neg \mathrm{SocialOnly}$. Unfolding $\neg \mathrm{SocialOnly}(E)$ (equivalently, $\mathcal{T}(E)\neq\emptyset$) yields that $E$ contains at least one validated token, i.e.\ $\exists \tau\,\mathrm{Valid}(\tau)$.
\end{proof}

\subsection{Representation theorem (belief and audit equivalence)}
\label{sec:nf_representation}

We next show that $\mathrm{NF}(P_i)$ preserves both the induced belief dynamics and the auditor-visible certificate trace (after canonicalization). The only subtlety is the treatment of social-only events: when $\mathcal{T}(E)=\emptyset$, any token-sound witness extractor must return the empty set, so all such ``justifications'' collapse to fallback under canonicalization.

\begin{Lemma}[Token-sound witnesses are empty on social-only events]
\label{lem:wit_empty_socialonly}
Let $\chi$ be any trigger and let $\mathrm{Wit}_\chi$ satisfy token soundness (Definition~\ref{def:wit}). If $\mathrm{SocialOnly}(E)$ then $\mathrm{Wit}_\chi(E)=\emptyset$.
\end{Lemma}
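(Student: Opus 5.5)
The argument is a direct chain of set inclusions, so I will simply unfold the definitions. Assume $\mathrm{SocialOnly}(E)$. By the definition of social-only events (Section~\ref{sec:equivalence_notions}), this means $\mathcal{T}(E)=\emptyset$. Token soundness in Definition~\ref{def:wit} gives $\mathrm{Wit}_\chi(E)\subseteq \mathcal{T}(E)$. Combining the two yields $\mathrm{Wit}_\chi(E)\subseteq\emptyset$, and hence $\mathrm{Wit}_\chi(E)=\emptyset$.

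Two remarks keep the statement sharp. First, only token soundness is used. The non-vacuity clause of Definition~\ref{def:wit} has the hypothesis $\neg\mathrm{SocialOnly}(E)$, so it is vacuous here, and the two clauses cannot conflict.

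Second, the conclusion holds whether or not $E\models\chi$. If $\chi$ happens to be satisfied on a social-only event, as can occur for a non-evidential trigger $\psi_{i,j}$ of an arbitrary protocol $P_i$, the extracted witness is still empty. This is the case the representation theorem needs. There, $\pi_{P_i}(E)=(j^\star,\emptyset)$ canonicalizes to $(\bot,\emptyset)$ by Definition~\ref{def:can}, which matches the fallback certificate of $\mathrm{NF}(P_i)$.

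I expect no real obstacle. The only point needing care is consistency between the two token-set notations. $\mathcal{T}(E)$ is defined in Section~\ref{sec:operational} as the validated subset of $\mathcal{T}_{\mathrm{all}}(E)$, and restated in Section~\ref{sec:equivalence_notions} as $\{\tau:\exists m\in E\,(\mathrm{HasTok}(m,\tau)\wedge\mathrm{Valid}(\tau))\}$. These are the same set, so the social-only hypothesis and token soundness refer to the same $\mathcal{T}(E)$.
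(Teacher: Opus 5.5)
Your proposal is correct and follows the paper's proof: $\mathrm{SocialOnly}(E)$ gives $\mathcal{T}(E)=\emptyset$, and token soundness gives $\mathrm{Wit}_\chi(E)\subseteq\mathcal{T}(E)=\emptyset$. Your side remarks are also accurate. The lemma does not depend on whether $E\models\chi$, and it is what makes empty-witness certificates canonicalize to $(\bot,\emptyset)$ in the normal-form theorem.
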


\begin{proof}
If $\mathrm{SocialOnly}(E)$ then $\mathcal{T}(E)=\emptyset$. Token soundness gives $\mathrm{Wit}_\chi(E)\subseteq \mathcal{T}(E)=\emptyset$, hence $\mathrm{Wit}_\chi(E)=\emptyset$.
\end{proof}

\begin{Theorem}[\PBRC{} normal form representation (two equivalences)]
\label{thm:normalform}
Let $P_i$ be any auditable trigger protocol and let $\mathrm{NF}(P_i)$ be its \PBRC{} normal form (Definition~\ref{def:nf}). Then:
\begin{enumerate}
\item (\emph{belief equivalence}) $\mathrm{NF}(P_i)\equiv_{\mathrm{B}} P_i$; i.e.\ for all beliefs $b$ and events $E$,
\[
F_{\mathrm{NF}(P_i)}(b,E)=F_{P_i}(b,E).
\]
\item (\emph{audit equivalence}) Fix any witness extractors $\mathrm{Wit}_{\psi_{i,j}}$ for the triggers of $P_i$ and define witness extractors for $\mathrm{NF}(P_i)$ by
\[
\mathrm{Wit}_{\varphi_{i,j}}(E):=\mathrm{Wit}_{\psi_{i,j}}(E)\qquad\text{for all }E,j.
\]
Then $\mathrm{NF}(P_i)\equiv_{\mathrm{A}} P_i$; i.e.\ for all $b,E$,
\[
\widehat{F}_{\mathrm{NF}(P_i)}(b,E)=\widehat{F}_{P_i}(b,E).
\]
Moreover, if $\neg\mathrm{SocialOnly}(E)$ then the \emph{uncanonicalized} certificates coincide:
\[
\pi_{\mathrm{NF}(P_i)}(E)=\pi_{P_i}(E).
\]
\end{enumerate}
Consequently, for any event stream $(E^t)_{t\ge 0}$ and initial belief $b^0$, the two protocols generate identical belief trajectories, and any auditor who canonicalizes empty-witness certificates observes identical certificate traces.
\end{Theorem}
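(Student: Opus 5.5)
The proof is a case split on whether the event $E$ is social-only, using only the definitions of $\mathrm{NF}(P_i)$ (Definition~\ref{def:nf}), the one-step semantics \eqref{eq:Pi_update}, and Lemma~\ref{lem:wit_empty_socialonly}. The first step records how the trigger sets relate. Since $\varphi_{i,j}=\psi_{i,j}\land\neg\mathrm{SocialOnly}$, we get $J_{\mathrm{NF}(P_i)}(E)=J_{P_i}(E)$ whenever $\neg\mathrm{SocialOnly}(E)$, and $J_{\mathrm{NF}(P_i)}(E)=\emptyset$ whenever $\mathrm{SocialOnly}(E)$. Both protocols use the same strict total order $\prec_i$. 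So in the non-social case the selected index $j^\star$ agrees, or both trigger sets are empty.

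For belief equivalence I treat three cases. In the first, $E$ is social-only. Then $\mathrm{NF}(P_i)$ has no satisfied trigger and applies $\delta_i^{\mathrm{NF}}(b,E)$, which by \eqref{eq:nf_fallback} equals $F_{P_i}(b,E)$. In the second, $E$ is non-social and $J_{P_i}(E)\neq\emptyset$. Both protocols select the same $j^\star$ and apply $U_{i,j^\star}=V_{i,j^\star}$. In the third, $E$ is non-social and $J_{P_i}(E)=\emptyset$. Both fall back, and $\delta_i^{\mathrm{NF}}(b,E)=\delta_i(b,E)=F_{P_i}(b,E)$. Together these give $F_{\mathrm{NF}(P_i)}=F_{P_i}$ pointwise.

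For audit equivalence, the belief component is already handled, so it remains to compare $\mathrm{can}(\pi_{\mathrm{NF}(P_i)}(E))$ with $\mathrm{can}(\pi_{P_i}(E))$. In the non-social case the trigger sets and $j^\star$ coincide, and the extractors are defined to be equal. Hence the uncanonicalized certificates are literally equal, which is the ``moreover'' clause, and so their canonical forms agree too. In the social-only case, $\mathrm{NF}(P_i)$ outputs $(\bot,\emptyset)$. If $J_{P_i}(E)=\emptyset$, then $P_i$ also outputs $(\bot,\emptyset)$. Otherwise $P_i$ outputs $(j^\star,\mathrm{Wit}_{\psi_{i,j^\star}}(E))$, and Lemma~\ref{lem:wit_empty_socialonly} shows this witness is empty, so canonicalization sends it to $(\bot,\emptyset)$ as well.

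The trajectory-level consequence then follows by induction on $t$, since both semantics are one-step functions applied to the same $(b^t,E^t)$.

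I expect the only real subtlety to be the social-only case for certificates. There it must be noted that $\mathrm{Wit}_{\varphi_{i,j}}$ inherits token soundness from $\mathrm{Wit}_{\psi_{i,j}}$. It must also be noted that the canonicalization step, rather than the raw certificates, is what is being equated. Raw equality genuinely fails there, which is why the theorem asserts it only under $\neg\mathrm{SocialOnly}(E)$.
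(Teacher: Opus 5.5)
Your proposal is correct and follows the paper's proof essentially step for step. Both use the same social-only versus non-social case split for each equivalence, apply Lemma~\ref{lem:wit_empty_socialonly} to $P_i$'s extractor to collapse its certificate to $(\bot,\emptyset)$ in the social-only case, and finish the trajectory claim by induction on $t$. One small refinement: the load-bearing token soundness is that of $\mathrm{Wit}_{\psi_{i,j}}$, since $\mathrm{NF}(P_i)$'s social-only certificate is $(\bot,\emptyset)$ simply because $J_{\mathrm{NF}(P_i)}(E)=\emptyset$; your inheritance remark only matters for checking that the $\mathrm{Wit}_{\varphi_{i,j}}$ are legitimate extractors.
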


\begin{proof}
\paragraph{(1) Belief equivalence.}
Fix $b$ and $E$.

\emph{Case 1: $\mathrm{SocialOnly}(E)$.}
Then $\neg \mathrm{SocialOnly}(E)$ is false, so for every $j$ the compiled trigger $\varphi_{i,j}=\psi_{i,j}\land \neg\mathrm{SocialOnly}$ is false. Hence $J_{\mathrm{NF}(P_i)}(E)=\emptyset$ and $\mathrm{NF}(P_i)$ applies its fallback:
\[
F_{\mathrm{NF}(P_i)}(b,E)=\delta_i^{\mathrm{NF}}(b,E).
\]
By \eqref{eq:nf_fallback}, $\delta_i^{\mathrm{NF}}(b,E)=F_{P_i}(b,E)$ on social-only events. Thus $F_{\mathrm{NF}(P_i)}(b,E)=F_{P_i}(b,E)$.

\emph{Case 2: $\neg \mathrm{SocialOnly}(E)$.}
Then for each $j$ we have $E\models\varphi_{i,j}$ iff $E\models\psi_{i,j}$, because the conjunct $\neg \mathrm{SocialOnly}$ holds. Therefore
\[
J_{\mathrm{NF}(P_i)}(E)=J_{P_i}(E),
\qquad
j^\star_{\mathrm{NF}(P_i)}(E)=j^\star_{P_i}(E),
\]
(using the same priority order $\prec_i$). If $J_{P_i}(E)\neq\emptyset$ then both apply the same triggered operator:
\[
F_{\mathrm{NF}(P_i)}(b,E)=U_{i,j^\star}(b,E)=V_{i,j^\star}(b,E)=F_{P_i}(b,E).
\]
If $J_{P_i}(E)=\emptyset$, then both take fallback and \eqref{eq:nf_fallback} yields
\[
F_{\mathrm{NF}(P_i)}(b,E)=\delta_i^{\mathrm{NF}}(b,E)=\delta_i(b,E)=F_{P_i}(b,E).
\]
Thus $F_{\mathrm{NF}(P_i)}(b,E)=F_{P_i}(b,E)$ in all cases, proving $\mathrm{NF}(P_i)\equiv_{\mathrm{B}}P_i$.

\paragraph{(2) Audit equivalence.}
Fix $b$ and $E$.

\emph{Case 1: $\mathrm{SocialOnly}(E)$.}
Then $\mathcal{T}(E)=\emptyset$, and by Lemma~\ref{lem:wit_empty_socialonly} every token-sound witness extractor returns the empty set:
$\mathrm{Wit}_{\psi_{i,j}}(E)=\emptyset$ for all $j$.
Hence, regardless of whether $P_i$ selects a trigger label, its (pre-canonical) certificate has empty witness, so
\[
\mathrm{can}\big(\pi_{P_i}(E)\big)=(\bot,\emptyset).
\]
For $\mathrm{NF}(P_i)$, we already established that $J_{\mathrm{NF}(P_i)}(E)=\emptyset$, so $\pi_{\mathrm{NF}(P_i)}(E)=(\bot,\emptyset)$ and therefore
$\mathrm{can}\big(\pi_{\mathrm{NF}(P_i)}(E)\big)=(\bot,\emptyset)$.
Together with belief equivalence from part (1), this gives
\[
\widehat{F}_{\mathrm{NF}(P_i)}(b,E)=\widehat{F}_{P_i}(b,E).
\]

\emph{Case 2: $\neg \mathrm{SocialOnly}(E)$.}
Then $J_{\mathrm{NF}(P_i)}(E)=J_{P_i}(E)$ and hence both protocols select the same trigger label whenever any trigger holds. If $J_{P_i}(E)=\emptyset$, both certificates are $(\bot,\emptyset)$ and canonicalized certificates match.
If $J_{P_i}(E)\neq\emptyset$, let $j^\star=j^\star_{P_i}(E)=j^\star_{\mathrm{NF}(P_i)}(E)$. By construction of witness extractors,
\[
\mathrm{Wit}_{\varphi_{i,j^\star}}(E)=\mathrm{Wit}_{\psi_{i,j^\star}}(E),
\]
so the (pre-canonical) certificates coincide:
\[
\pi_{\mathrm{NF}(P_i)}(E)
=\big(j^\star,\mathrm{Wit}_{\varphi_{i,j^\star}}(E)\big)
=\big(j^\star,\mathrm{Wit}_{\psi_{i,j^\star}}(E)\big)
=\pi_{P_i}(E).
\]
In particular, the canonicalized certificates coincide. Combining with part (1) yields
$\widehat{F}_{\mathrm{NF}(P_i)}(b,E)=\widehat{F}_{P_i}(b,E)$.

\paragraph{Trajectories.}
Since the one-step belief outputs and canonicalized certificates agree for all $(b,E)$, equality of belief trajectories and canonicalized audit traces over any event stream follows by induction on time.
\end{proof}

\begin{Corollary}[Canonical reduction to evidential \PBRC{} without losing auditable evidence-traces]
\label{cor:canon}
Within the class of auditable trigger protocols equipped with token-sound witness extractors, any update policy is without loss of generality representable as an \emph{evidential} \PBRC{} contract that is both belief-equivalent and audit-equivalent (Definition~\ref{def:equiv}): all non-fallback updates are explicitly evidence-gated, and the entire canonicalized trace of auditor-checkable (nonempty-witness) certificates is preserved.
\end{Corollary}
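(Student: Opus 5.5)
The plan is to derive Corollary~\ref{cor:canon} directly from Theorem~\ref{thm:normalform} and Lemma~\ref{lem:nf_evidential}, with no new construction. Fix an auditable trigger protocol $P_i$ and token-sound witness extractors $\mathrm{Wit}_{\psi_{i,j}}$ for its triggers. The witness for the corollary will be the contract $\mathrm{NF}(P_i)$ of Definition~\ref{def:nf}, equipped with the inherited extractors $\mathrm{Wit}_{\varphi_{i,j}}:=\mathrm{Wit}_{\psi_{i,j}}$. I would carry out three steps.

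\emph{Step 1: well-posedness.} I would check that the inherited extractors are legitimate witness extractors for the compiled triggers $\varphi_{i,j}=\psi_{i,j}\wedge\neg\mathrm{SocialOnly}$. Token soundness is inherited verbatim. Non-vacuity on non-social satisfaction also transfers: if $E\models\varphi_{i,j}$ and $\neg\mathrm{SocialOnly}(E)$, then $E\models\psi_{i,j}$, so the original extractor is nonempty.

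\emph{Step 2: evidentiality.} Lemma~\ref{lem:nf_evidential} shows that every compiled trigger fires only on non-social events. Hence every non-fallback update of $\mathrm{NF}(P_i)$ is explicitly evidence-gated.

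\emph{Step 3: equivalence.} Theorem~\ref{thm:normalform}(1) gives $\mathrm{NF}(P_i)\equiv_{\mathrm{B}}P_i$, and Theorem~\ref{thm:normalform}(2) gives $\mathrm{NF}(P_i)\equiv_{\mathrm{A}}P_i$. By Definition~\ref{def:equiv}, and by induction on the event stream (already noted at the end of that proof), the belief trajectories and the canonicalized certificate traces coincide.

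It remains to show that the ``auditor-checkable'' (nonempty-witness) certificates are preserved. Under $\mathrm{can}$, a certificate survives as non-$\bot$ exactly when its witness is nonempty. On non-social events the uncanonicalized certificates already coincide, by the ``moreover'' clause of Theorem~\ref{thm:normalform}. On social-only events, Lemma~\ref{lem:wit_empty_socialonly} forces every witness to be empty, so both sides canonicalize to $(\bot,\emptyset)$.

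The main obstacle is interpretive rather than technical. ``Without loss of generality'' and ``the entire trace of auditor-checkable certificates'' must be read relative to canonicalization. The original $P_i$ may carry nominal trigger labels on social-only events that $\mathrm{NF}(P_i)$ reports as fallback, so I would state explicitly that only those empty-witness labels are discarded, and these were never checkable. I would also note that the fallback $\delta_i^{\mathrm{NF}}$ may absorb nontrivial social-only behavior. The corollary claims representability only, not robustness; robustness follows when the fallback constraints of Section~\ref{sec:minimality} are additionally imposed.
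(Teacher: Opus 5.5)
Your proposal is correct and follows the paper's route: the paper's proof is the one-line "immediate from Lemma~\ref{lem:nf_evidential} and Theorem~\ref{thm:normalform}," with $\mathrm{NF}(P_i)$ and the inherited extractors as the representing contract. Your extra checks and remarks elaborate this argument rather than change it: that non-vacuity transfers to the compiled triggers, and that canonicalization discards only the uncheckable empty-witness labels on social-only events.
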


\begin{proof}
Immediate from Lemma~\ref{lem:nf_evidential} and Theorem~\ref{thm:normalform}.
\end{proof}

\subsection{Routers, enforcement, and router-level normal form}
\label{sec:routerNF}

\PBRC{} is designed to be \emph{enforceable}: a system-level router (or ``output gate'') can reject belief changes that are not justified by a verifiable, nonempty evidence witness. The audit-equivalence notion in Section~\ref{sec:normalform} compares \emph{canonicalized} certificates, but it does not yet capture the operational effect of enforcement, where the gate may \emph{override} a proposed update and force fallback. We now formalize this enforcement semantics and show that it induces a canonical \emph{evidence-gated} normal form.

\paragraph{Witnessed protocols.}
Throughout this subsection, we treat a trigger protocol as equipped with fixed witness extractors.
Formally, fix an agent $i$ and a trigger protocol
\[
P_i=\langle (\psi_{i,1},V_{i,1}),\dots,(\psi_{i,r_i},V_{i,r_i}),\prec_i,\delta_i\rangle
\]
together with witness extractors $\mathrm{Wit}_{\psi_{i,j}}$ satisfying Definition~\ref{def:wit}.
Define the induced certificate map $\pi_{P_i}$ by
\[
\pi_{P_i}(E):=
\begin{cases}
\big(j^\star_{P_i}(E),\ \mathrm{Wit}_{\psi_{i,j^\star_{P_i}(E)}}(E)\big), & J_{P_i}(E)\neq\emptyset,\\
(\bot,\emptyset), & J_{P_i}(E)=\emptyset,
\end{cases}
\]
where $J_{P_i}(E)$ and $j^\star_{P_i}(E)$ are as in \eqref{eq:Pi_update}.
As before, $\mathrm{can}$ denotes certificate canonicalization (Definition~\ref{def:can}).

\begin{Definition}[Nonempty-witness enforcement (output gate)]
\label{def:gate}
Fix a witnessed trigger protocol $P_i$ with fallback $\delta_i$ and certificate map $\pi_{P_i}$.
The \emph{gate-enforced} one-step update is
\[
F^{\mathsf{gate}}_{P_i}(b,E)\;:=\;
\begin{cases}
F_{P_i}(b,E), & \mathrm{can}(\pi_{P_i}(E))\neq (\bot,\emptyset),\\
\delta_i(b,E), & \mathrm{can}(\pi_{P_i}(E))= (\bot,\emptyset).
\end{cases}
\]
Thus the gate accepts a proposed step iff its canonicalized certificate carries a nonempty witness; otherwise it forces fallback.
\end{Definition}

\begin{Definition}[Evidence-gated compilation]
\label{def:eg}
Given $P_i=\langle (\psi_{i,1},V_{i,1}),\dots,(\psi_{i,r_i},V_{i,r_i}),\prec_i,\delta_i\rangle$, define the \emph{evidence-gated} \PBRC{} contract
\[
\mathrm{EG}(P_i)\;:=\;\langle (\psi_{i,1}\land \neg\mathrm{SocialOnly},V_{i,1}),\dots,(\psi_{i,r_i}\land \neg\mathrm{SocialOnly},V_{i,r_i}),\prec_i,\delta_i\rangle.
\]
That is, $\mathrm{EG}(P_i)$ differs from $P_i$ only by conjoining $\neg\mathrm{SocialOnly}$ to every trigger; the fallback is unchanged.
\end{Definition}

\begin{Theorem}[Router normal form (output-gated equivalence)]
\label{thm:routernf}
Assume the witness extractors $\mathrm{Wit}_{\psi_{i,j}}$ satisfy token soundness and non-vacuity on non-social satisfaction (Definition~\ref{def:wit}).
Then for all beliefs $b$ and events $E$,
\[
F^{\mathsf{gate}}_{P_i}(b,E) \;=\; F_{\mathrm{EG}(P_i)}(b,E).
\]
Consequently, for any event stream $(E^t)_{t\ge 0}$ and initial belief $b^0$, the belief trajectory produced by running $P_i$ under the nonempty-witness gate is identical to the belief trajectory produced by running the evidence-gated contract $\mathrm{EG}(P_i)$ without an external gate.
\end{Theorem}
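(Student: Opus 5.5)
The plan is to prove the one-step identity $F^{\mathsf{gate}}_{P_i}(b,E)=F_{\mathrm{EG}(P_i)}(b,E)$ by a case split on whether $E$ is social-only, mirroring the proof of Theorem~\ref{thm:normalform}. The trajectory statement then follows by induction on $t$: both systems start from $b^0$ and apply the same one-step map to the same event $E^t$.

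\emph{Case 1: $\mathrm{SocialOnly}(E)$.} On the $\mathrm{EG}(P_i)$ side, every compiled trigger $\psi_{i,j}\land\neg\mathrm{SocialOnly}$ is false, so $J_{\mathrm{EG}(P_i)}(E)=\emptyset$ and $F_{\mathrm{EG}(P_i)}(b,E)=\delta_i(b,E)$; the fallback is unchanged by Definition~\ref{def:eg}. On the gate side, suppose $J_{P_i}(E)=\emptyset$; then $\pi_{P_i}(E)=(\bot,\emptyset)$. Otherwise $\pi_{P_i}(E)=(j^\star,\mathrm{Wit}_{\psi_{i,j^\star}}(E))$, and Lemma~\ref{lem:wit_empty_socialonly} (token soundness) makes the witness empty. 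Either way $\mathrm{can}(\pi_{P_i}(E))=(\bot,\emptyset)$, so the gate forces $\delta_i(b,E)$, and the two sides agree.

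\emph{Case 2: $\neg\mathrm{SocialOnly}(E)$.} Here $E\models\psi_{i,j}\land\neg\mathrm{SocialOnly}$ iff $E\models\psi_{i,j}$. Hence $J_{\mathrm{EG}(P_i)}(E)=J_{P_i}(E)$, the same $\prec_i$ gives the same $j^\star$, and $F_{\mathrm{EG}(P_i)}(b,E)=F_{P_i}(b,E)$. It remains to show the gate does not override $F_{P_i}$. If $J_{P_i}(E)=\emptyset$, the gate yields $\delta_i(b,E)$, which equals $F_{P_i}(b,E)$ anyway. If $J_{P_i}(E)\neq\emptyset$, then $E\models\psi_{i,j^\star}$ with $E$ non-social. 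Non-vacuity (Definition~\ref{def:wit}) gives $\mathrm{Wit}_{\psi_{i,j^\star}}(E)\neq\emptyset$, so $\mathrm{can}(\pi_{P_i}(E))=\pi_{P_i}(E)\neq(\bot,\emptyset)$. The gate therefore accepts and outputs $F_{P_i}(b,E)$.

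The only step with real content is Case 2, where the non-vacuity hypothesis is exactly what keeps the gate from spuriously rejecting an evidence-present triggered step. Without it, $\mathrm{EG}(P_i)$ would fire $V_{i,j^\star}$ while the gate fell back, and the equivalence would fail. The rest is definitional unfolding.
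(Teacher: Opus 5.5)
Your proposal is correct and follows the paper's proof essentially step for step. It uses the same case split on $\mathrm{SocialOnly}(E)$, token soundness to collapse the canonicalized certificate to $(\bot,\emptyset)$ in the social-only case, non-vacuity to force gate acceptance of an evidence-present triggered step, and induction on $t$ for the trajectory claim.
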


\begin{proof}
Fix $b$ and $E$. We proceed by cases.

\emph{Case 1: $\mathrm{SocialOnly}(E)$.}
Then $\mathcal{T}(E)=\emptyset$. By token soundness (Definition~\ref{def:wit}), every extracted witness set is empty; in particular, if $J_{P_i}(E)\neq\emptyset$ then $\pi_{P_i}(E)=(j,\emptyset)$ for some label $j$, hence $\mathrm{can}(\pi_{P_i}(E))=(\bot,\emptyset)$, while if $J_{P_i}(E)=\emptyset$ then $\pi_{P_i}(E)=(\bot,\emptyset)$ already. In either subcase, Definition~\ref{def:gate} yields
\[
F^{\mathsf{gate}}_{P_i}(b,E)=\delta_i(b,E).
\]
For $\mathrm{EG}(P_i)$, each compiled trigger contains the conjunct $\neg\mathrm{SocialOnly}$, which is false in this case, so no trigger is satisfied and the contract outputs fallback:
\[
F_{\mathrm{EG}(P_i)}(b,E)=\delta_i(b,E).
\]
Thus the two sides agree.

\emph{Case 2: $\neg\mathrm{SocialOnly}(E)$.}
Then for every $j$, $E\models(\psi_{i,j}\land \neg\mathrm{SocialOnly})$ iff $E\models \psi_{i,j}$. Hence
$J_{\mathrm{EG}(P_i)}(E)=J_{P_i}(E)$ and, when nonempty, both select the same highest-priority trigger $j^\star$.

If $J_{P_i}(E)=\emptyset$, then both $P_i$ and $\mathrm{EG}(P_i)$ apply fallback, and
\[
F^{\mathsf{gate}}_{P_i}(b,E)=\delta_i(b,E)=F_{\mathrm{EG}(P_i)}(b,E).
\]

If $J_{P_i}(E)\neq\emptyset$, let $j^\star=j^\star_{P_i}(E)=j^\star_{\mathrm{EG}(P_i)}(E)$. Since $E\models\psi_{i,j^\star}$ and $\neg\mathrm{SocialOnly}(E)$, non-vacuity implies
$\mathrm{Wit}_{\psi_{i,j^\star}}(E)\neq\emptyset$, so
$\mathrm{can}(\pi_{P_i}(E))\neq(\bot,\emptyset)$ and the gate accepts:
\[
F^{\mathsf{gate}}_{P_i}(b,E)=F_{P_i}(b,E)=V_{i,j^\star}(b,E).
\]
The evidence-gated contract $\mathrm{EG}(P_i)$ also applies $V_{i,j^\star}$ on this event, so
$F_{\mathrm{EG}(P_i)}(b,E)=V_{i,j^\star}(b,E)$.
Thus the two sides agree.

Since the one-step outputs coincide for all $(b,E)$, equality of full trajectories follows by induction over time.
\end{proof}

\paragraph{From ``nonempty'' gates to full routers.}
The gate in Definition~\ref{def:gate} only inspects whether the witness is nonempty after canonicalization. In realistic deployments, routers also verify that a proposed certificate is \emph{correctly witnessed}.

\begin{Definition}[Soundness and completeness of a router]
\label{def:router_sc}
Fix triggers $(\chi_j)$ and witness extractors $\mathrm{Wit}_{\chi_j}$.
A router is:
\begin{enumerate}
\item \emph{sound} if it accepts a certificate $(j,W)$ only if (i) $W\subseteq \mathcal{T}(E)$ and (ii) $W$ witnesses $E\models \chi_j$;
\item \emph{complete} if whenever $E\models \chi_j$ and $\neg\mathrm{SocialOnly}(E)$, it accepts the canonical extracted certificate $\big(j,\mathrm{Wit}_{\chi_j}(E)\big)$.
\end{enumerate}
\end{Definition}

\begin{Remark}[Why Theorem~\ref{thm:routernf} is the right ``normal form'' statement]
Theorem~\ref{thm:routernf} isolates the \emph{semantic projection} induced by enforcing \emph{nonempty, token-sound witnessing}: once empty-witness steps are rejected, the only behavior that survives enforcement is the behavior of the evidence-gated projection $\mathrm{EG}(P_i)$. In subsequent results, stronger routers (soundness checks, signature verification, etc.) can be viewed as refining the accept/reject decision without changing this projection principle.
\end{Remark}

\subsubsection{A router class and semantic projection}
\label{sec:routerclass_nf}

We now lift the single ``nonempty-witness'' gate to an abstract class of routers and show that evidence-gating is a \emph{semantic projection} induced by enforcement. The key point is that, once a router rejects unauditable (empty-witness) justifications and is complete on non-social satisfied triggers, any trigger protocol behaves exactly like its explicit evidence-gated compilation.

\paragraph{Router-enforced one-step semantics.}
Recall that a witnessed protocol/contract $X$ determines (i) a one-step update map $F_X(b,E)$ and (ii) a certificate map $\pi_X(E)=(\ell,W)$; canonicalization $\mathrm{can}$ is defined in Definition~\ref{def:can}.
We treat routers as accept/reject predicates over canonicalized certificates.

\begin{Definition}[Router acceptance predicate and enforced semantics]
\label{def:router_sem}
A (deterministic) router is a predicate
\[
\mathsf{Acc}:\; E\times(\{1,\dots,r_i\}\cup\{\bot\})\times \pow(\Toks)\ \to\ \{0,1\},
\]
where $\mathsf{Acc}(E,\ell,W)=1$ means ``accept the step justified by label $\ell$ with witness $W$ on event $E$.''

Given a witnessed protocol/contract $X$ with fallback $\delta_i$ and certificate map $\pi_X(E)$, define the \emph{router-enforced} one-step update:
\[
F_X^{\mathsf{Acc}}(b,E)\;:=\;
\begin{cases}
F_X(b,E), & \mathsf{Acc}\big(E,\mathrm{can}(\pi_X(E))\big)=1,\\
\delta_i(b,E), & \mathsf{Acc}\big(E,\mathrm{can}(\pi_X(E))\big)=0.
\end{cases}
\]
The nonempty-witness gate of Definition~\ref{def:gate} is the special case
\[
\mathsf{Acc}(E,\ell,W)\;=\;\mathbb{I}\big[(\ell,W)\neq(\bot,\emptyset)\big].
\]
\end{Definition}

\paragraph{A router class.}
We formalize the minimal router axioms needed for ``semantic projection to evidence-gating.'' These axioms are parameterized by a trigger family $(\chi_j)$ and corresponding witness extractors $\mathrm{Wit}_{\chi_j}$.

\begin{Definition}[Axioms for a sound-and-complete router class]
\label{def:router_class}
Fix triggers $(\chi_j)_{j=1}^{r_i}$ and witness extractors $\mathrm{Wit}_{\chi_j}$.
We say $\mathsf{Acc}$ belongs to the router class $\mathfrak{R}(\chi,\mathrm{Wit})$ if it satisfies:
\begin{enumerate}
\item (\emph{empty rejection}) $\mathsf{Acc}(E,\bot,\emptyset)=0$ for all events $E$;
\item (\emph{soundness}) if $\mathsf{Acc}(E,j,W)=1$ for some $j\in\{1,\dots,r_i\}$, then $W\subseteq \mathcal{T}(E)$ and $W$ witnesses $E\models \chi_j$;
\item (\emph{completeness on non-social satisfaction}) if $E\models \chi_j$ and $\neg\mathrm{SocialOnly}(E)$, then
\[
\mathsf{Acc}\big(E,j,\mathrm{Wit}_{\chi_j}(E)\big)=1.
\]
\end{enumerate}
\end{Definition}

\begin{Definition}[Sound routers and token-determinedness]
\label{def:sound_router}
With $(\chi_j)$ and $\mathrm{Wit}_{\chi_j}$ fixed, write $\mathsf{Acc}\in\mathfrak{R}_{\mathrm{snd}}(\chi,\mathrm{Wit})$ if $\mathsf{Acc}$ satisfies the \emph{empty rejection} and \emph{soundness} clauses of Definition~\ref{def:router_class} (but not necessarily completeness).

We also say that $\mathsf{Acc}$ is \emph{token-determined} if for all token-equivalent events $E\equiv_{\Toks}E'$ and all certificates $(\ell,W)$ with $W\subseteq \mathcal{T}(E)=\mathcal{T}(E')$,
\[
\mathsf{Acc}(E,\ell,W)=\mathsf{Acc}(E',\ell,W).
\]
\end{Definition}

\paragraph{Safety--liveness separation.}
The next proposition makes precise a commonly used engineering heuristic: sound-but-incomplete routers preserve safety properties but may harm liveness (by rejecting admissible evidence-bearing updates).

\begin{Proposition}[Safety--liveness separation for routers]
\label{prop:safety_liveness_router}
Let $X$ be any witnessed trigger protocol or contract with fallback $\delta_i$, and let $\mathsf{Acc}\in\mathfrak{R}_{\mathrm{snd}}(\chi,\mathrm{Wit})$ be any sound router that rejects empty-witness certificates.
Then every accepted non-fallback step of $F_X^{\mathsf{Acc}}$ is admissible (in the sense that it is supported by a token-sound witness for the named trigger), while any rejected step devolves to fallback.
Consequently, \PBRC{} \emph{safety} guarantees that are proved under the assumption ``either a trigger update is admissible and evidence-witnessed or else fallback is applied'' remain valid under sound-but-incomplete routers; completeness is only needed for \emph{liveness} (eventual acceptance of admissible evidence-bearing steps) and for equivalence results that require that all admissible steps are accepted.
\end{Proposition}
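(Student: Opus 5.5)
The argument is a case analysis on the router decision $\mathsf{Acc}\big(E,\mathrm{can}(\pi_X(E))\big)$ in the definition of $F_X^{\mathsf{Acc}}$ (Definition~\ref{def:router_sem}). I fix $b$ and $E$, write $(\ell,W):=\mathrm{can}(\pi_X(E))$, and handle the accept and reject branches separately. The safety consequence then follows by induction on rounds, and the liveness remark follows by exhibiting a router that is sound but incomplete.

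\emph{Rejected steps.} If $\mathsf{Acc}(E,\ell,W)=0$, Definition~\ref{def:router_sem} gives $F_X^{\mathsf{Acc}}(b,E)=\delta_i(b,E)$ directly. So every rejected step devolves to fallback, with no use of soundness.

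\emph{Accepted steps.} Suppose $\mathsf{Acc}(E,\ell,W)=1$. By empty rejection, $(\ell,W)\neq(\bot,\emptyset)$. Since $\mathrm{can}$ outputs either $(\bot,\emptyset)$ or a pair with $W\neq\emptyset$ (Definition~\ref{def:can}), we get $W\neq\emptyset$ and $\ell=j$ for some trigger index. The case $\ell=\bot$ with $W\neq\emptyset$ cannot occur, because $\pi_X$ assigns $\bot$ only together with $\emptyset$. Soundness then yields $W\subseteq\mathcal{T}(E)$ and that $W$ witnesses $E\models\chi_j$, i.e.\ $E\models_W\chi_j$. Hence the accepted non-fallback step carries a nonempty, token-sound witness for the named trigger, which is exactly admissibility. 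A small point to check: nonemptiness of $W\subseteq\mathcal{T}(E)$ forces $\neg\mathrm{SocialOnly}(E)$. So accepted steps never occur on social-only events, even when $X$ itself is not evidential.

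\emph{Consequences.} The case analysis shows that every step of the enforced run satisfies the dichotomy ``admissible evidence-witnessed trigger update, or fallback.'' Any safety guarantee whose proof uses only this dichotomy therefore transfers verbatim by induction on $t$. In particular, Theorems~\ref{thm:nonamp} and \ref{thm:nocascade} transfer, since token-empty rounds are forced to fallback. For the liveness remark, I exhibit the router $\mathsf{Acc}\equiv 0$. It vacuously satisfies empty rejection and soundness, yet it suppresses all admissible evidence-bearing updates. This shows that completeness (clause 3 of Definition~\ref{def:router_class}) is what is needed for eventual acceptance and for equivalence results such as Theorem~\ref{thm:routernf}.

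\emph{Main obstacle.} The mathematics is routine. The delicate step is making ``safety guarantees proved under the dichotomy'' precise enough that the transfer claim is a theorem rather than a heuristic. I would do this by stating it as a preservation lemma for step-wise invariants of the form
\[
\big(\ell_t=\bot \Rightarrow b^{t+1}=\delta_i(b^t,E^t)\big)\ \wedge\ \big(\ell_t\neq\bot\Rightarrow W_t\neq\emptyset,\ W_t\subseteq\mathcal{T}(E^t),\ E^t\models_{W_t}\chi_{\ell_t}\big),
\]
and then observing that the earlier safety proofs consume only this invariant.
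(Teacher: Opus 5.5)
Your proposal is correct and follows the same route as the paper: a case split on the router's decision in Definition~\ref{def:router_sem}. Rejection gives fallback by definition, and acceptance gives a token-sound witness by the soundness clause. Your extra steps tighten points the paper's proof leaves implicit: you use empty rejection and the form of $\mathrm{can}$ to rule out an accepted $\bot$ label, and you exhibit the router $\mathsf{Acc}\equiv 0$ to show completeness is genuinely needed for liveness.
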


\begin{proof}
By Definition~\ref{def:router_sem}, if the router rejects then the enforced semantics applies fallback. If the router accepts, then by soundness (Definition~\ref{def:router_class}) the accepted certificate $(j,W)$ satisfies $W\subseteq \mathcal{T}(E)$ and $W$ witnesses $E\models \chi_j$, i.e., the step is evidence-justified in the intended sense. Safety results that rely only on (i) fallback behavior and (ii) the fact that non-fallback steps are evidence-witnessed therefore continue to hold; completeness affects only whether admissible steps are eventually accepted.
\end{proof}

\paragraph{Evidence-gating as an idempotent projection.}

\begin{Proposition}[Idempotence of evidence gating]
\label{prop:eg_idem}
For any trigger protocol $P_i$, $\mathrm{EG}(\mathrm{EG}(P_i))=\mathrm{EG}(P_i)$.
Moreover, if $P_i$ is already evidence-gated (i.e., every trigger $\psi_{i,j}$ implies $\neg\mathrm{SocialOnly}$), then $\mathrm{EG}(P_i)=P_i$ (up to logical equivalence of triggers).
\end{Proposition}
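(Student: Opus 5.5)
The plan is to unfold Definition~\ref{def:eg} twice and compare the two contracts component by component. $\mathrm{EG}$ leaves the operators $V_{i,j}$, the priority order $\prec_i$, and the fallback $\delta_i$ unchanged; it only conjoins $\neg\mathrm{SocialOnly}$ to each trigger. So $\mathrm{EG}(\mathrm{EG}(P_i))$ has the same operators, order and fallback as $\mathrm{EG}(P_i)$, and its $j$-th trigger is $(\psi_{i,j}\land\neg\mathrm{SocialOnly})\land\neg\mathrm{SocialOnly}$. By associativity and idempotence of $\land$, this is logically equivalent to $\psi_{i,j}\land\neg\mathrm{SocialOnly}$. Hence the two contracts have the same trigger sets $J(E)$ for every event $E$, the same selected index $j^\star$, and therefore the same one-step map $F$.

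If equality is read syntactically, the first claim holds only up to logical equivalence of triggers, just as the statement says explicitly for the second claim. I would state this convention at the start: the relevant notion is equality of the tuple with triggers identified modulo equivalence over all structures $\mathcal{S}(E)$. Under that convention, all derived objects coincide: $J$, $j^\star$, $F$, and, with witness extractors inherited unchanged, also $\pi$ and $\widehat{F}$. So both belief equivalence and audit equivalence (Definition~\ref{def:equiv}) follow at once.

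For the second claim, assume every $\psi_{i,j}$ satisfies $E\models\psi_{i,j}\Rightarrow\neg\mathrm{SocialOnly}(E)$ for all events $E$. Then for every $E$, $E\models\psi_{i,j}\land\neg\mathrm{SocialOnly}$ iff $E\models\psi_{i,j}$. The forward direction is trivial, and the backward direction uses the assumed implication. So the compiled triggers are equivalent to the originals, and with the other components untouched, $\mathrm{EG}(P_i)=P_i$ up to equivalence.

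As a consistency check, I would note that the first claim is also an instance of the second, because $\mathrm{EG}(P_i)$ is evidence-gated by the same argument as Lemma~\ref{lem:nf_evidential}. The only real care point is this equality convention (semantic rather than syntactic trigger identity); the rest is immediate.
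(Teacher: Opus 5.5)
Your proposal is correct and follows the paper's own argument: idempotence of conjoining $\neg\mathrm{SocialOnly}$ for the first claim, and logical equivalence of $\psi_{i,j}\land\neg\mathrm{SocialOnly}$ with $\psi_{i,j}$ under the evidence-gating hypothesis for the second, with all other components untouched. Your explicit ``equality modulo trigger equivalence'' convention and your observation that the first claim is an instance of the second are consistent with the paper's ``up to propositional equivalence'' qualifier.
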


\begin{proof}
By Definition~\ref{def:eg}, $\mathrm{EG}$ conjoins $\neg\mathrm{SocialOnly}$ to every trigger. Conjoining the same conjunct twice is idempotent, hence $\mathrm{EG}(\mathrm{EG}(P_i))=\mathrm{EG}(P_i)$. If each $\psi_{i,j}$ already implies $\neg\mathrm{SocialOnly}$, then $\psi_{i,j}\land \neg\mathrm{SocialOnly}$ is logically equivalent to $\psi_{i,j}$, so the compilation does not change the trigger satisfaction relation and yields the same protocol/contract (up to propositional equivalence).
\end{proof}

\paragraph{Semantic projection theorem.}
We now state the router-class normal form: any protocol, when enforced by a router in $\mathfrak{R}$, behaves exactly like its explicit evidence-gated compilation.

\begin{Theorem}[Router-class normal form (semantic projection to evidence gating)]
\label{thm:routerclass}
Let $P_i=\langle(\psi_{i,1},V_{i,1}),\dots,(\psi_{i,r_i},V_{i,r_i}),\prec_i,\delta_i\rangle$ be a witnessed trigger protocol whose witness extractors satisfy Definition~\ref{def:wit}.
Let $\mathsf{Acc}\in \mathfrak{R}(\psi,\mathrm{Wit})$ be any router that is sound, complete (on non-social satisfaction), and rejects empty-witness certificates for the triggers $(\psi_{i,j})$.
Then for all beliefs $b$ and events $E$,
\[
F_{P_i}^{\mathsf{Acc}}(b,E)\;=\; F_{\mathrm{EG}(P_i)}(b,E).
\]
In particular, under such enforcement, the externally observed behavior of $P_i$ depends only on its evidence-gated projection $\mathrm{EG}(P_i)$, not on any social-only trigger structure.
\end{Theorem}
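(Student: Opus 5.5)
The argument mirrors the proof of Theorem~\ref{thm:routernf}. We fix $b$ and $E$ and split on whether $E$ is social-only. In each case we compute the canonicalized certificate $\mathrm{can}(\pi_{P_i}(E))$, then use the three router axioms of Definition~\ref{def:router_class} to decide whether $\mathsf{Acc}$ accepts. We then compare the enforced output $F^{\mathsf{Acc}}_{P_i}(b,E)$ with $F_{\mathrm{EG}(P_i)}(b,E)$. Equality of trajectories follows by induction on time.

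\emph{Case $\mathrm{SocialOnly}(E)$.} Lemma~\ref{lem:wit_empty_socialonly} gives that every token-sound extractor returns $\emptyset$. So whether or not $J_{P_i}(E)$ is empty, $\pi_{P_i}(E)$ has an empty witness and $\mathrm{can}(\pi_{P_i}(E))=(\bot,\emptyset)$. Empty rejection gives $\mathsf{Acc}(E,\bot,\emptyset)=0$, hence $F^{\mathsf{Acc}}_{P_i}(b,E)=\delta_i(b,E)$. On the other side, every trigger of $\mathrm{EG}(P_i)$ contains the false conjunct $\neg\mathrm{SocialOnly}$, so $\mathrm{EG}(P_i)$ also falls back to $\delta_i(b,E)$.

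\emph{Case $\neg\mathrm{SocialOnly}(E)$.} Here $J_{\mathrm{EG}(P_i)}(E)=J_{P_i}(E)$, and both pick the same $j^\star$ under the common total order $\prec_i$.
\begin{itemize}
\item If $J_{P_i}(E)=\emptyset$, the certificate is $(\bot,\emptyset)$. The router rejects by empty rejection, and both sides give $\delta_i(b,E)$; even if it accepted, $F_{P_i}=\delta_i$ anyway.
\item If $J_{P_i}(E)\neq\emptyset$, non-vacuity (Definition~\ref{def:wit}) gives $W:=\mathrm{Wit}_{\psi_{i,j^\star}}(E)\neq\emptyset$. So $\mathrm{can}(\pi_{P_i}(E))=(j^\star,W)$, which is exactly the canonical extracted certificate. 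Since $E\models\psi_{i,j^\star}$ and $E$ is non-social, completeness gives $\mathsf{Acc}(E,j^\star,W)=1$. Hence $F^{\mathsf{Acc}}_{P_i}(b,E)=F_{P_i}(b,E)=V_{i,j^\star}(b,E)=F_{\mathrm{EG}(P_i)}(b,E)$.
\end{itemize}

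The only point needing care is this last subcase. Completeness is stated for the specific certificate $(j,\mathrm{Wit}_{\chi_j}(E))$, so we must check that the canonicalized certificate actually submitted coincides with it. Canonicalization leaves a nonempty witness unchanged, so non-vacuity is exactly what makes them match. Soundness is not needed for the equality; it only guarantees that accepted steps are admissible, in line with Proposition~\ref{prop:safety_liveness_router}.

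As a consistency check, the nonempty-witness gate is a member of $\mathfrak{R}$ only in a degenerate sense, so the result should be read as generalizing Theorem~\ref{thm:routernf} rather than being derived from it.
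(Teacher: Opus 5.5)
Your proof is correct and follows the paper's argument essentially step for step: the same split on $\mathrm{SocialOnly}(E)$; token soundness plus empty rejection in the social-only case; and non-vacuity plus completeness, with $\mathrm{EG}(P_i)$ selecting the same trigger, in the non-social triggered case. Your explicit check that $\mathrm{can}$ leaves the nonempty extracted certificate unchanged is a detail the paper leaves implicit. One small correction to your closing aside: the nonempty-witness gate is not in $\mathfrak{R}(\psi,\mathrm{Wit})$ at all, because it accepts any certificate other than $(\bot,\emptyset)$, including ones with $W\not\subseteq\mathcal{T}(E)$, and so fails router soundness; but since, as you rightly observe, soundness is never used, your argument also covers Theorem~\ref{thm:routernf}.
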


\begin{proof}
Fix $b$ and $E$.

\emph{Case 1: $\mathrm{SocialOnly}(E)$.}
Then $\mathcal{T}(E)=\emptyset$. By token soundness (Definition~\ref{def:wit}), every extracted witness set is empty, hence $\mathrm{can}(\pi_{P_i}(E))=(\bot,\emptyset)$ (either because no trigger holds or because any selected trigger has empty witness and is canonicalized). By empty rejection,
$\mathsf{Acc}(E,\bot,\emptyset)=0$, and Definition~\ref{def:router_sem} yields
\[
F_{P_i}^{\mathsf{Acc}}(b,E)=\delta_i(b,E).
\]
In $\mathrm{EG}(P_i)$, each trigger includes the conjunct $\neg\mathrm{SocialOnly}$, which is false here, so no trigger is satisfied and
\[
F_{\mathrm{EG}(P_i)}(b,E)=\delta_i(b,E).
\]
Thus the equality holds.

\emph{Case 2: $\neg\mathrm{SocialOnly}(E)$.}
Then for each $j$, $E\models(\psi_{i,j}\land\neg\mathrm{SocialOnly})$ iff $E\models\psi_{i,j}$, so $P_i$ and $\mathrm{EG}(P_i)$ select the same highest-priority trigger label when any trigger is satisfied.

If $J_{P_i}(E)=\emptyset$, then both apply fallback $\delta_i(b,E)$ and the equality is immediate.

If $J_{P_i}(E)\neq\emptyset$, let $j^\star=j^\star_{P_i}(E)$. Since $E\models \psi_{i,j^\star}$ and $\neg\mathrm{SocialOnly}(E)$, non-vacuity implies $\mathrm{Wit}_{\psi_{i,j^\star}}(E)\neq\emptyset$. By completeness,
\[
\mathsf{Acc}\big(E,j^\star,\mathrm{Wit}_{\psi_{i,j^\star}}(E)\big)=1.
\]
Hence the router accepts the canonicalized certificate, and Definition~\ref{def:router_sem} yields
\[
F_{P_i}^{\mathsf{Acc}}(b,E)=F_{P_i}(b,E)=V_{i,j^\star}(b,E).
\]
The evidence-gated contract $\mathrm{EG}(P_i)$ applies the same operator $V_{i,j^\star}$ on this event, so
$F_{\mathrm{EG}(P_i)}(b,E)=V_{i,j^\star}(b,E)$, proving equality.
\end{proof}

\begin{Corollary}[Router transparency for evidence-gated contracts]
\label{cor:eg_transparency}
Let $\mathsf{Acc}\in \mathfrak{R}(\chi,\mathrm{Wit})$ and let $C_i$ be an evidential contract whose triggers are of the form $\chi_j\land\neg\mathrm{SocialOnly}$, equipped with witness extractors satisfying Definition~\ref{def:wit}.
Then
\[
F_{C_i}^{\mathsf{Acc}}(b,E)=F_{C_i}(b,E)\quad\text{for all }b,E.
\]
\end{Corollary}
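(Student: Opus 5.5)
The plan is to derive Corollary~\ref{cor:eg_transparency} from Theorem~\ref{thm:routerclass} together with Proposition~\ref{prop:eg_idem}. View $C_i$ as a witnessed trigger protocol whose triggers are $\psi_{i,j}:=\chi_j\land\neg\mathrm{SocialOnly}$, with the given witness extractors satisfying Definition~\ref{def:wit}. Theorem~\ref{thm:routerclass} then gives $F_{C_i}^{\mathsf{Acc}}=F_{\mathrm{EG}(C_i)}$. Since every trigger of $C_i$ already implies $\neg\mathrm{SocialOnly}$, Proposition~\ref{prop:eg_idem} gives $\mathrm{EG}(C_i)=C_i$ up to logical equivalence of triggers. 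Trigger sets $J(E)$ are defined semantically, so equivalent triggers induce identical $J$, identical $j^\star$, and hence $F_{\mathrm{EG}(C_i)}=F_{C_i}$. Chaining the two equalities finishes the proof.

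The main obstacle is a bookkeeping mismatch. Theorem~\ref{thm:routerclass} assumes $\mathsf{Acc}\in\mathfrak{R}(\psi,\mathrm{Wit})$, i.e.\ a router whose soundness and completeness refer to the contract's own triggers $\psi_{i,j}$. The corollary instead assumes $\mathsf{Acc}\in\mathfrak{R}(\chi,\mathrm{Wit})$, a router for the underlying $\chi_j$. Rather than forcing the theorem to apply, I will rerun its short case analysis directly. The only router axioms used are empty rejection and completeness, and both transfer.

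\emph{Case $\mathrm{SocialOnly}(E)$:} every trigger $\chi_j\land\neg\mathrm{SocialOnly}$ is false, so $J_{C_i}(E)=\emptyset$ and $\pi_{C_i}(E)=(\bot,\emptyset)$. Empty rejection then gives $F^{\mathsf{Acc}}_{C_i}(b,E)=\delta_i(b,E)$, which equals $F_{C_i}(b,E)$.

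\emph{Case $\neg\mathrm{SocialOnly}(E)$:} if $J_{C_i}(E)=\emptyset$, both sides are $\delta_i(b,E)$, either by rejection of $(\bot,\emptyset)$ or directly. Otherwise take $j^\star$. Then $E\models\chi_{j^\star}$ and $\neg\mathrm{SocialOnly}(E)$ hold, so non-vacuity makes the extracted witness nonempty and canonicalization leaves the certificate $(j^\star,\mathrm{Wit}(E))$ unchanged. Completeness of $\mathsf{Acc}$ for $\chi_{j^\star}$ forces acceptance, provided the witness extractor used by the router for $\chi_{j^\star}$ coincides with the contract's. I take this as the intended reading of the shared symbol $\mathrm{Wit}$. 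Hence $F^{\mathsf{Acc}}_{C_i}(b,E)=U_{i,j^\star}(b,E)=F_{C_i}(b,E)$.

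Soundness of the router is never needed. This matches Proposition~\ref{prop:safety_liveness_router}: transparency is a liveness-type property and depends only on completeness and empty rejection.
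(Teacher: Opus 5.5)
Your direct case analysis is correct and is essentially the paper's own proof. The paper splits on whether a trigger fires rather than on $\mathrm{SocialOnly}(E)$, uses evidentiality, non-vacuity, and completeness exactly as you do, and silently makes the identification of the contract's extractor with the router's $\mathrm{Wit}_{\chi_j}$ that you flag explicitly; also, in the no-trigger case even empty rejection is dispensable, since both branches of $F^{\mathsf{Acc}}_{C_i}$ then equal $\delta_i(b,E)$.
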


\begin{proof}
Fix $b,E$.
If no trigger fires, then $F_{C_i}(b,E)=\delta_i(b,E)$ and the certificate canonicalizes to $(\bot,\emptyset)$, so the enforced semantics also yields $\delta_i(b,E)$.
If a trigger $j$ fires, evidentiality implies $\neg\mathrm{SocialOnly}(E)$; by non-vacuity the extracted witness is nonempty, and by completeness the router accepts the corresponding certificate. Hence $F_{C_i}^{\mathsf{Acc}}(b,E)=F_{C_i}(b,E)$.
\end{proof}

\subsubsection{Token-sufficiency and persuasion separation}
\label{sec:tokensuff}

Router-level enforcement projects protocols onto evidence-gated dynamics (Theorems~\ref{thm:routernf}--\ref{thm:routerclass}). The next result sharpens this into a \emph{separation principle}: under natural invariance assumptions, the \emph{validated token set} carried by an event is a sufficient statistic for the enforced update. Consequently, altering rhetoric, prestige cues, confidence posturing, or majority claims---without changing validated tokens---cannot affect accepted belief change.

\paragraph{Token-equivalence and token invariance.}
We make explicit which components of a contract (and which enforcement mechanisms) are allowed to depend on non-token features of an event.

\begin{Definition}[Token-equivalence and token invariance]
\label{def:tokeninv}
Two events $E$ and $E'$ are \emph{token-equivalent}, written $E\equiv_{\Toks}E'$, if they carry the same valid-token set:
\[
E\equiv_{\Toks}E' \quad:\iff\quad \mathcal{T}(E)=\mathcal{T}(E').
\]
A trigger $\chi$ is \emph{token-invariant} if $E\equiv_{\Toks}E'$ implies
\[
(E\models \chi)\ \Longleftrightarrow\ (E'\models \chi).
\]
An operator $U$ (including fallback) is \emph{token-invariant} if $E\equiv_{\Toks}E'$ implies
\[
U(b,E)=U(b,E')\qquad\text{for all beliefs }b\in\Prob(\Hh).
\]
A witness extractor $\mathrm{Wit}_\chi$ is \emph{token-invariant} if $E\equiv_{\Toks}E'$ implies
\[
\mathrm{Wit}_\chi(E)=\mathrm{Wit}_\chi(E').
\]
\end{Definition}

\begin{Remark}[Scope of invariance]
Token-invariance is a \emph{design constraint}, not a meta-theorem: it formalizes that admissibility is intended to depend on \emph{validated evidence}, not on who said what or how it was phrased. If a deployment wants to treat source identity or reputation as admissible input, then that information should be encoded as (and validated as) tokens; otherwise the system is vulnerable to purely social influence in exactly the sense PBRC is designed to exclude.
\end{Remark}

\paragraph{Token-sufficiency theorem.}
We also need a mild invariance condition on the router: acceptance should not depend on non-token rhetoric when presented with the same canonicalized certificate and witness set. This is captured by \emph{token-determinedness} (Definition~\ref{def:sound_router}).

\begin{Theorem}[Token-sufficiency (persuasion separation)]
\label{thm:tokensuff}
Let $C_i$ be an evidential contract whose triggers, operators (including fallback), and witness extractors are token-invariant (Definition~\ref{def:tokeninv}). Let $\mathsf{Acc}\in\mathfrak{R}_{\mathrm{snd}}(\chi,\mathrm{Wit})$ be any sound router that is token-determined (Definition~\ref{def:sound_router}). Then for all beliefs $b$ and all token-equivalent events $E\equiv_{\Toks}E'$,
\[
F_{C_i}^{\mathsf{Acc}}(b,E)\;=\;F_{C_i}^{\mathsf{Acc}}(b,E').
\]
In particular, changing unvalidated message content (rhetoric, confidence, sender identity, majority claims) without changing the set of valid evidence tokens cannot change any enforced belief update.
\end{Theorem}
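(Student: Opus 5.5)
The plan is to unfold the router-enforced semantics of Definition~\ref{def:router_sem} and show that every ingredient it consults is the same on $E$ and $E'$. For a contract $C_i$, $F_{C_i}^{\mathsf{Acc}}(b,E)$ depends on three things: the unenforced update $F_{C_i}(b,E)$, the fallback value $\delta_i(b,E)$, and the router bit $\mathsf{Acc}\big(E,\mathrm{can}(\pi_{C_i}(E))\big)$. I will show each of these coincides for $E$ and $E'$ whenever $E\equiv_{\Toks}E'$; the equality of enforced updates then follows directly from the case definition.

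First, the trigger set and selected label. Token-invariance of each trigger $\chi_j$ gives $E\models\chi_j$ iff $E'\models\chi_j$, so $J_{C_i}(E)=J_{C_i}(E')$. Since both events use the same strict priority order $\prec_i$, they also select the same $j^\star$ whenever that set is nonempty. Second, the unenforced update. If $J=\emptyset$, both events take fallback, and token-invariance of $\delta_i$ gives $\delta_i(b,E)=\delta_i(b,E')$. Otherwise both events apply $U_{i,j^\star}$, and token-invariance of that operator gives equal outputs. So $F_{C_i}(b,E)=F_{C_i}(b,E')$, and the fallback branch of the enforced semantics also agrees. Third, the certificate. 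Token-invariance of $\mathrm{Wit}_{\chi_{j^\star}}$, together with equality of labels, gives $\pi_{C_i}(E)=\pi_{C_i}(E')$, and hence equal canonicalized certificates $(\ell,W)$.

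The remaining step, which I expect to need the most care, is the router bit. Token-determinedness only applies to certificates with $W\subseteq\mathcal{T}(E)=\mathcal{T}(E')$, so that condition has to be checked. If $(\ell,W)=(\bot,\emptyset)$, then $W=\emptyset$ is trivially a subset. Empty rejection (part of $\mathfrak{R}_{\mathrm{snd}}$) actually gives rejection on both sides directly, so token-determinedness is not even needed here. If $W\neq\emptyset$, then $W=\mathrm{Wit}_{\chi_{j^\star}}(E)$, and token soundness of the witness extractor (Definition~\ref{def:wit}) gives $W\subseteq\mathcal{T}(E)$. Token-determinedness then yields $\mathsf{Acc}(E,\ell,W)=\mathsf{Acc}(E',\ell,W)$. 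Note that router soundness is not really used: it constrains what the router accepts, not whether the decision is invariant. Evidentiality is likewise not essential to the argument, though it is consistent with it.

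Combining the three facts, the enforced semantics either accepts on both $E$ and $E'$ and returns the common value $F_{C_i}(b,\cdot)$, or rejects on both and returns the common value $\delta_i(b,\cdot)$. This proves $F_{C_i}^{\mathsf{Acc}}(b,E)=F_{C_i}^{\mathsf{Acc}}(b,E')$. The ``in particular'' clause follows because altering rhetoric, sender identity, or majority claims while keeping $\mathcal{T}(E)$ fixed produces a token-equivalent event.
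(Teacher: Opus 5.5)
Your proposal is correct and follows essentially the same route as the paper: token-invariance of the triggers, witness extractors, and operators (including fallback) makes the selected label, the canonicalized certificate, and both candidate outputs coincide on $E$ and $E'$, and token-determinedness then equates the router's accept/reject decision. Your explicit use of token soundness of the witness extractor (Definition~\ref{def:wit}) to discharge the side condition $W\subseteq\mathcal{T}(E)$ is a small but useful addition, since the paper asserts this inclusion without justification.
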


\begin{proof}
Fix a belief $b$ and token-equivalent events $E\equiv_{\Toks}E'$, so $\mathcal{T}(E)=\mathcal{T}(E')$.

Let $\pi_{C_i}(E)=(\ell,W)$ and $\pi_{C_i}(E')=(\ell',W')$ be the (uncanonicalized) certificates proposed by $C_i$. Because triggers are token-invariant, the set of satisfied triggers (and hence the selected highest-priority trigger label, or fallback) is the same on $E$ and $E'$, so $\ell=\ell'$. Because witness extractors are token-invariant, the extracted witness sets also coincide: $W=W'$. Therefore the canonicalized certificates coincide:
\[
\mathrm{can}(\pi_{C_i}(E))=\mathrm{can}(\pi_{C_i}(E')) =: (\bar{\ell},\bar{W}),
\qquad\text{with }\bar{W}\subseteq \mathcal{T}(E)=\mathcal{T}(E').
\]

By token-invariance of operators (including fallback), the candidate (ungated) update agrees:
\[
F_{C_i}(b,E)=F_{C_i}(b,E').
\]
Finally, by token-determinedness of the router and $\bar{W}\subseteq \mathcal{T}(E)=\mathcal{T}(E')$, we have
\[
\mathsf{Acc}(E,\bar{\ell},\bar{W})=\mathsf{Acc}(E',\bar{\ell},\bar{W}).
\]
Applying the enforced semantics (Definition~\ref{def:router_sem}) yields the same accept/reject decision and the same resulting update on $E$ and $E'$, hence
$F_{C_i}^{\mathsf{Acc}}(b,E)=F_{C_i}^{\mathsf{Acc}}(b,E')$.
\end{proof}

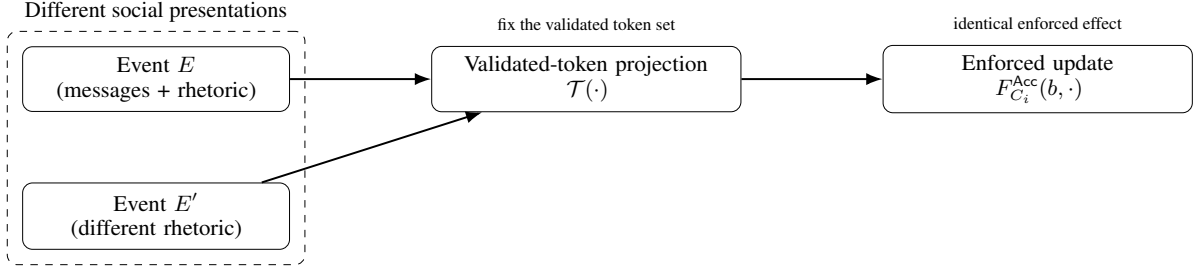
\begin{figure}[t]
\centering
\resizebox{0.96\linewidth}{!}{
\begin{tikzpicture}[
  font=\small,
  node distance=9mm and 18mm,
  block/.style={
    draw,
    rounded corners,
    align=center,
    inner sep=4pt,
    minimum height=9mm,
    text width=4.1cm
  },
  sblock/.style={block, text width=3.5cm},
  group/.style={draw, rounded corners, dashed, inner sep=6pt},
  arrow/.style={-Latex, thick}
]
\node[sblock] (E) {Event $E$\\(messages + rhetoric)};
\node[sblock, below=10mm of E] (E2) {Event $E'$\\(different rhetoric)};
\node[block, right=20mm of E] (Tok) {Validated-token projection\\$\mathcal{T}(\cdot)$};
\node[block, right=20mm of Tok] (Upd) {Enforced update\\$F^{\mathsf{Acc}}_{C_i}(b,\cdot)$};

\draw[arrow] (E) -- (Tok);
\draw[arrow] (E2) -- (Tok);
\draw[arrow] (Tok) -- (Upd);

\node[group, fit=(E)(E2),
      label={[align=center]above:\small Different social presentations}] {};
\node[align=center, above=1mm of Tok] {\scriptsize fix the validated token set};
\node[align=center, above=1mm of Upd] {\scriptsize identical enforced effect};
\end{tikzpicture}}
\caption{Token-sufficiency (Theorem~\ref{thm:tokensuff}): under token-invariant contracts and token-determined routers, rhetorical variation is semantically inert once the validated token set is fixed.}
\label{fig:tokensuff}
\end{figure}

\paragraph{Necessity of token-invariance.}
The next proposition records that token-invariance is not a cosmetic restriction: if a trigger (or witness extractor) is allowed to depend on non-token features (e.g., sender identity), then token-sufficiency can fail even with sound enforcement.

\begin{Proposition}[Token-invariance is necessary for token-sufficiency]
\label{prop:necess_tokeninv}
Token-invariance in Theorem~\ref{thm:tokensuff} is essential. Concretely, there exists an \emph{evidential} contract $C$ and a sound-and-complete router such that for some belief $b$ there are token-equivalent events $E\equiv_{\Toks}E'$ with
\[
F_{C}^{\mathsf{Acc}}(b,E)\neq F_{C}^{\mathsf{Acc}}(b,E').
\]
\end{Proposition}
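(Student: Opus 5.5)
The plan is an explicit counterexample in which a single trigger reads the sender identity (the $\mathrm{Send}$ predicate). Take $\Hh=\{h_1,h_2\}$, one validated token $\tau$ with $\mathrm{Valid}(\tau)$ and $\mathrm{Supports}(\tau,h_2)$, and two agents $a,a'$. Define the one-trigger contract $C=\langle(\varphi_1,U_1),\delta_\lambda\rangle$ with
\[
\varphi_1 := \exists m\,\exists \tau\,\big(\mathrm{Send}(a,i,m,t)\wedge \mathrm{HasTok}(m,\tau)\wedge \mathrm{Valid}(\tau)\big),
\]
and let $U_1(b,E)$ be the point mass on $h_2$. The fallback is the skeptical dilution $\delta_\lambda$, and the witness extractor $\mathrm{Wit}_{\varphi_1}(E)$ returns the validated tokens carried by messages from $a$.

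First I will check that $C$ is evidential. Any event satisfying $\varphi_1$ contains a validated token, so $\mathcal{T}(E)\neq\emptyset$. The extractor is token-sound by construction. It is also non-vacuous on non-social satisfaction, because whenever $\varphi_1$ holds the witnessing $\tau$ lies in a message from $a$.

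Next I fix the router $\mathsf{Acc}(E,\ell,W)=1$ iff $\ell\neq\bot$, $W\neq\emptyset$, $W\subseteq\mathcal{T}(E)$, and $E\models_W\varphi_1$. This router satisfies empty rejection and soundness directly. It is complete because $\mathrm{Wit}_{\varphi_1}(E)$ witnesses $\varphi_1$ whenever $\varphi_1$ holds. The restriction $\mathcal{S}_W(E)$ keeps the $\mathrm{Send}$ facts and keeps $\tau\in W$, so $E\models_W\varphi_1$.

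Now I build the two events. Let $E$ consist of one message from $a$ to $i$ carrying $\tau$, and let $E'$ consist of one message from $a'$ to $i$ carrying $\tau$. Then $\mathcal{T}(E)=\mathcal{T}(E')=\{\tau\}$, so $E\equiv_{\Toks}E'$. Take $b=(0.6,0.4)$.

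On $E$ the trigger fires and the router accepts, so $F^{\mathsf{Acc}}_C(b,E)=\delta_{h_2}$. On $E'$ no trigger fires, so the certificate is $(\bot,\emptyset)$ and the step is fallback: $F^{\mathsf{Acc}}_C(b,E')=(1-\lambda)b+\lambda u$. Its argmax is $h_1$ by Lemma~\ref{lem:dilutionprops}, so it differs from $\delta_{h_2}$.

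The main delicate point is the interaction with Definition~\ref{def:token_restricted_sat}. I need restricting the token universe to $W$ to leave the $\mathrm{Send}$ facts intact, so that the witness check still succeeds on $E$ and completeness holds. The paper states explicitly that non-token relations are unchanged, so this goes through.

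Finally, I will note which hypothesis of Theorem~\ref{thm:tokensuff} fails: $\varphi_1$ is not token-invariant. The router, by contrast, is sound and complete, and it is even token-determined on certificates with $\ell\neq\bot$ whenever the $\mathrm{Send}$ structure agrees. So the failure of token-sufficiency is localized to the trigger. An analogous variant, with a sender-dependent witness extractor in place of the sender-dependent trigger, would show that token-invariance of extractors is needed too.
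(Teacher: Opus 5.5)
Your proposal is correct and matches the paper's own counterexample. Both use a single sender-sensitive trigger requiring a valid token from a distinguished sender, two token-equivalent events that differ only in who sent the token, and a sound-and-complete router that accepts the triggered step on one event and forces fallback on the other; your checks of non-vacuity, of completeness via $\mathcal{S}_W(E)$ keeping the $\mathrm{Send}$ facts, and the concrete argmax comparison just fill in details the paper leaves implicit.

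One minor caveat concerns only your closing remark. Any sound-and-complete router for this trigger must accept $(1,\{\tau\})$ on $E$ and reject it on $E'$, so it is necessarily not token-determined in the sense of Definition~\ref{def:sound_router}. This does not affect the proposition as stated, which asks only for soundness and completeness.
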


\begin{proof}
Let $\Hh=\{h_0,h_1\}$ and fix a token $\tau^\star$ with $\mathrm{Valid}(\tau^\star)$ and $\mathrm{Supports}(\tau^\star,h_1)$.
Define a \emph{sender-sensitive} (hence not token-invariant) trigger:
\[
\psi\;:=\;\exists m\ \Big(\mathrm{Send}(a_0,i,m,t)\ \wedge\ \mathrm{HasTok}(m,\tau^\star)\ \wedge\ \mathrm{Valid}(\tau^\star)\Big),
\]
i.e., ``agent $i$ received $\tau^\star$ from distinguished sender $a_0$.''

Let $C$ have a single trigger $\psi$ with an operator that shifts probability mass toward $h_1$ (e.g., a bounded log-odds increase for $h_1$), and let fallback be any argmax-preserving operator (e.g., identity or skeptical dilution). The contract is \emph{evidential} because $\psi$ implies $\neg\mathrm{SocialOnly}$ (it explicitly requires a valid token).

Construct two events $E$ and $E'$ that contain the same valid token set $\mathcal{T}(E)=\mathcal{T}(E')=\{\tau^\star\}$, but differ only in sender metadata: in $E$ the message carrying $\tau^\star$ is sent by $a_0$, while in $E'$ it is sent by some $a_1\neq a_0$.
Then $E\equiv_{\Toks}E'$, but $E\models \psi$ whereas $E'\not\models \psi$. Therefore $C$ applies the trigger operator on $E$ and fallback on $E'$, yielding different one-step updates. A sound-and-complete router accepts the evidence-bearing step on $E$ and (trivially) enforces fallback on $E'$, so
$F_C^{\mathsf{Acc}}(b,E)\neq F_C^{\mathsf{Acc}}(b,E')$ for any belief $b$ on which the trigger operator differs from fallback.
\end{proof}

\subsubsection{Token exposure traces and network-level invariance}
\label{sec:tokentrace}

The token-sufficiency theorem (Theorem~\ref{thm:tokensuff}) is a one-step, single-agent statement: under token-invariant contracts and token-determined enforcement, an agent's \emph{enforced} update depends on an event only through its validated token set. We now lift this to multi-agent runs. The key observation is that, once contracts and routers satisfy the same invariance discipline, the \emph{entire enforced belief trajectory} of each agent is determined by the \emph{sequence} of token sets it is exposed to over time. We formalize this as a \emph{token exposure trace} and show that token-trace equivalence implies equality of enforced belief trajectories.

\begin{Definition}[Token exposure trace and token-trace equivalence]
\label{def:tokentrace}
Fix a horizon $T\in\Nat$. For an agent $i$ with received events $E_i^0,\dots,E_i^{T-1}$, define its \emph{token exposure trace} up to $T$ as
\[
\mathrm{Trace}^T_{\Toks}(i)\;:=\;\big(\mathcal{T}(E_i^0),\mathcal{T}(E_i^1),\dots,\mathcal{T}(E_i^{T-1})\big).
\]
Two deliberation runs $\rho$ and $\rho'$ are \emph{token-trace equivalent up to $T$}, written $\rho \equiv^T_{\mathrm{trace}} \rho'$, if for every agent $i\in\A$ and every round $t<T$,
\[
\mathcal{T}(E_{i,\rho}^t)=\mathcal{T}(E_{i,\rho'}^t).
\]
\end{Definition}

\begin{Theorem}[Token exposure traces determine enforced belief trajectories]
\label{thm:tokentrace}
Assume each agent $i$ runs an evidential contract $C_i$ whose triggers, operators (including fallback), and witness extractors are token-invariant (Definition~\ref{def:tokeninv}), and is enforced by a router $\mathsf{Acc}_i$ that is sound and token-determined (Definition~\ref{def:sound_router}).\footnote{Equivalently, $\mathsf{Acc}_i\in\mathfrak{R}_{\mathrm{snd}}(\chi,\mathrm{Wit})$ and is token-determined. Completeness is not required for this invariance statement.}
Consider two runs $\rho$ and $\rho'$ with identical initial beliefs $b_{i,\rho}^0=b_{i,\rho'}^0$ for all $i$.
If $\rho \equiv^T_{\mathrm{trace}} \rho'$, then for all agents $i\in\A$ and all rounds $t\le T$,
\[
b_{i,\rho}^t \;=\; b_{i,\rho'}^t.
\]
In particular, the enforced terminal beliefs at horizon $T$ are identical across runs.
\end{Theorem}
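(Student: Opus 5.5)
The plan is an induction on $t$, with Theorem~\ref{thm:tokensuff} supplying the inductive step; everything is done one agent at a time. Fix an agent $i$. In both runs the enforced dynamics are the router-executed semantics of Definition~\ref{def:router_sem}:
\[
b_{i,\rho}^{t+1}=F_{C_i}^{\mathsf{Acc}_i}\bigl(b_{i,\rho}^t,E_{i,\rho}^t\bigr),
\qquad
b_{i,\rho'}^{t+1}=F_{C_i}^{\mathsf{Acc}_i}\bigl(b_{i,\rho'}^t,E_{i,\rho'}^t\bigr).
\]
The contract $C_i$ and the router $\mathsf{Acc}_i$ are the same in both runs. The runs may differ in topology, ordering and rhetoric, but that difference shows up only in the events, not in the update map.

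The induction hypothesis is $b_{i,\rho}^t=b_{i,\rho'}^t$ for all $i$ and a fixed $t<T$. The base case $t=0$ is the assumption on initial beliefs. For the step, let $b:=b_{i,\rho}^t=b_{i,\rho'}^t$. Token-trace equivalence (Definition~\ref{def:tokentrace}) gives $\mathcal{T}(E_{i,\rho}^t)=\mathcal{T}(E_{i,\rho'}^t)$, so $E_{i,\rho}^t\equiv_{\Toks}E_{i,\rho'}^t$. The hypotheses of the present theorem are exactly those of Theorem~\ref{thm:tokensuff}: $C_i$ is evidential, its triggers, operators (including fallback) and witness extractors are token-invariant, and $\mathsf{Acc}_i$ is sound and token-determined. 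That theorem therefore gives $F_{C_i}^{\mathsf{Acc}_i}(b,E_{i,\rho}^t)=F_{C_i}^{\mathsf{Acc}_i}(b,E_{i,\rho'}^t)$, which is $b_{i,\rho}^{t+1}=b_{i,\rho'}^{t+1}$. Induction up to $t=T$ finishes the proof, and the terminal-belief claim is the case $t=T$. Note that the step for round $t$ uses only rounds $<T$, which is why equality holds for all $t\le T$.

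I expect the main obstacle to be bookkeeping rather than mathematics. I would state explicitly that the enforced update at round $t$ depends only on the agent's own authoritative state $b_i^t$ and its received event $E_i^t$, and not on other agents' states; under the state-holding router (C) this holds by construction. Other agents influence agent $i$ only through the events they send, and those events are already covered by the trace hypothesis. This is why the induction can be run per agent, without a joint multi-agent invariant. I would also note that completeness of $\mathsf{Acc}_i$ is never used, since Theorem~\ref{thm:tokensuff} needs only soundness and token-determinedness.
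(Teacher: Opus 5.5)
Your proposal is correct and matches the paper's proof essentially verbatim. Both argue by induction on $t$: the base case comes from equal initial beliefs, and the inductive step applies Theorem~\ref{thm:tokensuff} to the token-equivalent events $E_{i,\rho}^t\equiv_{\Toks}E_{i,\rho'}^t$ at the common belief $b_{i,\rho}^t=b_{i,\rho'}^t$. Your remark that the induction can run agent by agent is a harmless refinement of the paper's joint induction over all agents, since each step uses only agent $i$'s own state and event.
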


\begin{proof}
We prove by induction on $t\in\{0,1,\dots,T\}$ that $b_{i,\rho}^t=b_{i,\rho'}^t$ for all agents $i$.

\emph{Base case ($t=0$).} Holds by hypothesis.

\emph{Inductive step.} Fix $t<T$ and assume $b_{i,\rho}^t=b_{i,\rho'}^t$ for all $i$.
By token-trace equivalence, for each agent $i$ we have
\[
\mathcal{T}(E_{i,\rho}^t)=\mathcal{T}(E_{i,\rho'}^t),
\]
i.e., $E_{i,\rho}^t\equiv_{\Toks}E_{i,\rho'}^t$.
Apply Theorem~\ref{thm:tokensuff} to agent $i$ with belief $b=b_{i,\rho}^t=b_{i,\rho'}^t$ and token-equivalent events
$E=E_{i,\rho}^t$, $E'=E_{i,\rho'}^t$ to obtain
\[
F_{C_i}^{\mathsf{Acc}_i}(b_{i,\rho}^t,E_{i,\rho}^t)
=
F_{C_i}^{\mathsf{Acc}_i}(b_{i,\rho'}^t,E_{i,\rho'}^t).
\]
By definition of the run semantics, the left-hand side is $b_{i,\rho}^{t+1}$ and the right-hand side is $b_{i,\rho'}^{t+1}$.
Thus $b_{i,\rho}^{t+1}=b_{i,\rho'}^{t+1}$ for each $i$, completing the induction.
\end{proof}

\begin{Corollary}[Topology and rhetoric are irrelevant given token traces]
\label{cor:topo_irrelevant_trace}
Under the hypotheses of Theorem~\ref{thm:tokentrace}, any two multi-agent deliberation runs (possibly with different network topology, message ordering, and rhetorical content) that induce the same token exposure traces up to horizon $T$ induce identical enforced belief trajectories up to $T$.
\end{Corollary}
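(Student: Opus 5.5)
This corollary is a direct restatement of Theorem~\ref{thm:tokentrace}, so the plan is to reduce it to that theorem. I fix two runs $\rho,\rho'$ that satisfy the hypotheses of Theorem~\ref{thm:tokentrace}: every agent uses an evidential contract whose triggers, operators (including fallback) and witness extractors are token-invariant, and every agent is enforced by a sound, token-determined router. The two runs have identical initial beliefs, as in that theorem; this is implicit in the phrase ``identical enforced belief trajectories''. I do not constrain their topologies, message orderings or rhetorical content.

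The key observation is that topology, ordering and rhetoric enter the run semantics \emph{only} through the received events $E_{i,\rho}^t$. A topology determines who sends to whom, ordering determines how messages group into rounds, and rhetoric is unvalidated message content. Once the events are fixed, the enforced update $b_i^{t+1}=F_{C_i}^{\mathsf{Acc}_i}(b_i^t,E_i^t)$ makes no further reference to any of these features. The assumption that the runs induce the same token exposure traces up to $T$ is, by Definition~\ref{def:tokentrace}, the statement $\rho\equiv^T_{\mathrm{trace}}\rho'$. That is, $\mathcal{T}(E_{i,\rho}^t)=\mathcal{T}(E_{i,\rho'}^t)$ for all $i$ and all $t<T$.

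Given this, I invoke Theorem~\ref{thm:tokentrace} to conclude $b_{i,\rho}^t=b_{i,\rho'}^t$ for all $i$ and all $t\le T$. Internally, that theorem proceeds by induction on $t$, applying Theorem~\ref{thm:tokensuff} at each step to the token-equivalent events $E_{i,\rho}^t\equiv_{\Toks}E_{i,\rho'}^t$.

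The only step needing care is the modeling claim that topology and rhetoric affect nothing beyond the event multisets. In particular, the router and the operators must not read auxiliary global state such as the adjacency graph. This follows from Definition~\ref{def:router_sem}, in which the enforced semantics is a function of $(b,E)$ alone. I will make this explicit in one sentence, and the corollary then follows immediately.
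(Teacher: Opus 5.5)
Your reduction is correct and matches the paper, which states this corollary without a separate proof as an immediate restatement of Theorem~\ref{thm:tokentrace}. As you note, identical initial beliefs come from that theorem's hypotheses, and topology, ordering and rhetoric enter the enforced semantics $F_{C_i}^{\mathsf{Acc}_i}(b,E)$ only through the events $E_i^t$, so equal token exposure traces give $\rho\equiv^T_{\mathrm{trace}}\rho'$ and the theorem applies directly.
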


\begin{figure}[t]
\centering
\resizebox{0.96\linewidth}{!}{
\begin{tikzpicture}[
  font=\small,
  node distance=9mm and 18mm,
  block/.style={
    draw,
    rounded corners,
    align=center,
    inner sep=4pt,
    minimum height=9mm,
    text width=4.2cm
  },
  sblock/.style={block, text width=3.8cm},
  group/.style={draw, rounded corners, dashed, inner sep=6pt},
  arrow/.style={-Latex, thick}
]
\node[sblock] (r1) {Run $\rho$\\topology $G$, rhetoric $R$};
\node[sblock, below=10mm of r1] (r2) {Run $\rho'$\\topology $G'$, rhetoric $R'$};
\node[block, right=20mm of r1] (trace) {Per-agent token exposure traces\\$\big(\mathcal{T}(E_i^t)\big)_{i,t}$};
\node[block, right=20mm of trace] (traj) {Enforced belief trajectories\\$\big(b_i^t\big)_{i,t}$};

\draw[arrow] (r1) -- (trace);
\draw[arrow] (r2) -- (trace);
\draw[arrow] (trace) -- (traj);

\node[group, fit=(r1)(r2),
      label={[align=center]above:\small Distinct executions}] {};
\node[align=center, above=1mm of trace] {\scriptsize if traces agree};
\node[align=center, above=1mm of traj] {\scriptsize then trajectories agree};
\end{tikzpicture}}
\caption{Network-level invariance: under token-invariant contracts and sound, token-determined routers, topology and rhetoric matter only through the token exposure traces they induce.}
\label{fig:tokentrace}
\end{figure}
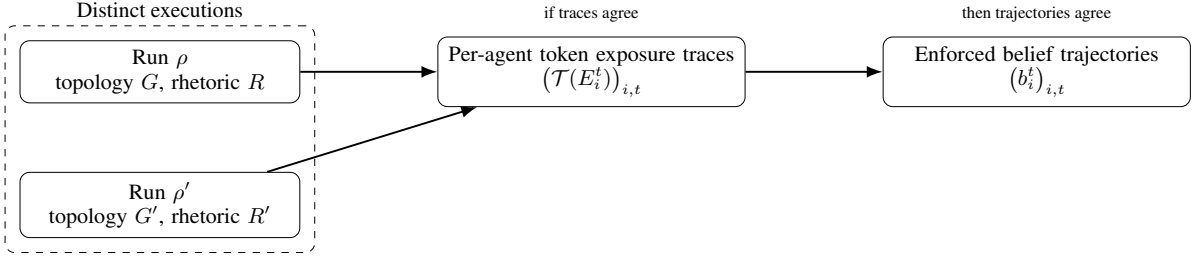

\subsubsection{Topology-to-trace structural conditions under token flooding}
\label{sec:topologytrace}

Theorems~\ref{thm:tokensuff} and \ref{thm:tokentrace} reduce end-to-end dependence on the communication structure to a single question: \emph{which validated tokens reach which agents at which times}. We now give a closed-form characterization of token exposure traces under a minimal dissemination layer: \emph{token flooding}, where agents forward validated tokens verbatim along the communication graph.

\begin{Definition}[Communication graph and flooding]
\label{def:flood}
Fix a directed communication graph $G=(\A,\mathcal{E})$.
Write $\mathrm{In}_G(i):=\{j\in\A:(j,i)\in\mathcal{E}\}$.
Let $\mathcal{K}_i^t\subseteq \Toks$ denote the set of valid tokens \emph{known} to agent $i$ after completing round $t$ (after validating and incorporating received tokens). Assume an initial placement $(\mathcal{K}_i^0)_{i\in\A}$.

\emph{Flooding} is the dissemination rule:
at each round $t\ge 0$, each agent $j$ sends a message containing all tokens in $\mathcal{K}_j^t$ to every out-neighbor.
Agent $i$ receives the event $E_i^t$ consisting of those messages, so
\[
\mathcal{T}(E_i^t)=\bigcup_{j\in \mathrm{In}_G(i)} \mathcal{K}_j^t,
\qquad
\mathcal{K}_i^{t+1}=\mathcal{K}_i^t \cup \mathcal{T}(E_i^t).
\]
We assume reliable delivery and no exogenous token births after $t=0$ for the results below (exogenous births can be handled by time-stamped origins).
\end{Definition}

Let $\mathrm{dist}_G(u,v)\in \Nat\cup\{\infty\}$ be the length of the shortest directed path from $u$ to $v$ in $G$, and $\infty$ if none exists.
When $G$ is strongly connected, its directed diameter is
\[
D(G):=\max_{u,v\in\A}\mathrm{dist}_G(u,v)\;<\;\infty.
\]

\begin{Remark}[Standard flooding facts vs.\ PBRC-specific use]
Distance/diameter characterizations of flooding are classical in distributed algorithms (e.g., \cite{lynch1996da}). Our use here is to compose these structural facts with PBRC's token-sufficiency and enforcement semantics to obtain end-to-end invariance statements.
\end{Remark}

\begin{Theorem}[Distance characterization of flooding knowledge and exposure]
\label{thm:flood_distance}
Under flooding (Definition~\ref{def:flood}), for every agent $i$ and every round $t\ge 0$,
\[
\mathcal{K}_i^t \;=\; \bigcup_{\substack{j\in\A:\\ \mathrm{dist}_G(j,i)\le t}} \mathcal{K}_j^0.
\]
Consequently, the validated token set exposed to $i$ in its received event at round $t$ is
\[
\mathcal{T}(E_i^t) \;=\; \bigcup_{\substack{j\in\A:\\ \mathrm{dist}_G(j,i)\le t+1}} \mathcal{K}_j^0.
\]
\end{Theorem}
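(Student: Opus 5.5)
The plan is induction on $t$ for the knowledge sets $\mathcal{K}_i^t$, followed by a one-line substitution for the exposure sets $\mathcal{T}(E_i^t)$. Throughout we use the flooding recursion of Definition~\ref{def:flood}:
\[
\mathcal{T}(E_i^t)=\bigcup_{j\in\mathrm{In}_G(i)}\mathcal{K}_j^t,
\qquad
\mathcal{K}_i^{t+1}=\mathcal{K}_i^t\cup\mathcal{T}(E_i^t),
\]
together with reliable delivery, no exogenous token births after $t=0$, and the fact that flooded tokens are forwarded verbatim and so remain valid.

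\emph{Base case.} Since $\mathrm{dist}_G(j,i)\le 0$ iff $j=i$, the right-hand side at $t=0$ is just $\mathcal{K}_i^0$.

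\emph{Inductive step.} Assume the claim at round $t$ for all agents. Substituting the hypothesis into the recursion gives
\[
\mathcal{K}_i^{t+1}=\bigcup_{\mathrm{dist}_G(k,i)\le t}\mathcal{K}_k^0\ \cup\ \bigcup_{j\in\mathrm{In}_G(i)}\ \bigcup_{\mathrm{dist}_G(k,j)\le t}\mathcal{K}_k^0 .
\]
It then suffices to prove the purely graph-theoretic identity
\[
\{k:\mathrm{dist}_G(k,i)\le t+1\}=\{k:\mathrm{dist}_G(k,i)\le t\}\cup\{k:\exists j\in\mathrm{In}_G(i),\ \mathrm{dist}_G(k,j)\le t\}.
\]
For "$\supseteq$", use the triangle inequality $\mathrm{dist}_G(k,i)\le \mathrm{dist}_G(k,j)+1$ for $(j,i)\in\mathcal{E}$. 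For "$\subseteq$", take a shortest path from $k$ to $i$ of length at most $t+1$. Either it has length $0$, so $k=i$ and $k$ lies in the first set. Or its last edge is some $(j,i)$, and the prefix witnesses $\mathrm{dist}_G(k,j)\le t$. This closes the induction.

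\emph{Exposure formula.} Substituting the first formula into $\mathcal{T}(E_i^t)=\bigcup_{j\in\mathrm{In}_G(i)}\mathcal{K}_j^t$ and using the same path decomposition at level $t+1$ yields the union of $\mathcal{K}_k^0$ over those $k$ having a directed walk of length between $1$ and $t+1$ into $i$.

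This last step is the main obstacle. That index set agrees with $\{k:\mathrm{dist}_G(k,i)\le t+1\}$ except possibly at $k=i$: the term $\mathcal{K}_i^0$ (distance $0$) is captured only if $i$ hears its own tokens. I would make this precise by adopting the standard flooding convention that each agent is included in its own round-$t$ delivery, i.e.\ self-loops $(i,i)\in\mathcal{E}$. This convention leaves every $\mathrm{dist}_G$ and hence the first formula unchanged, and with $i\in\mathrm{In}_G(i)$ the union over in-neighbors contains $\mathcal{K}_i^t\supseteq\mathcal{K}_i^0$, giving the second formula exactly. Without that convention, the stated identity holds up to the single term $\mathcal{K}_i^0$, and I would record that caveat explicitly.
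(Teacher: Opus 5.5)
Your induction for $\mathcal{K}_i^t$ is the paper's argument: the same base case, the same substitution of the hypothesis into the recursion, and the same set identity proved by adding or stripping the last edge of a shortest path.

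On the exposure formula your caveat is correct, and it exposes a gap in the paper's proof. The paper obtains the second identity ``by the same one-edge extension argument.'' That argument covers $u=i$ (distance $0$) only through the term $\mathcal{K}_i^t$ in the recursion for $\mathcal{K}_i^{t+1}$. This term is absent from $\mathcal{T}(E_i^t)=\bigcup_{j\in\mathrm{In}_G(i)}\mathcal{K}_j^t$. Already on the two-cycle $1\leftrightarrow 2$ with unique tokens $\tau_u\in\mathcal{K}_u^0$, one gets $\mathcal{T}(E_1^0)=\{\tau_2\}$, whereas the stated formula gives $\{\tau_1,\tau_2\}$. With unique tokens, the formula holds exactly when $i$ lies on a closed walk of length between $1$ and $t+1$. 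Without an extra convention, what is true is $\mathcal{T}(E_i^t)\cup\mathcal{K}_i^0=\bigcup_{k:\,\mathrm{dist}_G(k,i)\le t+1}\mathcal{K}_k^0$, or equivalently the union over your index set $\{k:\exists j\in\mathrm{In}_G(i),\ \mathrm{dist}_G(k,j)\le t\}$.

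Your self-loop convention does restore the statement exactly: it leaves the $\mathcal{K}$-recursion unchanged and gives $\mathcal{T}(E_i^t)=\mathcal{K}_i^{t+1}$. Two points are worth noting.
\begin{enumerate}
\item It changes the model, not just the notation. Every agent now re-receives its own knowledge each round, so an agent with $\mathcal{K}_i^0\neq\emptyset$ never sees a social-only event.
\item It must be imposed on all graphs uniformly. Loops are invisible to $\mathrm{dist}_G$ but alter $\mathcal{T}(E_i^t)$, so otherwise Theorem~\ref{thm:topology_trace_equiv} fails for two graphs that differ only by a loop.
\end{enumerate}
If you prefer not to change the model, restricting to loopless graphs also suffices downstream. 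There, $i$ belongs to the corrected index set iff some $j\neq i$ has $\mathrm{dist}_G(j,i)\le 1$ and $\mathrm{dist}_G(i,j)\le t$. This is still determined by truncated reachability for $t<T$. Theorem~\ref{thm:trace_tight} only needs the formula for $u\neq v$, where it is correct.
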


\begin{proof}
We prove the first statement by induction on $t$.

\emph{Base ($t=0$).} Since $\mathrm{dist}_G(j,i)\le 0$ iff $j=i$, we have
$\bigcup_{j:\mathrm{dist}_G(j,i)\le 0}\mathcal{K}_j^0=\mathcal{K}_i^0$.

\emph{Step.} Assume $\mathcal{K}_j^t=\bigcup_{u:\mathrm{dist}_G(u,j)\le t}\mathcal{K}_u^0$ for all $j$.
By flooding,
\[
\mathcal{K}_i^{t+1}=\mathcal{K}_i^t \cup \bigcup_{j\in \mathrm{In}_G(i)} \mathcal{K}_j^t.
\]
Substitute the induction hypothesis:
\[
\mathcal{K}_i^{t+1}
=
\Big(\bigcup_{u:\mathrm{dist}_G(u,i)\le t}\mathcal{K}_u^0\Big)
\cup
\Big(\bigcup_{j\in \mathrm{In}_G(i)}\ \bigcup_{u:\mathrm{dist}_G(u,j)\le t}\mathcal{K}_u^0\Big).
\]
Let
\[
S:=\{u:\mathrm{dist}_G(u,i)\le t\}\ \cup\ \{u:\exists j\in \mathrm{In}_G(i)\ \mathrm{dist}_G(u,j)\le t\}.
\]
Then $\mathcal{K}_i^{t+1}=\bigcup_{u\in S}\mathcal{K}_u^0$.
We claim $S=\{u:\mathrm{dist}_G(u,i)\le t+1\}$.
If $u\in S$ then either $\mathrm{dist}_G(u,i)\le t$ or there exists $j\in \mathrm{In}_G(i)$ with $\mathrm{dist}_G(u,j)\le t$; in the latter case, appending the edge $j\to i$ yields a path from $u$ to $i$ of length $\le t+1$, so $\mathrm{dist}_G(u,i)\le t+1$.
Conversely, if $\mathrm{dist}_G(u,i)\le t+1$, let $u=w_0\to\cdots\to w_\ell=i$ be a shortest path with $\ell\le t+1$.
If $\ell\le t$ then $u\in S$ via the first set. If $\ell=t+1$, let $j:=w_{\ell-1}$ be the predecessor of $i$ on the path; then $j\in \mathrm{In}_G(i)$ and $\mathrm{dist}_G(u,j)=t$, so $u\in S$ via the second set.
This proves the claim and completes the induction.

For the second statement, use $\mathcal{T}(E_i^t)=\bigcup_{j\in \mathrm{In}_G(i)}\mathcal{K}_j^t$ and substitute the first characterization:
\[
\mathcal{T}(E_i^t)
=
\bigcup_{j\in \mathrm{In}_G(i)}\ \bigcup_{u:\mathrm{dist}_G(u,j)\le t}\mathcal{K}_u^0
=
\bigcup_{u:\mathrm{dist}_G(u,i)\le t+1}\mathcal{K}_u^0,
\]
where the last equality follows by the same one-edge extension argument used above.
\end{proof}

\begin{Corollary}[Diameter bound for evidence closure under flooding]
\label{cor:flood_closure}
If $G$ is strongly connected with diameter $D(G)<\infty$, then for every agent $i$,
\[
\mathcal{K}_i^{D(G)}=\bigcup_{j\in\A}\mathcal{K}_j^0.
\]
In particular, by round $D(G)$ every agent has received (directly or indirectly) every initially present valid token.
\end{Corollary}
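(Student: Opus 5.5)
The plan is to read Corollary~\ref{cor:flood_closure} off the distance characterization in Theorem~\ref{thm:flood_distance}, instantiated at $t=D(G)$. That theorem gives, for every agent $i$,
\[
\mathcal{K}_i^{D(G)}=\bigcup_{j\in\A:\ \mathrm{dist}_G(j,i)\le D(G)}\mathcal{K}_j^0 .
\]
So the whole task reduces to showing that the index set $\{j\in\A:\mathrm{dist}_G(j,i)\le D(G)\}$ is all of $\A$.

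Strong connectivity guarantees that $\mathrm{dist}_G(j,i)<\infty$ for every ordered pair $(j,i)$. By the definition of the directed diameter as a maximum over all ordered pairs, $\mathrm{dist}_G(j,i)\le D(G)$. Hence the index set equals $\A$, and the union becomes $\bigcup_{j\in\A}\mathcal{K}_j^0$, which is the claimed equality.

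For the ``in particular'' clause, I will also note that the inclusion $\mathcal{K}_i^{D(G)}\subseteq\bigcup_j\mathcal{K}_j^0$ needs no separate argument. It is part of the equality and reflects the standing assumption of no exogenous token births. Likewise, $\mathcal{K}_i^t$ is monotone in $t$ because $\mathcal{K}_i^{t+1}=\mathcal{K}_i^t\cup\mathcal{T}(E_i^t)$, so closure persists for all $t\ge D(G)$. The interpretive sentence then follows directly, since every token in $\mathcal{K}_i^{D(G)}$ entered either initially or via some received event $E_i^s$ with $s<D(G)$.

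There is no real obstacle here. The only point requiring care is the convention that $D(G)$ ranges over ordered pairs, including $u=v$ with distance $0$. That is exactly how it was defined, so the index-set step goes through without adjustment.
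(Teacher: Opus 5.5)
Your proposal is correct and matches the paper's proof: strong connectivity gives $\mathrm{dist}_G(j,i)\le D(G)$ for every ordered pair $(j,i)$, so instantiating Theorem~\ref{thm:flood_distance} at $t=D(G)$ makes the index set all of $\A$. Your extra remarks on monotonicity in $t$ and on the no-exogenous-births assumption are harmless but not needed for the stated equality.
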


\begin{proof}
If $G$ is strongly connected, then $\mathrm{dist}_G(j,i)\le D(G)$ for all $j,i$. Apply Theorem~\ref{thm:flood_distance} at time $t=D(G)$.
\end{proof}

\begin{Definition}[Truncated reachability equivalence]
\label{def:truncreach}
Fix a horizon $T\in\Nat$. Two directed graphs $G$ and $G'$ on the same vertex set $\A$ are \emph{$T$-reachability equivalent} if for all $u,v\in\A$ and all $t\le T$,
\[
\mathrm{dist}_G(u,v)\le t \quad\Longleftrightarrow\quad \mathrm{dist}_{G'}(u,v)\le t.
\]
Equivalently, the families of $t$-step reachability relations $(R_G^{\le t})_{t\le T}$ coincide.
\end{Definition}

\begin{Theorem}[Topology-to-trace equivalence under flooding]
\label{thm:topology_trace_equiv}
Assume flooding on $G$ and on $G'$ with the same initial token placement $(\mathcal{K}_i^0)_{i\in\A}$.
If $G$ and $G'$ are $T$-reachability equivalent (Definition~\ref{def:truncreach}), then the induced runs are token-trace equivalent up to horizon $T$:
\[
\rho_G \equiv^T_{\mathrm{trace}} \rho_{G'}.
\]
Consequently, under the hypotheses of Theorem~\ref{thm:tokentrace}, the enforced belief trajectories are identical up to $T$.
\end{Theorem}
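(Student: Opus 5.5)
The plan is to reduce the statement to the closed-form exposure formula of Theorem~\ref{thm:flood_distance}, which already expresses every validated token set in a flooding run purely in terms of the initial placement and truncated distances. Applied to $G$ and to $G'$ with the same placement $(\mathcal{K}_i^0)_{i\in\A}$, it gives for every agent $i$ and round $t$
\[
\mathcal{T}(E_{i,\rho_G}^t)=\bigcup_{j:\ \mathrm{dist}_G(j,i)\le t+1}\mathcal{K}_j^0,
\qquad
\mathcal{T}(E_{i,\rho_{G'}}^t)=\bigcup_{j:\ \mathrm{dist}_{G'}(j,i)\le t+1}\mathcal{K}_j^0 .
\]
Token-trace equivalence up to $T$ (Definition~\ref{def:tokentrace}) then amounts to showing that these two index sets coincide for every $i$ and every $t<T$.

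The key step is matching the horizons. For $t<T$ the radius $t+1$ satisfies $t+1\le T$, so Definition~\ref{def:truncreach}, applied with the pair $(j,i)$ and threshold $t+1$, gives
\[
\mathrm{dist}_G(j,i)\le t+1 \iff \mathrm{dist}_{G'}(j,i)\le t+1 .
\]
Hence $\{j:\mathrm{dist}_G(j,i)\le t+1\}=\{j:\mathrm{dist}_{G'}(j,i)\le t+1\}$, the two unions above are equal, and $\rho_G\equiv^T_{\mathrm{trace}}\rho_{G'}$. I expect this off-by-one to be the only real obstacle. The exposure at round $t$ uses radius $t+1$, so the equivalence must hold up to threshold $T$, which is exactly what Definition~\ref{def:truncreach} provides since it quantifies over all $t\le T$. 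I will also note that the argument uses reliable delivery and the absence of exogenous births (Definition~\ref{def:flood}), so that Theorem~\ref{thm:flood_distance} applies verbatim to both runs.

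For the consequence, I will invoke Theorem~\ref{thm:tokentrace} directly. Its hypotheses (token-invariant evidential contracts, sound token-determined routers, identical initial beliefs) are assumed, and token-trace equivalence is what we have just established. It follows that $b_{i,\rho_G}^t=b_{i,\rho_{G'}}^t$ for all agents $i$ and all $t\le T$. Finally, I will remark that rhetoric and sender metadata may differ between the two runs without affecting the conclusion, since both Theorem~\ref{thm:tokensuff} and Theorem~\ref{thm:tokentrace} depend only on validated token sets.
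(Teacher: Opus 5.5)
Your proposal is correct and follows the paper's proof essentially step for step. It instantiates the closed form of Theorem~\ref{thm:flood_distance} for both graphs and uses $t+1\le T$ to invoke $T$-reachability equivalence, so the index sets coincide. It then obtains the belief-trajectory consequence directly from Theorem~\ref{thm:tokentrace}.
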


\begin{proof}
Fix any agent $i$ and any round $t<T$. By Theorem~\ref{thm:flood_distance},
\[
\mathcal{T}(E_{i,G}^t)=\bigcup_{j:\mathrm{dist}_G(j,i)\le t+1}\mathcal{K}_j^0,
\qquad
\mathcal{T}(E_{i,G'}^t)=\bigcup_{j:\mathrm{dist}_{G'}(j,i)\le t+1}\mathcal{K}_j^0.
\]
Since $t+1\le T$ and $G,G'$ are $T$-reachability equivalent, the index sets
$\{j:\mathrm{dist}_G(j,i)\le t+1\}$ and $\{j:\mathrm{dist}_{G'}(j,i)\le t+1\}$ coincide, hence
$\mathcal{T}(E_{i,G}^t)=\mathcal{T}(E_{i,G'}^t)$.
As this holds for all $i$ and $t<T$, we conclude $\rho_G\equiv^T_{\mathrm{trace}}\rho_{G'}$.
The final statement follows from Theorem~\ref{thm:tokentrace}.
\end{proof}

\begin{Proposition}[Monotonicity under edge addition]
\label{prop:flood_mono}
Assume flooding with a fixed initial placement $(\mathcal{K}_i^0)_{i\in\A}$. If $G'=(\A,\mathcal{E}')$ satisfies $\mathcal{E}\subseteq \mathcal{E}'$ (i.e., $G'$ adds edges), then for all agents $i$ and all rounds $t$,
\[
\mathcal{K}_{i,G}^t \subseteq \mathcal{K}_{i,G'}^t
\qquad\text{and}\qquad
\mathcal{T}(E_{i,G}^t) \subseteq \mathcal{T}(E_{i,G'}^t).
\]
\end{Proposition}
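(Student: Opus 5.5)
The plan is to reduce both inclusions to a purely graph-theoretic fact about distances and then invoke the closed-form characterization of Theorem~\ref{thm:flood_distance}. Since $\mathcal{E}\subseteq\mathcal{E}'$, every directed path in $G$ is also a directed path in $G'$. Hence for all $u,v\in\A$ we have $\mathrm{dist}_{G'}(u,v)\le \mathrm{dist}_G(u,v)$, with the convention that any natural number is at most $\infty$. Consequently, for every threshold $s\in\Nat$,
\[
\{j\in\A:\mathrm{dist}_G(j,i)\le s\}\ \subseteq\ \{j\in\A:\mathrm{dist}_{G'}(j,i)\le s\}.
\]

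Next I apply Theorem~\ref{thm:flood_distance} to both graphs with the same initial placement $(\mathcal{K}_j^0)_{j\in\A}$. Its first equation expresses $\mathcal{K}_{i,G}^t$ and $\mathcal{K}_{i,G'}^t$ as unions of $\mathcal{K}_j^0$ over the index sets above with $s=t$. Since a union over a smaller index set of the same family is contained in the union over a larger one, this gives $\mathcal{K}_{i,G}^t\subseteq\mathcal{K}_{i,G'}^t$. Its second equation handles the exposure sets in the same way with $s=t+1$, yielding $\mathcal{T}(E_{i,G}^t)\subseteq\mathcal{T}(E_{i,G'}^t)$.

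As a cross-check that avoids the distance formula, I could instead argue by direct induction on $t$ from the flooding recursion. The key facts are $\mathrm{In}_G(i)\subseteq\mathrm{In}_{G'}(i)$ and that the update $\mathcal{K}_i^{t+1}=\mathcal{K}_i^t\cup\bigcup_{j\in\mathrm{In}(i)}\mathcal{K}_j^t$ is monotone both in the in-neighbour sets and in the previous knowledge sets. The induction then closes immediately.

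There is no real obstacle here. The only point needing care is the convention $\mathrm{dist}=\infty$ when no path exists, so that the distance comparison and the index-set inclusion remain valid in that case; the standing assumption in Theorem~\ref{thm:flood_distance} of no exogenous token births is inherited as-is.
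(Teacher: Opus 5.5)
Your proposal is correct and takes essentially the same route as the paper: edge addition weakly decreases directed distances, and the closed forms of Theorem~\ref{thm:flood_distance} turn the resulting index-set inclusions into the two claimed inclusions.

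There is one caveat, and it affects the paper's proof just as much as your main route. The exposure formula $\mathcal{T}(E_i^t)=\bigcup_{j:\mathrm{dist}_G(j,i)\le t+1}\mathcal{K}_j^0$ is not exact. Its right-hand side is really $\mathcal{K}_i^{t+1}=\mathcal{K}_i^0\cup\mathcal{T}(E_i^t)$: the term $j=i$ is always included because $\mathrm{dist}_G(i,i)=0$, yet $i$ need not receive its own initial tokens. For example, take the single edge $1\to 2$ with $\mathcal{K}_1^0=\{a\}$ and $\mathcal{K}_2^0=\{b\}$, where $a\neq b$. Then $\mathcal{T}(E_2^0)=\{a\}$, whereas the formula gives $\{a,b\}$.

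So derive the second inclusion as in your cross-check:
$\mathcal{T}(E_{i,G}^t)=\bigcup_{j\in\mathrm{In}_G(i)}\mathcal{K}_{j,G}^t\subseteq\bigcup_{j\in\mathrm{In}_{G'}(i)}\mathcal{K}_{j,G'}^t=\mathcal{T}(E_{i,G'}^t)$.
This uses $\mathrm{In}_G(i)\subseteq\mathrm{In}_{G'}(i)$ and the first inclusion. Your distance argument does prove the first inclusion correctly, since the first closed form of Theorem~\ref{thm:flood_distance} is valid.
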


\begin{proof}
Adding edges can only weakly decrease directed distances:
$\mathrm{dist}_{G'}(u,v)\le \mathrm{dist}_G(u,v)$ for all $u,v$.
Apply Theorem~\ref{thm:flood_distance} to both graphs: the set of sources within distance $\le t$ in $G$ is contained in the corresponding set for $G'$, hence
$\mathcal{K}_{i,G}^t\subseteq \mathcal{K}_{i,G'}^t$.
The inclusion for $\mathcal{T}(E_i^t)$ follows from the second characterization in Theorem~\ref{thm:flood_distance}.
\end{proof}

\paragraph{Tightness and optimality.}
The distance characterization above is not merely descriptive; it is optimal among all dissemination layers that respect the communication graph by allowing tokens to traverse at most one edge per round.

\begin{Definition}[Edge-respecting token dissemination]
\label{def:edge_respecting}
Fix a directed graph $G=(\A,\mathcal{E})$ and an initial placement $(\mathcal{K}_i^0)_{i\in\A}$.
A dissemination layer is \emph{edge-respecting} if it produces token knowledge sets $(\mathcal{K}_i^t)_{i\in\A,t\ge 0}$ such that for every $t\ge 0$ and every agent $i$,
\[
\mathcal{K}_i^{t+1}\ \subseteq\ \mathcal{K}_i^{t}\ \cup\ \bigcup_{j\in \mathrm{In}_G(i)} \mathcal{K}_j^{t}.
\]
Equivalently, new tokens acquired at round $t+1$ must come from in-neighbors' knowledge at round $t$ (or be retained locally). Flooding is the special case where the inclusion holds with equality.
\end{Definition}

\begin{Lemma}[Distance lower bound for edge-respecting dissemination]
\label{lem:distance_lower}
Under any edge-respecting dissemination on $G$, for every agent $i$ and every $t\ge 0$,
\[
\mathcal{K}_i^{t}\ \subseteq\ \bigcup_{\substack{j\in\A:\\ \mathrm{dist}_G(j,i)\le t}} \mathcal{K}_j^{0}.
\]
\end{Lemma}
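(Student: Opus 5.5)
We argue by induction on $t$, mirroring the upper half of the proof of Theorem~\ref{thm:flood_distance} but with the equality replaced by the one-sided inclusion of Definition~\ref{def:edge_respecting}. Write $R_i^t:=\{j\in\A:\mathrm{dist}_G(j,i)\le t\}$ for the truncated in-reachability set. The claim is then $\mathcal{K}_i^t\subseteq\bigcup_{j\in R_i^t}\mathcal{K}_j^0$ for all $i$.

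\emph{Base case ($t=0$).} We have $R_i^0=\{i\}$, since $\mathrm{dist}_G(j,i)=0$ iff $j=i$. The right-hand side is therefore exactly $\mathcal{K}_i^0$, and the inclusion is trivial.

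\emph{Inductive step.} Assume the inclusion holds at round $t$ for every agent. Edge-respect gives
\[
\mathcal{K}_i^{t+1}\subseteq \mathcal{K}_i^t\cup\bigcup_{j\in\mathrm{In}_G(i)}\mathcal{K}_j^t .
\]
Substituting the induction hypothesis into each term bounds the right-hand side by $\bigcup_{u\in S}\mathcal{K}_u^0$, where
\[
S:=R_i^t\cup\bigcup_{j\in\mathrm{In}_G(i)}R_j^t .
\]
It then suffices to show $S\subseteq R_i^{t+1}$. Only this direction of the set identity proved in Theorem~\ref{thm:flood_distance} is needed. We have $R_i^t\subseteq R_i^{t+1}$ trivially. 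If $u\in R_j^t$ with $j\in\mathrm{In}_G(i)$, then appending the edge $j\to i$ to a path $u\leadsto j$ of length at most $t$ yields a path $u\leadsto i$ of length at most $t+1$.

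Since unions of the fixed sets $\mathcal{K}_u^0$ are monotone in the index set, the inclusion at $t+1$ follows and closes the induction. Agents with $\mathrm{dist}_G(j,i)=\infty$ never enter any $R_i^t$, which is consistent with the claim.

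There is no real obstacle here. The one point to note is that we do not need the converse inclusion $R_i^{t+1}\subseteq S$, which in Theorem~\ref{thm:flood_distance} required the shortest-path predecessor argument. That converse, together with equality in the flooding rule, is what makes flooding attain this bound; that is the tightness statement, not this lemma.
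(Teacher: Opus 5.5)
Your proof is correct and follows the paper's argument: induction on $t$, the edge-respecting inclusion, the induction hypothesis applied to $\mathcal{K}_i^t$ and to each in-neighbor's $\mathcal{K}_j^t$, and the one-edge extension giving $S\subseteq R_i^{t+1}$. You are also right that only this direction of the set identity from Theorem~\ref{thm:flood_distance} is needed; the paper leaves this implicit in its phrase ``the same one-edge extension argument.''
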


\begin{proof}
By induction on $t$.

\emph{Base ($t=0$).} Trivial.

\emph{Step.} Assume the claim holds for $t$. By Definition~\ref{def:edge_respecting},
\[
\mathcal{K}_i^{t+1}\subseteq \mathcal{K}_i^{t}\cup \bigcup_{j\in \mathrm{In}_G(i)}\mathcal{K}_j^{t}.
\]
Apply the induction hypothesis to $\mathcal{K}_i^{t}$ and each $\mathcal{K}_j^{t}$ and use the same one-edge extension argument as in the proof of Theorem~\ref{thm:flood_distance} to conclude
$\mathcal{K}_i^{t+1}\subseteq \bigcup_{j:\mathrm{dist}_G(j,i)\le t+1}\mathcal{K}_j^0$.
\end{proof}

\begin{Theorem}[Flooding is optimal (maximal token spread at every time)]
\label{thm:flood_optimal}
Fix $G$ and an initial placement. Let $(\mathcal{K}_i^t)^{\mathsf{any}}$ be the knowledge sets produced by any edge-respecting dissemination (Definition~\ref{def:edge_respecting}) and let $(\mathcal{K}_i^t)^{\mathsf{flood}}$ be the knowledge sets under flooding (Definition~\ref{def:flood}). Then for all agents $i$ and all rounds $t$,
\[
(\mathcal{K}_i^t)^{\mathsf{any}}\ \subseteq\ (\mathcal{K}_i^t)^{\mathsf{flood}}.
\]
Consequently, flooding minimizes the worst-case time to universal evidence closure among all edge-respecting dissemination layers on $G$.
\end{Theorem}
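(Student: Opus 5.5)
The inclusion follows by chaining Lemma~\ref{lem:distance_lower} with Theorem~\ref{thm:flood_distance}. Both disseminations start from the same initial placement, so $(\mathcal{K}_j^0)^{\mathsf{any}}=(\mathcal{K}_j^0)^{\mathsf{flood}}=\mathcal{K}_j^0$ for every $j$. Fix an agent $i$ and a round $t$. Lemma~\ref{lem:distance_lower} applies to the arbitrary edge-respecting layer and gives
\[
(\mathcal{K}_i^t)^{\mathsf{any}}\ \subseteq\ \bigcup_{j:\,\mathrm{dist}_G(j,i)\le t}\mathcal{K}_j^0 .
\]
Theorem~\ref{thm:flood_distance} identifies the right-hand side exactly with $(\mathcal{K}_i^t)^{\mathsf{flood}}$, which proves the first claim. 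If one prefers not to lean on the lemma's compressed inductive step, there is a direct induction on $t$ using monotonicity of union. It assumes $(\mathcal{K}_u^t)^{\mathsf{any}}\subseteq(\mathcal{K}_u^t)^{\mathsf{flood}}$ for all $u$ and then reads
\[
(\mathcal{K}_i^{t+1})^{\mathsf{any}}\subseteq(\mathcal{K}_i^t)^{\mathsf{any}}\cup\bigcup_{j\in\mathrm{In}_G(i)}(\mathcal{K}_j^t)^{\mathsf{any}}\subseteq(\mathcal{K}_i^t)^{\mathsf{flood}}\cup\bigcup_{j\in\mathrm{In}_G(i)}(\mathcal{K}_j^t)^{\mathsf{flood}}=(\mathcal{K}_i^{t+1})^{\mathsf{flood}} .
\]
The final equality is the defining recursion of flooding in Definition~\ref{def:flood}. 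This version also makes Lemma~\ref{lem:distance_lower} unnecessary.

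For the \emph{consequently} clause, I will first make precise what is being minimized. Define the closure time of a layer as the least $T$ with $\mathcal{K}_i^T\supseteq\bigcup_j\mathcal{K}_j^0$ for all $i$, and $\infty$ if no such $T$ exists. Suppose an edge-respecting layer achieves closure at time $T$. The inclusion then gives
\[
\bigcup_j\mathcal{K}_j^0\subseteq(\mathcal{K}_i^T)^{\mathsf{any}}\subseteq(\mathcal{K}_i^T)^{\mathsf{flood}}
\]
for every $i$. Flooding therefore has also achieved closure by time $T$, so its closure time is at most that of any edge-respecting layer. This holds placement by placement, and taking the worst case over placements preserves the ordering.

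I will add one remark connecting this to tightness. For a strongly connected $G$, Corollary~\ref{cor:flood_closure} gives flooding closure by $D(G)$. Conversely, place a distinct token only at a vertex $u$ attaining $\mathrm{dist}_G(u,v)=D(G)$. Lemma~\ref{lem:distance_lower} then shows that $v$ cannot hold that token before round $D(G)$ under any edge-respecting layer, so the diameter bound is tight. The main obstacle is only definitional: the optimality claim must be phrased as a statement about the closure-time functional, so that it follows from the inclusion by monotonicity. No step requires real computation.
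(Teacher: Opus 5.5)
Your proof is correct and is the paper's proof: the inclusion comes from chaining Lemma~\ref{lem:distance_lower} with the first part of Theorem~\ref{thm:flood_distance}, and the closure-time claim follows because any closure an edge-respecting layer achieves at time $T$ is also achieved by flooding at $T$. Your alternative step-by-step induction (edge-respecting inclusion, then the induction hypothesis, then the flooding recursion of Definition~\ref{def:flood}) is also valid and slightly more economical, since it needs neither the lemma nor the distance formula.
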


\begin{proof}
By Lemma~\ref{lem:distance_lower},
\[
(\mathcal{K}_i^t)^{\mathsf{any}}
\subseteq \bigcup_{j:\mathrm{dist}_G(j,i)\le t}\mathcal{K}_j^0.
\]
By Theorem~\ref{thm:flood_distance}, the right-hand side equals $(\mathcal{K}_i^t)^{\mathsf{flood}}$ under flooding.
Thus $(\mathcal{K}_i^t)^{\mathsf{any}}\subseteq (\mathcal{K}_i^t)^{\mathsf{flood}}$ for all $i,t$.
The closure-time claim follows: if flooding has not delivered some initial token to some agent by time $t$, then no edge-respecting protocol could have delivered it either.
\end{proof}

\begin{Corollary}[Tight diameter lower bound for universal closure]
\label{cor:diameter_tight}
Suppose a dissemination layer on $G$ is edge-respecting. If it achieves \emph{universal evidence closure} by time $T$---i.e., for every initial placement and every agent $i$,
\[
\mathcal{K}_i^{T}=\bigcup_{j\in\A}\mathcal{K}_j^{0},
\]
then necessarily $D(G)\le T$ (in particular, $G$ must be strongly connected).
Conversely, if $D(G)\le T$ then flooding achieves universal closure by time $T$.
Thus, under edge-respecting dissemination, the minimum worst-case closure time is exactly the directed diameter $D(G)$.
\end{Corollary}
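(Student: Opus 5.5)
The argument has two directions. For necessity, I will use an adversarial initial placement together with Lemma~\ref{lem:distance_lower}. For sufficiency, I will invoke Corollary~\ref{cor:flood_closure}, together with the observation that knowledge sets under flooding are monotone in $t$.

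\emph{Necessity.} Suppose an edge-respecting layer achieves universal closure by time $T$ for every initial placement, and suppose for contradiction that there are agents $u,v$ with $\mathrm{dist}_G(u,v)>T$. This covers $\mathrm{dist}_G(u,v)=\infty$, which is exactly the failure of strong connectivity. Choose a fresh valid token $\tau_u$ and the placement $\mathcal{K}_u^0=\{\tau_u\}$, $\mathcal{K}_j^0=\emptyset$ for $j\neq u$. Lemma~\ref{lem:distance_lower} gives
\[
\mathcal{K}_v^T\subseteq\bigcup_{j:\mathrm{dist}_G(j,v)\le T}\mathcal{K}_j^0=\emptyset,
\]
because $u$ is the only nonempty source and it lies outside the index set. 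On the other hand, universal closure requires $\tau_u\in\mathcal{K}_v^T$, which is a contradiction. Hence $\mathrm{dist}_G(u,v)\le T$ for all $u,v$. It follows that $G$ is strongly connected and $D(G)\le T$.

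\emph{Sufficiency.} Assume $D(G)\le T$. Then $G$ is strongly connected, since all distances are finite. By Theorem~\ref{thm:flood_distance}, under flooding $\mathcal{K}_i^T=\bigcup_{j:\mathrm{dist}_G(j,i)\le T}\mathcal{K}_j^0$, and this index set is all of $\A$ because every distance is at most $D(G)\le T$. This holds for every placement, so flooding achieves universal closure by time $T$. (Alternatively, apply Corollary~\ref{cor:flood_closure} at time $D(G)$ and use the monotonicity $\mathcal{K}_i^t\subseteq\mathcal{K}_i^{t+1}$ built into flooding.)

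\emph{Tightness.} Put $T=D(G)$. Sufficiency then shows that flooding achieves worst-case closure by time $D(G)$. Necessity shows that any edge-respecting layer needs $T\ge D(G)$, and by Theorem~\ref{thm:flood_optimal} flooding is already best among such layers. So the minimum worst-case closure time is exactly $D(G)$.

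The one delicate point is the quantifier in the necessity direction. The hypothesis ranges over every placement, so we may pick a single-source placement with a distinguishable token; this ensures that the token's absence at $v$ cannot be masked by some other agent holding the same token. No other step needs more than bookkeeping.
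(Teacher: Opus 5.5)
Your proof is correct and matches the paper's argument: a single-source placement with a unique token plus Lemma~\ref{lem:distance_lower} for necessity, and Theorem~\ref{thm:flood_distance} at $t=T$ for sufficiency. Your explicit treatment of $\mathrm{dist}_G(u,v)=\infty$, and your monotonicity remark when going through Corollary~\ref{cor:flood_closure} (which literally speaks only of time $D(G)$), slightly tighten the paper's write-up.
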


\begin{proof}
(\emph{Necessity}.) If $D(G)>T$, pick $u,v$ with $\mathrm{dist}_G(u,v)>T$.
Consider an initial placement with a single unique token $\tau$ placed at $u$:
$\tau\in \mathcal{K}_u^0$ and $\tau\notin \mathcal{K}_j^0$ for $j\neq u$.
By Lemma~\ref{lem:distance_lower}, $\tau\notin \mathcal{K}_v^{T}$, contradicting universal closure.

(\emph{Sufficiency}.) If $D(G)\le T$, then for all $j,i$ we have $\mathrm{dist}_G(j,i)\le T$.
By Corollary~\ref{cor:flood_closure} (or directly by Theorem~\ref{thm:flood_distance} with $t=T$), flooding yields
$\mathcal{K}_i^{T}=\bigcup_{j}\mathcal{K}_j^0$ for all $i$.
\end{proof}

\begin{Theorem}[Tight characterization of trace equivalence under flooding]
\label{thm:trace_tight}
Fix a horizon $T\in\Nat$ and consider flooding on two graphs $G$ and $G'$ on the same vertex set $\A$.
The following are equivalent:
\begin{enumerate}
\item $G$ and $G'$ are $T$-reachability equivalent (Definition~\ref{def:truncreach}).
\item For every initial placement $(\mathcal{K}_i^0)_{i\in\A}$, the induced runs are token-trace equivalent up to $T$:
$\rho_G \equiv^T_{\mathrm{trace}} \rho_{G'}$.
\item The induced runs are token-trace equivalent up to $T$ for the \emph{canonical} placement where each node $u$ initially holds a unique token $\tau_u$ and no other tokens.
\end{enumerate}
\end{Theorem}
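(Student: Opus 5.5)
We prove the cycle of implications (1)$\Rightarrow$(2)$\Rightarrow$(3)$\Rightarrow$(1). The first two are immediate from earlier results. All the content is in (3)$\Rightarrow$(1), which reads reachability back off the token traces produced by the canonical placement.

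\emph{(1)$\Rightarrow$(2).} This is exactly Theorem~\ref{thm:topology_trace_equiv}. Its proof uses no property of the initial placement beyond its being common to both graphs. For each round $t<T$ it rewrites $\mathcal{T}(E_i^t)$ via Theorem~\ref{thm:flood_distance} as a union over $\{j:\mathrm{dist}(j,i)\le t+1\}$. Since $t+1\le T$, these index sets coincide under $T$-reachability equivalence.

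\emph{(2)$\Rightarrow$(3).} The canonical placement is one particular initial placement, so this is specialization. We only need to note that it is a legitimate placement: each $\tau_u$ satisfies $\mathrm{Valid}$, and $u\mapsto\tau_u$ is injective.

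\emph{(3)$\Rightarrow$(1).} Fix $u,v\in\A$ and $t\le T$; we must show $\mathrm{dist}_G(u,v)\le t \iff \mathrm{dist}_{G'}(u,v)\le t$. We split on $t$.
\begin{itemize}
\item If $t=0$, both sides say $u=v$, since distance $0$ holds only for a vertex to itself in any graph. So they agree trivially.
\item If $1\le t\le T$, set $s:=t-1$, so $0\le s<T$ is a round covered by the trace. Apply the second part of Theorem~\ref{thm:flood_distance} to the canonical placement. This gives $\mathcal{T}(E_{v,G}^{s})=\{\tau_j:\mathrm{dist}_G(j,v)\le t\}$, and likewise for $G'$. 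Token-trace equivalence under the canonical placement says these two sets are equal. Because the tokens are pairwise distinct, $\tau_u$ lies in the left set iff $\mathrm{dist}_G(u,v)\le t$, and in the right set iff $\mathrm{dist}_{G'}(u,v)\le t$. Hence the two conditions are equivalent.
\end{itemize}
Ranging over all $u,v$ and all $t\le T$ yields Definition~\ref{def:truncreach}.

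The main point requiring care is the index bookkeeping in (3)$\Rightarrow$(1). Trace equivalence up to $T$ only constrains rounds $s<T$, and round $s$ reveals reachability within $s+1$ steps. So the rounds $s=0,\dots,T-1$ cover exactly the radii $1,\dots,T$, and radius $0$ has to be handled separately as above. The other essential ingredient is injectivity of $u\mapsto\tau_u$. Without it, equality of token sets would not determine equality of the underlying source sets, and the recovery of $\mathrm{dist}$ from traces would fail.
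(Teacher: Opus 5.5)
Your argument is essentially the paper's own proof: (1)$\Rightarrow$(2) by Theorem~\ref{thm:topology_trace_equiv}, (2)$\Rightarrow$(3) by specialization, and (3)$\Rightarrow$(1) by reading $\mathrm{dist}\le t$ off whether $\tau_u$ lies in agent $v$'s round-$(t-1)$ exposure, with $t=0$ treated separately. One caveat you inherit from the paper: the exposure formula in Theorem~\ref{thm:flood_distance} is wrong on the diagonal (in fact $\tau_v\in\mathcal{T}(E_v^s)$ iff $v$ lies on a closed walk of positive length at most $s+1$, so never at $s=0$ without a self-loop), hence in (3)$\Rightarrow$(1) you should settle $u=v$ trivially for every $t$ and use token membership only for $u\neq v$, where it is correct, and (1)$\Rightarrow$(2) requires excluding self-loops (a loop at $i$ changes no distance but puts $\tau_i$ into $\mathcal{T}(E_i^0)$), after which it holds simply because, for $T\ge 1$, $\mathrm{dist}\le 1$ between distinct vertices is adjacency, so (1) already forces $G=G'$.
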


\begin{proof}
(1)$\Rightarrow$(2) is Theorem~\ref{thm:topology_trace_equiv}. The implication (2)$\Rightarrow$(3) is immediate.

(3)$\Rightarrow$(1). Assume token-trace equivalence up to $T$ holds for the canonical placement.
Fix any $u,v\in\A$ and any $t\le T$.
If $t=0$, then $\mathrm{dist}_G(u,v)\le 0$ iff $u=v$, and the same holds for $G'$.
Now suppose $t\ge 1$ and set $s:=t-1$ (so $s<T$).
Under the canonical placement, Theorem~\ref{thm:flood_distance} implies
\[
\tau_u\in \mathcal{T}(E_{v,G}^{s})
\quad\Longleftrightarrow\quad
\mathrm{dist}_G(u,v)\le s+1=t,
\]
and similarly
\[
\tau_u\in \mathcal{T}(E_{v,G'}^{s})
\quad\Longleftrightarrow\quad
\mathrm{dist}_{G'}(u,v)\le t.
\]
Since traces agree at time $s$ for agent $v$, the left-hand membership statements are equivalent; hence
$\mathrm{dist}_G(u,v)\le t$ iff $\mathrm{dist}_{G'}(u,v)\le t$ for all $t\le T$.
This is exactly $T$-reachability equivalence.
\end{proof}

\subsubsection{End-to-end factorization theorem}
\label{sec:endtoend}

The preceding results compose into a single end-to-end statement: once (i) enforcement rejects unauditable steps and (ii) contracts are token-invariant, \emph{topology can affect enforced epistemic dynamics only through token delivery}. Under flooding, token delivery is characterized exactly by truncated reachability (directed distance), and universal evidence closure occurs at the directed diameter.

\begin{Theorem}[End-to-end factorization of topology effects]
\label{thm:endtoend}
Fix a horizon $T\in\Nat$. Consider a multi-agent system with agents $\A$ in which each agent $i$ runs an auditable trigger protocol $P_i$ and is enforced by a router $\mathsf{Acc}_i$ satisfying the sound-and-complete router class axioms (Definition~\ref{def:router_class}) and rejecting empty-witness certificates.\footnote{Formally, $\mathsf{Acc}_i\in\mathfrak{R}(\psi_i,\mathrm{Wit}_i)$, where $\psi_i=(\psi_{i,j})_{j\le r_i}$ are the triggers of $P_i$ and $\mathrm{Wit}_i=(\mathrm{Wit}_{\psi_{i,j}})_{j\le r_i}$ are fixed witness extractors satisfying Definition~\ref{def:wit}.}
Let $C_i:=\mathrm{EG}(P_i)$ be the evidence-gated compilation (Definition~\ref{def:eg}), and assume each $C_i$ is evidential and token-invariant in the sense that its triggers, operators (including fallback), and witness extractors are token-invariant (Definition~\ref{def:tokeninv}).

Assume valid tokens are disseminated by flooding (Definition~\ref{def:flood}) on a directed graph $G=(\A,\mathcal{E})$ with initial placement $(\mathcal{K}_i^0)_{i\in\A}$ and no exogenous token births before time $T$.
Then:

\begin{enumerate}
\item \textbf{(Factorization.)}
For each agent $i$ and each $t\le T$, the enforced belief state $b_i^t$ is completely determined by:
(i) the initial beliefs $(b_j^0)_{j\in\A}$,
(ii) the initial token placement $(\mathcal{K}_j^0)_{j\in\A}$, and
(iii) the truncated reachability relations $\big(\mathrm{dist}_G(u,v)\le s\big)_{u,v\in\A,\ s\le t}$.
In particular, unvalidated message content (rhetoric, confidence, majority claims, identities not encoded in tokens) is semantically irrelevant under enforcement.
\item \textbf{(Topology equivalence.)}
If two graphs $G$ and $G'$ are $T$-reachability equivalent (Definition~\ref{def:truncreach}) and share the same initial beliefs and initial token placement, then the enforced belief trajectories coincide up to $T$.
\item \textbf{(Time-to-closure is diameter.)}
Under any edge-respecting dissemination on $G$ (Definition~\ref{def:edge_respecting}), universal evidence closure by time $T$ for all initial placements is possible only if $D(G)\le T$; flooding achieves universal closure at time $D(G)$.
\end{enumerate}
\end{Theorem}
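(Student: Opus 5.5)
The proof composes earlier results in three layers: router projection, then token-trace invariance, then the flooding characterization. First, for each agent $i$, Theorem~\ref{thm:routerclass} applies. $\mathsf{Acc}_i\in\mathfrak{R}(\psi_i,\mathrm{Wit}_i)$ and the extractors satisfy Definition~\ref{def:wit}, so $F^{\mathsf{Acc}_i}_{P_i}(b,E)=F_{\mathrm{EG}(P_i)}(b,E)=F_{C_i}(b,E)$ for all $b,E$. Hence the enforced run of $P_i$ coincides step by step with the unenforced run of $C_i$. By Corollary~\ref{cor:eg_transparency}, this in turn coincides with $C_i$ enforced by any router in the class.

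Next, I need the enforced trajectory to be a function of the token exposure trace alone. The cleanest route avoids having to rebuild a token-determined router for $C_i$. Since $C_i$'s triggers and operators, including the fallback $\delta_i$, are token-invariant by hypothesis, the unenforced map $F_{C_i}(b,\cdot)$ is already constant on $\equiv_{\Toks}$-classes. So I will redo the induction of Theorem~\ref{thm:tokentrace} directly with $F_{C_i}$ in place of $F^{\mathsf{Acc}}_{C_i}$: on any two runs with equal initial beliefs and equal $\mathcal{T}(E_i^t)$ for all $i$ and $t<T$, the beliefs agree for all $t\le T$. Alternatively, if one insists on invoking Theorem~\ref{thm:tokentrace} literally, one can note that the nonempty-witness gate is token-determined and sound for the evidential $C_i$. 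Then, under flooding with no births, Theorem~\ref{thm:flood_distance} gives $\mathcal{T}(E_i^s)=\bigcup_{u:\mathrm{dist}_G(u,i)\le s+1}\mathcal{K}_u^0$. So the whole trace up to time $t$ is a function of the placement and the relations $(\mathrm{dist}_G(u,v)\le s)$. Composing the two functions yields item~1, and the irrelevance of rhetoric is the token-invariance content just used.

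There is an indexing subtlety I expect to be the main obstacle. The belief $b_i^t$ depends on $\mathcal{T}(E_i^{s})$ for $s\le t-1$, which needs distances up to $s+1\le t$. This matches the stated range $s\le t$ exactly, so I would state it with care. I would also remark that initial beliefs of agents other than $i$ enter only vacuously, since tokens, not beliefs, are flooded.

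Item~2 follows immediately from Theorem~\ref{thm:topology_trace_equiv} combined with the trace-to-trajectory step above, or equivalently from item~1, because $T$-reachability equivalence makes the determining data coincide. Item~3 is exactly Corollary~\ref{cor:diameter_tight}, with Corollary~\ref{cor:flood_closure} for the achievability part, and needs no new argument.
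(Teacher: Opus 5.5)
Your proposal is correct and follows the paper's skeleton. You use Theorem~\ref{thm:routerclass} to replace $F^{\mathsf{Acc}_i}_{P_i}$ by $F_{C_i}$, Theorem~\ref{thm:flood_distance} for the closed form of the exposure sets, and the induction of Theorem~\ref{thm:tokentrace} to pass from traces to trajectories. Items~2 and~3 come from Theorem~\ref{thm:topology_trace_equiv} and Corollary~\ref{cor:diameter_tight}, as in the paper.

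The one real difference is the middle step. The paper invokes Theorem~\ref{thm:tokensuff} and Theorem~\ref{thm:tokentrace}. Both require the router to be token-determined, and that hypothesis does not appear in the statement. You instead observe that once the router has been projected away, the unenforced map $F_{C_i}(b,\cdot)$ is already constant on $\equiv_{\Toks}$-classes, by token-invariance of the triggers and operators (fallback included). So nothing about $\mathsf{Acc}_i$ beyond membership in $\mathfrak{R}(\psi_i,\mathrm{Wit}_i)$ is needed. This is more economical and closes a small gap in the paper's own argument.

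Keep that as your only route and drop the ``alternative''. The nonempty-witness gate $\mathbb{I}[(\ell,W)\neq(\bot,\emptyset)]$ accepts every nonempty $W$, including $W\not\subseteq\mathcal{T}(E)$. It therefore violates the soundness clause of Definition~\ref{def:sound_router}. The given $\mathsf{Acc}_i$ is not assumed token-determined either, so Theorem~\ref{thm:tokentrace} cannot be applied literally with either router.

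Your indexing remark is correct: $b_i^t$ uses $\mathcal{T}(E_i^s)$ only for $s\le t-1$, hence distances only up to $t$. So is your observation that other agents' initial beliefs enter only vacuously.
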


\begin{proof}
We prove the three items.

\paragraph{(1) Factorization.}
Fix any agent $i$ and any round $t<T$.
By the router-class normal form (Theorem~\ref{thm:routerclass}), enforcing $P_i$ under $\mathsf{Acc}_i$ yields the same one-step belief update as running the evidence-gated contract $C_i=\mathrm{EG}(P_i)$:
\[
b_i^{t+1}
=
F_{P_i}^{\mathsf{Acc}_i}(b_i^t,E_i^t)
=
F_{C_i}(b_i^t,E_i^t),
\]
where the last equality uses that $C_i$ is explicitly evidence-gated and $\mathsf{Acc}_i$ is sound-and-complete with empty rejection (Corollary~\ref{cor:eg_transparency}).

Next, token-sufficiency (Theorem~\ref{thm:tokensuff}) implies that for token-invariant $C_i$ and token-determined enforcement, the enforced update at time $t$ depends on $E_i^t$ only through its validated token set $\mathcal{T}(E_i^t)$.
Under flooding, the token set $\mathcal{T}(E_i^t)$ has the closed form given by Theorem~\ref{thm:flood_distance}:
\[
\mathcal{T}(E_i^t)
=
\bigcup_{j:\mathrm{dist}_G(j,i)\le t+1}\mathcal{K}_j^0,
\]
which is determined by the truncated reachability relations up to $t+1$ together with the initial placement $(\mathcal{K}_j^0)_{j\in\A}$.
Therefore the entire token exposure trace $(\mathcal{T}(E_i^s))_{s<t}$ is determined by truncated reachability and initial placement, and the belief trajectory is determined by iterating the fixed update maps from the initial beliefs.
Formally, apply the induction argument of Theorem~\ref{thm:tokentrace} with the token exposure trace induced by flooding.

Finally, rhetoric irrelevance follows immediately: if two executions induce the same token exposure traces, then they induce the same enforced beliefs.

\paragraph{(2) Topology equivalence.}
If $G$ and $G'$ are $T$-reachability equivalent, then flooding induces token-trace equivalence up to $T$ for any fixed initial placement (Theorem~\ref{thm:topology_trace_equiv}).
Given equal initial beliefs, token-trace equivalence implies equality of enforced belief trajectories up to $T$ (Theorem~\ref{thm:tokentrace}).

\paragraph{(3) Time-to-closure.}
This is exactly Corollary~\ref{cor:diameter_tight}.
\end{proof}

Figure~\ref{fig:endtoend} summarizes the dependency structure in Theorem~\ref{thm:endtoend}: topology affects enforced beliefs only through truncated reachability and the induced token-exposure traces.

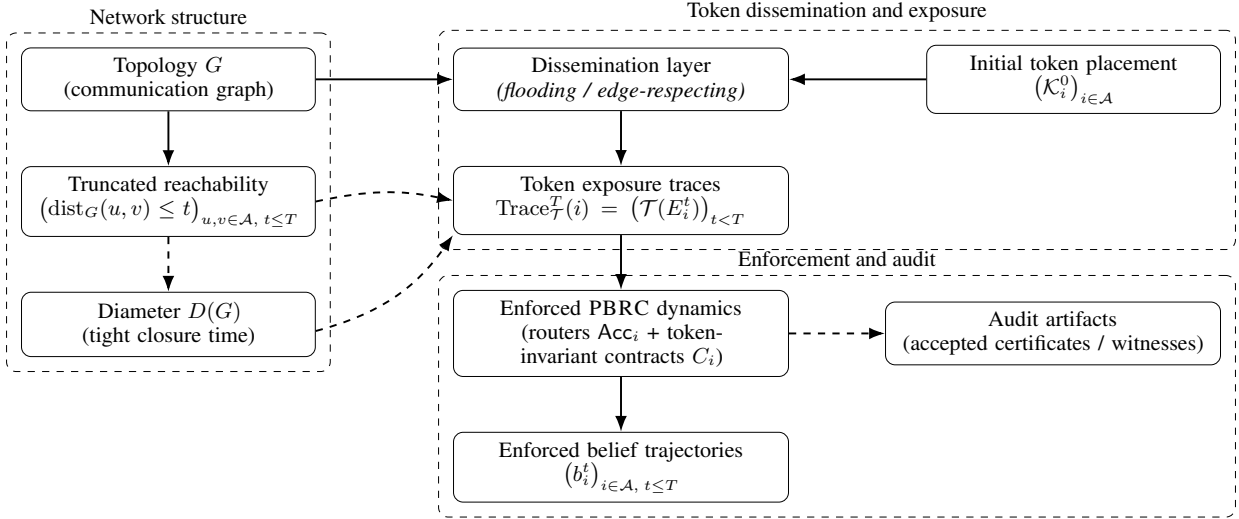
\begin{figure}[t]
\centering
\resizebox{\linewidth}{!}{
\begin{tikzpicture}[
  font=\small,
  node distance=8mm and 16mm,
  block/.style={
    draw,
    rounded corners,
    align=center,
    inner sep=4pt,
    minimum height=9mm,
    text width=4.6cm
  },
  sblock/.style={
    block,
    text width=4.0cm
  },
  group/.style={
    draw,
    rounded corners,
    dashed,
    inner sep=6pt
  },
  arrow/.style={-Latex, thick},
  darr/.style={-Latex, thick, dashed}
]

\node[block] (Flood) {Dissemination layer\\
\emph{(flooding / edge-respecting)}};

\node[block, below=of Flood] (Trace) {Token exposure traces\\
$\mathrm{Trace}^T_{\Toks}(i)=\big(\mathcal{T}(E_i^t)\big)_{t<T}$};

\node[block, below=of Trace] (PBRC) {Enforced \PBRC{} dynamics\\
(routers $\mathsf{Acc}_i$ + token-invariant contracts $C_i$)};

\node[block, below=of PBRC] (Bel) {Enforced belief trajectories\\
$\big(b_i^t\big)_{i\in\A,\;t\le T}$};

\node[block, right=14mm of PBRC] (Audit) {Audit artifacts\\
(accepted certificates / witnesses)};

\node[sblock, left=20mm of Flood] (G) {Topology $G$\\(communication graph)};

\node[sblock, below=of G] (Reach) {Truncated reachability\\
$\big(\mathrm{dist}_G(u,v)\le t\big)_{u,v\in\A,\;t\le T}$};

\node[sblock, below=of Reach] (Diam) {Diameter $D(G)$\\
(tight closure time)};

\node[sblock, right=20mm of Flood] (K0) {Initial token placement\\
$\big(\mathcal{K}_i^0\big)_{i\in\A}$};

\draw[arrow] (G) -- (Reach);
\draw[darr] (Reach) -- (Diam);

\draw[arrow] (G) -- (Flood);
\draw[arrow] (K0) -- (Flood);

\draw[arrow] (Flood) -- (Trace);
\draw[darr] (Reach.east) to[bend left=12] (Trace.west);
\draw[darr] (Diam)  to[bend right=20] (Trace.south west);

\draw[arrow] (Trace) -- (PBRC);
\draw[arrow] (PBRC) -- (Bel);
\draw[darr] (PBRC) -- (Audit);

\node[group, fit=(G)(Reach)(Diam),
      label={[align=center]above:\small Network structure}] (grpNet) {};

\node[group, fit=(K0)(Flood)(Trace),
      label={[align=center]above:\small Token dissemination and exposure}] (grpTok) {};

\node[group, fit=(PBRC)(Bel)(Audit),
      label={[align=center]above:\small Enforcement and audit}] (grpEnf) {};

\end{tikzpicture} }
\caption{End-to-end factorization (Theorem~\ref{thm:endtoend}): under sound enforcement and token-invariant contracts, topology affects enforced epistemic trajectories only through truncated reachability and the induced token-exposure traces; under flooding, closure time is tightly characterized by the diameter.}
\label{fig:endtoend}
\end{figure}

\begin{Remark}[Interpretation]
Theorem~\ref{thm:endtoend} formalizes a sharp contrast with conformity-driven failures: increasing connectivity can accelerate the propagation of \emph{validated} evidence (tokens), but it cannot create epistemic pressure in the absence of evidence because social-only events are forced into fallback behavior under evidence-gated enforcement.
\end{Remark}

\paragraph{Gate transparency for compliant evidential contracts.}
A crucial deployment requirement is that enforcement should not perturb agents that already follow an evidential \PBRC{} contract.

\begin{Proposition}[Gate transparency]
\label{prop:transparency}
Let $C_i$ be an evidential \PBRC{} contract and assume its witness extractors satisfy non-vacuity on non-social satisfaction (Definition~\ref{def:wit}). Then the nonempty-witness gate is transparent:
\[
F^{\mathsf{gate}}_{C_i}(b,E)=F_{C_i}(b,E)\quad\text{for all }b,E.
\]
\end{Proposition}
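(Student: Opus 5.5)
The plan is a case split on the witness-carrying certificate $\pi_{C_i}(E)$, unfolding the gate of Definition~\ref{def:gate}. It mirrors Corollary~\ref{cor:eg_transparency}, with the nonempty-witness gate in place of a router from $\mathfrak{R}$. By Definition~\ref{def:gate}, $F^{\mathsf{gate}}_{C_i}(b,E)$ equals $F_{C_i}(b,E)$ whenever $\mathrm{can}(\pi_{C_i}(E))\neq(\bot,\emptyset)$, and equals $\delta_i(b,E)$ otherwise. It therefore suffices to show that whenever the gate forces fallback, $F_{C_i}(b,E)$ is already $\delta_i(b,E)$.

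\emph{Case 1: $J_{C_i}(E)=\emptyset$.} Then $\pi_{C_i}(E)=(\bot,\emptyset)$, so the gate outputs $\delta_i(b,E)$. The ungated contract also outputs $\delta_i(b,E)$ by \eqref{eq:update}, so the two sides agree.

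\emph{Case 2: $J_{C_i}(E)\neq\emptyset$.} Let $j^\star=\min_{\prec_i}J_{C_i}(E)$, so that $E\models\varphi_{i,j^\star}$. Evidentiality of $C_i$ gives $\mathcal{T}(E)\neq\emptyset$, that is, $\neg\mathrm{SocialOnly}(E)$. Non-vacuity on non-social satisfaction (Definition~\ref{def:wit}) then gives $\mathrm{Wit}_{\varphi_{i,j^\star}}(E)\neq\emptyset$. Hence $\mathrm{can}(\pi_{C_i}(E))=\pi_{C_i}(E)=(j^\star,\mathrm{Wit}_{\varphi_{i,j^\star}}(E))\neq(\bot,\emptyset)$, the gate accepts, and $F^{\mathsf{gate}}_{C_i}(b,E)=F_{C_i}(b,E)=U_{i,j^\star}(b,E)$.

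The only step that needs care is Case 2. This is the one place where both hypotheses are used together. Evidentiality rules out the situation where a trigger fires on a token-empty event, which would force an empty witness by Lemma~\ref{lem:wit_empty_socialonly} and make the gate override the contract. Non-vacuity rules out an empty witness on a non-social event. Beyond checking that these two hypotheses chain correctly, I expect no real obstacle; the argument is a direct unfolding of definitions. Equality of whole trajectories would then follow by induction over rounds, as in Theorem~\ref{thm:routernf}.
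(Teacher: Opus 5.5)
Your proof is correct and matches the paper's argument essentially verbatim. It uses the same split on whether a trigger fires: in the firing case, evidentiality gives $\neg\mathrm{SocialOnly}(E)$, non-vacuity then gives a nonempty witness, and so the gate accepts $F_{C_i}(b,E)$.
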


\begin{proof}
If no trigger is satisfied, then $F_{C_i}(b,E)=\delta_i(b,E)$ and $\pi_{C_i}(E)=(\bot,\emptyset)$, so the gate outputs $\delta_i(b,E)$ as well.
If some trigger is satisfied, then evidentiality implies $\neg\mathrm{SocialOnly}(E)$, and by non-vacuity the witness is nonempty, hence $\mathrm{can}(\pi_{C_i}(E))\neq(\bot,\emptyset)$ and the gate accepts $F_{C_i}(b,E)$.
Thus the gated and ungated outputs coincide in all cases.
\end{proof}

\begin{Theorem}[Epistemic accountability]
\label{thm:accountability}
Let $C_i$ be an evidential \PBRC{} contract with an argmax-preserving fallback $\delta_i$, and let the agent be enforced by a sound router that rejects empty-witness certificates.
Consider any run producing beliefs $(b_i^t)_{t\ge 0}$ and events $(E_i^t)_{t\ge 0}$.
If $\argmax(b_i^t)\neq \argmax(b_i^0)$ for some $t$, then there exists a least index $s<t$ such that $\argmax(b_i^{s+1})\neq \argmax(b_i^s)$, and this step is necessarily evidence-triggered: the router accepted a certificate $\pi_i^s=(j,W)$ with $W\neq\emptyset$ and $W\subseteq \mathcal{T}(E_i^s)$ witnessing $E_i^s\models \varphi_{i,j}$.
In particular, any change in the top hypothesis is attributable to at least one validated evidence token.
\end{Theorem}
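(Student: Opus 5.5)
The proof is a well-ordering argument followed by a case analysis on the router's decision at the first step where the top hypothesis changes.

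\emph{Step 1 (least index).} Given $\argmax(b_i^t)\neq\argmax(b_i^0)$, consider the finite set $\{s<t:\argmax(b_i^{s+1})\neq\argmax(b_i^s)\}$. It is nonempty. Otherwise equality of $\argmax$ would chain from $0$ to $t$ by transitivity, contradicting the hypothesis. Let $s$ be its least element.

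\emph{Step 2 (the change is not a fallback step).} Under the enforced semantics (Definition~\ref{def:router_sem}, or the state-holding router of Section~\ref{sec:enforcement}), step $s$ either accepts the canonicalized certificate and applies the selected trigger operator, or it applies $\delta_i$. Suppose $\delta_i$ was applied. This covers three situations: no trigger fired; a trigger fired but its witness was empty and was canonicalized to $(\bot,\emptyset)$, which empty rejection then refused; or the router rejected the certificate. In each case $b_i^{s+1}=\delta_i(b_i^s,E_i^s)$. Argmax preservation (Definition~\ref{def:fallback_props}) then gives $\argmax(b_i^{s+1})=\argmax(b_i^s)$, contradicting the choice of $s$. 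Hence the router accepted a certificate $\mathrm{can}(\pi_i^s)$ at step $s$.

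\emph{Step 3 (identify the certificate).} By empty rejection, the accepted certificate is not $(\bot,\emptyset)$. By the definition of $\mathrm{can}$, the only canonical certificates other than $(\bot,\emptyset)$ have the form $(j,W)$ with $j$ a trigger index and $W\neq\emptyset$. Soundness (Definition~\ref{def:router_class}, clause 2) then yields $W\subseteq\mathcal{T}(E_i^s)$ and $E_i^s\models_W\varphi_{i,j}$. Since $W$ is nonempty and contained in $\mathcal{T}(E_i^s)$, it contains at least one validated token. This is exactly the attribution claim.

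\emph{Main obstacle.} The delicate point is Step 2. It requires that every non-accepted outcome produce precisely $\delta_i(b_i^s,E_i^s)$, and never some other agent-supplied output. This holds under the router-executed semantics (C) and the enforced semantics of Definition~\ref{def:router_sem}, so I would state explicitly that the run's beliefs are the enforced beliefs. Evidentiality of $C_i$ is not strictly needed for this argument. It is used only to remark, via Proposition~\ref{prop:safety_liveness_router}, that social-only rounds necessarily fall into the fallback case.
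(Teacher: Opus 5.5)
Your proposal is correct and follows essentially the same route as the paper's proof. Both take the least index $s$ at which $\argmax$ changes, rule out a fallback step at $s$ by argmax preservation, and conclude that the router accepted a nonempty-witness certificate. You additionally spell out why such an $s$ exists and invoke router soundness explicitly to obtain $W\subseteq\mathcal{T}(E_i^s)$ and the witnessing relation, both of which the paper leaves implicit.
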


\begin{proof}
Let $s$ be the smallest index such that $\argmax(b_i^{s+1})\neq \argmax(b_i^s)$.
Suppose for contradiction that the step $s$ was not evidence-triggered.
Then either no trigger was satisfied (so fallback applies) or the router rejected the proposed step (including due to an empty-witness/invalid certificate), in which case enforcement forces fallback.
In either case,
\[
b_i^{s+1}=\delta_i(b_i^s,E_i^s).
\]
By argmax preservation of $\delta_i$, $\argmax(b_i^{s+1})=\argmax(b_i^s)$, contradicting the choice of $s$.
Therefore step $s$ must have been accepted as a triggered update with a nonempty witness set $W\neq\emptyset$ satisfying $W\subseteq \mathcal{T}(E_i^s)$ and witnessing some trigger $\varphi_{i,j}$.
\end{proof}

\begin{Remark}
Theorem~\ref{thm:accountability} is a protocol-level forensic guarantee: under enforcement, changes of mind cannot be ``purely social.'' If the top hypothesis changes, the audit log identifies concrete witness tokens responsible. Section~\ref{sec:failuretax} strengthens this into a completeness-style fault localization result: if a \emph{first} wrong-top-hypothesis transition occurs from a previously correct state, then at least one of router unsoundness, contextual validity failure (replay/staleness), evidence integrity failure (forgery/compromise/misinformation), or contract/operator mis-specification must have occurred at that transition.
\end{Remark}

Figure~\ref{fig:routernf} summarizes the router normal-form theorem: once empty-witness steps are rejected, any trigger protocol behaves exactly like its explicit evidence-gated compilation.

\begin{figure}[t]
\centering
\resizebox{\linewidth}{!}{
\begin{tikzpicture}[
  font=\small,
  node distance=9mm and 19mm,
  block/.style={
    draw,
    rounded corners,
    align=center,
    inner sep=4pt,
    minimum height=9mm,
    text width=4.2cm
  },
  sblock/.style={
    block,
    text width=3.7cm
  },
  group/.style={
    draw,
    rounded corners,
    dashed,
    inner sep=6pt
  },
  arrow/.style={-Latex, thick},
  darr/.style={-Latex, thick, dashed}
]

\node[block]  (P)    {Arbitrary trigger protocol $P_i$\\proposes $(b',\pi)$};
\node[block, right=of P] (Gate) {Nonempty-witness gate (Def.~\ref{def:gate})\\accept iff $\mathrm{can}(\pi)\neq(\bot,\emptyset)$};
\node[sblock, right=of Gate] (Out) {Enforced update\\$b^{+}:=F^{\mathsf{gate}}_{P_i}(b,E)$};

\node[block, below=14mm of P] (EG) {Evidence-gated compilation\\$\mathrm{EG}(P_i)$ (Def.~\ref{def:eg})};
\node[block, right=of EG] (Exec) {Direct execution\\by contract semantics};
\node[sblock, right=of Exec] (Out2) {Update without gate\\$b^{+}:=F_{\mathrm{EG}(P_i)}(b,E)$};

\draw[arrow] (P) -- node[above]{\scriptsize propose step} (Gate);
\draw[arrow] (Gate) -- node[above]{\scriptsize accept/fallback} (Out);

\draw[arrow] (EG) -- node[above]{\scriptsize execute} (Exec);
\draw[arrow] (Exec) -- (Out2);

\draw[darr] (P) -- node[left]{\scriptsize compile $\mathrm{EG}$} (EG);
\draw[darr] (Out) -- node[right, align=center]{\scriptsize (Thm.~\ref{thm:routernf})} node[left, align=center]{\scriptsize $F^{\mathsf{gate}}_{P_i}=F_{\mathrm{EG}(P_i)}$} (Out2);

\node[group, fit=(P)(Gate)(Out),
      label={[align=center]above:\small Enforcement of an arbitrary protocol}] {};

\node[group, fit=(EG)(Exec)(Out2),
      label={[align=center]below:\small Equivalent evidence-gated behavior}] {};

\end{tikzpicture} }
\caption{Router normal form: rejecting empty-witness certificates projects any trigger protocol to its explicit evidence-gated form. Under the gate, executing $P_i$ is behaviorally equivalent to executing $\mathrm{EG}(P_i)$ directly (Thm.~\ref{thm:routernf}).}
\label{fig:routernf}
\end{figure}
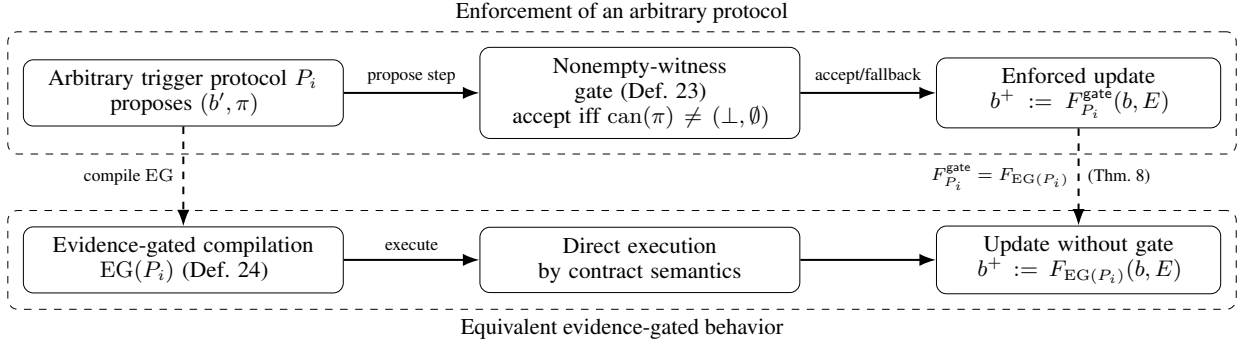

\paragraph{Log compression under token invariance.}
Theorems~\ref{thm:tokensuff} and \ref{thm:tokentrace} yield a practical corollary for enforced deployments: for predicting or auditing enforced belief trajectories under token-invariant contracts, it suffices to record token exposure traces (or, more compactly, the sequence of accepted witness sets) rather than full message bodies. This reduces storage and replay costs while preserving the information that can affect enforced belief change.

\begin{Remark}[Enforcement as semantic projection]
The router normal-form results (Theorems~\ref{thm:routernf} and \ref{thm:routerclass}) make the ``output gate'' semantics precise: once enforcement rejects unauditable (empty-witness) steps, any auditable trigger protocol behaves \emph{as if} all of its non-fallback behavior were explicitly evidence-gated. In this sense, external enforcement projects arbitrary deliberation protocols down to the evidential \PBRC{} shape.
\end{Remark}

\begin{Remark}[Why audit equivalence matters]
Belief equivalence alone says two protocols behave the same internally. Audit equivalence strengthens this: an external compliance gate that only accepts nonempty-witness certificates will accept/reject the same updates under $P_i$ and under $\mathrm{NF}(P_i)$. Thus $\mathrm{NF}(P_i)$ is a canonical ``PBRC-shaped'' realization of the same auditable behavior.
\end{Remark}

\begin{Remark}[Interaction with minimality theorems]
If $P_i$ additionally satisfies evidence-free peer-pressure immunity and evidence-free non-amplification, then the social-only behavior absorbed into $\delta_i^{\mathrm{NF}}$ must be argmax-preserving and non-amplifying on social-only events. Combined with Theorems~\ref{thm:necessity_argmax}--\ref{thm:necessity_evidence}, this yields a sharp conclusion: \emph{auditable} socially robust protocols are forced into an evidence-gated normal form, and \PBRC{} makes this structure explicit and enforceable.
\end{Remark}

\section{Adversary models and impossibility/robustness bounds}
\label{sec:adversary}

The previous sections isolate and control \emph{social-only} manipulation: under evidential contracts with conservative fallback and sound enforcement, purely rhetorical exchange cannot justify nontrivial belief revision. Real deployments, however, face adversaries who attack the \emph{evidence interface} itself---by forging, replaying, or selectively suppressing admissible evidence---and thereby manufacture (or prevent) the conditions under which belief revision becomes admissible. This section makes the corresponding threat model explicit, states sharp impossibility and robustness bounds, and develops a completeness-style fault localization result: any first ``wrong-top'' transition must be attributable to a small set of auditable failure modes (router unsoundness, contextual validity failure, evidence integrity failure, or contract/operator mis-specification).

We proceed as follows. First, we define a threat taxonomy (Section~\ref{sec:threat_taxonomy}) and discuss evidence-generation steering (Section~\ref{sec:steering_omission}). Second, we establish what \PBRC{} can guarantee under evidence unforgeability (Section~\ref{sec:unforgeability_guarantees}) and prove an impossibility result under forgery (Section~\ref{sec:impossibility_forgery}). Third, we analyze replay attacks and freshness constraints (Section~\ref{sec:replay_freshness}), followed by quantitative robustness via multi-attestation (Section~\ref{sec:multi_attestation}). Finally, we present a comprehensive failure taxonomy and fault localization theorem (Section~\ref{sec:failuretax}).

Throughout, let $\mathcal{B}\subseteq \A$ be the set of adversarial agents. Under sound routers/gates, adversarial \emph{language} alone is semantically inert: the only mechanism by which adversaries can change enforced belief trajectories is by causing certain token predicates (e.g., $\mathrm{Valid}$, $\mathrm{Supports}$, $\mathrm{Contradicts}$, $\mathrm{Fresh}_\Delta$) to hold, or by preventing honest agents from receiving tokens that would otherwise be available.

\subsection{Threat taxonomy}
\label{sec:threat_taxonomy}

We stratify adversaries by their ability to manipulate (i) the validity layer and (ii) the availability of evidence.

\begin{Definition}[Token appearance history]
\label{def:seen}
For a token $\tau$ and time $t$, write
\[
\mathrm{Seen}_{<t}(\tau)\;:\iff\;\exists a,a',m,t'<t\;\big(\mathrm{Send}(a,a',m,t')\wedge \mathrm{HasTok}(m,\tau)\big),
\]
i.e., $\tau$ appeared in some message strictly before time $t$.
\end{Definition}

\begin{Definition}[Evidence-unforgeable adversary]
\label{def:adv_unforge}
Adversaries are \emph{evidence-unforgeable} if they cannot introduce \emph{new} valid tokens: whenever an adversary transmits a token that has not appeared earlier in the run, that token is invalid. Formally, for all $t$,
\[
\forall b\in\mathcal{B},\forall m,\tau,\quad 
\mathrm{Send}(b,\cdot,m,t)\wedge \mathrm{HasTok}(m,\tau)\wedge \neg \mathrm{Seen}_{<t}(\tau)\ \Rightarrow\ \neg\mathrm{Valid}(\tau).
\]
This definition permits adversaries to relay previously-issued tokens (replay), but rules out minting fresh ``valid'' evidence.
\end{Definition}

\begin{Definition}[Evidence-forging adversary]
\label{def:adv_forge}
Adversaries are \emph{evidence-forging} if they can violate evidence-unforgeability, i.e., there exist $t$ and a token $\hat\tau$ such that $\neg \mathrm{Seen}_{<t}(\hat\tau)$ but $\mathrm{Valid}(\hat\tau)$ holds.
This abstraction covers compromise of signing/attestation keys, compromise of trusted tools/retrieval channels, or compromise of any component whose outputs are treated as valid evidence tokens.
\end{Definition}

\begin{Definition}[Replay adversary]
\label{def:adv_replay}
Adversaries are \emph{replayers} if they are evidence-unforgeable but can exploit missing contextual constraints: they can retransmit previously valid tokens out of their intended time or task context (e.g., after expiration or across instances), and $\mathrm{Valid}(\tau)$ alone does not encode the relevant freshness/context binding.
\end{Definition}

\begin{Definition}[Colluding adversary]
\label{def:adv_collude}
Adversaries are \emph{colluding} if they can coordinate to satisfy multi-attestation triggers (e.g., by controlling multiple ostensibly independent validators) or to manufacture mutually corroborating token sets without detection.
\end{Definition}

\begin{Definition}[Omission/delay adversary]
\label{def:adv_omit}
Adversaries mount an \emph{omission/delay} attack if they can prevent honest agents from receiving some valid tokens that exist elsewhere in the system (withholding, selective forwarding, delaying, partitioning). Such attacks primarily target \emph{liveness}/closure rather than the soundness of enforcement.
\end{Definition}

\subsection{Evidence-generation steering and omission attacks}
\label{sec:steering_omission}

Our formal model treats evidence tokens as exogenous objects that appear in events. In LLM multi-agent systems, tokens are often generated \emph{endogenously} via tool calls and retrieval queries, creating two practical attack surfaces that are logically orthogonal to forgery and replay.

\textbf{Query steering / cherry-picking:}
Social pressure (honest or adversarial) can influence \emph{which} tool inputs are issued. Even when tool outputs are cryptographically valid, selectively chosen inputs can yield systematically biased evidence streams (e.g., cherry-picked retrieval queries). This enables a form of \emph{conformity-by-proxy}: rhetoric does not directly flip beliefs, but it steers evidence acquisition so that admissible triggers eventually fire in the desired direction.

\textbf{Withholding / selective forwarding:}
Distributed deployments are also vulnerable to omission: agents may fail to forward, delay, or selectively share tokens. Under omission, \PBRC{} typically trades liveness for safety: fewer evidence tokens are admitted, so fewer belief revisions occur. This preserves anti-conformity safety invariants but can delay or prevent evidence-based correction.

\textbf{Mitigation sketch: two-sided validity:}
A lightweight extension is to contract not only \emph{token validity} but also \emph{query-policy compliance}. Each tool/retrieval token carries a commitment to its tool input (e.g., a hash of query, tool name, and instance identifier), and the contract specifies an allowed query policy (templates, diversification constraints, mandatory counter-evidence queries). The router then enforces
$\mathrm{Valid}(\tau)\wedge \mathrm{QueryOK}(\tau)$,
making the evidence layer auditable end-to-end. This fits the \PBRC{} primitive directly: $\mathrm{QueryOK}$ is an additional predicate in the trigger language, enforced by the same certificate/witness mechanism.

\subsection{What \PBRC{} can guarantee under evidence unforgeability}
\label{sec:unforgeability_guarantees}

The following theorem makes explicit that \PBRC{} converts adversarial messaging into at most conservative fallback dynamics when adversaries cannot introduce valid evidence.

\begin{Theorem}[Peer-pressure immunity under evidence-unforgeable adversaries]
\label{thm:byzantine_unforge}
Assume each honest agent uses an evidential \PBRC{} contract and an argmax-preserving fallback, and enforcement rejects empty-witness certificates. If all adversaries are evidence-unforgeable (Definition~\ref{def:adv_unforge}) and the environment provides no valid tokens, then adversarial messaging alone cannot change any honest agent's top hypothesis.
\end{Theorem}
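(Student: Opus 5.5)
The plan is to reduce the statement to the social-only machinery of Section~\ref{sec:main} by showing that, under the stated hypotheses, no token delivered to an honest agent is ever valid. Every event $E_i^t$ received by an honest agent is then social-only, and the conclusion follows as in Theorem~\ref{thm:nocascade} (equivalently, Theorem~\ref{thm:accountability}).

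The first and central step is an invariant proved by strong induction on $t$: no valid token appears in any message sent before time $t$. Formally, for every $t$, every message $m$ sent at some $t'<t$, and every token $\tau$ with $\mathrm{HasTok}(m,\tau)$, we have $\neg\mathrm{Valid}(\tau)$. I will read ``the environment provides no valid tokens'' as saying that honest agents and exogenous sources never inject a valid token that has not already appeared. This covers the empty initial placement, no tool or retrieval outputs, and no exogenous births.

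The inductive step considers a message sent at time $t$ that carries a token $\tau$. There are two cases.
\begin{enumerate}
\item Suppose $\mathrm{Seen}_{<t}(\tau)$ holds. Then $\tau$ appeared in an earlier message, so by the induction hypothesis $\neg\mathrm{Valid}(\tau)$.
\item Suppose $\mathrm{Seen}_{<t}(\tau)$ fails. If the sender is in $\mathcal{B}$, evidence-unforgeability (Definition~\ref{def:adv_unforge}) gives $\neg\mathrm{Valid}(\tau)$ directly. If the sender is honest or is the environment, the no-valid-tokens hypothesis gives the same conclusion.
\end{enumerate}
One subtlety must be handled here. $\mathrm{Valid}$ is treated as a fixed, time-independent predicate on tokens. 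This is what lets the replay branch go through: replay can only re-send tokens that were already invalid. Replay adversaries in the sense of Definition~\ref{def:adv_replay} therefore gain nothing in this setting.

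Given the invariant, every event $E_i^t$ received by an honest agent satisfies $\mathcal{T}(E_i^t)=\emptyset$, that is, $\mathrm{SocialOnly}(E_i^t)$. Since the contract is evidential, $J_i^t=\emptyset$, so each round is trigger-inactive (the Remark following Definition~\ref{def:trigger_inactive}). Independently of this, a sound router that rejects empty-witness certificates could never accept a triggered step anyway: token soundness forces $W\subseteq\mathcal{T}(E_i^t)=\emptyset$, by Lemma~\ref{lem:wit_empty_socialonly}. In either case the enforced update is $b_i^{t+1}=\delta_i(b_i^t,E_i^t)$.

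Argmax-preservation of $\delta_i$, iterated over all rounds exactly as in the proof of Theorem~\ref{thm:nocascade}, then gives $\argmax(b_i^t)=\argmax(b_i^0)$ for all $t$. Non-amplification is not needed for this top-hypothesis claim. Alternatively, the conclusion follows by contradiction from Theorem~\ref{thm:accountability}: any change of top hypothesis would require an accepted witness $W\neq\emptyset$ with $W\subseteq\mathcal{T}(E_i^s)=\emptyset$, which is impossible.

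I expect the main obstacle to be the invariant step, specifically making ``the environment provides no valid tokens'' precise enough to cover honest agents. An honest agent may itself relay tokens or generate them through tool calls. I will state explicitly that honest and exogenous sources introduce no fresh valid tokens, and that $\mathrm{Valid}$ does not change over time. With those two assumptions fixed, the rest is a direct application of earlier results.
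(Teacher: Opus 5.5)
Your proposal is correct and follows the paper's own argument: every event an honest agent receives is social-only, so the evidential contract leaves no trigger satisfied, enforcement applies fallback, and argmax-preservation keeps the top hypothesis fixed at every step. Your induction, which shows that replayed tokens inherit invalidity (replay being permitted by Definition~\ref{def:adv_unforge}), spells out the step the paper simply asserts as ``neither the environment nor adversaries can introduce a valid token.''
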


\begin{proof}
Under the stated assumptions, every event observed by an honest agent is social-only: neither the environment nor adversaries can introduce a valid token. Hence, for each honest agent and time $t$, no evidential trigger is satisfied and the enforced update is fallback. Because fallback is argmax-preserving, the top hypothesis remains unchanged at every step.
\end{proof}

\subsection{Impossibility under forgery}
\label{sec:impossibility_forgery}

Without an integrity assumption on $\mathrm{Valid}$ (and the semantic predicates exposed by the validity layer), no evidence-gated protocol can prevent certain manipulations.

\begin{Theorem}[Impossibility of immunity under validity forgery]
\label{thm:imposs_forge}
Fix any enforced protocol (including \PBRC{}) that permits a nontrivial belief revision (e.g., a top-hypothesis change) when an event contains a token satisfying some admissibility condition. Suppose an adversary is evidence-forging (Definition~\ref{def:adv_forge}) and can produce a token $\hat\tau$ such that $\mathrm{Valid}(\hat\tau)$ holds together with whatever support/contradiction facts are required by an admissibility condition toward a chosen false hypothesis $h\neq h^\star$. Then the adversary can force an admissible update toward $h$ in a single round by transmitting $\hat\tau$.
\end{Theorem}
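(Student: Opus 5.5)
The proof will be constructive: I will exhibit the one-round attack directly and check that each stage of the enforcement pipeline (Figure~\ref{fig:pseudocode}) accepts it. The point is that enforcement only consults the token-level predicates exposed by the validity layer. Under forgery these predicates no longer track reality, so nothing in the contract or router can tell $\hat\tau$ apart from genuine evidence. The key facts I will use are the certificate-checking semantics of Section~\ref{sec:enforcement} and the completeness clause of the router class (Definition~\ref{def:router_class}).

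\emph{Step 1 (fix the target transition).} The hypothesis is that the protocol permits a nontrivial revision toward $h$. So there is a trigger index $j$, a belief state $b$, and an admissibility pattern such that $U_{i,j}(b,E)$ moves mass toward $h$, or flips $\argmax$ to $h$, whenever $E\models\varphi_{i,j}$ holds via a token carrying the required $\mathrm{Valid}/\mathrm{Supports}/\mathrm{Contradicts}$ facts. I will build the adversarial event as $\hat E$, a single message from $b\in\mathcal{B}$ to the honest agent $i$ with $\mathrm{HasTok}(m,\hat\tau)$. I will take the trigger to be token-existential, as in the running example's $\varphi_1$. Then $\mathcal{S}(\hat E)\models\varphi_{i,j}$ follows because $\hat\tau$ satisfies the required atomic facts by the forgery assumption.

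\emph{Step 2 (priority and witness).} If a higher-priority trigger also fires on $\hat E$, I will restrict $\hat E$ so that it contains only $\hat\tau$. Then the only facts present are those the adversary chose, and the adversary can choose them so that $j$ is the $\prec_i$-minimal satisfied index. If the adversary cannot avoid a higher-priority trigger, I will treat that as part of ``whatever facts are required'' and fold it into the hypothesis. Next, since $\hat E$ contains a valid token, it is not social-only. Non-vacuity (Definition~\ref{def:wit}) then gives a nonempty witness $\mathrm{Wit}_{\varphi_{i,j}}(\hat E)\subseteq\{\hat\tau\}$, so the witness equals $\{\hat\tau\}$. Also $\hat E\models_{\{\hat\tau\}}\varphi_{i,j}$ holds.

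\emph{Step 3 (router acceptance and update).} Any sound router will not object: $W=\{\hat\tau\}\subseteq\mathcal{T}(\hat E)$ and $W$ witnesses the trigger, so every soundness check passes. Any complete router must accept the certificate by the completeness clause. Under the state-holding semantics (C), the router then sets $b_i^{t+1}=U_{i,j}(b_i^t,\hat E)$, which is exactly the admissible update toward $h$, and it records the valid certificate $(j,\{\hat\tau\})$. This takes a single round.

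The main obstacle is making ``permits a nontrivial revision'' precise enough to cover an arbitrary enforced protocol rather than only PBRC contracts. In particular, the operator's effect may depend on $b_i^t$ and on other contents of the event. I would handle this by reading the hypothesis as existential: there is a reachable state and an admissibility-satisfying event on which the update moves toward $h$. The forged $\hat\tau$ then reproduces the token-level facts of that event, and token-sufficiency (Theorem~\ref{thm:tokensuff}) is the natural tool for showing that the non-token content is irrelevant when the contract is token-invariant.
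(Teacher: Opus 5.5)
Your proposal is correct and follows the same route as the paper's proof. The forged $\hat\tau$ makes the admissibility condition hold on a non-social event, so a nonempty witness exists, enforcement accepts, and the triggered operator is applied in a single round; your version is more explicit than the paper's one-line argument, notably in getting acceptance from router completeness (or the nonempty-witness gate) rather than from soundness alone, and in folding trigger priority and dependence on the current belief state into an existential reading of the hypothesis.
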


\begin{proof}
By assumption, the protocol has at least one admissibility condition that (when satisfied) permits a nontrivial update. The adversary forges a token $\hat\tau$ whose token predicates satisfy that admissibility condition for the target false hypothesis $h$ (e.g., $\mathrm{Supports}(\hat\tau,h)$ or $\mathrm{Contradicts}(\hat\tau,h^\star)$, as required). Because $\mathrm{Valid}(\hat\tau)$ holds by adversary power, the router/gate accepts the witness and the update is admissible. Thus immunity is impossible when the evidence interface itself can be forged.
\end{proof}

\begin{Remark}
This theorem is protocol-agnostic: if adversaries can fabricate ``valid'' evidence with arbitrary semantic content, evidence-gated reasoning collapses. The correct interpretation is not that \PBRC{} ``fails,'' but that evidence integrity is a necessary assumption for any evidence-based social learning protocol.
\end{Remark}

\subsection{Replay attacks and freshness constraints}
\label{sec:replay_freshness}

Replay adversaries exploit that $\mathrm{Valid}(\tau)$ may ignore time and context. \PBRC{} defends against replay by preregistering triggers that require freshness and (optionally) instance binding.

\begin{Definition}[Freshness-validated token]
\label{def:fresh}
Assume tokens carry timestamps $\mathrm{Time}(\tau)\in\Nat$ and a current time/round counter $t$. Define
\[
\mathrm{Fresh}_{\Delta}(\tau,t)\;:\iff\; t-\mathrm{Time}(\tau) \le \Delta.
\]
\end{Definition}

Replay-resistant triggers conjoin validity and freshness, e.g.,
\[
\exists \tau\ (\mathrm{Valid}(\tau)\wedge \mathrm{Fresh}_\Delta(\tau,t)\wedge \mathrm{Supports}(\tau,h)).
\]

\begin{Theorem}[Freshness constraints bound replay influence]
\label{thm:replay_bound}
Fix a window $\Delta\in\Nat$. Consider any evidential contract in which every trigger that can induce nontrivial revision includes a freshness conjunct $\mathrm{Fresh}_\Delta(\tau,t)$ for each token $\tau$ it uses as evidence. Let the adversary be replay-only: it can retransmit previously issued tokens but cannot create new valid tokens and cannot alter $\mathrm{Time}(\tau)$. Then no replayed token with $t-\mathrm{Time}(\tau)>\Delta$ can satisfy any revision trigger at time $t$. In particular, after $\Delta$ rounds without new valid evidence, replay alone cannot induce admissible belief change.
\end{Theorem}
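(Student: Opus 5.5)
The argument is a direct unfolding of the trigger semantics, done in two stages: first the per-token claim, then the ``after $\Delta$ rounds'' consequence, which reuses the social-only machinery of Section~\ref{sec:main}.

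\emph{Per-token claim.} Fix a round $t$ and a replayed token $\tau$ with $t-\mathrm{Time}(\tau)>\Delta$. Since the adversary cannot alter $\mathrm{Time}(\tau)$, the value $t-\mathrm{Time}(\tau)$ is the same one computed when $\tau$ was issued, shifted by elapsed rounds. Definition~\ref{def:fresh} then gives $\neg\mathrm{Fresh}_\Delta(\tau,t)$. By hypothesis, every revision-capable trigger $\varphi_{i,j}$ contains the conjunct $\mathrm{Fresh}_\Delta(\tau',t)$ for each token $\tau'$ it uses as evidence. So any satisfying assignment, or any witness set $W$ with $E\models_W\varphi_{i,j}$ (Definition~\ref{def:token_restricted_sat}), must bind its evidence variables to fresh tokens. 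Hence $\tau$ cannot occur among the evidence bindings, and $\tau$ alone cannot make $\varphi_{i,j}$ true at $t$. To keep this precise, I will state the assumption in the token-existential form used in the examples: triggers of shape $\exists\bar\tau\,(\bigwedge_k \mathrm{Fresh}_\Delta(\tau_k,t)\wedge\theta)$. A stale token then contributes to no witness.

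\emph{Quiet-period consequence.} Suppose no new valid token has been issued since some time $t_0$, and take $t\ge t_0+\Delta+1$. Every valid token $\tau$ present in $E_i^t$ was issued before $t_0$; by evidence-unforgeability (Definition~\ref{def:adv_unforge}) the adversary can only relay such tokens. I will read ``issued'' so that $\mathrm{Time}(\tau)\le t_0-1$, which gives $t-\mathrm{Time}(\tau)>\Delta$. By the per-token claim, no revision-capable trigger fires, so $J_i^t$ contains no revision-capable index. Either fallback is applied, or an operator that is trivial in the sense of Definition~\ref{def:nontrivial_revision} is applied. Under empty-witness rejection, a non-fresh token set yields no accepted nonempty witness for a revision trigger. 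Proposition~\ref{prop:safety_liveness_router} then says the enforced step devolves to fallback, so no admissible belief change occurs.

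\emph{Main obstacle.} The difficulty is definitional rather than computational. First, the contract must connect the ``tokens it uses as evidence'' with the token variables guarded by $\mathrm{Fresh}_\Delta$. Second, the timestamp must be bound to issuance time so that ``$\Delta$ rounds without new evidence'' translates into $t-\mathrm{Time}(\tau)>\Delta$; the off-by-one between $\Delta$ and $\Delta+1$ has to be fixed here. I will handle both by making these two conventions explicit as standing assumptions of the proof. After that, each step is a one-line application of the definitions.
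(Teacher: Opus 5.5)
Your proposal is correct and is essentially the paper's proof: a token with $t-\mathrm{Time}(\tau)>\Delta$ fails $\mathrm{Fresh}_\Delta$, so it cannot be bound as evidence in (or witness) any freshness-gated revision trigger. Since evidence-unforgeability means replay supplies no other valid tokens, no nontrivial admissible revision can occur beyond the window; your explicit conventions (existential trigger shape, $\mathrm{Time}(\tau)$ as issuance time, the $\Delta$ vs.\ $\Delta+1$ boundary) just make precise what the paper leaves implicit, and the one slip is that a non-revision trigger firing on a stale valid token may be accepted rather than devolving to fallback, which does not affect the conclusion because its operator is trivial.
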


\begin{proof}
Let $\tau$ be any replayed token at time $t$ with $t-\mathrm{Time}(\tau)>\Delta$. Then $\mathrm{Fresh}_\Delta(\tau,t)$ is false by Definition~\ref{def:fresh}. Since every nontrivial revision trigger conjoins freshness for each witness token it uses, $\tau$ cannot participate in any witnessing set at time $t$. As the adversary cannot introduce new valid tokens, no admissible nontrivial revision can be triggered solely by replay beyond the freshness window.
\end{proof}

\subsection{Quantitative robustness via multi-attestation triggers}
\label{sec:multi_attestation}

If $\mathrm{Valid}$ depends on attestations by external validators, contracts can raise the integrity threshold by requiring multiple attestations. This is not stacking over \emph{opinions}; it is a stricter admissibility condition for a single evidence token.

Assume there are $K$ validators and an atomic predicate $\mathrm{Attest}(\tau,s)$ meaning ``validator $s\in\{1,\dots,K\}$ attests $\tau$.''

\begin{Definition}[$k$-of-$K$ validity]
\label{def:kofk_validity}
Define
\[
\mathrm{Valid}_{k}(\tau)\;:\iff\; \big|\{s : \mathrm{Attest}(\tau,s)\}\big| \ge k.
\]
\end{Definition}

\begin{Theorem}[Forgery probability bound under independent attestation]
\label{thm:forge_bound}
Assume each validator is compromised independently with probability $p$, and forging an attestation requires compromising that validator. Then the probability that an adversary can produce a token satisfying $\mathrm{Valid}_{k}$ is at most
\[
\Pr[\mathrm{Valid}_{k}(\hat\tau)] \le \sum_{j=k}^{K} \binom{K}{j} p^j (1-p)^{K-j}.
\]
\end{Theorem}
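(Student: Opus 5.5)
The plan is to reduce the event ``the adversary can produce a token satisfying $\mathrm{Valid}_k$'' to a statement about how many validators are compromised, and then bound that count by a binomial tail. For each validator $s\in\{1,\dots,K\}$, let $X_s$ be the indicator that $s$ is compromised. By hypothesis the $X_s$ are independent Bernoulli$(p)$, so $N:=\sum_{s=1}^K X_s$ is $\mathrm{Binomial}(K,p)$.

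\textbf{Step 1: an event inclusion.} Take any adversarially produced $\hat\tau$ with $\mathrm{Valid}_k(\hat\tau)$. By Definition~\ref{def:kofk_validity}, at least $k$ distinct validators $s$ satisfy $\mathrm{Attest}(\hat\tau,s)$. Here I make the modelling assumption behind the statement explicit. The token is forged, so no honest validator would attest it. Hence every attestation on $\hat\tau$ was forged, and the hypothesis says forging an attestation by $s$ requires compromising $s$. So every attesting validator is compromised, and the event ``adversary can produce such a $\hat\tau$'' is contained in $\{N\ge k\}$.

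This interpretive step is the only delicate point. If some attestations on $\hat\tau$ could come from honest validators, the bound would fail. I will therefore state the inclusion as resting on two assumptions: honest validators never attest adversarially minted tokens, which matches the evidence-forging setting of Definition~\ref{def:adv_forge}, and the set of attesting validators is determined by which validators are compromised. Under these assumptions the inclusion holds pointwise, whatever adversary strategy is used.

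\textbf{Step 2: the tail.} By monotonicity of probability,
\[
\Pr[\mathrm{Valid}_k(\hat\tau)]\le \Pr[N\ge k]=\sum_{j=k}^{K}\Pr[N=j]=\sum_{j=k}^{K}\binom{K}{j}p^j(1-p)^{K-j}.
\]
The last equality is the binomial probability mass function, which follows from independence by summing over the $\binom{K}{j}$ subsets of size $j$. This gives exactly the claimed bound. Optionally, a Chernoff estimate could be appended as a remark, but it is not required.
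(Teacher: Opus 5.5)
Your proposal is correct and matches the paper's proof: both show that producing a token satisfying $\mathrm{Valid}_k$ requires at least $k$ compromised validators, and then evaluate the $\mathrm{Binomial}(K,p)$ upper tail. Your explicit statement that honest validators never attest an adversarially minted token is a useful clarification. The paper uses that assumption silently; stating it does not change the route.
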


\begin{proof}
To satisfy $\mathrm{Valid}_k$, the adversary must obtain at least $k$ distinct attestations, which requires compromising at least $k$ validators. Under independent compromise, the number of compromised validators is $\mathrm{Binomial}(K,p)$, and the probability of at least $k$ compromises is exactly the stated binomial tail.
\end{proof}

\begin{Proposition}[Deterministic $f$-compromise threshold for multi-attestation]
\label{prop:f_threshold}
Assume at most $f$ of the $K$ validators can be compromised, and attestations from uncompromised validators cannot be forged. If a contract requires $k>f$ distinct attestations (i.e.\ $\mathrm{Valid}_k(\tau)$), then no adversary can produce a forged token $\hat\tau$ satisfying $\mathrm{Valid}_k(\hat\tau)$.
\end{Proposition}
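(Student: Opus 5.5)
The argument is a counting (pigeonhole) argument on attestations, parallel to the proof of Theorem~\ref{thm:forge_bound} but with a deterministic bound on the number of compromised validators in place of a binomial tail. Suppose, for contradiction, that an adversary produces a forged token $\hat\tau$ with $\mathrm{Valid}_k(\hat\tau)$. By Definition~\ref{def:kofk_validity}, the attesting set $A(\hat\tau):=\{s:\mathrm{Attest}(\hat\tau,s)\}$ then satisfies $|A(\hat\tau)|\ge k$.

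The key step is to partition $A(\hat\tau)$ into compromised and uncompromised validators. Let $S_c$ be the set of compromised validators, so $|S_c|\le f$ by assumption. Since $\hat\tau$ is forged, it was not legitimately issued by the evidence source. By the hypothesis that attestations from uncompromised validators cannot be forged, no uncompromised validator attests $\hat\tau$. Hence $A(\hat\tau)\subseteq S_c$, and therefore $|A(\hat\tau)|\le f<k$. This contradicts $|A(\hat\tau)|\ge k$, so no such $\hat\tau$ exists.

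The only delicate point is making precise what ``forged'' means, so that the inclusion $A(\hat\tau)\subseteq S_c$ is justified. I would fix it as: $\hat\tau$ is not a legitimately issued token, i.e.\ it is an adversarially minted token in the sense of Definition~\ref{def:adv_forge}. In that case, honest validators never produce $\mathrm{Attest}(\hat\tau,s)$ for it. The paper's hypothesis that uncompromised attestations are unforgeable is then exactly the statement that any attestation on $\hat\tau$ comes from $S_c$.

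I would also note that ``distinct attestations'' matters here. $\mathrm{Valid}_k$ counts distinct validators $s$, so a single compromised validator cannot be counted multiple times. This is what makes the bound $|A(\hat\tau)|\le|S_c|$ valid.

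As a remark, I would connect the result to Theorem~\ref{thm:imposs_forge}. Under $k>f$, the premise of that impossibility theorem, namely that the adversary can forge a valid token, fails. The immunity guarantee of Theorem~\ref{thm:byzantine_unforge} then applies with $\mathrm{Valid}_k$ in place of $\mathrm{Valid}$.
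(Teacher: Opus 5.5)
Your proof is correct and is essentially the paper's own argument. The paper observes that with at most $f$ compromised validators the adversary can supply at most $f$ forged attestations, so the threshold $k>f$ cannot be met; this is exactly your inclusion $A(\hat\tau)\subseteq S_c$ and the resulting bound $|A(\hat\tau)|\le f<k$, with your version additionally spelling out the meaning of ``forged'' and the role of distinct attestations, which the paper leaves implicit.
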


\begin{proof}
With at most $f$ compromised validators, the adversary can supply at most $f$ forgeable attestations. Since $k>f$, the threshold cannot be met.
\end{proof}

\begin{Remark}
This subsection separates two roles: \PBRC{} governs social dynamics and enforces auditable admissibility; evidence integrity is delegated to the validity layer and can be strengthened via multi-attestation and freshness constraints without changing the core primitive.
\end{Remark}

\subsection{Failure taxonomy and fault localization}
\label{sec:failuretax}

\PBRC{} eliminates purely social belief drift, but it does not guarantee correctness in the presence of compromised or stale evidence, router unsoundness, or contract/operator mis-specification. The following theorem localizes any \emph{first} wrong-top-hypothesis transition to a small set of auditable failure modes.

\subsubsection{Correctness and soundness predicates}

Fix a true hypothesis $h^\star\in\Hh$. Recall that $\argmax(b)$ is the set of maximizers of a belief distribution $b$.

\begin{Definition}[Correct belief state]
\label{def:correct}
A belief distribution $b\in\Prob(\Hh)$ is \emph{correct} (with respect to $h^\star$) if $h^\star\in \argmax(b)$.
\end{Definition}

\begin{Definition}[Applicability and token soundness]
\label{def:appsound}
Fix a freshness window $\Delta\in\Nat$ and time $t$. A token is \emph{applicable} if it is valid and fresh:
\[
\mathrm{App}_\Delta(\tau,t) \;:=\; \mathrm{Valid}(\tau)\ \land\ \mathrm{Fresh}_\Delta(\tau,t).
\]
A token is \emph{truth-sound} (with respect to $h^\star$) if it does not contradict the true hypothesis and does not support any false hypothesis:
\[
\mathrm{TokSound}_{h^\star}(\tau)\;:=\;\neg \mathrm{Contradicts}(\tau,h^\star)\ \land\ \bigwedge_{h\in \Hh\setminus\{h^\star\}} \neg \mathrm{Supports}(\tau,h).
\]
\end{Definition}

\begin{Definition}[Evidence integrity and contract soundness]
\label{def:integrity_soundness}
\emph{Evidence integrity} (with respect to $h^\star$ and window $\Delta$) holds if all applicable tokens are truth-sound:
\[
\forall t,\tau,\quad \mathrm{App}_\Delta(\tau,t)\ \Rightarrow\ \mathrm{TokSound}_{h^\star}(\tau).
\]

Let $C_i$ be an evidential \PBRC{} contract with triggers $(\varphi_{i,j})_{j=1}^{k_i}$, operators $(U_{i,j})_{j=1}^{k_i}$, priority order $\prec_i$, and fallback $\delta_i$. For an event $E$, write
\[
J(E):=\{j\in\{1,\dots,k_i\}: E\models \varphi_{i,j}\},\qquad
j^\star(E):=\min_{\prec_i} J(E)\ \text{ when }J(E)\neq\emptyset.
\]
We say $C_i$ is \emph{$(h^\star,\Delta)$-sound} if whenever the selected trigger is witnessed by applicable, truth-sound tokens, the triggered operator preserves correctness:
\begin{align*}
\forall t,b,E,W,\ \Big( &\mathrm{Correct}(b)\ \land\ J(E)\neq \emptyset\ \land\ W\subseteq \mathcal{T}(E)\ \land\  \\
& W \text{ witnesses } E\models \varphi_{i,j^\star(E)}\ \land\ 
 \forall \tau\in W\ \mathrm{App}_\Delta(\tau,t)\land \\
& \mathrm{TokSound}_{h^\star}(\tau)\Big)\ \Rightarrow\ \mathrm{Correct}\big(U_{i,j^\star(E)}(b,E)\big).
\end{align*}
(Separately, fallback is assumed argmax-preserving, hence correctness-preserving.)
\end{Definition}

\subsubsection{Completeness-style localization}

\begin{Theorem}[Failure-mode completeness and fault localization]
\label{thm:failure_taxonomy}
Let agent $i$ run an evidential \PBRC{} contract $C_i$ with argmax-preserving fallback and be enforced by a router/output gate that rejects empty-witness certificates. Fix $h^\star$ and a freshness window $\Delta$, and assume the run starts correct: $\mathrm{Correct}(b_i^0)$.
If there exists a time $t$ such that $\neg \mathrm{Correct}(b_i^t)$, let $s$ be the least index with $\mathrm{Correct}(b_i^s)$ and $\neg \mathrm{Correct}(b_i^{s+1})$.
Then:
\begin{enumerate}
\item (\emph{Certificate-bearing first error.}) The transition $s\to s+1$ was accepted with a nonempty witness: there exists an accepted certificate $\pi_i^s=(j,W)$ with $W\neq\emptyset$ for this step.
\item (\emph{Localization.}) At least one of the following failure modes holds at time $s$:
\begin{enumerate}
\item \textbf{Router unsoundness:} the accepted certificate is invalid (e.g.\ $W\not\subseteq \mathcal{T}(E_i^s)$ or $W$ does not witness $E_i^s\models \varphi_{i,j}$).
\item \textbf{Contextual validity failure:} there exists $\tau\in W$ with $\neg \mathrm{Fresh}_\Delta(\tau,s)$ (staleness/replay relative to the intended window).
\item \textbf{Evidence integrity failure:} there exists $\tau\in W$ with $\mathrm{App}_\Delta(\tau,s)$ but $\neg \mathrm{TokSound}_{h^\star}(\tau)$ (forgery/compromise/misinformation at the evidence layer).
\item \textbf{Contract/operator mis-specification:} even though all tokens in $W$ are applicable and truth-sound, the triggered operator yields an incorrect belief (violating $(h^\star,\Delta)$-soundness of $C_i$).
\end{enumerate}
\end{enumerate}
\end{Theorem}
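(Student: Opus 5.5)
The argument mirrors the proof of Theorem~\ref{thm:accountability}, followed by an exhaustive case split on the accepted certificate.

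First, existence of $s$. The set of indices $t$ with $\neg\mathrm{Correct}(b_i^t)$ is nonempty, and $\mathrm{Correct}(b_i^0)$ holds, so the least such index $t_0$ satisfies $t_0\ge 1$. Setting $s:=t_0-1$ gives $\mathrm{Correct}(b_i^s)$ and $\neg\mathrm{Correct}(b_i^{s+1})$.

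\textbf{Item~1.} Suppose the step $s\to s+1$ was not accepted with a nonempty witness. This happens if no trigger fired, if the router rejected the certificate, or if the certificate canonicalized to $(\bot,\emptyset)$. In each case the enforced semantics of Definition~\ref{def:router_sem} (or Definition~\ref{def:gate}) gives $b_i^{s+1}=\delta_i(b_i^s,E_i^s)$. Argmax preservation then yields $\argmax(b_i^{s+1})=\argmax(b_i^s)\ni h^\star$, contradicting $\neg\mathrm{Correct}(b_i^{s+1})$. Hence the router accepted some $\pi_i^s=(j,W)$. Empty rejection forces $(j,W)\neq(\bot,\emptyset)$, and because canonicalization maps every empty-witness certificate to $(\bot,\emptyset)$, we get $W\neq\emptyset$ and $j\neq\bot$. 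Under the state-holding semantics (C), the accepted step sets $b_i^{s+1}=U_{i,j}(b_i^s,E_i^s)$.

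\textbf{Item~2.} This is a contrapositive. Assume none of (a)--(c) holds and derive (d).
\begin{itemize}
\item Not (a) means $W\subseteq\mathcal{T}(E_i^s)$ and $W$ witnesses $E_i^s\models\varphi_{i,j}$. In particular $J(E_i^s)\neq\emptyset$.
\item Not (b) means every $\tau\in W$ is fresh. Since $W\subseteq\mathcal{T}(E_i^s)$, every $\tau\in W$ is also valid, so $\mathrm{App}_\Delta(\tau,s)$ holds for all $\tau\in W$.
\item Not (c) then gives $\mathrm{TokSound}_{h^\star}(\tau)$ for all $\tau\in W$.
\end{itemize}
Together with $\mathrm{Correct}(b_i^s)$, these are exactly the premises of $(h^\star,\Delta)$-soundness, applied to $b=b_i^s$, $E=E_i^s$, $W$.

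One point needs care. $(h^\star,\Delta)$-soundness concerns the selected trigger $j^\star(E)$, whereas the accepted label is $j$. Router soundness alone does not force $j=j^\star(E)$. I will handle this through the router-executed semantics (C), under which the router itself selects $j^\star=\min_{\prec_i}J$, so $j=j^\star(E_i^s)$. Should a gate-only reading allow $j\neq j^\star$, I will count that label mismatch as a router (priority) failure and fold it into (a), following the optional check~(iii) in the audit procedure.

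With $j=j^\star(E_i^s)$, soundness would give $\mathrm{Correct}(U_{i,j}(b_i^s,E_i^s))=\mathrm{Correct}(b_i^{s+1})$, a contradiction. So $C_i$ violates $(h^\star,\Delta)$-soundness at this step, which is mode (d).

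The main obstacle is making this label and operator bookkeeping precise rather than any calculation. Specifically, we need that the accepted certificate's label names the operator actually applied and coincides with the priority-selected trigger. I would state this explicitly as the regime (B)/(C) operator-compliance assumption from Section~\ref{sec:enforcement}, and otherwise attribute the discrepancy to router unsoundness.
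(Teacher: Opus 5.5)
Your proposal is correct and follows the paper's proof essentially step for step. Argmax-preserving fallback rules out a fallback transition at $s$, and empty-witness rejection then gives an accepted certificate $(j,W)$ with $W\neq\emptyset$. Negating modes (a)--(c) supplies exactly the premises of $(h^\star,\Delta)$-soundness, so the incorrect $b_i^{s+1}=U_{i,j}(b_i^s,E_i^s)$ forces mode (d).

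Your extra step identifying the accepted label $j$ with $j^\star(E_i^s)$ is a point the paper's proof passes over silently, and you handle it correctly via the router-executed semantics. That semantics is also the cleaner source of $J(E_i^s)\neq\emptyset$, since restricted satisfaction $E\models_W\varphi$ need not imply $E\models\varphi$ for general first-order triggers.
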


\begin{proof}
Let $s$ be least with $\mathrm{Correct}(b_i^s)$ and $\neg \mathrm{Correct}(b_i^{s+1})$.

(1) Because fallback is argmax-preserving, a fallback step cannot eliminate $h^\star$ from $\argmax(b)$, hence the transition $s\to s+1$ cannot be fallback. Since the gate rejects empty-witness certificates, any accepted non-fallback step must be accompanied by an accepted certificate $\pi_i^s=(j,W)$ with $W\neq\emptyset$.

(2) Suppose none of the four failure modes holds. Then router unsoundness does not hold, so the accepted certificate is valid: $W\subseteq \mathcal{T}(E_i^s)$ and $W$ witnesses $E_i^s\models \varphi_{i,j}$. Contextual validity failure does not hold, so $\mathrm{Fresh}_\Delta(\tau,s)$ holds for all $\tau\in W$; together with $\mathrm{Valid}(\tau)$ (since $W\subseteq\mathcal{T}(E_i^s)$) this implies $\mathrm{App}_\Delta(\tau,s)$ for all $\tau\in W$. Evidence integrity failure does not hold, so all $\tau\in W$ are truth-sound. Finally, contract/operator mis-specification does not hold, so $(h^\star,\Delta)$-soundness implies
\[
\mathrm{Correct}(b_i^s)\ \Rightarrow\ \mathrm{Correct}\big(U_{i,j}(b_i^s,E_i^s)\big).
\]
Because the router accepted the triggered update, $b_i^{s+1}=U_{i,j}(b_i^s,E_i^s)$, hence $\mathrm{Correct}(b_i^{s+1})$, contradicting the choice of $s$. Therefore at least one failure mode must occur.
\end{proof}

\begin{Corollary}[Safety under integrity, freshness discipline, and sound contracts]
\label{cor:safety_taxonomy}
Let agent $i$ be enforced by a sound router that rejects empty-witness certificates, and let $C_i$ be an evidential \PBRC{} contract with argmax-preserving fallback. Fix $h^\star$ and a freshness window $\Delta$.
Assume: (i) $C_i$ is $(h^\star,\Delta)$-sound, (ii) evidence integrity holds (Definition~\ref{def:integrity_soundness}), and (iii) every nontrivial revision trigger in $C_i$ is freshness-gated with window $\Delta$ (so any accepted witness token satisfies $\mathrm{Fresh}_\Delta(\tau,t)$ at the time of use).
If the run starts correct, then it remains correct: $\mathrm{Correct}(b_i^t)$ for all $t$.
\end{Corollary}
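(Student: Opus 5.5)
The proof is by contradiction, using Theorem~\ref{thm:failure_taxonomy}. Assume the run starts correct, $\mathrm{Correct}(b_i^0)$, and suppose some $t$ has $\neg\mathrm{Correct}(b_i^t)$. Then there is a least index $s$ with $\mathrm{Correct}(b_i^s)$ and $\neg\mathrm{Correct}(b_i^{s+1})$. The hypotheses of the corollary contain those of Theorem~\ref{thm:failure_taxonomy}: an evidential contract, argmax-preserving fallback, and a router that rejects empty-witness certificates. So part (1) of that theorem gives an accepted certificate $\pi_i^s=(j,W)$ with $W\neq\emptyset$. Part (2) says at least one of the four failure modes (a)--(d) holds at time $s$. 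The rest of the proof rules out each mode in turn.

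\emph{Mode (a), router unsoundness.} This is excluded because the router is sound (Definition~\ref{def:router_sc}, or the soundness clause of Definition~\ref{def:router_class}). Soundness gives $W\subseteq\mathcal{T}(E_i^s)$, hence $\mathrm{Valid}(\tau)$ for every $\tau\in W$, and it gives that $W$ witnesses $E_i^s\models\varphi_{i,j}$. By Remark~\ref{rem:social_vs_trigger} and evidentiality, this is a genuine nontrivial revision trigger step.

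\emph{Mode (b), contextual validity failure.} This is excluded by assumption (iii): every accepted witness token used by a nontrivial revision trigger satisfies $\mathrm{Fresh}_\Delta(\tau,s)$. Since each $\tau\in W$ is also valid, we get $\mathrm{App}_\Delta(\tau,s)$ for all $\tau\in W$.

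\emph{Mode (c), evidence integrity failure.} Assumption (ii), Definition~\ref{def:integrity_soundness}, states that applicable tokens are truth-sound. Combined with the applicability just shown, every $\tau\in W$ satisfies $\mathrm{TokSound}_{h^\star}(\tau)$, so mode (c) cannot occur.

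\emph{Mode (d), contract/operator mis-specification.} With (a)--(c) excluded, the premises of $(h^\star,\Delta)$-soundness hold at step $s$: $\mathrm{Correct}(b_i^s)$, a valid witness $W$ for the selected trigger, and applicable, truth-sound tokens. Assumption (i) then gives $\mathrm{Correct}(U_{i,j}(b_i^s,E_i^s))$, so mode (d) fails.

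All four modes fail, which contradicts Theorem~\ref{thm:failure_taxonomy}. Hence $\mathrm{Correct}(b_i^t)$ holds for all $t$. Equivalently, one could run an induction on $t$: fallback steps preserve correctness because they are argmax-preserving, and accepted triggered steps preserve it by the chain of implications above.

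The main obstacle is a label mismatch in mode (d). The $(h^\star,\Delta)$-soundness condition is stated for the priority-selected index $j^\star(E)$, while the accepted certificate names some label $j$. A sound router that skips the priority check (check (iii) in Section~\ref{sec:pbrc}) might accept a lower-priority trigger. The plan is to work under router-executed semantics (C), where the router computes $j^\star$ itself and so $j=j^\star(E_i^s)$. Alternatively, one can read "router unsoundness" in Theorem~\ref{thm:failure_taxonomy} as covering a wrong priority label, so that excluding mode (a) also forces $j=j^\star(E_i^s)$.
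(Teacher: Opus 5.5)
Your proposal is correct and follows the paper's argument: take the least index $s$ at which correctness is lost, invoke Theorem~\ref{thm:failure_taxonomy}, and discharge modes (a)--(d) with router soundness, freshness gating, evidence integrity, and $(h^\star,\Delta)$-soundness respectively, where you spell out the chain $\mathrm{Valid}\wedge\mathrm{Fresh}_\Delta\Rightarrow\mathrm{App}_\Delta\Rightarrow\mathrm{TokSound}_{h^\star}$ that the paper compresses into one line. The $j$ versus $j^\star(E_i^s)$ mismatch you flag is a real elision the paper also makes (already in its proof of Theorem~\ref{thm:failure_taxonomy}), and resolving it via router-executed semantics (C) is right; one small correction is that the accepted trigger counts as a nontrivial revision trigger because the step changes $\argmax$, which makes $U_{i,j}$ nontrivial in the sense of Definition~\ref{def:nontrivial_revision}, not because of Remark~\ref{rem:social_vs_trigger} (the parenthetical in assumption (iii), which covers every accepted witness token, also settles this directly).
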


\begin{proof}
If correctness were ever lost, let $s$ be least with $\mathrm{Correct}(b_i^s)$ and $\neg\mathrm{Correct}(b_i^{s+1})$. Theorem~\ref{thm:failure_taxonomy} would force at least one failure mode. Router soundness rules out router unsoundness; freshness-gated triggers rule out contextual validity failure; evidence integrity rules out evidence integrity failure; and $(h^\star,\Delta)$-soundness rules out contract/operator mis-specification. Contradiction.
\end{proof}

\begin{Remark}[Connection to impossibility and replay bounds]
Theorem~\ref{thm:imposs_forge} formalizes that if adversaries can fabricate ``valid'' evidence, evidence integrity can be violated and no evidence-gated protocol can guarantee immunity. Theorem~\ref{thm:replay_bound} shows that freshness-gated triggers rule out staleness-driven updates against replay-only adversaries. Together with epistemic accountability (Theorem~\ref{thm:accountability}), the failure taxonomy above separates social influence from the remaining attack surface: evidence integrity, contextual validity, router soundness, and contract/operator correctness.
\end{Remark}

\begin{Corollary}[End-to-end diagnosis is topology-invariant]
\label{cor:endtoend_diagnosis}
Fix a horizon $T\in\Nat$ and consider an enforced system satisfying the assumptions of Theorem~\ref{thm:endtoend} (sound-and-complete routers that reject empty-witness certificates, token-invariant evidential contracts, flooding dissemination, no exogenous token births before $T$). Fix $h^\star$ and a freshness window $\Delta$.
If the run for agent $i$ starts correct, then for any $t\le T$ either $\mathrm{Correct}(b_i^t)$ holds, or there exists a least index $s<t$ such that $\mathrm{Correct}(b_i^s)$ and $\neg\mathrm{Correct}(b_i^{s+1})$; in that case the transition $s\to s+1$ is certificate-bearing with a nonempty witness set and satisfies the failure-mode localization of Theorem~\ref{thm:failure_taxonomy}. 

Moreover, if two topologies $G$ and $G'$ are $T$-reachability equivalent and share the same initial beliefs and initial token placement, then (under the same contracts and routers) the index of the first incorrect transition (if any) and the canonicalized accepted certificate at that transition are identical in both runs up to time $T$. Hence the fault localization is a function of the token exposure trace, not of topology or rhetoric.
\end{Corollary}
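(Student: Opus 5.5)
The plan is to prove the two parts of Corollary~\ref{cor:endtoend_diagnosis} separately, reducing each to results already established. For the first part (the dichotomy with localization), I will observe that Theorem~\ref{thm:endtoend} places us in an enforced regime where each $\mathsf{Acc}_i$ is sound and rejects empty-witness certificates. By Corollary~\ref{cor:eg_transparency}, the enforced dynamics of agent $i$ coincide with those of the evidential contract $C_i=\mathrm{EG}(P_i)$. I also need its fallback to be argmax-preserving. I will read this as an implicit standing assumption inherited from the hypotheses of Theorem~\ref{thm:failure_taxonomy}, and state it explicitly, since the end-to-end theorem alone does not impose it. With that in place, if $\neg\mathrm{Correct}(b_i^t)$ for some $t\le T$, the set of indices $s<t$ with $\mathrm{Correct}(b_i^s)\wedge\neg\mathrm{Correct}(b_i^{s+1})$ is nonempty, because $b_i^0$ is correct. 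Taking the least such $s$ and invoking Theorem~\ref{thm:failure_taxonomy} verbatim gives a certificate-bearing step with $W\neq\emptyset$, together with the four-way localization.

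For the second part, the key step is Theorem~\ref{thm:endtoend}(2), or equivalently Theorem~\ref{thm:topology_trace_equiv} combined with Theorem~\ref{thm:tokentrace}. These give that under $T$-reachability equivalence with the same initial beliefs and placement, $b_{i,G}^t=b_{i,G'}^t$ for all $i$ and all $t\le T$. Correctness is a predicate on $b$ alone, so the sequences $(\mathrm{Correct}(b_i^t))_{t\le T}$ agree. Hence the least index $s$ of a first incorrect transition, if it exists, is the same in both runs.

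For the certificate at step $s$, I will use Theorem~\ref{thm:topology_trace_equiv} to get $\mathcal{T}(E_{i,G}^s)=\mathcal{T}(E_{i,G'}^s)$, that is, the two events are token-equivalent. Token-invariance of the triggers gives the same selected label $\ell$. Token-invariance of the witness extractors gives the same $W$. Therefore $\mathrm{can}(\pi_{C_i}(E_{i,G}^s))=\mathrm{can}(\pi_{C_i}(E_{i,G'}^s))$, exactly as in the first paragraph of the proof of Theorem~\ref{thm:tokensuff}. Token-determinedness of the router then makes the accept/reject decision coincide, so the accepted canonicalized certificate is identical in both runs.

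The main obstacle is that Theorem~\ref{thm:endtoend} assumes routers are sound and complete but does not literally state that they are token-determined. The analogous step in that theorem's proof uses token-determinedness implicitly. I would handle this in one of two ways. The first is to argue that, for the accepted certificates here, completeness plus empty rejection already fixes the decision: on a non-social satisfied trigger the router accepts, and otherwise the certificate canonicalizes to $(\bot,\emptyset)$ and is rejected. Since both cases depend only on $\mathcal{T}(E)$ under token-invariant triggers, the decision is token-determined. The second, as a fallback, is to add token-determinedness as an explicit hypothesis. The final sentence of the statement, that the diagnosis is a function of the token exposure trace, then follows by the same argument with trace equivalence in place of reachability equivalence.
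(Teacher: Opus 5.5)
Your proposal is correct and follows the paper's own proof. The first part is Theorem~\ref{thm:failure_taxonomy} applied at the least correctness-losing index, and the second part is the chain $T$-reachability equivalence $\Rightarrow$ token-trace equivalence (Theorem~\ref{thm:topology_trace_equiv}) $\Rightarrow$ identical beliefs and identical canonicalized certificates via token-invariance. Your version is in fact more careful than the paper's sketch in three places:
\begin{enumerate}
\item You flag that the corollary needs an argmax-preserving fallback, which the paper's proof uses silently. Without it, a fallback step with certificate $(\bot,\emptyset)$ could lose correctness.
\item The paper tacitly uses token-determinedness of the routers. Your completeness-plus-empty-rejection argument correctly recovers it for the certificates the contract actually produces.
\item You derive equality of the first-error index from equality of the belief trajectories, rather than from the certificate trace. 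That is the right justification.
\end{enumerate}
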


\begin{proof}
The first part follows by applying Theorem~\ref{thm:failure_taxonomy} to the least $s$ where correctness is lost (if such an $s$ exists). For topology invariance, if $G$ and $G'$ are $T$-reachability equivalent then flooding yields identical token exposure traces up to $T$ (Theorem~\ref{thm:topology_trace_equiv}). Token-invariance and token-sufficiency imply that at each round the contract-selected trigger label and canonicalized accepted certificate are determined by the local token set, so the accepted certificate trace is identical up to $T$. In particular, the first index $s$ at which correctness fails (and the certificate observed at that index) must coincide across the two runs.
\end{proof}

\section{Contractual Dynamic Doxastic Logic (\CDDL{}) with iteration}
\label{sec:cddl}

This section introduces \CDDL{}, a lightweight specification logic for \PBRC{} deployments. The intent is not to compete with established belief-change logics, but to provide a \emph{verification-facing} language in which one can state and check contractual invariants over auditable deliberation traces. Technically, \CDDL{} combines KD45 doxastic modalities with full Propositional Dynamic Logic (PDL), including Kleene star $\alpha^\ast$, so that contractual reasoning can quantify over \emph{arbitrary finite} sequences of admissible belief-update steps.

We proceed as follows. First, we clarify the positioning and semantic interface of \CDDL{} (Section~\ref{sec:cddl_position}). Second, we define the instrumented executions and abstraction atoms that bridge the operational semantics and the logic (Section~\ref{sec:cddl_bridge}). Third, we provide the formal syntax and semantics of \CDDL{} (Section~\ref{sec:cddl_syntax_semantics}). Finally, we present the core logical results, including the soundness and completeness theorem (Section~\ref{sec:cddl_results}).

\subsection{Positioning and semantic interface}
\label{sec:cddl_position}

Dynamic doxastic logic and related traditions have long studied the interaction of belief modalities with dynamic/program operators; see, e.g., \cite{vanbenthem2015dlbc,leitgeb2007ddl,schmidt2008pdl_doxastic}. The role of \CDDL{} here is narrower and operational: it serves as a contract specification layer for \PBRC{}. Concretely, we interpret each agent contract as a PDL program, and use $[\pi(C)^\ast]\varphi$ formulas to express invariants that should hold along every finite prefix of a \PBRC{} run (e.g., ``no top-hypothesis change on social-only steps'' or ``any top flip must be licensed by a trigger'').

The key design choice is that \CDDL{} is evaluated on an \emph{instrumented transition system} derived from an execution log. In particular, the first-order trigger semantics of \PBRC{} (Section~\ref{sec:operational}) is not duplicated inside the logic; rather, it is exposed to \CDDL{} through a finite set of propositional atoms whose truth values are computed by the router/auditor from the event structure and the router-held belief state.

\subsection{Instrumented executions and abstraction atoms}
\label{sec:cddl_bridge}

Fix an execution of a state-holding router (Section~\ref{sec:enforcement}). Each time index $t$ determines, for each agent $i$, an event $E_i^t$ and a router-held belief state $b_i^t$. We abstract this concrete state into a valuation of propositional atoms, and interpret router-enforced update steps as transitions labeled by atomic actions. This yields a Kripke structure on which \CDDL{} formulas are evaluated.

\begin{Definition}[Abstraction map and action labeling]
\label{def:abstraction}
Fix a finite set of propositional atoms that includes, for each agent $i$:
\begin{itemize}
\item trigger atoms $\mathsf{Trig}_{i,j}$ for each preregistered trigger $\varphi_{i,j}$;
\item a social-only atom $\mathsf{SocOnly}_i$;
\item top-hypothesis atoms $\mathsf{Top}_{i,h}$ for each $h\in\Hh$;
\item optionally an acceptance atom $\mathsf{Acc}_i$ indicating that a non-fallback certificate was accepted.
\end{itemize}
Given $(b_i^t,E_i^t)$, define a valuation $V_t$ by
\[
V_t(\mathsf{Trig}_{i,j})=1 \iff \mathcal{S}(E_i^t)\models \varphi_{i,j},\qquad
V_t(\mathsf{SocOnly}_i)=1 \iff \mathrm{SocialOnly}(E_i^t),
\]
and
\[
V_t(\mathsf{Top}_{i,h})=1 \iff h\in\argmax(b_i^t).
\]
If the router records whether the accepted step was fallback, set $V_t(\mathsf{Acc}_i)=1$ iff the accepted certificate was non-fallback (equivalently, iff a trigger branch fired).

In addition, fix atomic actions $\mathsf{Act}_0$ that include, for each agent $i$,
one action $\mathsf{upd}_{i,j}$ for ``apply operator associated with trigger $j$'' and one action $\mathsf{fb}_i$ for ``apply fallback''.
An instrumented execution induces a transition relation $R_{\mathsf{upd}_{i,j}}$ by placing $(t,t{+}1)$ into $R_{\mathsf{upd}_{i,j}}$
iff the router accepted the $j$th trigger branch for agent $i$ at time $t$, and similarly for $R_{\mathsf{fb}_i}$.
\end{Definition}

Definition~\ref{def:abstraction} makes the interface explicit: FO trigger truth is computed by the \PBRC{} semantics; \CDDL{} sees only the resulting atoms and labeled transitions. This is the intended ``audit log semantics'' of \CDDL{}.

\subsection{Syntax}
\label{sec:cddl_syntax_semantics}

Fix agents $\A$, propositional atoms $\mathsf{Prop}$, and atomic actions $\mathsf{Act}_0$.

\paragraph{Formulas.}
\[
\varphi ::= p \mid \neg\varphi \mid (\varphi\land\varphi) \mid B_i\varphi \mid [\alpha]\varphi,
\]
where $p\in\mathsf{Prop}$, $i\in\A$, and $\alpha$ is a program. As usual define $\langle\alpha\rangle\varphi := \neg[\alpha]\neg\varphi$.

\paragraph{Programs (PDL).}
\[
\alpha ::= a \mid \alpha;\beta \mid \alpha\cup\beta \mid \varphi? \mid \alpha^\ast,
\]
where $a\in\mathsf{Act}_0$, $\alpha^\ast$ is Kleene star, and $\varphi?$ is a test.

\subsection{Semantics}

A \CDDL{} model is a Kripke structure
\[
M=(W,\{R_i\}_{i\in\A},\{R_a\}_{a\in\mathsf{Act}_0},V),
\]
where each $R_i$ is serial, transitive, and Euclidean (KD45 frames), each $R_a\subseteq W\times W$ interprets an atomic action, and $V$ is a valuation.
Program relations extend compositionally:
\begin{align*}
R_{\alpha;\beta} &:= R_\alpha \circ R_\beta,\qquad
R_{\alpha\cup\beta} := R_\alpha \cup R_\beta,\qquad
R_{\psi?} := \{(w,w): M,w\models \psi\},\\
R_{\alpha^\ast} &:= (R_\alpha)^\ast \quad \text{(reflexive transitive closure)}.
\end{align*}
Truth is defined by the standard clauses for KD45 belief and PDL:
\[
M,w\models B_i\varphi \iff \forall w'\,(wR_iw'\Rightarrow M,w'\models\varphi),\qquad
M,w\models[\alpha]\varphi \iff \forall w'\,(wR_\alpha w'\Rightarrow M,w'\models\varphi).
\]

\subsection{Encoding \PBRC{} contracts as programs}

Let $C_i$ contain a priority-ordered trigger list $\theta_{i,1}\prec\cdots\prec\theta_{i,k_i}$ (where each $\theta_{i,j}$ is a propositional atom such as $\mathsf{Trig}_{i,j}$ or a Boolean combination of such atoms), associated update actions $\mathsf{upd}_{i,1},\dots,\mathsf{upd}_{i,k_i}$, and fallback action $\mathsf{fb}_i$.

The contract program $\pi(C_i)$ encodes the priority rule via mutually exclusive guarded branches:
\[
\pi(C_i)\ :=\ 
(\theta_{i,1}?;\mathsf{upd}_{i,1})
\ \cup\
(\neg\theta_{i,1}?;\theta_{i,2}?;\mathsf{upd}_{i,2})
\ \cup\ \cdots\ \cup\
(\neg\theta_{i,1}?;\cdots;\neg\theta_{i,k_i}?;\mathsf{fb}_i).
\]
Intuitively, at any state exactly one branch is enabled if the abstraction atoms are consistent with the router’s trigger evaluation. Iteration $\pi(C_i)^\ast$ expresses ``any finite number of contractual steps,'' enabling invariants of the form $[\pi(C_i)^\ast]\varphi$.

\subsection{PBRC invariants expressed in \CDDL{}}
\label{sec:cddl_examples}

The point of \CDDL{} is to write invariants about auditable runs in a concise, checkable form. The following formulas are representative and directly align with the paper’s main theorems.

\paragraph{(i) Social-only top stability (contractual form).}
Assume that in social-only rounds the enforced update is fallback and that fallback is argmax-preserving.
Using atoms $\mathsf{SocOnly}_i$ and $\mathsf{Top}_{i,h}$ from Definition~\ref{def:abstraction}, one can express:
\begin{equation}
\label{eq:cddl_social_invariant}
[\pi(C_i)^\ast]\Big(\mathsf{SocOnly}_i \rightarrow \bigwedge_{h\in\Hh}\big(\mathsf{Top}_{i,h}\rightarrow [\pi(C_i)]\mathsf{Top}_{i,h}\big)\Big).
\end{equation}
This states that at every reachable state, if the current event is social-only, then after one contract step the top hypothesis remains unchanged.

\paragraph{(ii) Accountability for top changes (trigger-guarded flips).}
Let $\mathsf{AnyTrig}_i := \bigvee_{j=1}^{k_i}\mathsf{Trig}_{i,j}$. An accountability-style property can be stated as:
\begin{equation}
\label{eq:cddl_accountability}
[\pi(C_i)^\ast]\Big(\bigwedge_{h\neq h'}\big(\mathsf{Top}_{i,h}\rightarrow [\pi(C_i)](\mathsf{Top}_{i,h'}\rightarrow \mathsf{AnyTrig}_i)\big)\Big).
\end{equation}
Under token-witnessable triggers and sound enforcement, $\mathsf{AnyTrig}_i$ is witnessed by a nonempty validated token set, so \eqref{eq:cddl_accountability} is an abstract trace-level reflection of Theorem~\ref{thm:accountability}.

\paragraph{(iii) Trace-checkability.}
Both \eqref{eq:cddl_social_invariant} and \eqref{eq:cddl_accountability} are evaluated on the instrumented transition system induced by a run log: the router/auditor computes the truth of $\mathsf{Trig}_{i,j}$ and $\mathsf{SocOnly}_i$ from the FO event semantics, and computes $\mathsf{Top}_{i,h}$ from the router-held belief state. In this way, \CDDL{} provides a clean separation between (a) \PBRC{}’s FO trigger semantics and (b) higher-level temporal/contractual properties over runs.

\subsection{Axioms and proof rules}

We use a standard Hilbert system consisting of:
\begin{itemize}
\item propositional tautologies and modus ponens;
\item KD45 axioms for each $B_i$ (normality $\mathbf{K}$, seriality $\mathbf{D}$, positive introspection $\mathbf{4}$, negative introspection $\mathbf{5}$) and necessitation for $B_i$;
\item normality for dynamic modalities and necessitation for $[\alpha]$;
\item PDL reduction axioms:
\begin{align*}
\mathbf{Seq}:&\ [\alpha;\beta]\varphi \leftrightarrow [\alpha][\beta]\varphi,\\
\mathbf{Choice}:&\ [\alpha\cup\beta]\varphi \leftrightarrow ([\alpha]\varphi\land[\beta]\varphi),\\
\mathbf{Test}:&\ [\psi?]\varphi \leftrightarrow (\psi\to \varphi);
\end{align*}
\item the star axiom and induction rule:
\begin{align*}
\mathbf{Star}:&\ [\alpha^\ast]\varphi \leftrightarrow \big(\varphi \land [\alpha][\alpha^\ast]\varphi\big),\\
\mathbf{Ind}:&\ \text{from } (\varphi \to [\alpha]\varphi)\ \text{ infer } (\varphi \to [\alpha^\ast]\varphi).
\end{align*}
\end{itemize}

\subsection{Soundness and completeness (with proof sketch)}
\label{sec:cddl_results}
\label{sec:cddl_sc}

\begin{Theorem}[Soundness and completeness of \CDDL{}]
\label{thm:cddl_sc}
The proof system above is sound and complete with respect to the class of Kripke models in which each doxastic accessibility relation $R_i$ satisfies KD45 (serial, transitive, Euclidean) and each program is interpreted by the standard relational semantics of PDL, including $\alpha^\ast$ as reflexive transitive closure.
\end{Theorem}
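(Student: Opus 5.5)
Soundness is by induction on derivations: each axiom is checked valid on every KD45/PDL model and each rule is shown to preserve validity. $\mathbf{K}$ and necessitation hold for any relational modality. $\mathbf{D}$, $\mathbf{4}$, $\mathbf{5}$ follow from seriality, transitivity and Euclideanness of $R_i$. $\mathbf{Seq}$, $\mathbf{Choice}$ and $\mathbf{Test}$ are immediate from the compositional clauses for $R_{\alpha;\beta}$, $R_{\alpha\cup\beta}$ and $R_{\psi?}$. $\mathbf{Star}$ follows from $(R_\alpha)^\ast = \mathrm{Id}\cup R_\alpha\circ(R_\alpha)^\ast$. For $\mathbf{Ind}$: if $\varphi\to[\alpha]\varphi$ is valid, then an induction on the length of an $R_\alpha$-path shows that $\varphi$ propagates along $(R_\alpha)^\ast$.

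Completeness follows the Kozen--Parikh filtration method for PDL, extended by the doxastic modalities. Suppose $\chi$ is consistent. Fix the Fischer--Ladner closure $\mathrm{FL}(\chi)$. Add to it, for each $B_i\psi$ in the closure, the formulas $B_iB_i\psi$ and $B_i\neg B_i\psi$ as needed. This keeps the set finite, since iterated $B_i$ prefixes are collapsed by $\mathbf{4}$ and $\mathbf{5}$ modulo provable equivalence. Let the worlds $W$ be the maximal consistent subsets (atoms) of this closure. Define the relations as follows.
\begin{itemize}
\item For an atomic action $a$: $A\,R_a\,B$ iff $\bigwedge A\land\langle a\rangle\bigwedge B$ is consistent.
\item For an agent $i$: $A\,R_i\,B$ iff $\{\psi: B_i\psi\in A\}\subseteq B$ and $\{B_i\psi : B_i\psi\in A\}=\{B_i\psi: B_i\psi\in B\}$. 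This is the standard KD45 filtration clause.
\end{itemize}
Extend the program relations compositionally as in the semantics. Three points need checking.
\begin{enumerate}
\item Each $R_i$ is KD45. Transitivity and Euclideanness come directly from the matching of $B_i$-formulas. Seriality uses $\mathbf{D}$: $\{\psi : B_i\psi\in A\}\cup\{B_i\psi\in A\}\cup\{\neg B_i\psi\in A\}$ is consistent, by $\mathbf{4}$, $\mathbf{5}$ and $\mathbf{D}$.
\item The truth lemma for $B_i$: if $\neg B_i\psi\in A$, there is an $R_i$-successor containing $\neg\psi$. The standard existence argument uses normality together with $\mathbf{4}$ and $\mathbf{5}$ to carry along the introspective formulas.
\item The truth lemma for programs: $[\alpha]\psi\in A$ iff every $R_\alpha$-successor of $A$ contains $\psi$, proved by induction on $\alpha$ within the closure.
\end{enumerate}

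The main obstacle is the $\alpha^\ast$ case of the truth lemma, in the direction from membership to truth. Suppose $[\alpha^\ast]\psi\in A$ and $A\,(R_\alpha)^\ast\,B$; we must show $\psi\in B$. The plan is the usual induction-rule argument. Let $\Sigma$ be the set of atoms $C$ with $[\alpha^\ast]\psi\in C$, and let $\sigma=\bigvee_{C\in\Sigma}\bigwedge C$. Using $\mathbf{Star}$ and the inductive hypothesis for $\alpha$, prove $\vdash\sigma\to[\alpha]\sigma$. This requires showing that any atom $D$ reachable by $R_\alpha$ from an atom in $\Sigma$ lies in $\Sigma$; we get this because $[\alpha][\alpha^\ast]\psi\in C$ and $[\alpha^\ast]\psi$ belongs to the closure. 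Then $\mathbf{Ind}$ gives $\vdash\sigma\to[\alpha^\ast]\sigma$. Since $\sigma\to\psi$ by $\mathbf{Star}$, every $(R_\alpha)^\ast$-successor of $A$ contains $\psi$. The converse direction, and the clauses for tests and sequencing, are routine, where tests use $\mathbf{Test}$.

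Finally, $\chi$ belongs to some atom, so it is satisfied in this finite KD45/PDL model, which gives completeness. Because the doxastic and dynamic modalities interact with no mixed axioms, the two filtration constructions run side by side on the same finite atom set without interference. The detailed verification would be deferred to Appendix~\ref{app:pdl}.
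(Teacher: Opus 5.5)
Your doxastic construction works, but the star case of the truth lemma has its difficulty in the wrong place. The direction you call routine is exactly the step that is missing. That direction is truth-to-membership: if every $(R_\alpha)^\ast$-successor of $A$ contains $\psi$, then $[\alpha^\ast]\psi\in A$. Equivalently, every eventuality $\langle\alpha^\ast\rangle\neg\psi\in A$ must be fulfilled at some reachable atom.

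Unfolding with $\mathbf{Star}$ only passes $\langle\alpha^\ast\rangle\neg\psi$ from atom to atom, and it can cycle forever among atoms containing $\psi$. $\mathbf{Star}$ alone never forces fulfilment after finitely many steps; that is precisely what $\mathbf{Ind}$ supplies. To use $\mathbf{Ind}$, your induction on programs must carry the Kozen--Parikh strengthening, not just the truth lemma for closure formulas $[\alpha]\chi$: if $\bigwedge C\land\langle\alpha\rangle\bigwedge D$ is consistent, then $C\,R_\alpha\,D$.

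Given this for $\alpha$, the argument runs as follows.
\begin{itemize}
\item Let $S$ be the set of atoms $(R_\alpha)^\ast$-reachable from $A$, and put $\sigma_S=\bigvee_{C\in S}\bigwedge C$.
\item For $C\in S$ and $D\notin S$, $C\,R_\alpha\,D$ fails, so the lemma gives $\vdash\bigwedge C\to[\alpha]\neg\bigwedge D$.
\item Hence $\vdash\sigma_S\to[\alpha]\sigma_S$, and $\mathbf{Ind}$ gives $\vdash\sigma_S\to[\alpha^\ast]\sigma_S$.
\item If every atom of $S$ contains $\psi$, then $\vdash\sigma_S\to\psi$. So $\vdash\bigwedge A\to[\alpha^\ast]\psi$, and therefore $[\alpha^\ast]\psi\in A$.
\end{itemize}
The same argument proves the star case of the strengthened lemma itself. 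This is the eventuality lemma the paper isolates as Lemma~\ref{lem:fl_star}, which it cites rather than proves.

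Conversely, the direction you call the main obstacle needs no $\mathbf{Ind}$. Your own observation does the work: if $[\alpha^\ast]\psi\in C$ and $C\,R_\alpha\,D$, then $[\alpha^\ast]\psi\in D$, by $\mathbf{Star}$ and the hypothesis for $\alpha$. Induction on path length then gives $\psi\in B$ for every $(R_\alpha)^\ast$-successor $B$ of $A$, which is how the paper argues. As written, your detour is circular: $\vdash\sigma\to[\alpha^\ast]\sigma$ is a syntactic fact, and turning it into a claim about $(R_\alpha)^\ast$-successors presupposes the very clause being proved.

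On the doxastic side, your matching clause for $R_i$ is the right one. It is in fact better than the paper's plain canonical clause, which over a finite closure need not be transitive or Euclidean, since $B_iB_i\psi$ and $B_i\neg B_i\psi$ need not lie in $\mathrm{FL}$. You do not, however, need to add those formulas to the closure. They are derivable from $\bigwedge A$ and are used only inside the consistency arguments, so you can drop the imprecise ``finite modulo provable equivalence'' step.
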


\begin{proof}[Proof sketch]
Soundness is routine by induction on derivations. The KD45 axioms for $B_i$ are valid exactly on serial/transitive/Euclidean frames; the normality and necessitation rules preserve validity. The PDL axioms $\mathbf{Seq}$ and $\mathbf{Choice}$ are immediate from relational composition and union; $\mathbf{Test}$ follows because $R_{\psi?}$ is the identity on states satisfying $\psi$. For $\mathbf{Star}$, note that $R_{\alpha^\ast}$ is the least reflexive transitive relation containing $R_\alpha$, hence $[\alpha^\ast]\varphi$ holds at $w$ iff (i) $\varphi$ holds at $w$ and (ii) after one $\alpha$-step, $[\alpha^\ast]\varphi$ still holds. The induction rule $\mathbf{Ind}$ is sound because $(\varphi\to[\alpha]\varphi)$ makes $\varphi$ an invariant of $R_\alpha$, hence also an invariant of $(R_\alpha)^\ast$.

For completeness, fix a formula $\psi$ not provable in the system. Let $\mathrm{FL}(\psi)$ be the Fischer--Ladner closure of $\psi$ (extended in the obvious way to include subformulas involving $B_i$). Consider maximal consistent sets over $\mathrm{FL}(\psi)$ and build a canonical model $M_\psi$ whose worlds are these sets. Define each $R_i$ by the standard canonical clause for $B_i$; the axioms $\mathbf{D}$, $\mathbf{4}$, and $\mathbf{5}$ ensure that $R_i$ is serial, transitive, and Euclidean. Define $R_a$ for atomic actions $a\in\mathsf{Act}_0$ by the standard PDL canonical construction (equivalently, by the existence of appropriate ``successor'' maximal consistent sets satisfying the usual PDL transition conditions). Extend $R_\alpha$ compositionally. The truth lemma (that $\varphi\in w$ iff $M_\psi,w\models\varphi$) is proved by structural induction; the star case uses $\mathbf{Star}$ and $\mathbf{Ind}$ in the standard way. Since $\psi$ is consistent, it belongs to some world $w_0$, and the truth lemma yields $M_\psi,w_0\not\models\psi$, hence $\psi$ is not valid. Full details (closure construction, successor existence, and the star case) are given in Appendix~\ref{app:pdl}.
\end{proof}

\paragraph{\bf How an auditor uses \CDDL{} in practice?}
In deployment, \CDDL{} serves as a lightweight \emph{log-checking} layer over the finite execution induced by the audit log.
An auditor first reconstructs a finite transition system from (i) the per-round events $E_i^t$, (ii) the router-accepted certificates (trigger index and witness set), and (iii) either router state snapshots $b_i^t$ or a deterministic replay procedure (initial $b_i^0$ plus the router-computed operators).
Using the abstraction map of Definition~\ref{def:abstraction}, the auditor computes, at each time $t$, the valuation of propositional atoms such as $\mathsf{Trig}_{i,j}$, $\mathsf{SocOnly}_i$, and $\mathsf{Top}_{i,h}$ by re-evaluating trigger satisfaction $E_i^t\models \varphi_{i,j}$ and by reading $\argmax(b_i^t)$ from the (replayed) router-held belief state.
The auditor then builds the finite run graph whose nodes are time indices (or logged router states) and whose edges represent one-step transitions labeled by the corresponding contract action (triggered operator or fallback).
The contract program $\pi(C_i)$ induces a derived edge relation $R_{\pi(C_i)}$ on this finite graph, and formulas of the form $[\pi(C_i)^\ast]\psi$ are checked by standard PDL model checking (implemented as reachability/fixpoint computation over the finite run graph).
When a specification fails, model checking returns a concrete counterexample path (a finite prefix of the log) witnessing the earliest violation; combined with the attached certificates and witness tokens, this yields a compact forensic explanation (e.g., ``a top-hypothesis change occurred at round $t$ while $\neg\mathsf{AnyTrig}_i$ held'' or ``$\mathsf{Trig}_{i,j}$ held only due to a stale/replayed token that fails $\mathrm{Fresh}_\Delta$ under re-validation'').

\section{Design and implementation guidelines}
\label{sec:design}

\PBRC{} is intentionally \emph{protocol-level}: it specifies which public artifacts (contracts, tokens, certificates, and audit traces) must exist for belief change to be admissible and verifiable, while remaining agnostic to how agents generate messages. The goal is to preserve open communication while making \emph{changes of mind} evidence-gated, auditable, and (under a state-holding router) enforceable.

This section distills implementation guidance that preserves the paper’s formal guarantees in deployed multi-agent systems. The recurring theme is to (i) keep triggers witnessable by small token sets, (ii) separate cryptographic validity from semantic applicability, and (iii) choose conservative operators and fallbacks that make safety robust to missing or delayed evidence.

\subsection{Contract patterns (templates)}
\label{sec:patterns}

In practice, many contracts can be assembled from a small library of reusable patterns. For predictability and efficiency, these are best restricted to token-witnessable fragments such as $\FO^{\mathsf{Tok}}_{\exists\wedge}$ (Definition~\ref{def:ec}), where admissibility can be certified by a small witness set and checked locally (Propositions~\ref{prop:linearcheck}--\ref{prop:canon_wit}).

\textbf{Verified contradiction (falsifier pattern).}
A conservative default is to permit decreases in belief mass (or argmax flips away from a hypothesis) only when a validated token \emph{contradicts} that hypothesis, e.g.,
\[
\exists \tau\; \big(\mathrm{Valid}(\tau)\wedge \mathrm{Contradicts}(\tau,h)\big).
\]
Operationally, contradiction triggers are the primary safeguard against ``wrong-but-sure'' cascades driven by social pressure: if no validated contradictor exists, social exchange cannot justify moving probability mass aggressively.

\textbf{Verified support (bounded confirmation).}
When positive evidence is available, include support triggers of the form
\[
\exists \tau\; \big(\mathrm{Valid}(\tau)\wedge \mathrm{Supports}(\tau,h)\big),
\]
but couple them with \emph{bounded} operators (e.g., capped log-odds steps or margin-based updates) to avoid runaway overconfidence from a single token or noisy verifier. In many settings, it is appropriate to require stronger conditions for \emph{increasing} confidence than for decreasing it (e.g., support requires $k$ independent attestations; contradiction requires one strong falsifier).

\textbf{Freshness gating (replay resistance).}
To control replay and context drift, augment evidence triggers with freshness predicates:
\[
\exists \tau\; \big(\mathrm{Valid}(\tau)\wedge \mathrm{Supports}(\tau,h)\wedge \mathrm{Fresh}_\Delta(\tau,t)\big),
\]
and analogously for contradiction. This implements the theoretical replay bounds (Theorem~\ref{thm:replay_bound}) and is typically cheap in the logical layer (constant-time arithmetic checks per token).

\textbf{Multi-attestation (robustness to validator compromise).}
When the validity layer admits attestations, replace single-token conditions by quorum-like requirements (``$k$-of-$K$'') at the validity layer (Section~\ref{sec:adversary}). This reduces the impact of any single compromised validator or verifier and yields an explicit robustness--liveness trade-off: larger $k$ improves integrity but may delay admissible revision when evidence is sparse.

\textbf{Surprise clauses (novel or unmodeled evidence).}
Because preregistration can be brittle, include a low-priority clause that activates when a validated token is present but does not match existing templates, e.g.,
\[
\exists \tau\; \big(\mathrm{Valid}(\tau)\wedge \mathrm{Unmodeled}(\tau)\big).
\]
The corresponding operator should be conservative: defer a decision, trigger a new tool call, or expand the hypothesis set under controlled rules (Section~\ref{sec:llm_inst}). Importantly, surprise clauses do not weaken the anti-conformity story: in the token-empty regime they never fire, and social-only exchange remains routed to fallback.

\textbf{Budget clauses (availability and DoS resilience).}
A practical attack surface is flooding events with many tokens (valid but irrelevant) to inflate validation latency. Two complementary mitigations are recommended:
(i) keep triggers witnessable with small $|W|$ so validation can short-circuit once a sufficient witness has been found (Simulation~V; Section~\ref{sec:sim_cost}), and
(ii) enforce explicit per-round validation budgets with a safe defer/hold fallback when budgets are exceeded. Budget predicates are naturally router-side (policy-level) and can be treated as part of the enforcement environment rather than the agent’s private computation.

\subsection{Token validity and semantic applicability}
\label{sec:token_design}

A common deployment failure is to conflate \emph{cryptographic validity} with \emph{semantic applicability}. In \PBRC{}, the validity layer is the boundary at which evidence becomes admissible; if this boundary is underspecified, the system becomes vulnerable to ``citation laundering'' (valid provenance, irrelevant content) or to entailment mislabeling (valid tokens, incorrect Supports/Contradicts mapping).

A robust token schema should therefore bind evidence to task context and support auditing. Concretely, tokens should (at minimum) include:
(i) a provenance binding (signatures/attestations),
(ii) a timestamp (for freshness checks),
(iii) a context binding (instance identifier, prompt hash, or task context hash),
and (iv) typed content (tool output schema, retrieval snippet hash, verifier judgment metadata).
At the predicate level, the router should be able to expose at least:
\[
\mathrm{Valid}(\tau),\quad \mathrm{Fresh}_\Delta(\tau,t),\quad \mathrm{Type}(\tau,\cdot),
\]
and, when feasible, a separate predicate capturing \emph{applicability} (e.g., ``the snippet is about the claim''), distinct from semantic entailment.
When entailment-based labeling is used, auditability improves substantially if tokens record the verifier identity/version and the exact inputs used to compute the label (so that the Supports/Contradicts relation is reproducible).

Finally, because social influence can act indirectly by steering evidence generation (biased queries, selective tool calls), the validity layer should, where possible, bind tokens to the \emph{inputs} that generated them (query hashes, tool-call arguments). This supports query-policy compliance mechanisms (Section~\ref{sec:steering_omission}) and helps prevent conformity-by-proxy.

\subsection{Worked contract: peer-pressure misinformation triage}
\label{sec:worked_contract}

We illustrate an implementable contract for a binary misinformation triage task in which peer interaction may be persuasive but unreliable.

\textbf{Hypotheses.}
Let $\Hh=\{\textsf{real},\textsf{fake}\}$ denote the claim status.

\textbf{Token families and validity layer.}
We use two token families:
(1) \emph{tool tokens} $\tau_{\mathrm{tool}}$ representing signed outputs of a fact-checking tool/API (structured verdict + provenance),
and (2) \emph{retrieval tokens} $\tau_{\mathrm{ret}}$ binding a snippet hash, URL, and timestamp to a retrieval procedure.
The validity layer verifies provenance, checks freshness, binds tokens to the instance context, and exposes $\mathrm{Supports}(\cdot,\cdot)$ / $\mathrm{Contradicts}(\cdot,\cdot)$ predicates (e.g., via a pinned verifier model with logged inputs/outputs).

\textbf{Triggers and priority.}
A conservative contract prioritizes high-integrity tool evidence over weaker retrieval evidence. One example trigger set is:
\begin{align*}
\varphi_{\mathrm{tool\_con}} &:= \exists \tau\; (\mathrm{Valid}(\tau)\wedge \mathrm{Type}(\tau,\mathrm{tool})\wedge \mathrm{Contradicts}(\tau,\textsf{real})),\\
\varphi_{\mathrm{tool\_sup}} &:= \exists \tau\; (\mathrm{Valid}(\tau)\wedge \mathrm{Type}(\tau,\mathrm{tool})\wedge \mathrm{Supports}(\tau,\textsf{real})),\\
\varphi_{\mathrm{ret\_con}} &:= \exists \tau\; (\mathrm{Valid}(\tau)\wedge \mathrm{Type}(\tau,\mathrm{ret})\wedge \mathrm{Contradicts}(\tau,\textsf{real})\wedge \mathrm{Fresh}_\Delta(\tau,t)).
\end{align*}
A typical priority policy checks tool-based triggers first and treats retrieval-based contradiction as weaker, freshness-gated evidence. (Equivalently, the priority order $\varphi_{\mathrm{tool\_con}} \prec \varphi_{\mathrm{tool\_sup}} \prec \varphi_{\mathrm{ret\_con}}$ is chosen so that tool triggers dominate retrieval triggers under the contract’s selection rule.)

\textbf{Operators and fallback.}
Operators should be bounded and auditable (e.g., capped log-odds steps). For example, $\varphi_{\mathrm{tool\_con}}$ applies a strong bounded decrease to $b(\textsf{real})$, $\varphi_{\mathrm{tool\_sup}}$ applies a bounded increase, and $\varphi_{\mathrm{ret\_con}}$ applies a smaller bounded decrease.
In trigger-inactive rounds (including social-only exchange), the enforced update is fallback; an argmax-preserving skeptical dilution $\delta_\lambda$ is a safe default when the objective is to prevent confidence escalation under persuasion.

\textbf{Certificates and post-hoc audit.}
Any top-hypothesis change is accompanied by a nonempty witness set (Theorem~\ref{thm:accountability}) identifying the validated token(s) that licensed the change. If the system later exhibits a first incorrect flip, the failure localization theorem (Theorem~\ref{thm:failure_taxonomy}) narrows the diagnosis to evidence integrity failures (forgery/compromise), contextual validity failures (replay/staleness), router unsoundness, or contract/operator misspecification, rather than untraceable ``social drift.''

\subsection{Contract verification and synthesis}

Contracts can be verified against global safety invariants expressed in \CDDL{} (Section~\ref{sec:cddl}) and audited post hoc using certificates alone. Beyond manual design, an important research direction is \emph{contract synthesis}: propose candidate trigger/operator sets from logs and desired invariants (e.g., ``no top-hypothesis flips without evidence''), then validate candidates by model checking against the contract logic and by empirical stress tests under verifier noise, incomplete routers, and adversarial token floods. Even in the absence of fully automated synthesis, the template library above provides a practical starting point for systematic contract engineering.

\section{Verification complexity and cost model}
\label{sec:complexity}

\PBRC{} shifts fragility from persuasive language to \emph{verifiable evidence}. The corresponding benefit is that admissibility becomes auditable and enforceable; the corresponding cost is explicit verification work: tokens must be validated, triggers must be checked, and certificates must be verified (and, under state-holding enforcement, updates must be computed by the router).
This section formalizes a simple cost model that separates (i) \emph{validity-layer} work (cryptographic checks, tool-log verification, retrieval integrity) from (ii) \emph{logical} work (trigger satisfaction and witness checking), and identifies contract fragments in which the logical component is lightweight and predictable.

\subsection{Basic size measures}

Let $E$ be an event (a multiset of messages) and let $\mathcal{T}(E)$ denote the set of \emph{validated} tokens extracted from $E$ under the chosen validity layer. We write
\[
|E| := \text{number of messages in }E,\qquad
|\mathcal{T}(E)| := \text{number of validated tokens in }E.
\]
For a certificate $\pi=(\ell,W)$, we write $|W|$ for witness size.

In typical deployments, the dominant cost is token validation (signature checks, provenance verification, tool-output verification, or pinned-model entailment checks). The purpose of the contract language is therefore not to make verification free, but to make the remaining logical checking \emph{explicit, bounded, and auditable}.

\subsection{Certificate verification as local checking}

Given a contract $\mathsf{C}_i$ for agent $i$, a router (or auditor) verifies a claimed admissible step by locally checking that the accompanying certificate is well-formed and sound.
Abstractly, verification decomposes into three checks:

\begin{enumerate}
\item \emph{Validity checks:} validate each token in $W$ according to the validity layer (cost depends on the evidence channel: cryptography, tool logs, retrieval attestations, verifier judgments, etc.);
\item \emph{Token soundness:} check that $W\subseteq \mathcal{T}(E)$ (equivalently, each witness token both appears in the event and passes validity);
\item \emph{Trigger satisfaction:} check that the trigger named by $\ell$ is satisfied by the event structure (often using only $W$ for witnessable existential triggers).
\end{enumerate}

\noindent\textbf{Cost decomposition.}
If we write $C_{\mathrm{val}}(\tau)$ for the cost to validate a token $\tau$, $C_{\mathrm{mem}}(W,E)$ for membership/soundness checks (e.g., by hashing token identifiers extracted from $E$), and $C_{\mathrm{trig}}(\ell,W,E)$ for trigger checking, then a schematic upper bound is
\[
C_{\mathrm{verify}}(\pi,E)\;\lesssim\;\sum_{\tau\in W} C_{\mathrm{val}}(\tau)\;+\;C_{\mathrm{mem}}(W,E)\;+\;C_{\mathrm{trig}}(\ell,W,E).
\]
When the router also \emph{computes} the update (state-holding enforcement), the additional cost is the update operator evaluation, which depends on the representation of beliefs and the operator family; this is orthogonal to certificate verification and can be profiled independently.

\subsection{Tractable trigger fragments}

Although triggers were defined in full first-order logic for generality, practical \PBRC{} contracts should use restricted fragments that keep $C_{\mathrm{trig}}$ small and make witness construction predictable.

\begin{Definition}[Token-existential conjunctive trigger fragment]
\label{def:ec}
A trigger is in the fragment $\FO^{\mathsf{Tok}}_{\exists\wedge}$ if it has the form
\[
\exists \tau_1\cdots \exists \tau_k\; \bigwedge_{m=1}^{M} R_m(\bar{\tau}_m,\bar{c}_m)
\]
where each $\tau_\ell$ ranges over tokens, each $R_m$ is an atomic predicate (including equalities/disequalities) that contains at least one token variable, and $\bar{c}_m$ are parameters/constants (agents, hypotheses, times) drawn from the public context. In particular, existential quantification is over tokens only.
\end{Definition}

This fragment is expressive enough to cover common contract templates such as: ``there exists a validated token contradicting hypothesis $h$,'' ``there exist $k$ distinct attestations,'' and ``there exists a fresh tool output with schema $\Sigma$ matching input hash $x$.'' It is also operationally attractive because satisfaction is \emph{witnessable} by a small set of tokens.

\noindent\textbf{Freshness and attestation checks.}
Freshness-gated triggers add only constant-time arithmetic checks per token (compare $t$ and $\mathrm{Time}(\tau)$), while $k$-of-$K$ attestation adds $O(k)$ signature/attestation verifications in the validity layer. The logical trigger check remains within the same conjunctive fragment.

\begin{Proposition}[Bounded witness size and linear-time checking for $\FO^{\mathsf{Tok}}_{\exists\wedge}$]
\label{prop:linearcheck}
Let $\chi\in \FO^{\mathsf{Tok}}_{\exists\wedge}$ have $k$ existential variables. Then there exists a witness extractor that returns a witness set $W$ with $|W|\le k$ whenever $E\models \chi$ and $\neg\mathrm{SocialOnly}(E)$. Moreover, given $W$, checking whether $W$ witnesses $\chi$ can be done in time $O(M)$ after constant-time predicate lookups (e.g., via indexing).
\end{Proposition}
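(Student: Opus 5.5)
The plan is to build the witness directly from a satisfying assignment of the existential variables. Assume $E\models\chi$ with
\[
\chi=\exists\tau_1\cdots\exists\tau_k\,\bigwedge_{m=1}^M R_m(\bar\tau_m,\bar c_m),
\]
and fix a deterministic enumeration of $\mathcal{T}_{\mathrm{all}}(E)$, for example by token identifier or hash. Let $(\hat\tau_1,\dots,\hat\tau_k)$ be the lexicographically least tuple of tokens such that $\mathcal{S}(E)\models\bigwedge_m R_m$ under the assignment $\tau_\ell\mapsto\hat\tau_\ell$. Such a tuple exists because $E\models\chi$, and choosing the least one makes the extractor a deterministic function, as Definition~\ref{def:token_restricted_sat} requires. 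Define $\mathrm{Wit}_\chi(E):=\{\hat\tau_1,\dots,\hat\tau_k\}$ when $E\models\chi$, and $\emptyset$ otherwise. Then $|W|\le k$ is immediate, since repeated values collapse.

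Next I will verify that $W$ is a witness, i.e.\ $E\models_W\chi$. In $\mathcal{S}_W(E)$ the token universe is $W$, and the token-dependent relations are the restrictions of those in $\mathcal{S}(E)$. Each $R_m$ is atomic, and under the chosen assignment all of its token arguments lie in $W$. Its non-token arguments $\bar c_m$ are untouched by the restriction. So the truth value of each $R_m$ is the same in $\mathcal{S}_W(E)$ as in $\mathcal{S}(E)$, and this also covers equalities and disequalities among tokens. Because the existential quantifiers range only over tokens, the same assignment witnesses $\chi$ in $\mathcal{S}_W(E)$.

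The main obstacle is token soundness, $W\subseteq\mathcal{T}(E)$. A satisfying assignment could in principle use mentioned but unvalidated tokens. I will handle this by adopting the convention, implicit in the fragment's templates, that each token variable occurs in a $\mathrm{Valid}(\tau_\ell)$ conjunct. Under that convention validity is forced, $W\subseteq\mathcal{T}(E)$, and $W\neq\emptyset$ whenever $k\ge1$. Where the convention fails, I will conjoin $\mathrm{Valid}(\tau_\ell)$ to $\chi$ for each $\ell$; this keeps the trigger in the fragment and, for evidential use, loses nothing. The degenerate case $k=0$ is excluded, since every atom must contain a token variable. The hypothesis $\neg\mathrm{SocialOnly}(E)$ then matches the non-vacuity clause of Definition~\ref{def:wit}.

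For checking time, take a claimed $W$ together with the candidate assignment, which a certificate can list in order, or which can be found by trying the at most $|W|^k\le k^k$ assignments, a constant for a fixed trigger. Evaluating the conjunction costs one lookup per atom, hence $O(M)$ with constant-time indexed predicate lookups. The restriction to $W$ needs no extra work, because membership of the assigned tokens in $W$ holds by construction.
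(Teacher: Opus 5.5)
Your proposal is correct and follows the paper's proof: fix a satisfying assignment, return its at most $k$ instantiated tokens as $W$, and verify $W$ by checking the $M$ conjuncts under that assignment. Your additions tighten that argument in three places. The lexicographically least choice gives the determinism a witness extractor requires. Shipping the assignment with the certificate is what actually gives $O(M)$, which the paper tacitly assumes. Your $\mathrm{Valid}(\tau_\ell)$-conjunct convention supplies the token-soundness condition $W\subseteq\mathcal{T}(E)$, which the paper's two-line proof leaves implicit. That condition is genuinely needed, since a fragment trigger such as $\exists\tau\,\mathrm{Supports}(\tau,h)$ could otherwise hold only via an unvalidated token.
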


\begin{proof}
Fix any satisfying assignment to the existential variables in $\chi$ and return the instantiated tokens as $W$ (at most one token per existential variable). Verification reduces to checking each conjunct $R_m(\bar{x}_m)$ under that assignment, which is $M$ predicate lookups.
\end{proof}

\noindent\textbf{Canonical witnesses for determinism and composability.}
For auditing, reproducibility, and protocol interoperability, it is useful to select witnesses deterministically rather than relying on an arbitrary satisfying assignment.
Fix a public total order $\preceq_{\mathsf{tok}}$ on tokens (e.g., lexicographic order of token hashes).
For $\FO^{\mathsf{Tok}}_{\exists\wedge}$ triggers, one can define a \emph{canonical} witness extractor that returns the $\preceq_{\mathsf{tok}}$-least witness set (of size $\le k$) among all witnessing assignments. This yields deterministic certificates and supports composability across chained steps (e.g., by unioning canonical witnesses when a contract requires cumulative evidence).

\begin{Proposition}[Existence of canonical witnesses in $\FO^{\mathsf{Tok}}_{\exists\wedge}$]
\label{prop:canon_wit}
For any $\chi\in\FO^{\mathsf{Tok}}_{\exists\wedge}$, there exists a canonical witness extractor $\mathrm{Wit}_\chi^{\mathrm{can}}$ that is token-sound (Definition~\ref{def:wit}), has $|W|\le k$ for $k$ existential variables, and is token-invariant whenever the underlying predicates are token-invariant.
\end{Proposition}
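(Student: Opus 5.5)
The plan is to construct $\mathrm{Wit}_\chi^{\mathrm{can}}$ explicitly and then check the three properties one at a time. Write $\chi=\exists\tau_1\cdots\exists\tau_k\,\bigwedge_{m}R_m$. Given an event $E$, call a tuple $\bar\sigma=(\sigma_1,\dots,\sigma_k)$ a \emph{witnessing assignment} if each $\sigma_\ell\in\mathcal{T}(E)$ and $\mathcal{S}(E)\models\bigwedge_m R_m[\bar\tau:=\bar\sigma]$.

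The extractor is defined as follows:
\begin{itemize}
\item Let $A(E)$ be the set of witnessing assignments. Since $\mathcal{T}(E)$ is finite (events are finite multisets of messages), $A(E)$ is finite.
\item Order $A(E)$ by comparing the sorted token sets $\{\sigma_1,\dots,\sigma_k\}$ lexicographically under $\preceq_{\mathsf{tok}}$. Ties between assignments with the same set are harmless, since only the set is returned.
\item If $A(E)\neq\emptyset$, set $\mathrm{Wit}_\chi^{\mathrm{can}}(E)$ to the least such token set. Otherwise return $\emptyset$.
\end{itemize}
Because $\preceq_{\mathsf{tok}}$ is a public total order, the lexicographic order on finite sets is total and the minimum exists. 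The function is therefore a deterministic function of $E$.

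\emph{Token soundness and size.} By construction every returned set consists of $\sigma_\ell\in\mathcal{T}(E)$, so $\mathrm{Wit}^{\mathrm{can}}_\chi(E)\subseteq\mathcal{T}(E)$. It has at most $k$ elements. The only subtle point is whether restricting quantifiers to $\mathcal{T}(E)$ loses satisfaction. The quantifiers in $\mathcal{S}(E)$ range over $\mathcal{T}_{\mathrm{all}}(E)$, including invalid tokens. So $E\models\chi$ could a priori hold only through an invalid token, and then $A(E)=\emptyset$.

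I expect this to be the main obstacle. I would handle it by reading the definition the way Proposition~\ref{prop:linearcheck} does: the extractor is only required to produce a witness when one exists among validated tokens. Token soundness (Definition~\ref{def:wit}, clause 1) holds unconditionally in either case.

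For non-vacuity, I would note that the templates in the fragment conjoin $\mathrm{Valid}(\tau_\ell)$. Alternatively, I would state the mild assumption that every token variable is guarded by $\mathrm{Valid}$. Under that assumption any satisfying assignment lies in $\mathcal{T}(E)$, so $A(E)\neq\emptyset$ whenever $E\models\chi$, and the returned set is nonempty when $k\ge1$. The restricted structure $\mathcal{S}_W(E)$ then still satisfies $\chi$, because all conjuncts are atomic and mention only tokens in $W$ plus public constants.

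\emph{Token invariance.} Suppose $E\equiv_{\Toks}E'$, so $\mathcal{T}(E)=\mathcal{T}(E')$, and the underlying atomic predicates are token-invariant, meaning their truth on validated tokens depends only on the token-level facts. Then the candidate tuples are the same for $E$ and $E'$, and each conjunct has the same truth value under each tuple. Hence $A(E)=A(E')$.

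The minimization uses only the fixed order $\preceq_{\mathsf{tok}}$, so the selected least set coincides: $\mathrm{Wit}^{\mathrm{can}}_\chi(E)=\mathrm{Wit}^{\mathrm{can}}_\chi(E')$. This is exactly Definition~\ref{def:tokeninv}.
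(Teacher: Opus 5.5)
Your proposal is correct and follows the paper's own argument. Finiteness of $\mathcal{T}(E)$ makes the set of witnessing assignments finite, you return the $\preceq_{\mathsf{tok}}$-least induced token set, and token-invariance holds because the construction depends only on $\mathcal{T}(E)$ and the public order; your extra care about quantifiers ranging over $\mathcal{T}_{\mathrm{all}}(E)$ and about non-vacuity goes beyond what the statement claims, but it usefully makes explicit an assumption the paper's terse proof leaves implicit.
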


\begin{proof}
Because $\mathcal{T}(E)$ is finite for any finite event $E$, the set of witnessing assignments is finite. Choose the $\preceq_{\mathsf{tok}}$-least witness set induced by any witnessing assignment and return it. Token-invariance follows because the construction depends only on the token set and the public order.
\end{proof}

\subsection{General first-order triggers}

For unrestricted $\FO$, model checking has high worst-case \emph{combined complexity} when formulas are treated as part of the input.
However, PBRC contracts are typically \emph{fixed} per deployment (or per agent role), and the relevant regime is therefore \emph{data complexity}: evaluating a fixed trigger against an event structure whose size grows with interaction.
In this regime, satisfaction can be decided in time polynomial in the size of the structure induced by $E$ (with constants determined by the trigger), and witness-carrying certificates reduce the need for global search by making the intended satisfying assignment explicit.

\noindent\textbf{Caching and audit-log reduction under token invariance.}
When contracts are token-invariant and enforcement is sound, long-run verification can often be amortized.
Specifically, token-sufficiency and token-trace equivalence (Theorems~\ref{thm:tokensuff} and \ref{thm:tokentrace}) imply that the sequence of validated token sets (or of accepted witnesses) is an adequate summary of the enforced behavior. This permits caching validated tokens and indexing predicate facts by token identifiers, reducing both storage and re-validation work in long deliberations.

\begin{Remark}
The protocol goal is not to make verification free, but to make it \emph{explicit, bounded, and auditable}. In many deployments, the dominant cost is token validation (cryptography, tool logs, retrieval checks, verifier judgments), while logical trigger checking is small when contracts are succinct and witnessable. This motivates contract design in tractable fragments (e.g., $\FO^{\mathsf{Tok}}_{\exists\wedge}$, Datalog-style templates) together with explicit per-round validation budgets and deferral fallbacks when budgets are exceeded.
\end{Remark}

\section{Empirical validation and evaluation protocols}
\label{sec:eval}

PBRC is a protocol primitive; its primary claims are logical (admissibility, auditability, and separation of persuasion from validated evidence). Empirical validation is nevertheless essential for two reasons: (i) to demonstrate, in controlled settings, the qualitative phenomena targeted by the theory (e.g., conformity-driven overconfidence under token-empty interaction and its elimination under enforcement); and (ii) to quantify operational trade-offs that the theory highlights but does not fix numerically (verification overhead, liveness under incomplete validation, and robustness to semantic-labeling noise).

We organize the empirical evaluation as follows. First, we define the metrics and reproducibility package (Section~\ref{sec:eval_metrics}). Second, we present Simulation I, which demonstrates the elimination of social-only wrong-but-sure cascades (Section~\ref{sec:sim_cascades}). Third, we present Simulation II, which illustrates token-sufficiency and persuasion separation (Section~\ref{sec:sim_token_sufficiency}). Fourth, we present Simulation III, which shows how topology affects \PBRC{} only via token arrival times (Section~\ref{sec:sim_topology}). Finally, we discuss how \PBRC{} can be integrated with existing LLM multi-agent benchmarks (Section~\ref{sec:benchmark_integration}).

\subsection{Metrics and reproducibility}
\label{sec:eval_metrics}

\noindent\textbf{Metrics.}
Across experiments we report (a) \emph{accuracy} when a ground truth label exists; (b) \emph{confidence} defined as $\conf(b):=\max_{h\in\Hh} b(h)$; (c) \emph{wrong-but-sure cascade rate}, instantiated below as a high-confidence incorrect consensus event; (d) \emph{flip statistics} relative to an independent baseline (harmful/beneficial/neutral flips); and (e) basic \emph{cost proxies} (e.g., number of token validations) to illuminate denial-of-service surfaces. When reporting proportions, we use Wilson 95\% confidence intervals unless otherwise stated.

\noindent\textbf{Reproducibility package.}
To make the empirical story concrete and self-contained, we provide a reproducibility package (Supplementary Materials) containing: (i) fully runnable simulations (no LLM required) that validate core qualitative implications of the theory and quantify safety--liveness and cost trade-offs; and (ii) benchmark adapters for \textsc{BenchForm} and \textsc{KAIROS} enabling PBRC-enforced evaluation when API/model access is available. The simulation figures and tables reported in this section are generated directly by this package using fixed seeds and explicit configuration files.

\subsection{Simulation I: social-only wrong-but-sure cascades are eliminated by PBRC}
\label{sec:sim_cascades}

This experiment targets the primary failure mode PBRC is designed to prevent: \emph{token-empty} deliberation (no externally validated evidence) producing a confident but incorrect group outcome via social reinforcement.

\noindent\textbf{Setup.}
We consider $n=20$ agents and a binary hypothesis space $\Hh=\{h_0,h_1\}$ with ground truth $h_0$.
Each agent maintains an explicit belief $b_i^t\in\Delta(\Hh)$.
Initial beliefs are drawn i.i.d.\ as $b_i^0(h_0)\sim \mathrm{Beta}(5,5)$, placing agents near the competence boundary so that either hypothesis can dominate early aggregates.
We compare two dynamics over $T=10$ rounds:

\begin{enumerate}
\item \textbf{Baseline (social pooling + confidence amplification).}
On a graph $G$, agent $i$ forms a neighbor aggregate $s_i^t$ and updates by a degree-modulated geometric mixture followed by sharpening:
\[
b_i^{t+1}\;\propto\;\big(b_i^t\big)^{1-w_i}\cdot\big(s_i^t\big)^{w_i},
\qquad
b_i^{t+1}\leftarrow \mathrm{Sharpen}_\gamma(b_i^{t+1}),
\]
where $s_i^t=\frac{1}{|N(i)\cup\{i\}|}\sum_{j\in N(i)\cup\{i\}} b_j^t$ is the arithmetic mean of neighbor beliefs,
$w_i=\min\{0.95,\,w_0+w_s\cdot \deg(i)/(n-1)\}$ controls susceptibility to neighbors, and
\[
\mathrm{Sharpen}_\gamma(p)(h):=\frac{p(h)^\gamma}{\sum_{h'\in\Hh} p(h')^\gamma}
\]
models confidence amplification.
\item \textbf{PBRC (evidential contract in token-empty rounds).}
No validated tokens are present (all events are \textsc{SocialOnly}).
Routers reject empty-witness certificates, hence no trigger fires and the enforced update is fallback.
We use the argmax-preserving skeptical-dilution fallback
$b_i^{t+1}=(1-\lambda)b_i^t+\lambda\cdot \mathrm{Unif}$,
which satisfies the social-only non-amplification conditions in Theorem~\ref{thm:nocascade}.
\end{enumerate}
Unless otherwise stated, $(w_0,w_s,\gamma,\lambda)=(0.4,0.5,2.0,0.1)$.

\noindent\textbf{Representative run.}
Figure~\ref{fig:sim_example_run} shows a representative complete-graph trajectory in which a minority of agents begins confidently wrong.
The baseline dynamics amplify confidence and can pull the population toward the confident minority, illustrating a wrong-but-sure cascade mechanism driven purely by social reinforcement.
In contrast, PBRC blocks belief revision in token-empty rounds and prevents confidence amplification, consistent with the theory.

\begin{figure}[t]
\centering
\includegraphics[width=0.4\linewidth]{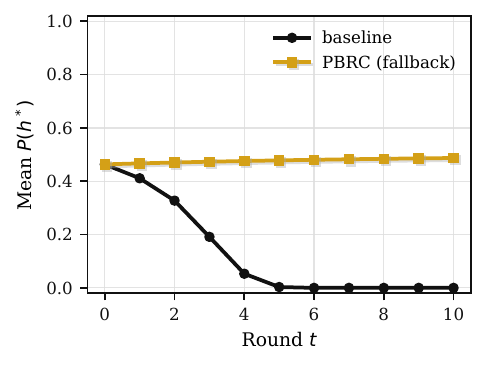}
\includegraphics[width=0.4\linewidth]{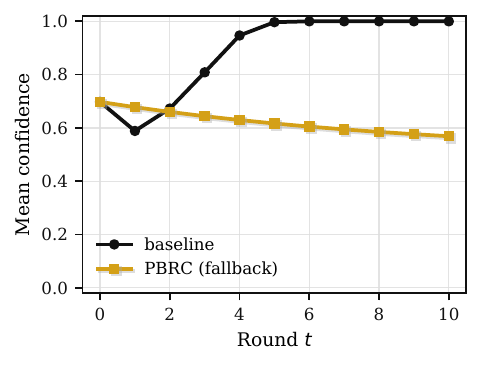}
\caption{Representative complete-graph run: baseline (social pooling + sharpening) versus PBRC fallback under token-empty interaction. Left: mean belief in the true hypothesis. Right: mean confidence $\conf(b)=\max_h b(h)$.}
\label{fig:sim_example_run}
\end{figure}

\noindent\textbf{Cascade rate versus topology.}
We operationalize a \emph{wrong-but-sure cascade} as the event that all agents end with $\argmax b_i^T\neq h_0$ and $\conf(b_i^T)\ge 0.9$.
Across $500$ trials per topology, the baseline exhibits substantial cascade rates on denser graphs, while PBRC eliminates these purely social cascades (Table~\ref{tab:cascade_rates} and Figure~\ref{fig:sim_cascade_rates}).
This aligns with the theoretical guarantee that, under evidential contracts with conservative fallback, token-empty interaction cannot create confidence-amplifying incorrect consensus.

\begin{table}[t]
\centering
\resizebox{\linewidth}{!}{
\begin{tabular}{lccc}
\toprule
Topology & Ring & ER($p{=}0.3$) & Complete \\
\midrule
Baseline cascade rate & $0.002$ [0.000, 0.011] & $0.256$ [0.220, 0.296] & $0.488$ [0.444, 0.532] \\
PBRC cascade rate     & $0.000$ [0.000, 0.008] & $0.000$ [0.000, 0.008] & $0.000$ [0.000, 0.008] \\
\midrule
Baseline mean confidence at $T$ & $0.988$ & $0.982$ & $0.995$ \\
PBRC mean confidence at $T$     & $0.542$ & $0.543$ & $0.543$ \\
\bottomrule
\end{tabular}}
\caption{Token-empty deliberation: wrong-but-sure cascade rate and mean confidence at $T$ (500 trials; $n{=}20$, $T{=}10$; fixed seed). Brackets show 95\% Wilson confidence intervals for cascade rates. In this token-empty setting, PBRC eliminates confidence amplification by construction; accuracy is governed by initial beliefs rather than evidence-based correction.}
\label{tab:cascade_rates}
\end{table}

\begin{figure}[t]
\centering
\includegraphics[width=0.4\linewidth]{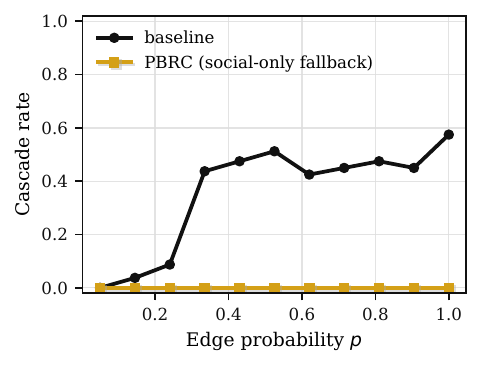}
\caption{Connectivity sweep on Erd\H{o}s--R\'enyi graphs: baseline wrong-but-sure cascade rate increases with edge probability $p$, while PBRC's token-empty enforcement keeps the rate at zero.}
\label{fig:sim_cascade_rates}
\end{figure}

\subsection{Simulation I(b): ablation over skeptical dilution and safety--liveness trade-offs}
\label{sec:sim_ablation}

Theorem~\ref{thm:nocascade} is qualitative: in token-empty rounds, if enforcement reduces updates to fallback and the fallback is non-amplifying (and argmax-preserving for top-hypothesis stability), purely social exchange cannot generate wrong-but-sure cascades.
Nevertheless, the fallback choice governs an explicit policy trade-off between epistemic conservatism and responsiveness.

\noindent\textbf{Setup and results.}
We ablate the skeptical-dilution parameter $\lambda\in\{0,0.02,0.05,0.1,0.2,0.4\}$.
Figure~\ref{fig:sim_ablation} summarizes the outcome.
Across topologies, PBRC's wrong-but-sure cascade rate remains $0$ throughout this ablation, while the baseline exhibits nontrivial cascade rates.
At the same time, increasing $\lambda$ decreases mean confidence at $T$ by driving beliefs toward uniform more aggressively.
This supports the design guidance in Section~\ref{sec:design}: $\lambda$ is best treated as an explicit governance parameter that regulates how strongly the system de-amplifies unvalidated social influence, rather than as a correctness-critical constant.

\begin{figure}[t]
\centering
\begin{minipage}{0.4\linewidth}
\centering
\includegraphics[width=\linewidth]{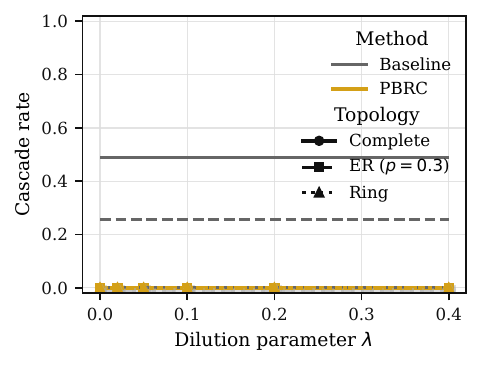}
\end{minipage}\hfill
\begin{minipage}{0.4\linewidth}
\centering
\includegraphics[width=\linewidth]{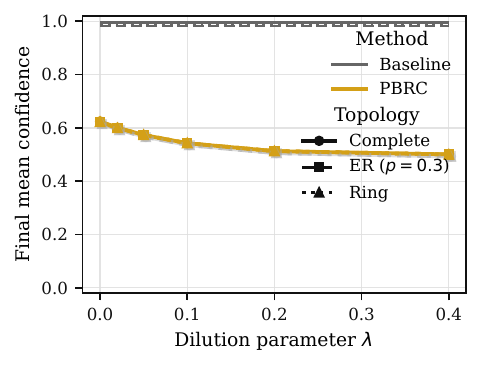}
\end{minipage}
\caption{Ablation over fallback dilution $\lambda$. Left: wrong-but-sure cascade rate remains $0$ for \PBRC{} across topologies, while the baseline exhibits higher cascade rates. Right: mean confidence at $T$ decreases as $\lambda$ increases (baseline shown as reference), controlling overconfidence while preserving the argmax.}
\label{fig:sim_ablation}
\end{figure}

\subsection{Simulation II: token-sufficiency / persuasion separation}
\label{sec:sim_token_sufficiency}
\label{sec:sim_tokensuff}

This experiment empirically instantiates Theorem~\ref{thm:tokensuff} in a minimal, fully enforceable setting.

\noindent\textbf{Setup.}
We fix a token-invariant PBRC contract with two evidential triggers:
``$\exists$ validated token supporting $h_0$'' and ``$\exists$ validated token supporting $h_1$''.
The router holds the belief state and computes the contract operator (matching the enforceability boundary assumed in Section~\ref{sec:enforcement}).
We generate $2000$ pairs of events $(E,E')$ that are \emph{token-equivalent} (same validated token set) but differ arbitrarily in rhetorical content (sender identity, message text, and confidence fields).

\noindent\textbf{Result.}
Across all trials, the enforced updates match exactly (observed mismatch rate $0$), confirming that token-equivalent events induce identical enforced belief updates under token-invariant contracts, regardless of rhetorical variations.

\subsection{Simulation III: topology affects PBRC only via token arrival times}
\label{sec:sim_topology}

This experiment validates the topological characterization of evidence closure under flooding dissemination.

\noindent\textbf{Setup and result.}
We place a unique token at each node and simulate flooding under several connected graph families (ring, ER, star, complete, grid).
For each graph, we measure the time until every agent has received every token.
Consistent with Corollary~\ref{cor:diameter_tight}, the observed closure time equals the graph diameter in all tested instances (Figure~\ref{fig:sim_diameter}).
This empirically supports the interpretation that, under flooding, topology influences PBRC only through graph-distance-delimited token exposure.

\begin{figure}[t]
\centering
\includegraphics[width=0.4\linewidth]{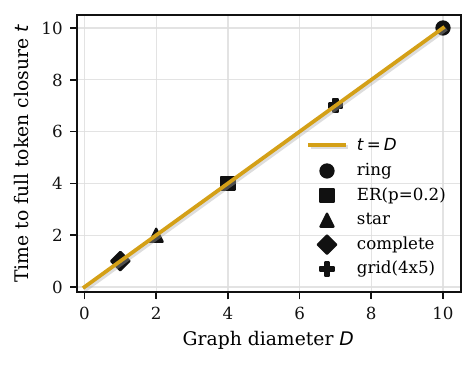}
\caption{Flooding dissemination: time to global token coverage equals the graph diameter (unique token per node).}
\label{fig:sim_diameter}
\end{figure}

\subsection{Simulation IV: sound-but-incomplete routers preserve safety but reduce liveness}
\label{sec:sim_incomplete}

Many results in Sections~\ref{sec:main}--\ref{sec:normalform} assume routers that are sound and complete with respect to token validation.
Operationally, incompleteness (false negatives) is common due to timeouts, conservative parsing, or strict rate limits.

\noindent\textbf{Setup.}
We simulate a single-agent setting with a persistent valid evidence token supporting the true hypothesis.
The router is sound but incomplete: with probability $q$ it fails to recognize the token in a given round (false negative), and thus rejects the evidential trigger even though it is semantically satisfied.

\noindent\textbf{Result.}
As $q$ increases, evidence-based correction is delayed, and the mean time to adopt the correct belief increases substantially (Figure~\ref{fig:sim_incomplete_router}).
Consistent with Proposition~\ref{prop:safety_liveness_router}, safety remains intact---no belief flip occurs without a validated witness---while liveness degrades as evidence recognition becomes less reliable.

\begin{figure}[t]
\centering
\includegraphics[width=0.4\linewidth]{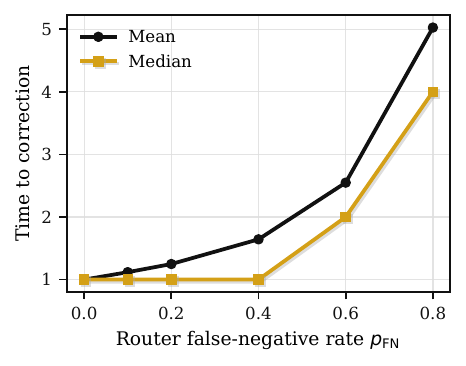}
\caption{Sound-but-incomplete routers: liveness degrades with false negatives (mean time to correct belief).}
\label{fig:sim_incomplete_router}
\end{figure}

\subsection{Simulation V: token flooding and verification cost}
\label{sec:sim_cost}

PBRC's enforceability rests on token validation and trigger checking, which exposes an availability surface: flooding events with many tokens (possibly valid but irrelevant) increases verification latency and can induce denial-of-service behavior if not budgeted.

\noindent\textbf{Setup.}
We model an event containing $N$ tokens of which exactly one is relevant to the highest-priority trigger.
We compare (i) \emph{full validation} (validate all $N$ tokens) and (ii) \emph{short-circuit validation} (validate tokens until a sufficient witness for the highest-priority trigger is found).
Under random ordering, the expected number of checks for short-circuiting is approximately $(N{+}1)/2$, whereas adversarial ordering negates this benefit.

\noindent\textbf{Result and implications.}
Figure~\ref{fig:sim_cost_dos} confirms the linear scaling of full validation and the partial mitigation offered by short-circuiting under benign conditions.
The experiment motivates two complementary mitigation patterns developed in Section~\ref{sec:complexity}: (a) adopt token-witnessable trigger fragments with small witnesses (enabling early stopping), and (b) enforce per-round validation budgets with explicit deferral fallbacks when budgets are exceeded.

\begin{figure}[t]
\centering
\includegraphics[width=0.4\linewidth]{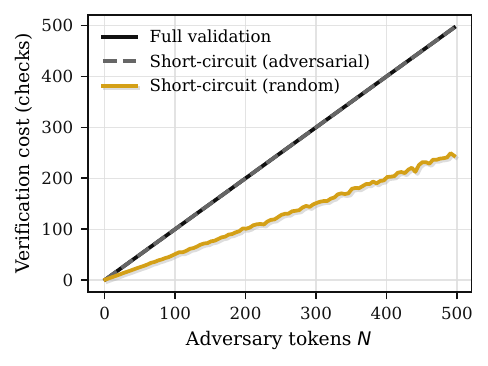}
\caption{Token flooding stress test: validation cost (number of checks) vs token count $N$. Full validation scales linearly; short-circuiting reduces expected cost under random order but not under adversarial ordering.}
\label{fig:sim_cost_dos}
\end{figure}

\subsection{Simulation VI: evidence-enabled \PBRC under token flow and verifier noise}
\label{sec:sim6}

Simulations~I--V primarily validate PBRC in token-empty regimes or in invariance settings without an explicit end-to-end accuracy objective.
To address the key empirical question---whether PBRC can \emph{enable beneficial revision when evidence exists} while retaining protection against purely social cascades---we add a stylized token-rich experiment that exercises the full trigger/operator mechanism under imperfect semantic labeling.

\noindent\textbf{Setup.}
We use $\Hh=\{0,1\}$ with ground truth $h^\star=1$ and $n=25$ agents on a connected Erd\H{o}s--R\'enyi graph $G(n,p)$ with $p=0.15$.
Initial beliefs are sampled i.i.d.\ as $b_i^0\sim \mathrm{Beta}(4,6)$ (mean $0.4$), so the population prior is biased toward the wrong hypothesis.
A fraction $\rho=0.2$ of agents initially possess distinct truth-sound evidence tokens supporting $h^\star$, and tokens flood along edges each round.
We compare three dynamics over $T=8$ rounds:
(i) \emph{no interaction} (beliefs fixed at $b_i^0$);
(ii) a \emph{social + additive evidence} baseline that pools neighbor beliefs and sharpens confidence, while incorporating evidence in log-odds with a fixed step size; and
(iii) an \emph{evidence-enabled PBRC} contract with a $k$-witness trigger (default $k=3$): an agent may change its argmax only after receiving at least $k$ independent supporting tokens, otherwise it applies an argmax-preserving dilution fallback $b\leftarrow (1-\lambda)b+\lambda/2$ with $\lambda=0.1$.
To model semantic-labeling errors in the \textsc{Supports}/\textsc{Contradicts} layer, each token label is flipped with probability $\varepsilon$.

\noindent\textbf{Results.}
Figure~\ref{fig:sim6} (left) reports mean accuracy over rounds (400 trials; 95\% CI) at $\varepsilon=0.1$.
The pooling+sharpening baseline can collapse into wrong-but-confident behavior that is difficult to correct once social reinforcement dominates early rounds.
In contrast, evidence-enabled PBRC remains cascade-resistant in token-sparse phases and converges to high accuracy once corroborating evidence propagates.
Figure~\ref{fig:sim6} (right) varies verifier noise $\varepsilon$ and compares $k\in\{1,3,5\}$.
As expected, single-witness triggers ($k=1$) are more sensitive to labeling error, while multi-witness triggers widen the robustness envelope at the expense of slower correction, illustrating a concrete robustness--liveness trade-off aligned with the contract-design discussion in Sections~\ref{sec:design}--\ref{sec:complexity}.

\begin{figure}[t]
\centering
\begin{minipage}{0.4\linewidth}
\centering
\includegraphics[width=\linewidth]{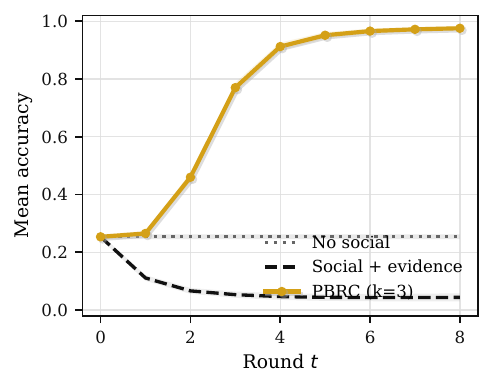}
\end{minipage}\hfill
\begin{minipage}{0.4\linewidth}
\centering
\includegraphics[width=\linewidth]{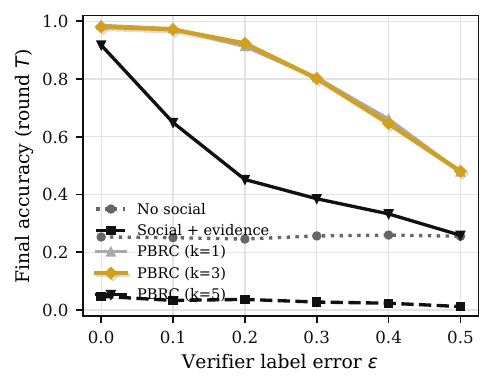}
\end{minipage}
\caption{Simulation VI: evidence-enabled \PBRC under verifier noise. Left: mean accuracy over rounds for $\varepsilon=0.1$ (95\% CI). Right: final accuracy at $T$ vs verifier label error $\varepsilon$ for $k \in \{1,3,5\}$, illustrating the robustness--liveness trade-off.}
\label{fig:sim6}
\end{figure}

\subsection{LLM benchmark evaluation protocols and pipeline}
\label{sec:benchmark_integration}
\label{sec:bench_protocols}

The simulations above validate protocol-level claims in controlled environments.
For end-to-end validation with LLM agents, we recommend benchmark evaluations that isolate (i) social influence, (ii) evidence availability, and (iii) enforcement effects.

\noindent\textbf{Paired-run methodology (token-empty enforcement).}
When external evidence tokens are absent (or intentionally disabled), PBRC reduces to an enforceable ``no empty-witness flip'' constraint.
Operationally, this can be evaluated via paired runs: for each instance, run the subject agent independently (\textsc{Raw}) and under peer interaction (\textsc{Social}); then apply PBRC as an output-gated enforcement layer that rejects answer flips relative to \textsc{Raw} in token-empty rounds.
This isolates conformity effects and measures the extent to which peer interaction induces harmful versus beneficial changes without evidence.

\noindent\textbf{Evidence-enabled methodology (token-rich enforcement).}
When evidence tokens are present, evaluation should explicitly quantify how often PBRC admits evidence-gated revisions, how these revisions affect accuracy/calibration, and what overhead is incurred by validation and witness construction.
A practical instantiation is to use typed tokens (e.g., tool outputs and retrieval evidence) together with a pinned semantic-labeling component that produces auditable entailment judgments; contract triggers then require corroboration (e.g., $k$-of-$m$ attestations) before permitting high-impact belief changes.

\noindent\textbf{Benchmarks.}
We recommend two complementary suites:
(i) \textsc{BenchForm} \cite{weng2025benchform}, which probes conformity under multiple interaction protocols on reasoning-intensive tasks; and
(ii) \textsc{KAIROS} \cite{song2025kairos}, which evaluates peer pressure and multi-agent social interaction effects.
The reproducibility package includes adapters implementing the paired-run methodology and emitting auditable traces with PBRC certificates. Because LLM benchmarking depends on model access, reported outcomes are not fixed constants of the paper; however, the evaluation \emph{procedure} is precise, logged, and repeatable given access to the relevant model endpoints.

\noindent\textbf{End-to-end result on \textsc{KAIROS} (full test split; token-empty PBRC enforcement).}
We instantiated the paired-run methodology on the full \textsc{KAIROS} evaluation split ($n=3000$) using \texttt{gpt-4o} at temperature $0.7$ in reflection mode.
We compare an independent \textsc{Raw} run to a peer-interaction \textsc{Social} run, and then apply PBRC in token-empty enforcement mode (no external evidence tokens), which rejects any answer flip relative to \textsc{Raw}.
Table~\ref{tab:kairos_full} summarizes the outcome.
Peer interaction decreases accuracy from $0.724$ to $0.678$ (a $4.6$pp drop); the paired degradation is significant under an exact McNemar test ($p<10^{-7}$).
The \textsc{Social} run differs from \textsc{Raw} on $838/3000$ instances; among these flips, $411$ are harmful (correct$\to$incorrect), $273$ are beneficial, and $154$ leave correctness unchanged.
After reflection, \textsc{Social\_reflected} remains below independent accuracy ($0.698$) and differs from \textsc{Raw} on $697/3000$ instances.
PBRC rejects token-empty flips and therefore matches the independent accuracy $0.724$, eliminating all $411$ harmful flips (and also the $273$ beneficial flips) in this regime.
Figure~\ref{fig:kairos_plots} visualizes accuracies (with 95\% Wilson intervals) and the flip breakdown.

\begin{table}[t]
\centering \small
\label{tab:kairos_full}
\begin{tabular}{lcccc}
\toprule
Condition & Accuracy & Flips vs.\ \textsc{Raw} & Harmful flips & Beneficial flips \\
\midrule
\textsc{Raw} (independent) & $0.724$ & --- & --- & --- \\
\textsc{Social} (peers) & $0.678$ & $838/3000$ & $411$ & $273$ \\
\textsc{Social\_reflected} (peers+reflection) & $0.698$ & $697/3000$ & $327$ & $249$ \\
\PBRC (token-empty gate) & $0.724$ & $0$ & $0$ & $0$ \\
\bottomrule
\end{tabular}
\caption{\textsc{KAIROS} full evaluation ($n=3000$, \texttt{gpt-4o}, reflection mode). \PBRC is run in token-empty enforcement mode (no evidence tokens), so any disagreement between \textsc{Raw} and peer-influenced outputs is rejected.}
\end{table}

\begin{figure}[t]
\centering
\begin{minipage}{0.4\linewidth}
\centering
\includegraphics[width=\linewidth]{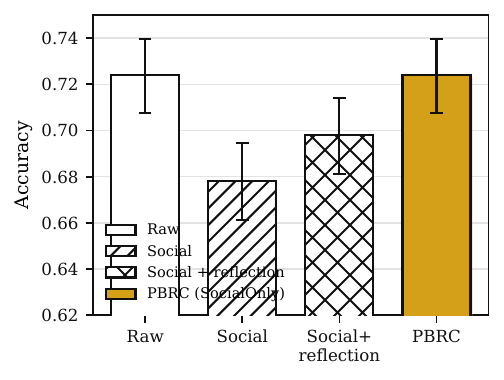}
\end{minipage}\hfill
\begin{minipage}{0.4\linewidth}
\centering
\includegraphics[width=\linewidth]{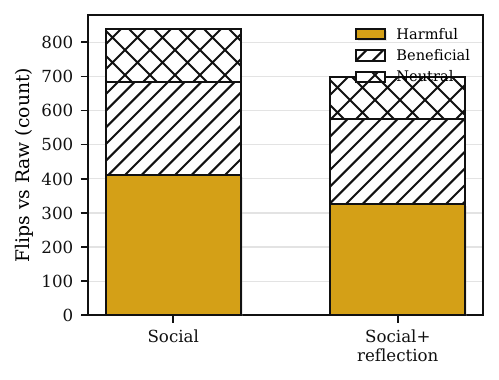}
\end{minipage}
\caption{\textsc{KAIROS} with token-empty \PBRC{} enforcement. Left: accuracy with 95\% Wilson intervals for \textsc{Raw}, \textsc{Social}, \textsc{Social\_reflected}, and \PBRC{}. Right: breakdown of social-induced prediction flips relative to \textsc{Raw} into harmful (correct$\to$incorrect), beneficial (incorrect$\to$correct), and neutral (correctness unchanged).}
\label{fig:kairos_plots}
\end{figure}

\section{Limitations and open problems}
\label{sec:limitations}

PBRC formalizes an evidence-gated and auditable notion of belief revision, and our results provide strong safety guarantees against purely social failure modes. The framework nevertheless has limitations that delineate its scope and motivate open problems in contract design, evidence semantics, enforcement architectures, and deployment.

Preregistration trades flexibility for enforceability. A contract may be overly strict (preventing warranted revisions) or overly permissive (admitting weak or misleading evidence patterns), and this tension is amplified under distribution shift, where novel forms of evidence may not match preregistered triggers. A practical mitigation is to include conservative ``surprise'' clauses that fire on previously unseen but validated token types and route to cautious actions (e.g., defer, request additional validation, or expand the hypothesis set). A central open problem is systematic contract synthesis: constructing trigger/operator families that remain auditable and tractable while being robust to novelty and misspecification.

PBRC also inherits a hard boundary at the validity layer. If the validity interface is compromised, protocol-level guarantees can fail; in particular, Theorem~\ref{thm:imposs_forge} formalizes an impossibility phenomenon under token forgery, motivating stronger evidence channels (e.g., authenticated tool logs, hardware-backed attestations, multi-party corroboration, and freshness constraints). An important direction is to characterize minimal cryptographic and systems assumptions under which the validity interface is realistic and sufficient for end-to-end robustness in adversarial settings.

There is an inherent expressiveness--tractability trade-off in trigger languages. Rich first-order triggers increase policy expressiveness but can render trigger checking and witness extraction expensive. For deployable systems, it is natural to restrict triggers to tractable fragments with efficient verification and provenance extraction (e.g., Datalog-style rules or token-existential conjunctive forms), while treating more expressive policies as offline audit checks. A key open problem is to develop compilation and tooling that map high-level contractual intent to tractable trigger languages with predictable verification cost and clear completeness guarantees.

In realistic deployments, routers and validators may be conservative and therefore sound but incomplete. Admissible evidence may be missed due to timeouts, partial parsing, model/version drift in verifiers, or strict rate limits. Proposition~\ref{prop:safety_liveness_router} implies that such incompleteness primarily impacts liveness: evidence-based corrections may be delayed or rejected even though anti-conformity safety invariants remain intact. Quantifying this safety--liveness trade-off under resource constraints, and designing graceful degradation mechanisms with measurable performance envelopes, remains an open problem.

A further limitation concerns semantic mismatch and relevance. PBRC assumes a validity/labeling layer that maps validated tokens to semantic relations such as \textsc{Supports} and \textsc{Contradicts}. Even when cryptographic validity is strong, the semantic mapping may be erroneous, brittle to context, or irrelevant to the operative hypothesis (Section~\ref{sec:operational}). PBRC helps localize failures to this boundary via auditable witnesses and traceability, but it does not eliminate semantic error. Designing labeling pipelines with calibrated uncertainty, adversarial robustness against ``evidence laundering,'' and contract-level safeguards (e.g., multi-attestation thresholds for high-impact updates) is a major open direction.

Our core results target social-only cascades and do not, by themselves, constrain how evidence is generated or propagated. In LLM agent systems, adversarial prompt steering can bias tool inputs (e.g., retrieval queries), and omission or withholding can suppress counter-evidence, producing conformity by proxy through biased evidence acquisition. A practical mitigation is to couple token validity with query-policy compliance (Section~\ref{sec:steering_omission})---for example, auditable query templates, diversity constraints, or multi-query corroboration---and to enforce redundant dissemination policies when availability is adversarial. Formalizing such query and dissemination contracts and proving end-to-end guarantees under steering and omission remain open.

Finally, the strongest enforcement guarantees assume state-holding routers that compute updates and thereby bind outcomes to preregistered operators. If updates are computed inside untrusted agents, admissibility-only gating is insufficient to ensure operator compliance; additional machinery (e.g., proof-carrying computation or verifiable execution traces) is needed to bind outputs to the intended operator semantics (Section~\ref{sec:enforcement}). Identifying lightweight operator-verification mechanisms that preserve enforceability without prohibitive overhead remains open.

Many real tasks are open-ended: hypotheses must be generated, refined, and pruned. While Section~\ref{sec:llm_inst} outlines a stage-wise reduction to a finite hypothesis set $\Hh$, principled contracts for hypothesis introduction, deprecation, and mass reallocation are not yet fully developed. This raises open questions about preventing hypothesis injection, maintaining auditability across changing hypothesis spaces, and preserving tractable enforcement as $\Hh$ evolves.

\section{Conclusion}
\label{sec:conclusion}

This paper introduced \PBRC{} as a contract semantics for admissible belief change in deliberative multi-agent systems. The central idea is simple: non-fallback revision must be justified by preregistered, witnessable evidence triggers rather than by social pressure, rhetorical fluency, or majority force. Once this discipline is enforced, the question of \emph{why} an agent changed its mind becomes public and checkable.

Three consequences structure the paper. First, under evidential contracts with conservative fallback, social-only interaction cannot amplify confidence or generate ``wrong-but-sure'' cascades. Second, \PBRC{} is not merely one design choice among many: auditable trigger protocols compile to evidential normal forms, router enforcement projects arbitrary trigger policies to their evidence-gated behavior, and any top-hypothesis change under sound enforcement is attributable to a concrete validated witness set. Third, when contracts are token-invariant, the dynamics factor through token exposure traces; under flooding, topology influences those traces exactly through truncated reachability, and the directed diameter gives the tight worst-case closure time. The companion logic \CDDL{} then provides a proof system for expressing and checking invariants over these audited runs.

The resulting picture is conceptually sharp. Persuasion may affect what agents say, but it does not count as an epistemic reason unless it is converted into validated evidence. This leaves a clearly identified remainder: failures can arise from the validity and labeling layer, from dissemination and availability, from router unsoundness, or from contract misspecification. \PBRC{} does not eliminate those problems, but it localizes them, which is precisely what makes post hoc diagnosis and formal verification possible. The empirical section is best read in that light: it illustrates the qualitative pattern predicted by the theory---suppression of token-empty cascades, explicit robustness--liveness trade-offs, and auditable evidence-based correction.

For the Journal of Logic, Language and Information audience, the broader significance is that belief-revision discipline can be moved from internal rationality postulates to public protocol semantics. Traditional belief revision focuses on the logical properties of the revision operator itself (e.g., AGM postulates). \PBRC{} shifts the focus to the \emph{admissibility} of the revision step, treating the operator as a black box and the trigger as a public contract. That shift makes social robustness, accountability, and topology-sensitive information flow amenable to the same formal analysis. It bridges the gap between dynamic epistemic logic, which models how information changes beliefs, and protocol semantics, which models how agents interact.

Future work can extend the framework in several directions. First, moving from finite hypothesis spaces to open-ended hypothesis generation would allow \PBRC{} to handle more complex deliberation tasks. Second, exploring richer yet tractable trigger languages could increase the expressivity of contracts without sacrificing auditability. Third, developing stronger operator-compliance mechanisms could ensure that agents not only revise when triggered but also revise \emph{correctly} according to the contract. Finally, integrating audited evidence-generation policies could prevent adversaries from steering the deliberation by selectively generating evidence. All these extensions can be pursued without abandoning the core \PBRC{} constraint: no substantive revision without preregistered, checkable evidence.

\medskip\noindent\textbf{Data and code availability.} The reproducibility artifact is available at \emph{https://github.com/alqithami/PBRC}.


\begin{appendices}

\section{Complete soundness and completeness proof for \CDDL{} with $\alpha^\ast$}
\label{app:pdl}

This appendix contains a complete proof of Theorem~\ref{thm:cddl_sc}. The proof is modular:
(i) soundness of each axiom/rule; (ii) completeness via a PDL Fischer--Ladner filtration/canonical construction, extended with independent KD45 belief relations.

\subsection{Soundness}

\begin{Lemma}[Soundness of $\mathbf{Star}$]
For every model and state, $[\alpha^\ast]\varphi \leftrightarrow (\varphi\land[\alpha][\alpha^\ast]\varphi)$ is valid.
\end{Lemma}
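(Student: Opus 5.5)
The plan is to prove the biconditional semantically, unfolding both sides through the truth clause for $[\cdot]$ and the definition $R_{\alpha^\ast}:=(R_\alpha)^\ast$. The one structural fact needed about reflexive transitive closure is the unfolding identity
\[
(R_\alpha)^\ast \;=\; \mathrm{Id}_W \,\cup\, \bigl(R_\alpha\circ (R_\alpha)^\ast\bigr),
\]
where composition is read left to right, as in the paper's clause $R_{\alpha;\beta}:=R_\alpha\circ R_\beta$. I would establish this first by writing $(R_\alpha)^\ast=\bigcup_{n\ge 0}R_\alpha^n$ with $R_\alpha^0=\mathrm{Id}_W$ and $R_\alpha^{n+1}=R_\alpha\circ R_\alpha^n$. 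Then
\[
\bigcup_{n\ge 0} R_\alpha^n \;=\; R_\alpha^0 \cup \bigcup_{n\ge 0} R_\alpha^{n+1} \;=\; \mathrm{Id}_W \cup \Bigl(R_\alpha\circ \bigcup_{n\ge 0}R_\alpha^n\Bigr),
\]
using that composition distributes over arbitrary unions.

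For the left-to-right direction, assume $M,w\models[\alpha^\ast]\varphi$. Reflexivity gives $w R_{\alpha^\ast} w$, so $M,w\models\varphi$. For the second conjunct, take any $v$ with $wR_\alpha v$ and any $u$ with $vR_{\alpha^\ast}u$. Then $(w,u)\in R_\alpha\circ R_{\alpha^\ast}\subseteq R_{\alpha^\ast}$, so $M,u\models\varphi$. Hence $M,v\models[\alpha^\ast]\varphi$, and therefore $M,w\models[\alpha][\alpha^\ast]\varphi$.

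For the right-to-left direction, assume $M,w\models\varphi$ and $M,w\models[\alpha][\alpha^\ast]\varphi$, and let $wR_{\alpha^\ast}u$. By the unfolding identity, either $u=w$, in which case $\varphi$ holds at $u$ directly, or there is some $v$ with $wR_\alpha v$ and $vR_{\alpha^\ast}u$. In the latter case the hypothesis gives $M,v\models[\alpha^\ast]\varphi$, hence $M,u\models\varphi$. This establishes $M,w\models[\alpha^\ast]\varphi$.

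The only step requiring care is the unfolding identity, and specifically the inclusion $(R_\alpha)^\ast\subseteq \mathrm{Id}_W\cup(R_\alpha\circ(R_\alpha)^\ast)$. A path of positive length must be split at its first step rather than its last, so that the decomposition matches the order of modalities in $[\alpha][\alpha^\ast]$. The power-union representation handles this directly, since $R_\alpha^{n+1}$ is defined with $R_\alpha$ on the left. Nothing beyond the relational semantics is used, so in particular the KD45 conditions on the doxastic relations $R_i$ play no role, and the equivalence holds in every model and at every state.
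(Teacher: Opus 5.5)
Your proof is correct and is essentially the paper's argument. The paper also uses reflexivity, together with $R_\alpha\subseteq R_{\alpha^\ast}$ and transitivity, for the forward direction; for the converse it splits a witnessing $\alpha$-path at its first step, which is exactly your unfolding identity $(R_\alpha)^\ast=\mathrm{Id}_W\cup(R_\alpha\circ(R_\alpha)^\ast)$ in relational form (your single split is cleaner, since the paper's surrounding induction on path length is not needed).
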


\begin{proof}
Fix $M,w$.

($\Rightarrow$) If $M,w\models[\alpha^\ast]\varphi$, then in particular (since $R_{\alpha^\ast}$ is reflexive) $M,w\models\varphi$. Also, for any $v$ with $wR_\alpha v$, we have $wR_{\alpha^\ast} v$ because $R_\alpha\subseteq R_{\alpha^\ast}$. Moreover, for any $u$ with $vR_{\alpha^\ast}u$, we have $wR_{\alpha^\ast}u$ since $R_{\alpha^\ast}$ is transitive. Thus $M,v\models[\alpha^\ast]\varphi$ for all $v$ with $wR_\alpha v$, i.e.\ $M,w\models[\alpha][\alpha^\ast]\varphi$.

($\Leftarrow$) Suppose $M,w\models \varphi\land[\alpha][\alpha^\ast]\varphi$. Let $u$ be any state with $wR_{\alpha^\ast}u$. By definition of reflexive transitive closure, there exists a finite path $w=w_0,w_1,\dots,w_k=u$ such that for each $j<k$, $w_jR_\alpha w_{j+1}$. We prove by induction on $k$ that $M,u\models\varphi$. If $k=0$, $u=w$ and $M,w\models\varphi$. If $k>0$, then $w_1$ satisfies $wR_\alpha w_1$, so by $[\alpha][\alpha^\ast]\varphi$ we have $M,w_1\models[\alpha^\ast]\varphi$. Apply the induction hypothesis to the remaining path $w_1,\dots,w_k$ to conclude $M,u\models\varphi$. Since $u$ was arbitrary, $M,w\models[\alpha^\ast]\varphi$.
\end{proof}

\begin{Lemma}[Soundness of $\mathbf{Ind}$]
The induction rule is sound: if $\varphi\to[\alpha]\varphi$ is valid, then $\varphi\to[\alpha^\ast]\varphi$ is valid.
\end{Lemma}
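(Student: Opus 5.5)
We argue semantically, fixing an arbitrary model $M=(W,\{R_i\},\{R_a\},V)$ and a state $w$, and we use only the validity of the premise together with the characterization of $R_{\alpha^\ast}$ as the reflexive transitive closure of $R_\alpha$. Concretely, $wR_{\alpha^\ast}u$ holds iff there is a finite path $w=w_0,w_1,\dots,w_k=u$ with $w_jR_\alpha w_{j+1}$ for every $j<k$. This is the same path decomposition already used in the $(\Leftarrow)$ direction of the soundness proof for $\mathbf{Star}$, so it may be taken as given.

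Assume $\varphi\to[\alpha]\varphi$ is valid, so it holds at every state of $M$, and suppose $M,w\models\varphi$. To show $M,w\models[\alpha^\ast]\varphi$, we take an arbitrary $u$ with $wR_{\alpha^\ast}u$ together with a witnessing path $w_0,\dots,w_k$. We then prove by induction on $j\le k$ that $M,w_j\models\varphi$.

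The base case $j=0$ is the hypothesis $M,w\models\varphi$. For the inductive step, suppose $M,w_j\models\varphi$. Validity of the premise gives $M,w_j\models\varphi\to[\alpha]\varphi$, and hence $M,w_j\models[\alpha]\varphi$. Since $w_jR_\alpha w_{j+1}$, the truth clause for $[\alpha]$ yields $M,w_{j+1}\models\varphi$. Taking $j=k$ gives $M,u\models\varphi$. As $u$ was arbitrary, $M,w\models[\alpha^\ast]\varphi$, and as $M$ and $w$ were arbitrary, $\varphi\to[\alpha^\ast]\varphi$ is valid.

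The one point needing care is that we must use validity of the premise in the whole model, not merely its truth at $w$: the implication has to be applied at every intermediate state $w_j$ along the path. This is exactly why $\mathbf{Ind}$ is stated as a rule rather than as an axiom, and the argument is otherwise routine. If $\alpha$ is itself compound, nothing changes, because the argument uses only the relation $R_\alpha$ and the standard clause for $[\alpha]$, not the internal structure of $\alpha$. Tests and compositions inside $\alpha$ are therefore already absorbed into $R_\alpha$.
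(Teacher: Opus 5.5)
Your proof is correct and follows essentially the same route as the paper: fix $M,w$ with $M,w\models\varphi$, decompose an arbitrary $R_{\alpha^\ast}$-step into a finite $R_\alpha$-path, and induct along the path, applying the premise at each intermediate state. Your added remark is also accurate: the argument only needs the premise to hold at every state of $M$, not merely at $w$.
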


\begin{proof}
Assume $\varphi\to[\alpha]\varphi$ is valid. Fix any $M,w$ with $M,w\models\varphi$. We show $M,w\models[\alpha^\ast]\varphi$. Let $u$ be reachable from $w$ via $R_{\alpha^\ast}$; take a witnessing path $w=w_0,\dots,w_k=u$ with $w_jR_\alpha w_{j+1}$. We show by induction on $j$ that $M,w_j\models\varphi$. Base $j=0$ holds. If $M,w_j\models\varphi$, then by validity of $\varphi\to[\alpha]\varphi$, we have $M,w_j\models[\alpha]\varphi$, hence $M,w_{j+1}\models\varphi$. Therefore $M,u\models\varphi$ for all $u$ reachable by $R_{\alpha^\ast}$, so $M,w\models[\alpha^\ast]\varphi$.
\end{proof}

The soundness of KD45 axioms for $B_i$ follows from seriality, transitivity, and Euclideanness of $R_i$. The soundness of the star-free PDL axioms follows from the relational definitions of $R_{\alpha;\beta}$, $R_{\alpha\cup\beta}$, and $R_{\psi?}$.

\subsection{Completeness}

We prove the finite model property for PDL with additional independent KD45 modalities and then completeness.

\paragraph{Fischer--Ladner closure.}
Fix a formula $\varphi_0$. Let $\mathrm{FL}(\varphi_0)$ be the Fischer--Ladner closure: the smallest set containing $\varphi_0$ and closed under subformulas and the PDL unfoldings, including:
\begin{itemize}
\item if $[\alpha;\beta]\psi\in\mathrm{FL}$ then $[\alpha][\beta]\psi\in\mathrm{FL}$, etc.;
\item if $[\alpha^\ast]\psi\in\mathrm{FL}$ then $\psi\in\mathrm{FL}$ and $[\alpha][\alpha^\ast]\psi\in\mathrm{FL}$ (by $\mathbf{Star}$);
\item similarly for diamond forms via duality.
\end{itemize}

$\mathrm{FL}(\varphi_0)$ is finite.

\begin{Definition}[Atoms]
An \emph{atom} over $\mathrm{FL}(\varphi_0)$ is a maximal consistent subset $\Gamma\subseteq \mathrm{FL}(\varphi_0)$ such that for every $\psi\in\mathrm{FL}(\varphi_0)$, exactly one of $\psi,\neg\psi$ is in $\Gamma$.
\end{Definition}

\begin{Definition}[Canonical/filtration model $M_{\varphi_0}$]
Let $W$ be the set of atoms. Define valuation $V(p)=\{\Gamma: p\in\Gamma\}$. Define atomic-action relations:
\[
\Gamma R_a \Delta \iff \forall \psi\in\mathrm{FL}(\varphi_0)\, ([a]\psi\in\Gamma \Rightarrow \psi\in\Delta).
\]
Define belief relations:
\[
\Gamma R_i \Delta \iff \forall \psi\in\mathrm{FL}(\varphi_0)\, (B_i\psi\in\Gamma \Rightarrow \psi\in\Delta).
\]
Let $M_{\varphi_0}=(W,\{R_i\},\{R_a\},V)$ and extend $R_\alpha$ compositionally (including star as reflexive transitive closure).
\end{Definition}

\begin{Lemma}[Existence lemma for diamonds]
\label{lem:diamondexist}
If $\langle a\rangle \psi \in \Gamma$, then there exists $\Delta$ with $\Gamma R_a \Delta$ and $\psi\in\Delta$.
\end{Lemma}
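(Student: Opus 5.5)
We prove the existence lemma by the standard consistency argument used for filtrations in PDL. Given $\langle a\rangle\psi\in\Gamma$, consider the set
\[
\Sigma\;:=\;\{\psi\}\ \cup\ \{\chi : [a]\chi\in\Gamma,\ \chi\in\mathrm{FL}(\varphi_0)\}.
\]
It suffices to show that $\Sigma$ is consistent. A consistent $\Sigma$ can be extended to an atom $\Delta$, since $\mathrm{FL}(\varphi_0)$ is finite: enumerate its formulas and add $\chi$ or $\neg\chi$ at each stage, preserving consistency, by the usual Lindenbaum step restricted to the closure. By construction, $[a]\chi\in\Gamma$ implies $\chi\in\Delta$, so $\Gamma R_a\Delta$ by the definition of $R_a$, and $\psi\in\Delta$, as required. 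One small bookkeeping point: $\psi\in\mathrm{FL}(\varphi_0)$, because $\langle a\rangle\psi=\neg[a]\neg\psi$ lies in the closure, and the closure contains subformulas up to a single negation (we treat $\neg\neg\psi$ as $\psi$).

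The key step is the consistency of $\Sigma$. Suppose, for contradiction, that $\Sigma$ is inconsistent. Then there are finitely many $\chi_1,\dots,\chi_n$ with $[a]\chi_k\in\Gamma$ such that
\[
\vdash\ (\chi_1\land\cdots\land\chi_n)\to\neg\psi .
\]
Applying necessitation for $[a]$ and then normality (the $\mathbf{K}$ axiom for dynamic modalities), we obtain
\[
\vdash\ ([a]\chi_1\land\cdots\land[a]\chi_n)\to[a]\neg\psi .
\]
Since each $[a]\chi_k\in\Gamma$ and $\Gamma$ is maximal consistent relative to $\mathrm{FL}(\varphi_0)$, the formula $\bigwedge_k[a]\chi_k$ is consistent with $\Gamma$ and entails $[a]\neg\psi$. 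But $\langle a\rangle\psi\in\Gamma$ abbreviates $\neg[a]\neg\psi$, so $\Gamma$ proves both $[a]\neg\psi$ and its negation. This contradicts the consistency of $\Gamma$. When $n=0$ the argument specializes to $\vdash\neg\psi$, giving $\vdash[a]\neg\psi$ by necessitation, and the same contradiction follows.

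The main obstacle is the interaction between provability and membership in an atom that is maximal only relative to the finite closure, rather than globally. The formula $[a]\neg\psi$ need not belong to $\mathrm{FL}(\varphi_0)$, so we cannot argue by membership alone. Instead, we argue at the level of derivability: $\Gamma$, viewed as a finite conjunction $\widehat\Gamma$, is consistent, and we have shown $\vdash\widehat\Gamma\to[a]\neg\psi$ together with $\vdash\widehat\Gamma\to\neg[a]\neg\psi$, which is impossible. The same template, with $\mathbf{K}$ and necessitation for $B_i$, yields the analogous existence lemma for the belief relations $R_i$, which the truth lemma will also need.
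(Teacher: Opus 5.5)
Your proof is correct and takes the same route as the paper: you show that $\{\psi\}\cup\{\chi:[a]\chi\in\Gamma\}$ is consistent using necessitation and $\mathbf{K}$ for $[a]$, then extend it to an atom inside $\mathrm{FL}(\varphi_0)$. Your derivability-level argument via $\widehat\Gamma$ is slightly more careful than the paper's, which asserts $[a](\chi_1\land\cdots\land\chi_k)\in\Gamma$ although that formula need not lie in the closure; note, contrary to your remark, that $[a]\neg\psi$ itself does lie in $\mathrm{FL}(\varphi_0)$, being a subformula of $\langle a\rangle\psi=\neg[a]\neg\psi$.
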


\begin{proof}
Consider the set
\[
\Sigma := \{\chi : [a]\chi\in\Gamma\}\ \cup\ \{\psi\}.
\]
We show $\Sigma$ is consistent. If inconsistent, then for some $\chi_1,\dots,\chi_k$ with $[a]\chi_j\in\Gamma$, we have $\vdash (\chi_1\land\cdots\land\chi_k)\to \neg\psi$. By necessitation and normality for $[a]$, $\vdash [a](\chi_1\land\cdots\land\chi_k)\to [a]\neg\psi$. From $[a]\chi_1,\dots,[a]\chi_k$ we derive $[a](\chi_1\land\cdots\land\chi_k)\in\Gamma$, hence $[a]\neg\psi\in\Gamma$, i.e.\ $\neg\langle a\rangle\psi\in\Gamma$, contradicting $\langle a\rangle\psi\in\Gamma$. Thus $\Sigma$ is consistent and extends to an atom $\Delta$ with $\Sigma\subseteq \Delta$. Then $\Gamma R_a \Delta$ and $\psi\in\Delta$.
\end{proof}

\begin{Lemma}[KD45 properties for $R_i$]
Each $R_i$ in $M_{\varphi_0}$ is serial, transitive, and Euclidean.
\end{Lemma}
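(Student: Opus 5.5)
The plan is to verify the three frame conditions directly from the bare canonical clause
\[
\Gamma R_i \Delta \iff \forall \psi\in\mathrm{FL}(\varphi_0)\,(B_i\psi\in\Gamma\Rightarrow \psi\in\Delta).
\]
Each condition uses one of the axioms $\mathbf{D}$, $\mathbf{4}$, $\mathbf{5}$ together with deductive closure of atoms relative to $\mathrm{FL}(\varphi_0)$. Since atoms are maximal consistent within $\mathrm{FL}(\varphi_0)$, the following holds: if $\chi\in\mathrm{FL}(\varphi_0)$ and $\chi$ is provable from formulas in $\Gamma$, then $\chi\in\Gamma$.

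\emph{Seriality.} Given an atom $\Gamma$, let $\Sigma:=\{\chi:B_i\chi\in\Gamma\}$. I will show $\Sigma$ is consistent, mirroring Lemma~\ref{lem:diamondexist} with $\psi=\top$. If $\vdash\neg(\chi_1\wedge\dots\wedge\chi_k)$, then necessitation and $\mathbf{K}$ for $B_i$ give $\vdash (B_i\chi_1\wedge\dots\wedge B_i\chi_k)\to B_i\bot$. Axiom $\mathbf{D}$ ($\neg B_i\bot$) then makes $\Gamma$ inconsistent, which is a contradiction. So $\Sigma$ extends to an atom $\Delta$, and $\Gamma R_i\Delta$.

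\emph{Transitivity.} Suppose $\Gamma R_i\Delta$, $\Delta R_i\Theta$ and $B_i\psi\in\Gamma$. By $\mathbf{4}$, $B_iB_i\psi$ follows from $\Gamma$. Provided $B_iB_i\psi\in\mathrm{FL}(\varphi_0)$, deductive closure puts it in $\Gamma$. The clause then gives $B_i\psi\in\Delta$, and applying the clause again gives $\psi\in\Theta$.

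\emph{Euclideanness.} Suppose $\Gamma R_i\Delta$, $\Gamma R_i\Theta$ and $B_i\psi\in\Delta$. If $B_i\psi\notin\Gamma$, then $\neg B_i\psi\in\Gamma$. By $\mathbf{5}$, $B_i\neg B_i\psi$ is derivable, so it lies in $\Gamma$ (again provided it is in $\mathrm{FL}(\varphi_0)$). The clause then yields $\neg B_i\psi\in\Delta$, contradicting consistency of $\Delta$. Hence $B_i\psi\in\Gamma$, and the clause gives $\psi\in\Theta$.

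\emph{Main obstacle: the closure.} The hard step is justifying that $B_iB_i\psi$ and $B_i\neg B_i\psi$ belong to $\mathrm{FL}(\varphi_0)$ whenever $B_i\psi$ does. Naive closure under these operations is infinite. I would read ``extended in the obvious way'' as closing under these introspective forms \emph{modulo KD45-provable equivalence}. The key facts are that $\vdash B_iB_i\psi\leftrightarrow B_i\psi$ and $\vdash B_i\neg B_i\psi\leftrightarrow\neg B_i\psi$ in KD45. So only finitely many new equivalence classes arise, and one fixed representative can be chosen for each.

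Membership of a formula in an atom should then be understood up to this provable equivalence. This is harmless because atoms are closed under provable equivalence within the closure. With this convention, the arguments for transitivity and Euclideanness go through verbatim, and finiteness of $\mathrm{FL}(\varphi_0)$, which the filtration needs, is preserved.
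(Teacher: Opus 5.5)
Your three arguments are the paper's: seriality from $\mathbf{D}$ via consistency of $\{\chi : B_i\chi\in\Gamma\}$, transitivity from $\mathbf{4}$, and Euclideanness from $\mathbf{5}$. Where you go beyond the paper is the closure issue, which its proof passes over. The paper simply writes $B_iB_i\psi\in\Gamma$ and $B_i\neg B_i\psi\in\Gamma$, and for seriality $B_i(\psi_1\wedge\cdots\wedge\psi_k)\in\Gamma$ and $B_i\bot\in\Gamma$. But atoms are subsets of $\mathrm{FL}(\varphi_0)$, and no finite closure can contain all such formulas.

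This is a real gap in the paper, not a technicality. Take $\varphi_0=B_ip$: the closure is $\{B_ip,p\}$ plus negations, and all four atoms are KD45-consistent. Under the literal clause, $\{B_ip,p\}$ sees $\{\neg B_ip,p\}$, which sees $\{\neg B_ip,\neg p\}$, while $\{B_ip,p\}$ does not see $\{\neg B_ip,\neg p\}$. So $R_i$ is neither transitive nor Euclidean, and some modification of $R_i$ such as yours is unavoidable. Your seriality step, which derives $B_i\bot$ from $\Gamma$ instead of asserting membership of formulas outside the closure, is also the correct phrasing of that step.

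Your repair is sound, but it is clearer to say what it amounts to. No new equivalence classes actually arise, since $B_iB_i\psi$ and $B_i\neg B_i\psi$ collapse to $B_i\psi$ and $\neg B_i\psi$, which are already in the closure. The real effect of reading the clause up to provable equivalence is that its instances at $\chi=B_i\psi$ and $\chi=\neg B_i\psi$ force $\Gamma$ and $\Delta$ to agree on every $B_i\psi\in\mathrm{FL}(\varphi_0)$. That is the standard KD45 filtration relation, and it is better to define $R_i$ by it explicitly: $\Gamma R_i\Delta$ iff, for all $B_i\psi\in\mathrm{FL}(\varphi_0)$, $B_i\psi\in\Gamma$ implies $\psi\in\Delta$, and $B_i\psi\in\Gamma$ iff $B_i\psi\in\Delta$. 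With this definition, transitivity and Euclideanness are immediate, and seriality is your argument.

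Because this relation is smaller than the paper's, you should also recheck the $B_i$ case of the Truth Lemma. It still goes through. Suppose $B_i\chi\notin\Gamma$, and take the set of all $\theta\in\mathrm{FL}(\varphi_0)$ with $\Gamma\vdash B_i\theta$, together with $\neg\chi$; this set is consistent. By $\mathbf{4}$ and $\mathbf{5}$, any atom extending it contains every $B_i\psi$ and every $\neg B_i\psi$ that $\Gamma$ contains. Hence it is an $R_i$-successor of $\Gamma$ that refutes $\chi$.
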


\begin{proof}
Fix an agent $i$.

\emph{Seriality.}
Let $\Gamma$ be an atom and define
\[
\Sigma_i(\Gamma):=\{\psi\in \mathrm{FL}(\varphi_0): B_i\psi\in \Gamma\}.
\]
We claim $\Sigma_i(\Gamma)$ is consistent. Suppose not. Then there exist $\psi_1,\dots,\psi_k\in \Sigma_i(\Gamma)$ such that
$\vdash (\psi_1\land\cdots\land\psi_k)\to \bot$.
By necessitation for $B_i$ and normality $\mathbf{K_B}$,
\[
\vdash B_i(\psi_1\land\cdots\land\psi_k)\to B_i\bot.
\]
From $B_i\psi_1,\dots,B_i\psi_k\in\Gamma$ we derive $B_i(\psi_1\land\cdots\land\psi_k)\in \Gamma$, hence $B_i\bot\in\Gamma$.
Using axiom $\mathbf{D}$ with $\varphi:=\bot$ we have $\vdash B_i\bot \to \neg B_i\top$, so $\neg B_i\top\in\Gamma$.
But $\top$ is a propositional theorem, so by necessitation $\vdash B_i\top$; hence $B_i\top$ is consistent with every atom, contradicting $\neg B_i\top\in\Gamma$.
Therefore $\Sigma_i(\Gamma)$ is consistent.

Extend $\Sigma_i(\Gamma)$ to an atom $\Delta$ (Lindenbaum extension inside $\mathrm{FL}(\varphi_0)$). By definition of $R_i$, $\Gamma R_i \Delta$. Thus $R_i$ is serial.

\emph{Transitivity.}
Assume $\Gamma R_i \Delta$ and $\Delta R_i \Theta$. Let $\psi\in \mathrm{FL}(\varphi_0)$ with $B_i\psi\in\Gamma$.
By axiom $\mathbf{4}$, $\vdash B_i\psi \to B_iB_i\psi$, so $B_iB_i\psi\in\Gamma$.
Since $\Gamma R_i \Delta$, we obtain $B_i\psi\in\Delta$. Since $\Delta R_i \Theta$, we get $\psi\in\Theta$. Therefore $\Gamma R_i \Theta$.

\emph{Euclidean.}
Assume $\Gamma R_i \Delta$ and $\Gamma R_i \Theta$. Let $\psi\in \mathrm{FL}(\varphi_0)$ with $B_i\psi\in\Delta$.
We show $\psi\in\Theta$, i.e., $\Delta R_i \Theta$.
If $B_i\psi\notin\Gamma$, then $\neg B_i\psi\in\Gamma$ (maximality).
By axiom $\mathbf{5}$, $\vdash \neg B_i\psi \to B_i\neg B_i\psi$, so $B_i\neg B_i\psi\in\Gamma$.
Since $\Gamma R_i \Delta$, we would have $\neg B_i\psi\in\Delta$, contradicting $B_i\psi\in\Delta$.
Hence $B_i\psi\in\Gamma$, and from $\Gamma R_i \Theta$ we conclude $\psi\in\Theta$.
Thus $\Delta R_i \Theta$ and $R_i$ is Euclidean.
\end{proof}

\begin{Lemma}[Fischer--Ladner eventuality (star) lemma]
\label{lem:fl_star}
Let $M_{\varphi_0}$ be the finite model built from atoms over the Fischer--Ladner closure $\mathrm{FL}(\varphi_0)$. For any program $\alpha$ and formula $\eta\in \mathrm{FL}(\varphi_0)$, if $\langle \alpha^\ast\rangle \eta\in \Gamma$ for an atom $\Gamma$, then there exists a finite $\alpha$-path $\Gamma=\Gamma_0,\Gamma_1,\dots,\Gamma_k$ such that $\Gamma_j R_\alpha \Gamma_{j+1}$ for all $j<k$ and $\eta\in \Gamma_k$. In particular, $\Gamma R_{\alpha^\ast}\Gamma_k$.
\end{Lemma}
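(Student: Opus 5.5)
The plan is to use the standard PDL argument: take the set of atoms reachable from $\Gamma$ by finite $R_\alpha$-paths, and show by induction that the property ``$\langle\alpha^\ast\rangle\eta$ belongs to some reachable atom'' cannot be maintained without $\eta$ eventually appearing. Concretely, let
\[
S:=\{\Delta\in W : \Gamma R_{\alpha^\ast}\Delta\},
\]
which is finite since $W$ is finite. Suppose for contradiction that $\eta\notin\Delta$ for every $\Delta\in S$. Then I would form the characteristic formula
\[
\sigma:=\bigvee_{\Delta\in S}\widehat{\Delta},
\]
where $\widehat{\Delta}$ is the conjunction of the members of $\Delta$. This is a finite formula because $\mathrm{FL}(\varphi_0)$ is finite.

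The key steps, in order, are as follows.

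\emph{Step 1.} Show $\vdash\sigma\to\neg\eta$. This is immediate, since each $\widehat{\Delta}$ with $\Delta\in S$ contains $\neg\eta$; here we use maximality of atoms together with $\eta\in\mathrm{FL}(\varphi_0)$.

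\emph{Step 2.} Show $\vdash\sigma\to[\alpha]\sigma$. Suppose this fails. Then $\sigma\wedge\langle\alpha\rangle\neg\sigma$ is consistent, so for some $\Delta\in S$ the formula $\widehat{\Delta}\wedge\langle\alpha\rangle\widehat{\Theta}$ is consistent for some atom $\Theta\notin S$. (Here I use that $\neg\sigma$ is provably equivalent to the disjunction of $\widehat{\Theta}$ over atoms $\Theta\notin S$, because the disjunction of all atoms is a theorem.) From this I need $\Delta R_\alpha\Theta$, which would put $\Theta\in S$ and give a contradiction.

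\emph{Step 3.} Apply $\mathbf{Ind}$ to Step 2 to get $\vdash\sigma\to[\alpha^\ast]\sigma$. Combining with Step 1 by normality yields $\vdash\sigma\to[\alpha^\ast]\neg\eta$. Since $\widehat{\Gamma}\to\sigma$ (because $\Gamma\in S$ by reflexivity), we get $[\alpha^\ast]\neg\eta\in\Gamma$. This contradicts $\langle\alpha^\ast\rangle\eta\in\Gamma$. Hence some $\Gamma_k\in S$ contains $\eta$, and unwinding the definition of reflexive transitive closure gives the required finite path.

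The main obstacle is the auxiliary claim in Step 2: if $\widehat{\Delta}\wedge\langle\alpha\rangle\widehat{\Theta}$ is consistent, then $\Delta R_\alpha\Theta$ in the model, for an \emph{arbitrary} program $\alpha$ rather than just atomic $a$. I would prove this by structural induction on $\alpha$.
\begin{itemize}
\item The atomic case follows the definition of $R_a$ together with the consistency argument of Lemma~\ref{lem:diamondexist}.
\item The case $\alpha;\beta$ uses $\mathbf{Seq}$ and the fact that $\top$ is provably the disjunction of all atoms, which lets us split off an intermediate atom.
\item The case $\alpha\cup\beta$ uses $\mathbf{Choice}$.
\item The test case $\psi?$ uses $\mathbf{Test}$ together with the truth lemma for the smaller formula $\psi$. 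This requires interleaving the claim with the truth lemma in one simultaneous induction on the Fischer--Ladner ordering.
\item The case $\alpha^\ast$ reuses exactly the $\sigma$/$\mathbf{Ind}$ argument above, applied to the inner program.
\end{itemize}
Care is needed to keep all formulas used inside $\mathrm{FL}(\varphi_0)$, or else to work with the characteristic formulas $\widehat{\Delta}$, which avoids that issue.
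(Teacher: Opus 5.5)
Your proposal is correct. It is the standard Kozen--Parikh eventuality argument: take the characteristic formula $\sigma$ of the $R_{\alpha^\ast}$-reachable atoms, show $\vdash\sigma\to[\alpha]\sigma$ via the claim that consistency of $\widehat{\Delta}\wedge\langle\alpha\rangle\widehat{\Theta}$ forces $\Delta R_\alpha\Theta$, then apply $\mathbf{Ind}$. This is exactly the textbook proof the paper defers to: it does not prove the lemma itself and only cites Harel--Kozen--Tiuryn. Your point that the test case forces a simultaneous induction with the truth lemma is genuine extra care. The paper defines $R_{\psi?}$ semantically, yet states this lemma first and then invokes it inside a plain structural induction for the truth lemma.
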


\begin{Remark}
Lemma~\ref{lem:fl_star} is the standard ``eventuality fulfillment'' property for PDL and follows from the Fischer--Ladner closure construction together with the PDL star axioms/rules; see, e.g., Harel--Kozen--Tiuryn \cite{harel2000dl} for a full textbook proof. We isolate it here to make explicit the only nontrivial star-specific ingredient used in the Truth Lemma.
\end{Remark}

\begin{Lemma}[Truth lemma]
\label{lem:truthFL}
For every $\psi\in\mathrm{FL}(\varphi_0)$ and atom $\Gamma$, 
\[
M_{\varphi_0},\Gamma\models \psi \quad\iff\quad \psi\in\Gamma.
\]
\end{Lemma}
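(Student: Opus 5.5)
I would prove the Truth Lemma by induction on the structure of $\psi\in\mathrm{FL}(\varphi_0)$, ordered so that the box-over-program cases are handled by an inner induction on the program $\alpha$. The Boolean cases are routine. Atoms hold by the definition of $V$. For negation and conjunction I use that $\mathrm{FL}(\varphi_0)$ is closed under subformulas and that atoms are maximal consistent inside the closure. Exactly one of $\chi,\neg\chi$ lies in $\Gamma$, and $\chi_1\land\chi_2\in\Gamma$ iff both conjuncts do.

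For $B_i\chi$ I argue in two directions. ($\Leftarrow$) If $B_i\chi\in\Gamma$ and $\Gamma R_i\Delta$, then $\chi\in\Delta$ by the definition of $R_i$, and the induction hypothesis gives $M_{\varphi_0},\Delta\models\chi$. ($\Rightarrow$) If $B_i\chi\notin\Gamma$, then $\neg B_i\chi\in\Gamma$. I then rerun the argument of Lemma~\ref{lem:diamondexist} with $B_i$ in place of $[a]$, showing that $\{\theta: B_i\theta\in\Gamma\}\cup\{\neg\chi\}$ is consistent via $\mathbf{K}$ and necessitation for $B_i$. I extend this set to an atom $\Delta$, which gives $\Gamma R_i\Delta$ with $\chi\notin\Delta$. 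The induction hypothesis then yields $\Gamma\not\models B_i\chi$.

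For $[\alpha]\chi$ I prove the biconditional by an inner induction on $\alpha$, always with $\chi\in\mathrm{FL}$.
\begin{itemize}
\item \emph{Atomic $a$.} This is exactly the $B_i$ argument, using $R_a$ and Lemma~\ref{lem:diamondexist}.
\item \emph{Test $\theta?$.} By $\mathbf{Test}$, $[\theta?]\chi\in\Gamma$ iff $\theta\to\chi$ holds in $\Gamma$. Since $\theta,\chi\in\mathrm{FL}$, the outer induction hypothesis applies to both.
\item \emph{Choice $\alpha\cup\beta$.} Reduce to the components via $\mathbf{Choice}$; the closure contains $[\alpha]\chi$ and $[\beta]\chi$.
\item \emph{Sequence $\alpha;\beta$.} Reduce via $\mathbf{Seq}$ to $[\alpha][\beta]\chi\in\mathrm{FL}$. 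Apply the inner hypothesis for $\alpha$ with postcondition $[\beta]\chi$, then for $\beta$ with postcondition $\chi$. The relational semantics $R_{\alpha;\beta}=R_\alpha\circ R_\beta$ matches this reduction.
\end{itemize}
To keep this well-founded, I would order the pairs $(\alpha,\chi)$ by program size first. This matters because $[\beta]\chi$ can be larger than $\chi$ as a formula.

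The star case is the main obstacle. ($\Rightarrow$ direction, membership to truth) Suppose $[\alpha^\ast]\chi\in\Gamma$ and $\Gamma R_{\alpha^\ast}\Delta$ along an $R_\alpha$-path. Using $\mathbf{Star}$, membership of $[\alpha][\alpha^\ast]\chi$ propagates: the inner hypothesis for $\alpha$, applied with postcondition $[\alpha^\ast]\chi\in\mathrm{FL}$, passes $[\alpha^\ast]\chi$ along each edge. At every node $\chi$ is in the atom by $\mathbf{Star}$, and the outer hypothesis gives truth. ($\Leftarrow$ direction, truth to membership) Suppose $[\alpha^\ast]\chi\notin\Gamma$, so $\langle\alpha^\ast\rangle\neg\chi\in\Gamma$. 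Lemma~\ref{lem:fl_star} supplies an $R_\alpha$-path in the finite model from $\Gamma$ to an atom containing $\neg\chi$, so $\Gamma\not\models[\alpha^\ast]\chi$.

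The delicate point is that Lemma~\ref{lem:fl_star} asserts paths in the relation $R_\alpha$ as defined compositionally in the model. Its proof uses $\mathbf{Ind}$ on the disjunction of characteristic formulas of the atoms reachable from $\Gamma$. It also needs the inner truth-lemma correspondence for the strictly smaller program $\alpha$. I would therefore invoke it inside the induction after the $\alpha$ case is established, citing \cite{harel2000dl} for the details.
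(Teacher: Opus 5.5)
Your decomposition is the paper's: structural induction, the Lindenbaum argument for $B_i$ and atomic actions, $\mathbf{Test}$/$\mathbf{Choice}$/$\mathbf{Seq}$ for star-free programs, and $\mathbf{Star}$ plus Lemma~\ref{lem:fl_star} for $\alpha^\ast$. You are more careful than the paper about well-foundedness. Your Boolean, belief, test, choice and sequence cases go through if your ordering is read as a nested induction: outer on formulas, inner on $\alpha$, with the truth lemma for the postcondition as a hypothesis. A plain program-first order on pairs would not work, since the atomic case needs the truth lemma for $\chi$, whose own program may be larger.

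\textbf{The gap.} It lies in the membership-to-truth half of the star case. You move $[\alpha^\ast]\chi$ across an $R_\alpha$-edge by ``the inner hypothesis for $\alpha$ with postcondition $[\alpha^\ast]\chi$''. That hypothesis is a biconditional for $[\alpha]\psi$ which, already in the atomic case, is derived from the truth lemma for $\psi$. With $\psi:=[\alpha^\ast]\chi$ it presupposes exactly the instance you are proving. No ordering rescues this: $(\alpha^\ast,\chi)$ calls $(\alpha,[\alpha^\ast]\chi)$, which calls $(\alpha^\ast,\chi)$ again. Induction on path length does not help either, since it only reduces matters to this per-edge step at arbitrary atoms of the finite model. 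The paper's sketch leaves the same step unjustified: Lemma~\ref{lem:diamondexist} is an existence lemma for atomic actions and gives no propagation along compound $R_\alpha$-edges. So you cannot borrow the step from there.

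\textbf{The fix.} What is missing is a transfer lemma that does not mention truth at all. For every $[\alpha]\psi\in\mathrm{FL}(\varphi_0)$ and all atoms $\Gamma,\Delta$: if $[\alpha]\psi\in\Gamma$ and $\Gamma R_\alpha\Delta$, then $\psi\in\Delta$. Prove it by induction on $\alpha$ for all postconditions simultaneously:
\begin{itemize}
\item atomic actions, by the definition of $R_a$;
\item $\theta?$, by $\mathbf{Test}$ plus the truth lemma for $\theta$, which is a subformula and so is supplied by the outer induction;
\item $\cup$ and $;$, via $\mathbf{Choice}$, $\mathbf{Seq}$ and the definitions of union and composition;
\item $\alpha^\ast$, via $\mathbf{Star}$ and induction on path length, now legitimately using transfer for $\alpha$ with postcondition $[\alpha^\ast]\psi$.
\end{itemize}
Membership-implies-truth for every $[\alpha]\chi$ then follows at once from transfer and the hypothesis for $\chi$. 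Your treatment of the converse direction can stay as it is.

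\textbf{A related correction.} Lemma~\ref{lem:fl_star} does not rest on the truth lemma for $\alpha$, as you suggest. Its $\mathbf{Ind}$-argument uses the disjunction $\sigma$ of the characteristic formulas $\widehat{\Delta}$ of the reachable atoms. It needs the fact that consistency of $\widehat{\Delta}\land\langle\alpha\rangle\widehat{\Theta}$ implies $\Delta R_\alpha\Theta$. Since $\widehat{\Theta}$ and $[\alpha]\sigma$ lie outside $\mathrm{FL}(\varphi_0)$, this is not an instance of the truth lemma. It has to be proved as a separate program-level lemma, again by induction on $\alpha$.
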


\begin{proof}
We proceed by structural induction on $\psi\in \mathrm{FL}(\varphi_0)$.

\emph{Boolean cases.}
For atoms $p$, $M_{\varphi_0},\Gamma\models p$ iff $\Gamma\in V(p)$ iff $p\in\Gamma$ by definition of $V$. Negation and conjunction follow from maximality of atoms and the induction hypothesis.

\emph{Belief modality.}
Let $\psi=B_i\chi$.
($\Rightarrow$) If $B_i\chi\in\Gamma$ and $\Gamma R_i \Delta$, then by definition of $R_i$ we have $\chi\in\Delta$, hence by IH $M_{\varphi_0},\Delta\models \chi$. Therefore $M_{\varphi_0},\Gamma\models B_i\chi$.

($\Leftarrow$) Suppose $B_i\chi\notin\Gamma$. Then $\neg B_i\chi\in\Gamma$.
Consider $\Sigma:=\{\theta: B_i\theta\in\Gamma\}\cup\{\neg \chi\}$.
If $\Sigma$ were inconsistent, then for some $B_i\theta_1,\dots,B_i\theta_k\in\Gamma$ we would have $\vdash (\theta_1\land\cdots\land\theta_k)\to \chi$.
By necessitation and normality, $\vdash B_i(\theta_1\land\cdots\land\theta_k)\to B_i\chi$, and from $B_i\theta_1,\dots,B_i\theta_k$ we would derive $B_i(\theta_1\land\cdots\land\theta_k)\in\Gamma$, hence $B_i\chi\in\Gamma$, contradiction. Thus $\Sigma$ is consistent and extends to an atom $\Delta$ with $\Gamma R_i \Delta$ and $\neg\chi\in\Delta$. By IH, $M_{\varphi_0},\Delta\models \neg\chi$, so $M_{\varphi_0},\Gamma\not\models B_i\chi$.

\emph{Program modalities.}
Let $\psi=[\alpha]\chi$. For star-free programs, the claim follows from the definition of $R_a$ for atomic actions, the compositional definition of $R_\alpha$ (sequence, choice, and test), and the corresponding reduction axioms in $\mathrm{FL}(\varphi_0)$.

For $\alpha^\ast$, use $\mathbf{Star}$: since $\vdash [\alpha^\ast]\chi \leftrightarrow (\chi\land[\alpha][\alpha^\ast]\chi)$ and $\mathrm{FL}(\varphi_0)$ is closed under the unfolding, maximal consistency yields
\[
[\alpha^\ast]\chi\in\Gamma \quad\text{iff}\quad \chi\in\Gamma\ \text{and}\ [\alpha][\alpha^\ast]\chi\in\Gamma.
\]
The ($\Rightarrow$) semantic direction is then proved exactly as in Lemma~\ref{lem:diamondexist}: if $[\alpha^\ast]\chi\in\Gamma$, then along any finite $\alpha$-path from $\Gamma$ the formula $\chi$ holds by induction on the path length, hence $M_{\varphi_0},\Gamma\models[\alpha^\ast]\chi$.

For the converse direction, we apply Lemma~\ref{lem:fl_star}: in the finite $\mathrm{FL}(\varphi_0)$-model, if $\langle \alpha^\ast\rangle \eta \in \Gamma$, then there exists a finite $\alpha$-path from $\Gamma$ to some $\Delta$ with $\eta\in\Delta$  Applying this to $\eta:=\neg\chi$ shows that if $[\alpha^\ast]\chi\notin\Gamma$ (equivalently $\langle \alpha^\ast\rangle \neg\chi\in\Gamma$), then there is a reachable state falsifying $\chi$, i.e., $M_{\varphi_0},\Gamma\not\models[\alpha^\ast]\chi$.
\end{proof}

\begin{Theorem}[Finite model property]
If $\varphi_0$ is consistent, then it is satisfiable in the finite model $M_{\varphi_0}$.
\end{Theorem}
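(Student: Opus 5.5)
The argument is the standard Lindenbaum-plus-Truth-Lemma step, carried out inside the finite structure $M_{\varphi_0}$ built over the Fischer--Ladner closure.

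First I will show that $\varphi_0$ lies in some atom. Since $\varphi_0$ is consistent and $\mathrm{FL}(\varphi_0)$ is finite, enumerate $\mathrm{FL}(\varphi_0)=\{\psi_1,\dots,\psi_N\}$ and build $\Gamma_0:=\{\varphi_0\}$. At stage $m$, add $\psi_m$ if $\Gamma_{m-1}\cup\{\psi_m\}$ is consistent, and $\neg\psi_m$ otherwise. Consistency is preserved at each stage: if both extensions were inconsistent, propositional reasoning would give $\vdash \bigwedge\Gamma_{m-1}\to\bot$. The final set $\Gamma^\ast$ is therefore consistent and contains exactly one of $\psi,\neg\psi$ for every $\psi\in\mathrm{FL}(\varphi_0)$, so it is an atom with $\varphi_0\in\Gamma^\ast$. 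The same finite Lindenbaum construction is what the earlier existence lemmas (Lemma~\ref{lem:diamondexist} and the seriality argument) implicitly relied on, so I am only making it explicit here.

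Second I will check that $M_{\varphi_0}$ is an admissible \CDDL{} model. The set $W$ of atoms is finite, with at most $2^{|\mathrm{FL}(\varphi_0)|}$ elements, and nonempty because it contains $\Gamma^\ast$. Each $R_i$ is serial, transitive and Euclidean by the KD45 lemma for $R_i$. The program relations are defined compositionally, with $R_{\alpha^\ast}$ taken as the reflexive transitive closure, exactly as the semantics of Section~\ref{sec:cddl} requires. Hence $M_{\varphi_0}$ belongs to the intended model class.

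Finally I will apply the Truth Lemma (Lemma~\ref{lem:truthFL}) to $\psi:=\varphi_0\in\mathrm{FL}(\varphi_0)$ at the atom $\Gamma^\ast$, which gives $M_{\varphi_0},\Gamma^\ast\models\varphi_0$. Completeness then follows by contraposition: if $\not\vdash\psi$, then $\neg\psi$ is consistent, so it is satisfiable in a finite KD45+PDL model, and hence $\psi$ is not valid.

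The only real weight lies in the Truth Lemma's star case, which rests on Lemma~\ref{lem:fl_star}. I take that lemma as given. The one thing to verify is that the consistency notion used when building atoms matches the one assumed there, namely derivability in the full Hilbert system including $\mathbf{Ind}$. I expect that check to be routine.
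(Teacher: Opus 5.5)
Your proposal is correct and follows the paper's proof: extend $\{\varphi_0\}$ to an atom over $\mathrm{FL}(\varphi_0)$, then apply the Truth Lemma (Lemma~\ref{lem:truthFL}) at that atom to get $M_{\varphi_0}\models\varphi_0$ there. Your explicit finite Lindenbaum construction and your check that $M_{\varphi_0}$ is a KD45/PDL model are details the paper leaves implicit, and your closing contraposition is the paper's separate Completeness theorem.
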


\begin{proof}
If $\varphi_0$ is consistent, extend $\{\varphi_0\}$ to an atom $\Gamma_0\subseteq \mathrm{FL}(\varphi_0)$. By the Truth Lemma, $M_{\varphi_0},\Gamma_0\models \varphi_0$.
\end{proof}

\begin{Theorem}[Completeness]
If a \CDDL{} formula is valid in all models, then it is derivable.
\end{Theorem}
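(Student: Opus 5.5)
The plan is to prove completeness by contraposition, reducing it directly to the Finite model property established just above. Suppose a \CDDL{} formula $\psi$ is not derivable in the Hilbert system. Then $\neg\psi$ is consistent: if $\neg\psi$ were inconsistent, $\vdash\neg\psi\to\bot$, and by propositional tautologies and modus ponens $\vdash\psi$, a contradiction. We then take $\varphi_0:=\neg\psi$.

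Next we invoke the Finite model property for this $\varphi_0$. It supplies the model $M_{\varphi_0}$ built from atoms over $\mathrm{FL}(\varphi_0)$ and an atom $\Gamma_0$ with $M_{\varphi_0},\Gamma_0\models\neg\psi$. For this to refute validity, $M_{\varphi_0}$ must belong to the intended class of models, and we check that it does.

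First, each $R_i$ must be serial, transitive and Euclidean. This is exactly the KD45 lemma for $R_i$.

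Second, the program relations must be the standard relational ones: $R_a$ for atomic actions, with composition, union, test and reflexive transitive closure. This holds by construction, since the definition of $M_{\varphi_0}$ extends $R_\alpha$ compositionally. In particular, no separate canonical relation is used for $\alpha^\ast$, so star is interpreted as genuine reflexive transitive closure.

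Hence $M_{\varphi_0}$ is a legitimate \CDDL{} model in which $\psi$ fails at $\Gamma_0$, and $\psi$ is not valid. Contraposing, every valid formula is derivable.

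The main obstacle lies upstream, in the Truth Lemma (Lemma~\ref{lem:truthFL}). There the relations on programs are defined semantically by composition, while membership of $[\alpha]\chi$ in atoms is syntactic. Two points need care.

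The first is the star case. Here Lemma~\ref{lem:fl_star} (eventuality fulfillment) is required. I would cite the standard Harel--Kozen--Tiuryn argument. Let $S$ be the set of atoms $R_{\alpha^\ast}$-reachable from $\Gamma$, and let $\sigma$ be the disjunction of the conjunctions characterizing the atoms in $S$. One shows $\vdash\sigma\to[\alpha]\sigma$ and applies $\mathbf{Ind}$ to derive $[\alpha^\ast]\neg\eta$ at $\Gamma$ whenever no atom in $S$ contains $\eta$.

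The second is the composite-program case. The canonical relations for the atomic actions $a$ must satisfy $[\beta]\theta\in\Gamma$ iff $\theta$ holds at all $R_\beta$-successors for each $\beta$ appearing in the closure. This is proved by a secondary induction on $\beta$, using the $\mathbf{Seq}$, $\mathbf{Choice}$ and $\mathbf{Test}$ reduction axioms together with the diamond existence lemma (Lemma~\ref{lem:diamondexist}).

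A small technical gap also needs filling. The consistency arguments in the belief and diamond cases must produce atoms inside the finite closure. The standard fix is to take the conjunction of the relevant formulas and intersect the resulting maximal consistent set with $\mathrm{FL}(\varphi_0)$, so that Lindenbaum extension stays within the closure.

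Given the Truth Lemma as stated, the completeness step itself is routine.
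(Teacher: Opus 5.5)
Your contrapositive reduction to the finite model property is exactly the paper's argument. However, the one check you treat as settled does not hold: that each $R_i$ in $M_{\varphi_0}$ is serial, transitive and Euclidean ``by the KD45 lemma''. The paper's proof depends on the same lemma, so it inherits the problem.

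\textbf{Where the lemma breaks.} The paper defines $\Gamma R_i\Delta \iff \forall\psi\in\mathrm{FL}(\varphi_0)\,(B_i\psi\in\Gamma\Rightarrow\psi\in\Delta)$ on atoms over a \emph{finite} closure. Its proof infers $B_iB_i\psi\in\Gamma$ from $\mathbf{4}$ and $B_i\neg B_i\psi\in\Gamma$ from $\mathbf{5}$. But $\mathrm{FL}(\varphi_0)$ is closed only under subformulas and PDL unfoldings, so these formulas are generally not in any atom.

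\textbf{A counterexample.} Take the closure of $B_ip$, which is $\{B_ip,p\}$ together with negations. All four atoms $\{\pm B_ip,\pm p\}$ are consistent, since KD45 does not prove $B_ip\to p$. Here $\Gamma R_i\Delta$ holds iff $B_ip\in\Gamma$ implies $p\in\Delta$. Then:
\begin{itemize}
\item $\{\neg B_ip,p\}$ sees both $\{B_ip,p\}$ and $\{\neg B_ip,\neg p\}$;
\item $\{B_ip,p\}$ sees $\{\neg B_ip,p\}$;
\item yet $\{B_ip,p\}$ does not see $\{\neg B_ip,\neg p\}$.
\end{itemize}
So $R_i$ is neither transitive nor Euclidean. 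Consequently, $\psi$ failing at $\Gamma_0$ in $M_{\neg\psi}$ does not show that $\psi$ is invalid over KD45 models. Your ``intersect with $\mathrm{FL}$'' patch only secures the existence of successor atoms and does not address this. The real obstacle is not the Truth Lemma, whose belief case holds for this minimal relation. It is whether the constructed model belongs to the intended class.

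\textbf{The fix.} You need the standard KD45 filtration of the belief relations. Let $\Gamma R_i\Delta$ hold iff, for every $B_i\psi\in\mathrm{FL}(\varphi_0)$, both $B_i\psi\in\Gamma\Rightarrow\psi\in\Delta$ and $B_i\psi\in\Gamma\iff B_i\psi\in\Delta$. Transitivity and Euclideanness are then immediate from the agreement clause.

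For the existence half of the belief case of the Truth Lemma, suppose $\neg B_i\chi\in\Gamma$. Show that $\{\psi,\ B_i\psi : B_i\psi\in\Gamma\}\cup\{\neg B_i\theta:\neg B_i\theta\in\Gamma\}\cup\{\neg\chi\}$ is consistent. If it were not, necessitation and normality would give $\vdash\bigwedge_k B_i\psi_k\wedge\bigwedge_k B_iB_i\psi_k\wedge\bigwedge_l B_i\neg B_i\theta_l\to B_i\chi$. Then $\mathbf{4}$ and $\mathbf{5}$, applied in the full language rather than inside the closure, yield $\Gamma\vdash B_i\chi$, a contradiction. A Lindenbaum extension intersected with the closure is then an $R_i$-successor of $\Gamma$ containing $\neg\chi$.

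Seriality is the same argument without $\neg\chi$, using $\vdash\neg B_i\bot$, which follows from $\mathbf{D}$ and necessitation. The program relations are unaffected. With this redefinition your verification, and the paper's two-line contrapositive, go through.
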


\begin{proof}
Contrapositive. If $\varphi_0$ is not derivable, then $\neg\varphi_0$ is consistent. By the finite model property, there exists a finite model $M_{\neg\varphi_0}$ and a state satisfying $\neg\varphi_0$, so $\varphi_0$ is not valid. Hence validity implies derivability.
\end{proof}

\end{appendices}


\end{document}